\documentclass{article}

\usepackage{microtype}
\usepackage{graphicx}
\usepackage{subfigure}
\usepackage{booktabs} 

\usepackage{hyperref}

\usepackage{enumitem}

\usepackage[accepted]{icml2026/icml2026}

\usepackage{amsmath}
\usepackage{amssymb}
\usepackage{mathtools}
\usepackage{amsthm}

\usepackage[capitalize]{cleveref}
\Crefname{equation}{Eq.}{Eqs.}
\Crefname{figure}{Fig.}{Figs.}
\Crefname{table}{Tab.}{Tabs.}
\Crefname{theorem}{Thm.}{Thms.}
\Crefname{definition}{Def.}{Defs.}
\Crefname{appendix}{App.}{Apps.}
\Crefname{assumption}{Asm.}{Asms.}
\Crefname{section}{\S}{\S\S}
\crefformat{section}{\S#2#1#3} 
\Crefformat{section}{\S#2#1#3} 

\usepackage{comment}

\DeclareMathOperator{\tr}{Tr}
\newcommand{\tvB}{\tilde{\mathbf{B}}}

\newcommand{\tvDelta}{\tilde{\mathbf{\Delta}}}
\newcommand{\trho}{\tilde{\rho}}

\newcommand{\tlambda}{\tilde{\lambda}}

\newcommand{\Synthetic}{\texttt{Synth}\xspace}
\newcommand{\CIFAR}{\texttt{CIFAR}\xspace}
\newcommand{\SUB}{\texttt{SUB}\xspace}
\theoremstyle{plain}
\newtheorem{theorem}{Theorem}[section]
\newtheorem{proposition}[theorem]{Proposition}
\newtheorem{lemma}[theorem]{Lemma}
\newtheorem{corollary}[theorem]{Corollary}
\newtheorem{definition}[theorem]{Definition}
\newtheorem{assumption}[theorem]{Assumption}
\theoremstyle{remark}
\newtheorem{remark}[theorem]{Remark}

\usepackage[textsize=tiny]{todonotes}
\usepackage{researchpack}

\DeclareMathOperator*{\essinf}{ess\,inf}
\icmltitlerunning{Logit Distance Bounds Representational Similarity}

\newcommand{\icmlEqualLast}{\textsuperscript{\dag}Shared last author}

\usepackage[dvipsnames,table]{xcolor}
\newcommand{\EM}[1]{\textcolor{Emerald}{{[\textbf{Ema}: #1]}}}
\newcommand{\luigi}[1]{\textcolor{Purple}{{[\textbf{Luigi}: #1]}}}
\newcommand{\BMGN}[1]{\textcolor{ForestGreen}{{[BMGN: #1]}}}
\newcommand{\simon}[1]{{\color{Blue}[\textbf{Simon:} #1]}}
\newcommand{\ad}[1]{{\color{BrickRed}[\textbf{Andrea:} #1]}}
\renewcommand{\EM}[1]{}
\renewcommand{\luigi}[1]{}
\renewcommand{\BMGN}[1]{}
\renewcommand{\simon}[1]{}
\renewcommand{\ad}[1]{}

\usepackage[toc,page,header]{appendix}
\usepackage{minitoc}
\usepackage{multirow}

\definecolor{orange2}{HTML}{E47E52}
\definecolor{green2}{HTML}{C0F5AC}
\definecolor{blue2}{HTML}{566EEF}
\definecolor{turquoise2}{HTML}{0FC7E5}
\definecolor{red2}{HTML}{FF662E}
\definecolor{purpleblue}{HTML}{6258f4}
\definecolor{lightblue2}{HTML}{589CF4}
\definecolor{lavender2}{HTML}{FF7CFA}

\begin{document}

\twocolumn[
\icmltitle{Logit Distance Bounds Representational Similarity}



\icmlsetsymbol{equal}{*}
\icmlsetsymbol{last}{\dag}

\begin{icmlauthorlist}
\icmlauthor{Beatrix M. G. Nielsen}{equal,itu}
\icmlauthor{Emanuele Marconato}{equal,trento}
\icmlauthor{Luigi Gresele}{last,ku}
\icmlauthor{Andrea Dittadi}{last,comp}
\icmlauthor{Simon Buchholz}{last,tub}
\end{icmlauthorlist}

\icmlaffiliation{itu}{IT University of Copenhagen, Denmark}
\icmlaffiliation{trento}{University of Trento, Italy}
\icmlaffiliation{ku}{University of Copenhagen, Denmark}
\icmlaffiliation{comp}{Helmholtz Munich \& TUM}
\icmlaffiliation{tub}{MPI IS, T\"ubingen, Germany}

\icmlcorrespondingauthor{Beatrix M. G. Nielsen}{beat@itu.dk}
\icmlcorrespondingauthor{Emanuele Marconato}{emanuele.marconato@unitn.it}

\icmlkeywords{Machine Learning, ICML}

\vskip 0.3in
]



\printAffiliationsAndNotice{\icmlEqualContribution\icmlEqualLast} 

\begin{abstract}
For a broad family of discriminative models that includes autoregressive language models, identifiability results imply that if two models induce the same conditional distributions, then their internal representations are equal up to an invertible linear transformation. We ask whether an analogous conclusion holds approximately when the distributions are close instead of equal. 
Building on the observation of \citet{nielsen2025does} that closeness in KL divergence need not imply high linear representational similarity, we study a distributional distance based on logit differences and show that closeness in this distance does yield linear similarity guarantees. 
Specifically, we define a representational dissimilarity measure based on the models' identifiability class and prove that it is bounded by the logit distance.
We further show that, when model probabilities are bounded away from zero, KL divergence upper-bounds logit distance; yet the resulting bound fails to provide nontrivial control in practice.
As a consequence, KL-based distillation can match a teacher’s predictions while failing to preserve linear representational properties, such as linear-probe recoverability of human-interpretable concepts.
In distillation experiments on synthetic and image datasets, logit-distance distillation yields students with higher linear representational similarity and better preservation of the teacher’s linearly recoverable concepts.

\end{abstract}

\begin{figure*}[t]
    \centering
    \includegraphics[width=\linewidth]{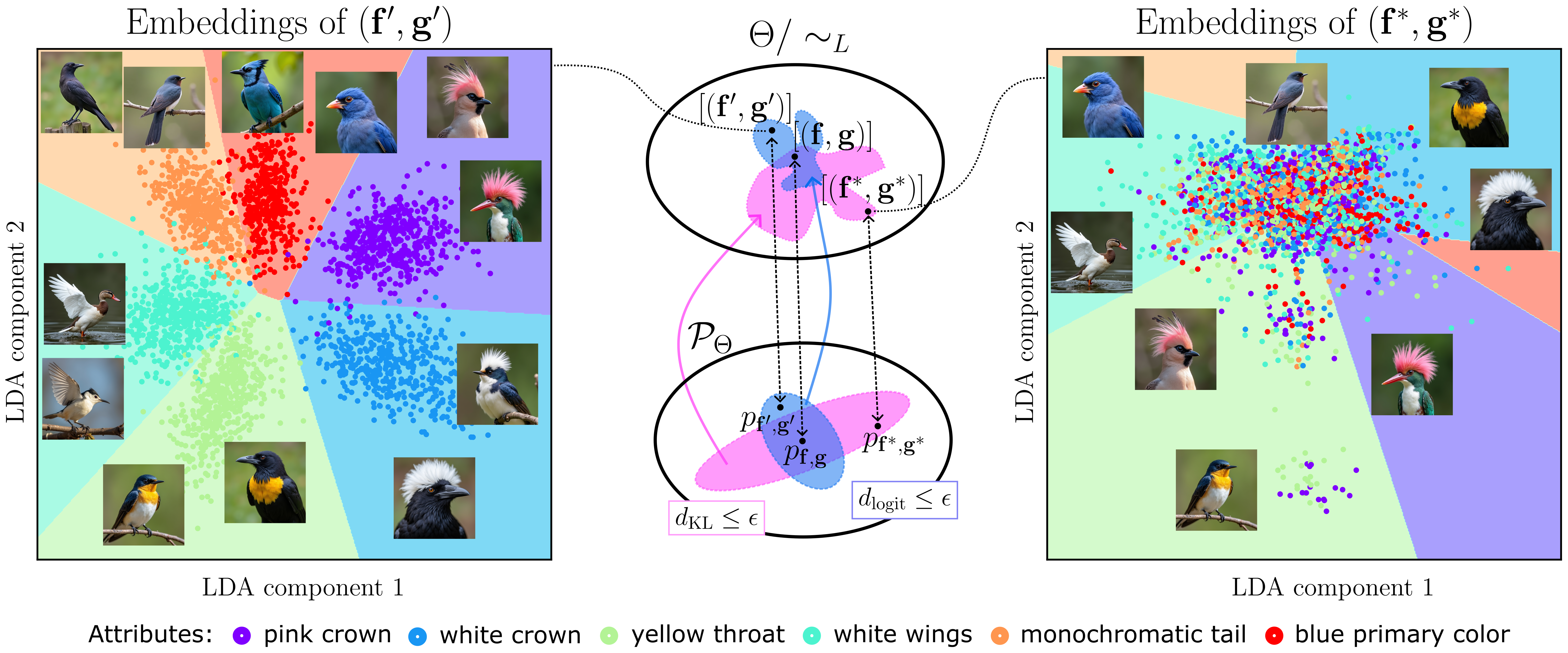}
    \caption{
    \textbf{In the center}: The intuition of bounding representational similarity using distributional distance. %
    $\calP_\Theta$ 
    is the 
    set of probability distributions parametrized by models in $\Theta$ (Eq.~\eqref{eq:model-class}) which satisfy \cref{assu:diversity-condition}. 
    These distributions are one-to-one with identifiability classes $[(\vf, \vg)] $ in the quotient space $\Theta / \sim_L$~\citep{khemakhem2020variational}. 
    The colored areas in $\calP_\Theta$ contain the distributions which are $\epsilon$-close to a reference $p_{\vf, \vg}$, as measured by $d_\mathrm{logit}$ (\cref{def:logit_dist}, \textcolor{lightblue2}{\textbf{blue area}}) 
    or by $d_\mathrm{KL}$ (Eq.~\eqref{eq:d_KL}, \textcolor{lavender2}{\textbf{pink area}}). 
    %
    Our \cref{th:bound_mcc} lower-bounds representational similarity (in terms of $m_{\mathrm{CCA}}$, \textcolor{lightblue2}{\textbf{blue arrow}}) using the logit distance $d_{\mathrm{logit}}$; similarly, \cref{theorem:norm_dist_bounds_direct_dist} upper-bounds dissimilarity in terms of our $d_{\mathrm{rep}}$ (\cref{def:direct_rep_distance}).
    In \cref{theorem:bound_kl} (\textcolor{lavender2}{\textbf{pink arrow}}) 
    we prove that 
    $d_\mathrm{KL}$ yields weak bounds on $d_{\mathrm{logit}}$.
    %
    %
    %
    We illustrate this with representations of two student models distilled from a teacher on the \SUB dataset~\citep{bader2025sub}, see~\cref{sec:exp-concepts} for details.
    \textbf{On the left}, a student model trained to minimize a variant of $d_\mathrm{logit}$ (\cref{eq:l1-logits}) to the teacher distribution $p_{\vf, \vg}$ preserves linearly encoded concepts (\cref{thm:linear-concepts-robustness}): 
    for $6$ attributes, we visualize their linear separability in the embeddings by projecting them to two dimensions through LDA~\citep[Ch. 4.1]{bishop2006pattern}. Distinct concept attributes can be well separated linearly in this $2$d subspace.
    \textbf{On the right}, for a model trained to minimize $d_\mathrm{KL}$ the LDA reduction shows that different concept attributes are not linearly separable, as reflected by the extremely low accuracy in \cref{tab:results-sub-short}.
    }
    \label{fig:main-fig}
\end{figure*}

\doparttoc 
\faketableofcontents 

\section{Introduction}

It is widely believed that the success of deep learning models depends on the data representation they learn~\cite{bengio2013representation}; yet it is unclear what properties representations of “good” models have in common~\cite{bansal2021revisiting}. 
Accordingly, prior work studies when models with comparable performance exhibit similar internal representations~\cite{morcos2018insights, kornblith2019similarity, klabunde2025similarity} and how human-interpretable concepts are encoded within them~\cite{bricken2023monosemanticity, gurnee2024language}. Empirically, many such concepts can be predicted from internal representations by simple linear probes~\cite{alain2017understanding, kim2018interpretability}, suggesting substantial linear structure in the representations of successful models~\cite{mikolov2013linguistic, park2024linear}. However, this regularity is not universal~\cite{engels2025not, li2026does}, and it remains unclear to what extent linear representational properties are consistently shared across models that perform comparably well on the same task.

We study this question for a broad family of discriminative models, including autoregressive next-token prediction. Prior identifiability results show that, under a suitable diversity assumption, if two such models induce the same conditional distribution, then their representations are equal up to an invertible linear transformation~\cite{khemakhem2020ice, roeder2021linear, lachapelle2023synergies}, and one can characterize which linear properties are shared across the equivalence class~\citep{marconato2024all}. A key question is whether these conclusions hold approximately when two models induce distributions that are close but not equal \citep{buchholz2024robustness}. \citet{nielsen2025does} show that the answer depends on how distributional closeness is measured: in particular, models can be arbitrarily close in KL divergence while their representations remain far from linearly related. On the other hand, robustness can be recovered under other suitable divergences for which distributional closeness does imply representational similarity.

In this work, we ask: when two models in this family induce similar conditional distributions, to what extent do their representations agree up to an invertible linear map? We introduce a distributional distance based on logit differences and prove quantitative representation-level guarantees: small logit distance (i) implies high linear representational similarity, yielding an explicit lower bound on mCCA between representation spaces~\cite{raghu2017svcca, morcos2018insights} and (ii) upper-bounds a representational dissimilarity measure that we design to respect the model family’s equivalence class. Finally, we clarify the (limited) extent to which KL control can recover such guarantees: if model probabilities are bounded away from zero, then the KL divergence upper-bounds the logit distance, but the resulting tight robustness bounds are insufficient for practical settings.

Our results have an immediate implication for knowledge distillation. Standard approaches match student predictions to a teacher's by minimizing a KL divergence between output distributions~\cite{hinton2015distilling}; yet our theory implies that a student can be very close to its teacher in KL while still learning representations far from being linearly aligned. 
This motivates alternative objectives, and our theory suggests that minimizing logit differences---an approach considered in prior distillation work~\citep{ba2014deep, menon2021statistical, kim2021comparing}---
should yield representations that are closer to being linearly equivalent.
Empirically, we compare vanilla (KL-based) distillation to logit-distance distillation and find that the latter preserves the teacher’s linear representational structure substantially better, both in terms of representational similarity measured by mCCA and our measure (on a synthetic dataset and \CIFAR$100$ \citep{krizhevsky2009learning}) and in terms of downstream linear properties recoverable by probing (on the SUB dataset \citep{bader2025sub}).
Overall, our work clarifies when linear properties are approximately shared across models whose distributions are close, and suggests a practical distillation takeaway: if representational similarity matters, KL is the wrong loss; a logit-distance objective is a better one.\\
Our contributions can be summarized as follows:
\begin{itemize}[topsep=-4pt, itemsep=0pt, parsep=2pt, partopsep=0pt]
    \item We analyze a logit-difference distance between conditional distributions and show it is a proper metric for the model family we consider (\cref{sec:logit_distance_and_rep_sim}).

    \item We show KL divergence can upper-bound logit distance under strong assumptions (\cref{sec:kl-divergence-analysis}), but the resulting bounds do not yield meaningful representation guarantees in typical regimes.
    
    \item We propose a representation dissimilarity measure which is zero if and only if models are in the same linear equivalence class (\cref{sec:defining_representational_dissimilarity}).

    \item We prove that small logit distance implies high linear representational similarity, 
    giving quantitative guarantees both for mCCA (\cref{sec:logit_dist_bounds_m_cca}), our dissimilarity measure (\cref{sec:logit_dist_bounds_d_rep}) and for linearly encoded concepts (\cref{sec:distill_interpretability}).

    \item In distillation experiments, we show empirically that logit matching preserves the teacher’s representational structure better than KL-based distillation (\cref{sec:experiments}).
\end{itemize}

\section{Preliminaries}
\label{sec:preliminaries}

In this section, we introduce the notation and model family, recall its linear identifiability under a diversity assumption, and formulate the question of approximate identifiability that motivates our analysis.

Let $\mathcal{X}$ denote the input space, either continuous or discrete.
We assume a data distribution $p_\vx$ supported on $\mathrm{supp}(p_\vx) \subseteq \calX$, and a finite label set $\mathcal{Y}$ of size $k$, such as tokens in a language model or class labels in a classification problem. We focus on a broad model family which encompasses autoregressive language models~\citep{radford2019language}, methods for self-supervised pretraining for image classification~\citep{oord2018representation}, and several supervised classifiers~\citep{khemakhem2020ice, ibrahim2024occam}.
We denote this model family by $\Theta$. Each element $(\vf, \vg) \in \Theta$ induces a conditional distribution $p_{\vf,\vg}(y \mid \vx)$,
where $\vf: \calX \to \bbR^m$ is the \emph{embedding} function and $\vg: \calY \to \bbR^m$ the \emph{unembedding} function.
The conditional distribution takes the form
\begin{align}
    \label{eq:model-class}
    p_{\vf, \vg} (y \mid \vx) = 
    \frac{\exp (\vf(\vx)^{\top}\vg(y))}{\sum_{y' \in \calY} \exp (\vf(\vx)^{\top}\vg(y'))} \;.
\end{align}
These embeddings, $\vf(\vx)$, and unembeddings, $\vg(y)$, are the ``last'' representations before the final softmax, and they are the only representations we will consider in this article, since they are amenable to theoretical analysis based on existing identifiability results. We also indicate the logits generated by the model as
 \[
    \vu (\vx) := \big(
        \vf(\vx)^\top \vg(y_1), \,
        \ldots , \,
        \vf(\vx)^\top \vg(y_k) \big)^\top \, ,
 \]
so the probability distribution can be equivalently 
described as a categorical distribution with probability vector $\mathrm{softmax} (\vu(\vx))$.
Without loss of generality, we assume the unembeddings to be centered with zero mean,\footnote{Applying a fixed translation to all unembeddings does not change the probability distribution $p_{\vf, \vg}$ in \cref{eq:model-class}.} which also results in the logits being centered:
\[  \textstyle
    \sum_{y \in \calY} \vg(y) = \mathbf 0 
    \implies 
    \sum_{i=1}^k u(\vx)_i = 0 \, .
    \label{eq:gauge-fix-unembeddings-logits}
\]
When studying this model family, it is convenient to introduce the \textit{shifted unembeddings} of a model $(\vf, \vg) \in \Theta$. For any pivot label $\tilde y \in \calY$, these are given by defining  $\tilde\vg(y) \coloneq \vg(y) -\vg(\tilde y)$.
We then introduce the matrix of shifted unembeddings 
constructed from $\tilde \vg$ only considering a subset of $m$ labels $\calJ = \{y_1, \ldots, y_m\} \subseteq \calY \setminus \{\tilde y\}$: 
\[
    \tilde\vL_{\calJ} := \begin{pmatrix}
        \tilde\vg(y_1) & \cdots & \tilde\vg (y_m)
    \end{pmatrix}
    \in \bbR^{m \times m} \, .
    \label{eq:unembeddings-matrix-m-m}
\]

\textbf{Identifiability of the model class.} 
Several works
~\citep{khemakhem2020ice, roeder2021linear,lachapelle2023synergies, marconato2024all,reizinger2025crossentropy,nielsen2025does}
have investigated the symmetries belonging to the model class for which two different choices of embeddings and unembeddings, say $(\vf, \vg), (\vf', \vg') \in \Theta$, generate the same conditional probability distribution. 
In our setting, similar to \citep{nielsen2025does}, we study the case where labels are abundant and their number exceeds that of the representations, i.e., $k > m$. 
Below, we report the linear identifiability result from \citep{roeder2021linear}. 
This requires an extra condition, known as diversity, which (following~\citet{marconato2024all}) can be interpreted as a restriction to models that span the whole representation space.

\begin{assumption}[Diversity]
\label{assu:diversity-condition}
    A model ${(\vf, \vg) \in \Theta}$ satisfies the diversity condition if both embeddings and the shifted unembeddings span the representation space. That is, if: 
    \begin{align}
        &\mathrm{span} \{\vf(\vx): \vx \in \mathrm{supp}(p_\vx) \} = \bbR^m \, , \\
        &\mathrm{span} \{\tilde\vg(y): y \in \calY \} = \bbR^m \, .
    \end{align}
    In particular, there exists a choice of $\tilde y \in \calY$ and $\calJ \subseteq \calY \setminus \{\tilde y\}$, for which $\tilde \vL_\calJ$ (\cref{eq:unembeddings-matrix-m-m}) is invertible.
\end{assumption}

This condition gives the following result (proof in \cref{app:proof_identifiability}).

\begin{theorem}[Linear Identifiability]
\label{theorem:identifiability}
    Let $(\vf, \vg), (\vf', \vg') \in \Theta$ and let $(\vf, \vg)$
    satisfy the diversity condition (\cref{assu:diversity-condition}). Let $\tilde y \in \calY$ and $\calJ \subseteq \calY \setminus \{\tilde y\}$ be a choice of pivot point and $m$ labels such that $\tilde\vL_{\calJ}$ is invertible. 
    Let ${\tilde\vA_{\calJ} \coloneqq  \tilde\vL_{\calJ}^{-\top} \tilde\vL_{\calJ}'^{\top} \in \bbR^{m \times m}}$. 
    Let $\sim_{L}$ be the equivalence relation in $\Theta$, such that 
    $(\vf, \vg) \sim_L (\vf', \vg')$ if and only if there exists a matrix $\vA$ such that
    \begin{align}
            \vf(\vx) = \vA \vf' (\vx), \quad
            \tilde\vg(y) = \vA^{-\top} \tilde\vg'(y) \, .
        \label{eq:linear-equivalence-relation}
    \end{align}
    Then we have that 
    \begin{align}
        \begin{split}
            &p_{\vf, \vg} (y \mid \vx) = p_{\vf', \vg'} (y \mid \vx), \; \forall (\vx, y) \in \calX \times \calY \\
            &\iff (\vf, \vg) \sim_L (\vf', \vg') \, .
        \label{eq:dist_equal_iff_reps_equiv}            
        \end{split}
    \end{align}
    In particular, we can set $\vA = \tilde\vA_{\calJ}$.
\end{theorem}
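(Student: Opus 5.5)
Equal conditionals force equal logit differences, and diversity then lets me solve for the linear map explicitly; the converse is a direct computation. I will treat the statement as evaluated on $\vx \in \mathrm{supp}(p_\vx)$, since that is where the diversity condition constrains $\vf$.

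\emph{Direction ($\Leftarrow$).} Suppose $\vf(\vx) = \vA\vf'(\vx)$ and $\tilde\vg(y) = \vA^{-\top}\tilde\vg'(y)$. Then
\[
\vf(\vx)^\top\tilde\vg(y) = \vf'(\vx)^\top\vA^\top\vA^{-\top}\tilde\vg'(y) = \vf'(\vx)^\top\tilde\vg'(y).
\]
Multiplying numerator and denominator of \cref{eq:model-class} by $\exp(-\vf(\vx)^\top\vg(\tilde y))$ shows that $p_{\vf,\vg}(\cdot\mid\vx) = \mathrm{softmax}\big((\vf(\vx)^\top\tilde\vg(y))_{y}\big)$. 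The two models therefore have the same pivot-shifted logits, hence the same distribution.

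\emph{Direction ($\Rightarrow$).} Equal conditionals give equal log-ratios:
\[
\log\frac{p(y\mid\vx)}{p(\tilde y\mid\vx)} = \vf(\vx)^\top\tilde\vg(y) = \vf'(\vx)^\top\tilde\vg'(y)
\]
for all $\vx, y$. Stacking these equations over $y \in \calJ$ gives $\tilde\vL_\calJ^\top\vf(\vx) = \tilde\vL_\calJ'^\top\vf'(\vx)$. Since $\tilde\vL_\calJ$ is invertible, $\vf(\vx) = \tilde\vA_\calJ\vf'(\vx)$ with $\tilde\vA_\calJ = \tilde\vL_\calJ^{-\top}\tilde\vL_\calJ'^\top$.

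\emph{Invertibility of $\tilde\vA_\calJ$.} This step is where care is needed, because diversity is assumed only for $(\vf,\vg)$. The embeddings $\vf(\vx)$ span $\bbR^m$ and each lies in the image of $\tilde\vA_\calJ$, so $\tilde\vA_\calJ$ is surjective and hence invertible. As a consequence $\tilde\vL_\calJ'$ is also invertible, and $\vf'$ spans $\bbR^m$ as well.

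\emph{Unembeddings.} Substituting $\vf = \tilde\vA_\calJ\vf'$ into the log-ratio identity gives
\[
\vf'(\vx)^\top\big(\tilde\vA_\calJ^\top\tilde\vg(y) - \tilde\vg'(y)\big) = 0
\]
for all $\vx$. Because the $\vf'(\vx)$ span $\bbR^m$, it follows that $\tilde\vg'(y) = \tilde\vA_\calJ^\top\tilde\vg(y)$, that is, $\tilde\vg(y) = \tilde\vA_\calJ^{-\top}\tilde\vg'(y)$. This establishes $(\vf,\vg)\sim_L(\vf',\vg')$ with $\vA = \tilde\vA_\calJ$, which is the final claim of the theorem.
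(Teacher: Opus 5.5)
Your proof is correct and follows the paper's argument. The converse direction is a direct computation; for the forward direction, equal pivot-shifted log-ratios stacked over $\calJ$ give $\vf = \tilde\vA_\calJ \vf'$, and invertibility of $\tilde\vA_\calJ$ follows because the embeddings of $(\vf,\vg)$ span $\bbR^m$. The only difference is in the unembedding step: you use the shifted identity directly together with the spanning of $\vf'$, whereas the paper works with unshifted unembeddings, the normalizer offsets $c_i$ and the matrices $\vN, \vN'$, and only then differences out the constant shift.
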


This result shows that when the distributions generated by two models are equal and one satisfies the diversity condition, 
their representations are equal up to an invertible linear transformation given by $\tilde\vA_{\calJ} \in \bbR^{m \times m}$. 
We show (\cref{prop:A-independence-pivot}) that the form 
of $\tilde \vA_\calJ$ does not depend on the choice of $\tilde y $ and $\calJ \subseteq \calY \setminus \{\tilde y\}$ as long as the matrix $\tilde \vL_\calJ$ is invertible.

Note that to make this comparison of models from our model family, we need them to have the same number of labels, $k$, and have the same representational dimension, $m$. However, the embedding, $\vf, \vf'$, and unembedding, $\vg, \vg'$, functions can have any number of hidden layers and do not need to have the same architecture. Therefore, there is still a large amount of freedom for which models we can compare. 
 

\textbf{Closeness in distribution and representational similarity.} 
The result in \cref{theorem:identifiability} assumes \emph{exact} equality of conditional distributions. In practice, however, independently trained models will almost never satisfy this condition exactly, even when trained on the same data. This raises the question of whether the identifiability conclusion is \emph{robust}: namely, whether models whose distributions are merely close, rather than equal, must also have similar internal representations up to the model class’s intrinsic symmetry (an invertible linear change of basis).

This question was first systematically investigated by \citet{nielsen2025does}. Addressing it requires specifying (i) a notion of distance between conditional distributions and (ii) a notion of dissimilarity between representations. For the latter, we require the dissimilarity to vanish for $\sim_L$-linearly equivalent models. This is because, for any $p_{\vf, \vg}(y \mid \vx)$, different reparametrizations of $(\vf, \vg)$ exists that do not change the probability distribution (by \cref{theorem:identifiability}). 
With this requirement in place, our goal is to identify distributional distances for which small discrepancies in conditional distributions provably imply small representational dissimilarity.

\section{The Logit Distance and Connection to Representational Similarity}
\label{sec:logit_distance_and_rep_sim}

We now define a metric between distributions from our model family. 
\begin{definition}
\label{def:logit_dist}
Let $p_{\vf,\vg}, p_{\vf',\vg'}'$ be two sets of conditional distributions arising from two models from our model class. We denote by $d_{\mathrm{logit}}$ the following \textit{logit distance}
\begin{align}
    \label{eq:logit_dist}
    &d_{\mathrm{logit}}^2(p_{\vf,\vg}, p_{\vf',\vg'}') = \mathbb{E}_{\vx \sim p_\vx} \left\Vert \vu(\vx) - \vu'(\vx) \right\Vert_2^2 \;. 
\end{align}
\end{definition}
Notice that we are here defining the squared distance, $d_{\mathrm{logit}}^2$, so $d_\mathrm{logit}$ coincides with the square root of the mean squared error between models' logits. This can be written in terms of log-probabilities (see Appendix~\ref{sec:logit_dist_equiv_to_log_probabilities} for details), and the expression 
$\lVert \vu(\vx)-\vu'(\vx)\rVert_2$ corresponds to the Aitchison distance (see \cref{app:logit_dist_vs_Aitchison}) of the conditional distributions
$p_{\vf,\vg}(\cdot|\vx)$ and $p_{\vf',\vg'}(\cdot|\vx)$,
which is used in compositional data analysis
\citep{aitchison2000logratio, pawlowsky2007lecture}.
We show in \cref{app:LLDD_is_metric_full_proof} that \cref{def:logit_dist} is a  metric (potentially with infinite value) between distributions from models in our model class. A sufficient condition for $d_{\mathrm{logit}}(p, p') < \infty$, is that the probabilities are lower bounded by some $\tau > 0$ for all $y\in \mathcal{Y}$ and $\vx \in \mathcal{X}$.

\subsection{KL Divergence Bounds Logit Distance}
\label{sec:kl-divergence-analysis}

We consider the relationship between the \textit{logit distance} and the Kullback--Leibler (KL) divergence which both measure the closeness between two distributions.  
Following \citet{nielsen2025does},
given $(\vf, \vg), (\vf', \vg') \in \Theta$, 
we consider
\[
    d_{\mathrm{KL}} (p_{\vf, \vg}, p_{\vf, \vg'} ) \coloneq
    \bbE_{\vx \sim p_\vx} [
    \mathrm{KL} ( p_{\vf, \vg}(\cdot \mid \vx) ||  p_{\vf', \vg'}(\cdot \mid \vx)  )
    ] \, ,
    \label{eq:d_KL}
\]
which is related to the cross-entropy loss
\begin{align}
    \mathcal{L}_{p}(\vf',\vg')=\mathbb{E}_{\vx\sim p_\vx , y \sim p(y \mid \vx)}[-\log p_{\vf',\vg'}(y|\vx)]
\end{align}
for a well-specified model through
\begin{align}
    \label{eq:crossent-kl}d_{\mathrm{KL}} (p_{\vf, \vg}, p_{\vf, \vg'} )
    = \mathcal{L}_{p_{\vf, \vg}}(\vf',\vg')
    -
    \mathcal{L}_{p_{\vf, \vg}}( {\vf},
    {\vg}) \;.
\end{align}
This means that learning a model $(\vf', \vg') \in \Theta$ on data sampled from a distribution $p_{\vf, \vg}$ in the model class, will be equivalent in expectation to minimizing $d_\mathrm{KL} (p_{\vf, \vg}, p_{\vf', \vg'})  $.
However, \citet[Theorem 3.1]{nielsen2025does} show that the equivalence relation $\sim_L$ is not robust to small changes in the probability distribution as measured by $d_\mathrm{KL}$. Specifically, they construct pairs $(\vf,\vg)$ and $(\vf',\vg')$ with arbitrarily small KL divergence, yet whose representations cannot be made close by any invertible linear map ($m_{\mathrm{CCA}}$ is low). 
This can happen because, without additional regularity assumptions on $\Theta$, models may assign extremely small probabilities to some events.
Changes concentrated on such near-zero-probability events can correspond to large (potentially nonlinear) changes in representation geometry while incurring only a negligible increase in $d_{\mathrm{KL}}$.
To rule out these corner cases, we restrict attention to models $(\vf,\vg)\in\Theta$ whose probability assignments are uniformly lower bounded by some constant $\tau>0$.

\begin{assumption}[$\tau$-lower bounded]
    \label{assu:tau-lower-bound}
    For $\tau > 0$, we say that a model $(\vf,\vg) \in \Theta$ is $\tau$-lower bounded if, for $p_\vx$-almost every $\vx \in \mathcal{X}$, we have
    \[\label{eq:condition-tau-min}
        \min_{y \in \calY} p_{\vf, \vg}(y \mid \vx) \geq \tau
        \, .
    \]
\end{assumption}
We also show in~\cref{app:tau} that this is equivalent to having bounded embeddings and unembeddings.
%
We can now prove the next result 
(full proof in \cref{app:bounding_logit_dist_w_kl}). 


\begin{theorem}
    \label{theorem:bound_kl}
    For two models $(\vf, \vg), (\vf', \vg') \in \Theta$, 
    we have
    \[  
        d_{\mathrm{logit}}^2(p_{\vf,\vg}, p_{\vf',\vg'}')
        \geq 2
        d_{\mathrm{KL}}(p_{\vf,\vg}, p_{\vf', \vg'})  \, .
        \label{eq:lower-bound-logits-KL}
    \]
    If there exists $0 <\tau < 1/3$ such that both $(\vf,\vg)$ and $(\vf', \vg')$ are $\tau$-lower bounded (\cref{assu:tau-lower-bound}), then
     \[  
        d_{\mathrm{logit}}^2(p_{\vf,\vg}, p_{\vf',\vg'}')
        \leq \frac{4\log(\tau)^2}{\tau}
        d_{\mathrm{KL}}(p_{\vf,\vg}, p_{\vf', \vg'})  \, .
        \label{eq:lower-bound-logits-KL_both_tau_bounded}
    \]
\end{theorem}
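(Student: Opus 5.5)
Both inequalities follow from a pointwise statement at a fixed $\vx$, integrated over $\vx\sim p_\vx$. At fixed $\vx$ write $\vu=\vu(\vx)$, $\vu'=\vu'(\vx)$ and $\boldsymbol{\Delta}=\vu'-\vu$. Let $A(\vv)=\log\sum_{y}\exp(v_y)$ be the log-partition function. The key identity is that the KL divergence between two softmax distributions is the Bregman divergence of $A$:
\[
\mathrm{KL}(\mathrm{softmax}(\vu)\,\|\,\mathrm{softmax}(\vu')) = A(\vu')-A(\vu)-\nabla A(\vu)^\top\boldsymbol{\Delta}.
\]
By Taylor's theorem with integral remainder this equals $\int_0^1 (1-t)\,\boldsymbol{\Delta}^\top \nabla^2A(\vu+t\boldsymbol{\Delta})\,\boldsymbol{\Delta}\,dt$. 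Here $\nabla^2 A(\vv)=\mathrm{diag}(\vp)-\vp\vp^\top$ with $\vp=\mathrm{softmax}(\vv)$, so $\boldsymbol{\Delta}^\top\nabla^2A(\vv)\boldsymbol{\Delta}=\mathrm{Var}_{Y\sim\vp}(\Delta_Y)$. Both bounds therefore reduce to bounding this variance above and below along the segment $\vu_t=\vu+t\boldsymbol{\Delta}$, using $\int_0^1(1-t)\,dt=1/2$.

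\textbf{First inequality (no assumptions).} Let $Y\sim\vp$. Since $\Delta_Y$ takes values in $[\min_y\Delta_y,\max_y\Delta_y]$, Popoviciu's inequality gives $\mathrm{Var}(\Delta_Y)\le(\max_y\Delta_y-\min_y\Delta_y)^2/4$. Moreover $(\Delta_a-\Delta_b)^2\le 2(\Delta_a^2+\Delta_b^2)\le 2\|\boldsymbol{\Delta}\|_2^2$ for any labels $a,b$. Hence the variance is at most $\|\boldsymbol{\Delta}\|_2^2/2$ at every point of the segment. Integrating gives $\mathrm{KL}\le\|\boldsymbol{\Delta}\|_2^2/4$ pointwise, and taking expectations over $\vx$ gives $d_{\mathrm{logit}}^2\ge 4d_{\mathrm{KL}}\ge 2d_{\mathrm{KL}}$. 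This step is routine.

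\textbf{Second inequality.} This needs a lower bound on the variance along the whole segment, which is the main obstacle: $\tau$-lower boundedness is only assumed at the endpoints. The plan is to note that $\mathrm{softmax}(\vu_t)$ is the normalized geometric mixture $p_t(y)= p(y)^{1-t}p'(y)^t/Z_t$. By Hölder's inequality, $Z_t=\sum_y p(y)^{1-t}p'(y)^t\le 1$. Hence $p_t(y)\ge p(y)^{1-t}p'(y)^t\ge\tau$, so every intermediate distribution is also $\tau$-lower bounded.

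Next, both $\vu$ and $\vu'$ are centered by the convention in \cref{eq:gauge-fix-unembeddings-logits}, so $\sum_y\Delta_y=0$. Writing $\mathrm{Var}_{p_t}(\Delta_Y)=\min_c\sum_y p_t(y)(\Delta_y-c)^2$, we get
\[
\mathrm{Var}_{p_t}(\Delta_Y)\ \ge\ \tau\min_c\sum_y(\Delta_y-c)^2\ =\ \tau\|\boldsymbol{\Delta}\|_2^2,
\]
where the last equality holds because the minimizer is the mean $c=0$. Integrating gives $\mathrm{KL}\ge(\tau/2)\|\boldsymbol{\Delta}\|_2^2$ pointwise, and taking expectations gives $d_{\mathrm{logit}}^2\le(2/\tau)\,d_{\mathrm{KL}}$.

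Finally, this implies the stated bound \cref{eq:lower-bound-logits-KL_both_tau_bounded}. For $\tau<1/3$ we have $\log(\tau)^2>(\log 3)^2>1/2$, hence $2/\tau\le 4\log(\tau)^2/\tau$. If one wanted the stated constant through a more direct route, one could also bound $\|\boldsymbol{\Delta}\|_\infty\le 2|\log\tau|$ using the $\tau$-bound. The Hessian argument above is cleaner, however, and already yields the claim.
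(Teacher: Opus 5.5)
Your proof is correct. For the first inequality it follows essentially the paper's route, and for the second it takes a genuinely different one.

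\textbf{First inequality.} You and the paper both treat $\mathrm{KL}$ as the Bregman divergence of $\log Z$. The paper only uses that the Hessian of $\log Z$ is dominated by the identity (1-smoothness), which gives $\mathrm{KL}\le\frac12\|\vu-\vu'\|_2^2$. Your Popoviciu step (equivalently, the softmax Hessian has top eigenvalue at most $1/2$) gives the sharper $d_{\mathrm{logit}}^2\ge 4\,d_{\mathrm{KL}}$.

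\textbf{Second inequality: the paper's route.} The paper works with likelihood ratios. It writes $p_i=x_iq_i$ with $x_i\in[\tau,\tau^{-1}]$. It then proves the scalar inequality $\log(x)^2\le 4\log(\tau)^2(x\log x-x+1)$ for $x\ge\tau$ (\cref{le:function}), which is where the constant $4$ and the restriction $\tau\le 1/3$ come from. Next it uses $q_i\ge\tau$ to compare with $\mathrm{KL}=\sum_i q_i(x_i\log x_i-x_i+1)$, which bounds the \emph{uncentered} sum $\sum_i(\log p_i-\log q_i)^2$. Finally it observes that centering the log-probabilities can only decrease this sum.

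\textbf{Second inequality: your route.} You lower-bound the Hessian quadratic form along the whole segment, and both ingredients are valid. H\"older shows the geometric interpolants $p_t$ remain $\tau$-lower bounded. The centering $\sum_y(\vu'-\vu)_y=0$ turns the $p_t$-weighted variance into at least $\tau\|\vu-\vu'\|_2^2$.

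\textbf{What each approach buys.} Yours gives the strictly better $d_{\mathrm{logit}}^2\le(2/\tau)\,d_{\mathrm{KL}}$, with no $\log(\tau)^2$ factor and no need for $\tau<1/3$ beyond comparing constants. It shows the logarithmic slack mentioned in \cref{rem:tight_KL_tau} is unnecessary for the centered logit distance. The paper's pointwise ratio argument is more flexible: by splitting labels according to whether $x_i\le 3$, it extends to the case where only one model is $\tau$-lower bounded (\cref{le:upperKL2}). Your path argument does not directly extend, because without a lower bound on $p'$ the interpolants $p_t$ can lose mass near $t=1$ and the variance lower bound degenerates.
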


In contrast to the general setting discussed above---where arbitrarily small $d_{\mathrm{KL}}$ may still correspond to highly dissimilar representations---\cref{theorem:bound_kl} shows that, under the additional $\tau$-boundedness assumption, the KL divergence \emph{does} bound the logit distance. As we show below, this in turn implies that $d_{\mathrm{KL}}$ also bounds representation dissimilarity. However, the upper bound in \eqref{eq:lower-bound-logits-KL_both_tau_bounded} is only quantitatively meaningful when $d_{\mathrm{KL}}= O(\tau)$, since the bound scales as $\tau^{-1}$ up to logarithmic factors. As $\tau \leq 1/k$ in a $k$-class setting and is typically much smaller in practice, this condition can be restrictive.
We remark that a similar but weaker bound holds when only one of the models (e.g., a constrained student) is $\tau$-bounded (see \cref{le:upperKL2}). Moreover, the dependence on $\tau$ is tight up to logarithmic terms and cannot be improved when the KL-divergence is replaced by the Jensen-Shannon divergence (see \cref{rem:tight_KL_tau}).

\subsection{Logit Distance Bounds Mean CCA}
\label{sec:logit_dist_bounds_m_cca}
We next show that the representations are approximately linearly related when the distance between the logits of two models is small. 
Canonical Correlation Analysis (CCA) \citep{hotelling1936relations} is a common way of measuring whether sets of vectors are linearly related (see \cref{app:cca}).
We denote by $m_{\mathrm{CCA}}$ the mean canonical correlation which is 1 when two vector-valued variables are almost surely linear transformations of each other. In our setting, the pair of variables will be either a pair of embeddings or unembeddings from two different models.

\begin{theorem}\label{th:bound_mcc}
    Let $m$ be the dimension of the representations.
    Let $\mu_{m}$ be the $m$-th  eigenvalue of the matrix $\vSigma_{\vu,\vu} \coloneqq \mathbb{E}_{\vx\sim p_{\vx}}\big[\vu(\vx)\vu(\vx)^\top\big]-\mathbb{E}_{\vx\sim p_{\vx}}\big[\vu(\vx)]\mathbb{E}_{\vx\sim p_{\vx}}\big[\vu(\vx)^\top\big]$. 
    Assume $\mu_m>0$. Then we have 
    \begin{align}
      m_{\mathrm{CCA}}(\vf(\vx), \vf'(\vx))
        \geq 1 - \frac{d_{\mathrm{logit}}^2(p_{\vf,\vg}, p_{\vf',\vg'}')}{m\mu_{m}}
    \end{align}
    and the same bound holds for $m_{\mathrm{CCA}}(\vg(y), \vg'(y))$.
\end{theorem}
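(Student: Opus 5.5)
The plan is to reduce the statement to a linear-regression bound on the logits, which by construction live in an $m$-dimensional linear image of the embeddings. Write $\vL = (\vg(y_1), \ldots, \vg(y_k)) \in \bbR^{m\times k}$, so that $\vu(\vx) = \vL^\top \vf(\vx)$ and $\vSigma_{\vu,\vu} = \vL^\top \vSigma_{\vf,\vf}\vL$, which has rank at most $m$. The assumption $\mu_m>0$ forces $\vL$ to have rank $m$ and $\vSigma_{\vf,\vf}$ to be invertible. Hence $\vu(\vx)$ is an injective linear image of $\vf(\vx)$.

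The first step uses that canonical correlations are invariant under injective linear maps of either variable. This gives $m_{\mathrm{CCA}}(\vf(\vx),\vf'(\vx)) = m_{\mathrm{CCA}}(\vu(\vx),\vu'(\vx))$, where the latter is computed within the $m$-dimensional ranges. If $\vu'$ has rank below $m$, the missing correlations count as zero, and the argument below still applies. Let $\vP$ be the orthogonal projector onto $\mathrm{range}(\vSigma_{\vu,\vu})$. Define the whitened variable $\vw = \vSigma_{\vu,\vu}^{+1/2}(\vu-\bbE\vu)$, which has identity covariance on that range.

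The second step invokes the standard variational characterization of CCA. With canonical correlations $\rho_1,\ldots,\rho_m$,
\begin{align}
    \sum_{i=1}^m (1-\rho_i^2) = \min_{\vB,\vb} \bbE\,\Vert \vw - \vB\vu'(\vx) - \vb\Vert_2^2 ,
\end{align}
since the optimal linear predictor of a whitened variable from $\vu'$ explains exactly $\sum_i\rho_i^2$ of its total variance $m$. I then plug in the suboptimal choice $\vB = \vSigma_{\vu,\vu}^{+1/2}$ and $\vb = -\vSigma_{\vu,\vu}^{+1/2}\bbE\vu'$. This gives
\begin{align}
    \sum_i (1-\rho_i^2) \le \Vert \vSigma_{\vu,\vu}^{+1/2}\Vert_{\mathrm{op}}^2\, \bbE\Vert (\vu-\vu') - \bbE(\vu-\vu')\Vert^2 \le \frac{d_{\mathrm{logit}}^2}{\mu_m} ,
\end{align}
because centering only decreases the second moment. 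Using $1-\rho_i \le 1-\rho_i^2$ for $\rho_i\in[0,1]$ yields $m(1-m_{\mathrm{CCA}}) \le d_{\mathrm{logit}}^2/\mu_m$, which is the embedding claim.

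For the unembeddings I would run the same argument with the roles of $\vx$ and $y$ swapped. Take $y$ uniform on $\calY$, and view $y \mapsto (\vf(\vx)^\top\vg(y))_{\vx}$ as an $L^2(p_\vx)$-valued variable. By the centering convention it has mean zero over $y$. Its covariance operator is $\frac1k \vF^\top\vL\vL^\top\vF$, where $\vF$ is the embedding operator. This operator has the same nonzero spectrum as $\frac1k\bbE_\vx[\vu\vu^\top]$. That matrix dominates $\frac1k\vSigma_{\vu,\vu}$, so its $m$-th eigenvalue is at least $\mu_m/k$. The regression error is $\frac1k\sum_y\bbE_\vx(u_y-u'_y)^2 = d_{\mathrm{logit}}^2/k$, so the factors $1/k$ cancel and give the same bound. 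The same invariance argument transfers the bound from these logit functions back to $\vg$ and $\vg'$. The step using the centered shifted unembeddings $\tilde\vg$ rather than $\vg$ needs only the gauge fix.

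I expect the main obstacle to be this unembedding half. The infinite-dimensional (or $p_\vx$-indexed) variable, the precise normalization of $m_{\mathrm{CCA}}$ over $y$, and the spectral comparison between the uncentered second moment and $\vSigma_{\vu,\vu}$ all need care. If the operator view gets messy, I would instead whiten on the $\vf$-side directly, using $\vg(y) = \vSigma_{\vf,\vf}^{-1/2}$-type transforms, and compare eigenvalues via $\vL^\top\vSigma_{\vf,\vf}\vL$ and $\vSigma_{\vf,\vf}^{1/2}\vL\vL^\top\vSigma_{\vf,\vf}^{1/2}$, which share nonzero spectra.
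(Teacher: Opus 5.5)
Your proof is correct, but it is organized differently from the paper's. The paper first proves the weighted inequality $\sum_i(1-\rho_i^2)\mu_i\le\mathbb{E}\lVert\vu-\vu'\rVert^2$ (\cref{th:cca}) for moment-based correlations. It does so in four steps:
\begin{itemize}
\item it regresses $\vf$ on $\vf'$ (respectively $\vL$ on $\vL'$) by least squares;
\item it uses orthogonality of the residual to split the logit error Pythagorean-style (\cref{le:decomp1,le:decomp2});
\item it identifies the eigenvalues of the whitened residual second moment as $1-\rho_i^2$;
\item it pairs these with the spectrum of $\vL^\top\vM_{\vf,\vf}\vL$ through von Neumann's trace inequality.
\end{itemize}
Only then does it divide by the smallest eigenvalue and pass to covariance-based correlations by centering the embeddings (\cref{th:application3}).

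You instead whiten the teacher's logits and use the exact identity $\sum_i(1-\rho_i^2)=\min_{\vB,\vb}\mathbb{E}\lVert\vw-\vB\vu'-\vb\rVert^2$. You bound this minimum by the plug-in predictor $\vSigma_{\vu,\vu}^{+1/2}\vu'$, at the cost $\lVert\vSigma_{\vu,\vu}^{+1/2}\rVert_{\mathrm{op}}^2=1/\mu_m$. For the unembeddings you dualize: you view $y\mapsto\vf(\cdot)^\top\vg(y)$ as an $L^2(p_\vx)$-valued variable whose covariance has the nonzero eigenvalues of $\vM_{\vu,\vu}/k$, and you use $\vM_{\vu,\vu}\succeq\vSigma_{\vu,\vu}$. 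These steps all check out; the transfer back to $\vg$ uses that $\vM_{\vf,\vf}$ is invertible, which follows from $\mu_m>0$.

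Your route is shorter and avoids the trace inequality. It also does not need the full-rank assumptions on $\vL'$ and $\vSigma_{\vf',\vf'}$ that the paper imposes. For that, phrase the transfer step for the second model as the monotonicity $\rho_i(\vf,\vf')\ge\rho_i(\vf,\vL'^\top\vf')$, and likewise for $\vg'$: replacing a variable by a linear image of it can only lower the canonical correlations. Do not phrase it as ``missing correlations count as zero''.

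What your route gives up is the weighting. The operator-norm step discards $\mu_1,\dots,\mu_{m-1}$, so you obtain only $\sum_i(1-\rho_i^2)\le d_{\mathrm{logit}}^2/\mu_m$. The paper's inequality also yields, for example, the PCA-type bound $\sum_{i>m'}\mu_i\le d_{\mathrm{logit}}^2$ for a lower-dimensional student (\cref{th:mcca_like_pca}). Your framework gives only $(m-m')\mu_m\le d_{\mathrm{logit}}^2$ there, unless you reintroduce a trace-inequality argument.
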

%
Full proof in \cref{app:logit_dist_bounds_m_CCA}.
We emphasize that this result is invariant to 
linear transformations $(\vf,\vg)\mapsto (\vA\vf, \vA^{-\top}\vg)$.
The theorem shows that, with a small  difference in distributions in terms of $d_{\mathrm{logit}}$ of two models, we get similar representations in terms of $m_{\mathrm{CCA}}$. In particular, if the distributions of $p_{\vf, \vg}$ and $p_{\vf', \vg'}$ agree, we conclude that $m_{\mathrm{CCA}}(\vf(\vx), \vf'(\vx))=1$,
which means that there exists a matrix $\vB$ such that $\vf(\vx) = \vB\vf'(\vx)$ for all $\vx$
and similarly $\vg(y)=\tvB \vg(y)$ for another matrix $\tvB$.
Therefore this result  generalizes the linear identifiability of $\vf$ and $\vg$ in~\cref{theorem:identifiability} to models that are similar in terms of $d_{\mathrm{logit}}$ or $d_{\mathrm{KL}}$ (via Theorem~\ref{theorem:bound_kl}).
However, using $m_{\mathrm{CCA}}$ to measure representational similarity has some limitations. Having $m_{\mathrm{CCA}}(\vf(\vx), \vf'(\vx))=1$ and small KL divergence is not sufficient to conclude that the joint representations 
$(\vf,\vg)$ and $(\vf',\vg')$ are similar (see \cref{app:example_small_kl_eq_emb_dissim_unemb} for an example), 
and having both $m_{\mathrm{CCA}}(\vf(\vx), \vf'(\vx))=1$ and $m_{\mathrm{CCA}}(\vg(y), \vg'(y))=1$ does not ensure that the distributions of the models are equal (see \cref{app:example_max_mCCA_diff_dists} for an example). 
 Also, while we can conclude that $\vB=\tvB^{-\top}$ holds for $d_{\mathrm{logit}}=0$ and thus recover the identifiability result, no 
rates for the error can be obtained when $d_{\mathrm{logit}}$ is positive (see Appendix~\ref{sec:unembeddings_embeddings}).
This motivates the study of further dissimilarity measures which naturally incorporate the relation of embeddings and unembeddings and therefore better match the linear identifiability result.

\subsection{Linear Identifiability Dissimilarity}
\label{sec:defining_representational_dissimilarity}

In this section we define a dissimilarity measure, $d_{\mathrm{rep}}$, between representations inspired by \cref{theorem:identifiability}, in the sense that it is zero if and only if the representations are linearly equivalent. 
We first notice that linear equivalence in \cref{theorem:identifiability}, mediated by $\tilde \vA_\calJ$, necessitates a choice of pivot point and $m$ other labels such that $\tilde\vL_{\calJ}$ is invertible\footnote{Although in the case where distributions are equal, the $\tilde\vA_{\calJ}$ will be the same for any invertible choice (see \cref{prop:A-independence-pivot}).}. Since we do not want the invertibility of $\tilde\vL_{\calJ}$ to depend on the choice, we introduce the following assumption: 

\begin{assumption}
    \label{assumption:general_position}
    A model $(\vf, \vg) \in \Theta$ is in general position if, for every $\tilde y \in \calY$ and $\calJ \subseteq \calY \setminus \{\tilde y\}$,  
    the matrix $\tilde\vL_{\calJ} \in \bbR^{m \times m}$ (\cref{eq:unembeddings-matrix-m-m}) is invertible, and the embeddings span the representation space: $\mathrm{span}
    \{\vf(\vx): \vx \in \mathrm{supp}(p_\vx) \} = \bbR^m
    $.  
\end{assumption}
This assumption naturally requires that for any choice of pivot point, $\tilde y$, the shifted unembeddings $\tilde\vg(y)$ are in \textit{general position} \citep{cover1965geometrical}.\footnote{We say
$n \geq m$ vectors are in \textit{general position} in $\bbR^m$ if, for any choice of $m$ or fewer vectors, these are linearly independent.} 
It also ensures that the \textit{diversity condition} (\cref{assu:diversity-condition}) is satisfied, since the assumption is the same for the embeddings and slightly stronger for the shifted unembeddings. 
For models satisfying 
\cref{assumption:general_position}, \cref{theorem:identifiability} gives us the following corollary: 
\begin{corollary}
    \label{cor:identifiability_every_choice}
    For two models $(\vf, \vg), (\vf', \vg') \in \Theta$ that satisfy \cref{assumption:general_position}. If $p_{\vf, \vg} (y \mid \vx) = p_{\vf', \vg'} (y \mid \vx), \; \forall (\vx, y) \in \calX \times \calY$ then \textbf{for every choice of pivot point and $m$ labels}, we have
    \begin{align}        
        \vf(\vx) = \tilde\vA_{\calJ} \vf' (\vx) \quad \text{ and } \quad
            \tilde\vg(y) = \tilde\vA_{\calJ}^{-\top} \tilde\vg'(y) \, .
    \end{align}
    Equivalently, if there exists a choice of pivot and labels such that the linear relation does not hold, then $\exists \vx, y$ s.t. $p_{\vf, \vg} (y \mid \vx) \neq p_{\vf', \vg'} (y \mid \vx)$.
\end{corollary}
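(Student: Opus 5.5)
The corollary follows from \cref{theorem:identifiability}, applied separately for each admissible pivot and label set. The work is to check that the theorem's hypotheses hold for every such choice, and that its conclusion holds with the specific matrix $\tilde\vA_{\calJ}$.

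First, fix an arbitrary pivot $\tilde y \in \calY$ and an arbitrary set $\calJ = \{y_1,\dots,y_m\} \subseteq \calY \setminus \{\tilde y\}$. By \cref{assumption:general_position} applied to $(\vf,\vg)$, the matrix $\tilde\vL_{\calJ}$ is invertible and the embeddings span $\bbR^m$. Invertibility of $\tilde\vL_{\calJ}$ means its $m$ columns $\tilde\vg(y_1),\dots,\tilde\vg(y_m)$ are linearly independent, so $\mathrm{span}\{\tilde\vg(y): y\in\calY\} = \bbR^m$. Hence $(\vf,\vg)$ satisfies the diversity condition (\cref{assu:diversity-condition}), and this particular $(\tilde y, \calJ)$ is a valid choice in \cref{theorem:identifiability}.

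Second, apply \cref{theorem:identifiability} with this $(\tilde y,\calJ)$. Since $p_{\vf,\vg} = p_{\vf',\vg'}$ everywhere, the theorem gives $(\vf,\vg)\sim_L(\vf',\vg')$, and its final clause says the linking matrix can be taken to be $\vA = \tilde\vA_{\calJ} = \tilde\vL_{\calJ}^{-\top}\tilde\vL_{\calJ}'^{\top}$. This yields $\vf(\vx) = \tilde\vA_{\calJ}\vf'(\vx)$ and $\tilde\vg(y) = \tilde\vA_{\calJ}^{-\top}\tilde\vg'(y)$ for this choice. Because the choice was arbitrary, the relations hold for every pivot and every label set.

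The one point needing care is that $\tilde\vA_{\calJ}^{-\top}$ is well defined, i.e.\ that $\tilde\vL'_{\calJ}$ is also invertible. This follows from \cref{assumption:general_position} for $(\vf',\vg')$, which is why the corollary assumes general position for both models. It can also be recovered directly: equal distributions force the centered logits to agree, so $\vf(\vx)^\top\tilde\vL_{\calJ} = \vf'(\vx)^\top\tilde\vL'_{\calJ}$ for all $\vx$. Combined with the spanning of $\vf$, this rules out a kernel in $\tilde\vL'_{\calJ}$.

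If one prefers not to rely on the ``in particular'' clause of the theorem, the identity can be derived by hand. Choose any invertible $\vA$ given by $\sim_L$. Stacking $\tilde\vg(y_j) = \vA^{-\top}\tilde\vg'(y_j)$ over $j$ gives $\tilde\vL_{\calJ} = \vA^{-\top}\tilde\vL'_{\calJ}$, so $\vA = \tilde\vL_{\calJ}^{-\top}\tilde\vL_{\calJ}'^{\top} = \tilde\vA_{\calJ}$.

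The equivalent formulation in the corollary is simply the contrapositive of the statement just proved.
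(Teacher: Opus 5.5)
Your proof is correct and follows the paper's route. The paper obtains this corollary directly from \cref{theorem:identifiability}: general position makes every pivot and label set an admissible choice for the theorem, whose final clause then gives $\vA=\tilde\vA_{\calJ}$ for each choice. Your supplementary checks, the invertibility of $\tilde\vL'_{\calJ}$ and the stacking argument forcing any linking matrix to equal $\tilde\vA_{\calJ}$, are also valid.
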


Therefore, if we want to make a dissimilarity measure which measures how far the representations are from being in the same equivalence class ---independent of the choice of pivot points and labels--- we have to consider the error for all possible choices of labels which can be used to construct the $\tilde\vA_{\calJ}$ matrix. We capture this in the definition below.

\begin{definition}
    \label{def:direct_rep_distance}
    Let $J = \binom{k-1}{m}$. For any two choices of models $(\vf, \vg), (\vf', \vg') \in \Theta$ satisfying \cref{assumption:general_position}, the squared \emph{linear identifiability dissimilarity} is given by 
    \begin{align}
        &d_{\mathrm{rep}}^2((\vf, \vg),(\vf',\vg')) \coloneq \\ 
    & \quad \frac{1}{kJ} \sum_{\tilde y \in \mathcal{Y}} \sum_{\calJ \subseteq \calY \setminus \{\tilde y\}}
    \mathbb{E}_{\vx \sim p_\vx} \left\Vert \vf(\vx) - \tilde\vA_{\calJ} \vf'(\vx)\right\Vert_2^2 \;. \nonumber
    \end{align}
\end{definition}
The \textit{linear identifiability dissimilarity} in \cref{def:direct_rep_distance} averages the squared distance between $\vf(\vx)$ and $\tilde\vA_{\calJ} \vf'(\vx)$ over every choice of pivot and labels and every input. Note that because of \cref{assumption:general_position} all choices lead to invertible matrices. We see that by \cref{cor:identifiability_every_choice}, if the distributions of $(\vf, \vg)$ and $(\vf',\vg')$ are equal, then $d_{\mathrm{rep}}$ is zero. Next, we will show that using the logit distance between distributions, this can be bounded. We will also show that if $d_{\mathrm{rep}}((\vf, \vg),(\vf',\vg')) = 0$ then distributions are also equal, so equal representations in this sense implies equal distributions. Note that this differs from for example $m_{\mathrm{CCA}}$, since $m_{\mathrm{CCA}}$ can be maximal between both embeddings and unembeddings without the distributions being equal (see \cref{app:example_max_mCCA_diff_dists}). In other words, the \emph{linear identifiability dissimilarity} gives us a tighter connection between the functional, $d_{\mathrm{logit}}$, and representational, $d_{\mathrm{rep}}$, similarity measures (see \citet{klabunde2025similarity} for details on this distinction).

\subsection{Logit Distance Bounds Representational Distance}
\label{sec:logit_dist_bounds_d_rep}

In this section, we show that, for our model family, 
the {logit distance} is zero if and only if the {linear identifiability dissimilarity} vanishes and that the former bounds the latter.
As a corollary, we show that this bound extends also to KL.
Full proofs for this section are in \cref{app:d_LDD_is_norm_d_reps_full_proof}. 

We note that, for two models $(\vf, \vg), (\vf', \vg') \in \Theta$ 
that satisfy \cref{assumption:general_position},  
the squared logit distance can be written as a sum of terms of the form $\big\Vert \vB^{-\top} \big( \vf(\vx) - \tilde\vA_{\calJ}\vf'(\vx) \big) \big\Vert_2^2$ for a matrix $\vB$ 
related to the SVD decomposition of $\tilde \vL_\calJ$
(see \cref{theorem:distrubutions_same_as_representations_app} for the precise statement).
This shows that the logit difference is naturally connected to the {linear identifiability dissimilarity} and gives us the following corollary: 
\begin{corollary}
    \label{cor:d_rep_zero_makes_distributions_equal}
    For $(\vf, \vg), (\vf', \vg') \in \Theta$ satisfying \cref{assumption:general_position}, we have 
    \begin{align}
        \nonumber
        d_{\mathrm{rep}}((\vf, \vg), (\vf', \vg')) = 0 \iff d_{\mathrm{logit}}(p_{\vf, \vg}, p_{\vf', \vg'}) = 0 \, .
    \end{align}
\end{corollary}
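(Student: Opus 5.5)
Both directions go through the identity between logit differences and the mismatch $\vf(\vx)-\tilde\vA_\calJ\vf'(\vx)$. The key observation is that both models' logits are centered, so logit differences and shifted-logit differences carry the same information. Concretely, for any pivot $\tilde y$ we have $u(\vx)_y - u(\vx)_{\tilde y} = \vf(\vx)^\top\tilde\vg(y)$. Since $\sum_y u(\vx)_y=0$, the full logit vector is recovered from these shifted logits: $u(\vx)_{\tilde y} = -\frac1k\sum_y \vf(\vx)^\top\tilde\vg(y)$. The same holds for the primed model. Hence $\vu(\vx)=\vu'(\vx)$ if and only if $\vf(\vx)^\top\tilde\vg(y)=\vf'(\vx)^\top\tilde\vg'(y)$ for all $y$.

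\emph{($\Rightarrow$)} Assume $d_{\mathrm{rep}}=0$. Since every summand in \cref{def:direct_rep_distance} is nonnegative, each vanishes. Fix one pivot $\tilde y$ and a label set $\calJ$. Then $\vf(\vx)=\tilde\vA_\calJ\vf'(\vx)$ for $p_\vx$-a.e.\ $\vx$.

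\begin{itemize}
\item \textbf{Step 1: inputs in $\calJ$.} For $y\in\calJ$, the definition $\tilde\vA_\calJ=\tilde\vL_\calJ^{-\top}\tilde\vL_\calJ'^{\top}$ gives
\[
\tilde\vL_\calJ^\top\vf(\vx)=\tilde\vL_\calJ'^\top\vf'(\vx).
\]
So the shifted logits agree on $\calJ$.
\item \textbf{Step 2: extending beyond $\calJ$.} I would use the other summands. For any label $y\notin\calJ\cup\{\tilde y\}$, take $\calJ'=(\calJ\setminus\{y_1\})\cup\{y\}$, with the same pivot. General position (\cref{assumption:general_position}) makes $\tilde\vL_{\calJ'}$ invertible, and the vanishing of the $\calJ'$ summand gives agreement of the shifted logits on $\calJ'$, hence at $y$.
\item \textbf{Step 3: conclude.} The shifted logits now agree for all $y$ a.e. By the centering remark, $\vu=\vu'$ a.e., so $d_{\mathrm{logit}}=0$.
\end{itemize}

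\emph{($\Leftarrow$)} Assume $d_{\mathrm{logit}}=0$, so $\vu(\vx)=\vu'(\vx)$ for $p_\vx$-a.e.\ $\vx$. Then the shifted logits coincide for every pivot $\tilde y$ and every $\calJ$:
\[
\tilde\vL_\calJ^\top\vf(\vx)=\tilde\vL_\calJ'^\top\vf'(\vx).
\]
By \cref{assumption:general_position}, $\tilde\vL_\calJ$ is invertible for every choice, so $\vf(\vx)=\tilde\vA_\calJ\vf'(\vx)$ a.e.\ for every $(\tilde y,\calJ)$. Every summand of $d_{\mathrm{rep}}^2$ is therefore zero. Alternatively, one could invoke \cref{cor:identifiability_every_choice}, but that corollary assumes equality for all $\vx$. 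The direct computation above avoids needing that and works with a.e.\ equality.

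\textbf{Main obstacle.} The delicate point is the measure-theoretic one: $d_{\mathrm{logit}}=0$ gives only a.e.\ equality, whereas the cited identifiability results are stated pointwise. I would therefore phrase everything a.e.\ with respect to $p_\vx$, which is exactly what both $d_{\mathrm{logit}}$ and $d_{\mathrm{rep}}$ see. I would also check that the swapping argument in Step 2 needs only the finitely many null sets from the finitely many $\calJ$, so their union is still null. Only the invertibility of $\tilde\vL_\calJ$ is used; the spanning condition on $\vf$ is not needed for this corollary. The more quantitative identity, \cref{theorem:distrubutions_same_as_representations_app}, would give this corollary immediately, but the direct argument above avoids depending on its exact form.
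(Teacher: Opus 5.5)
Your proof is correct, and your $(\Rightarrow)$ direction is the qualitative core of what the paper does. The paper simply cites the identity
$\lVert\vu(\vx)-\vu'(\vx)\rVert_2^2=\frac{1}{2kN}\sum_{\tilde y}\sum_{\calJ}\lVert\tilde\vB_\calJ^{-\top}(\vf(\vx)-\tilde\vA_\calJ\vf'(\vx))\rVert_2^2$
of \cref{theorem:distrubutions_same_as_representations_app}. That identity rests on the same two facts you use: shifted logits on $\calJ$ equal $\tilde\vL_\calJ^\top\vf(\vx)$, and centering lets the shifted logits determine $\vu$ (cf.\ \cref{prop:d_LLD_closer_to_d_rep}).

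For $(\Leftarrow)$ the paper takes a detour. It uses the metric property to get $p=p'$ from $d_{\mathrm{logit}}=0$, and then invokes \cref{cor:identifiability_every_choice} to obtain $\vf=\tilde\vA_\calJ\vf'$ for every pivot and label set. Your direct inversion of $\tilde\vL_\calJ^\top\vf(\vx)=\tilde\vL_\calJ'^\top\vf'(\vx)$ is exactly the key step inside the proof of \cref{theorem:identifiability}, applied straight to the logits rather than routed through probabilities.

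Your route buys two things:
\begin{itemize}
\item It works with $p_\vx$-a.e.\ equality throughout. The paper's appeal to a corollary stated for all $\vx$ silently upgrades a.e.\ to pointwise equality.
\item It makes visible that only invertibility of the first model's matrices $\tilde\vL_\calJ$ is used. Neither the spanning condition on $\vf$ nor any assumption on $(\vf',\vg')$ enters.
\end{itemize}
The paper's route is shorter given its surrounding machinery, and it frames the corollary as a direct consequence of identifiability.

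A minor remark: your Step~2 swap is harmless but unnecessary. Since $d_{\mathrm{rep}}=0$ forces every summand to vanish simultaneously, each $y\neq\tilde y$ already lies in some admissible $\calJ$.
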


Moreover,
we can upper (and lower, see \cref{app:norm_dist_bounds_direct_dist_full_proof}) bound $d_{\mathrm{rep}}$ using $d_{\mathrm{logit}}$ and the singular values of $\tilde\vL_{\calJ}$.

\begin{theorem}
    \label{theorem:norm_dist_bounds_direct_dist}
    Let $p_{\vf, \vg}, p_{\vf', \vg'}$ be as before, and $m$ the dimension of the representations. Let $C = \sqrt{\frac{2m}{k-1}}$. Let $\sigma_{\min}$ be the smallest singular value\footnote{Note that since all the $\tilde\vL_{\calJ}$ are invertible, $\sigma_{\min}>0$.} of all $\tilde\vL_{\calJ}$. Then $d_{\mathrm{logit}}$, bounds the \textit{linear identifiability dissimilarity} $d_{\mathrm{rep}}$ as follows:
    \begin{align}
         d_{\mathrm{rep}}((\vf, \vg), (\vf', \vg')) 
         \leq C \; \frac{d_{\mathrm{logit}}(p_{\vf, \vg}, p_{\vf', \vg'})}{\sigma_{\min} } \;.
    \end{align}
\end{theorem}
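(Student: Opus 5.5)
For a fixed pivot $\tilde y$ and label set $\calJ$, I would bound the single term $\mathbb{E}_{\vx}\Vert \vf(\vx) - \tilde\vA_{\calJ}\vf'(\vx)\Vert_2^2$ by logit differences, and then average over all $(\tilde y,\calJ)$.

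Fix $\tilde y$ and $\calJ=\{y_1,\dots,y_m\}$. By definition of the logits, $u(\vx)_{y_j}-u(\vx)_{\tilde y} = \vf(\vx)^\top\tilde\vg(y_j)$. Stacking over $j$ gives $\tilde\vL_\calJ^\top \vf(\vx) = \vw_\calJ(\vx)$, where $\vw_\calJ(\vx)\in\bbR^m$ has entries $u(\vx)_{y_j}-u(\vx)_{\tilde y}$. The analogous identity holds for the primed model. Since $\tilde\vA_\calJ = \tilde\vL_\calJ^{-\top}\tilde\vL_\calJ'^{\top}$, we obtain
\[
\vf(\vx)-\tilde\vA_\calJ\vf'(\vx) = \tilde\vL_\calJ^{-\top}\big(\tilde\vL_\calJ^\top\vf(\vx)-\tilde\vL_\calJ'^\top\vf'(\vx)\big) = \tilde\vL_\calJ^{-\top}\big(\vw_\calJ(\vx)-\vw'_\calJ(\vx)\big).
\]
Hence
\[
\Vert \vf(\vx)-\tilde\vA_\calJ\vf'(\vx)\Vert_2^2 \le \sigma_{\min}^{-2}\sum_{j=1}^m \big(\delta_{y_j}(\vx)-\delta_{\tilde y}(\vx)\big)^2,
\]
where $\vdelta=\vu-\vu'$ and we use that $\sigma_{\min}$ lower-bounds the smallest singular value of every $\tilde\vL_\calJ$. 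This is where \cref{assumption:general_position} is needed: it makes every $\tilde\vL_\calJ$ invertible.

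Now sum over all choices, which is a counting argument. For a fixed pivot $\tilde y$, each label $y\neq\tilde y$ lies in exactly $\binom{k-2}{m-1}=\frac{m}{k-1}J$ of the sets $\calJ$. Therefore
\[
\sum_{\calJ}\sum_{y\in\calJ}(\delta_y-\delta_{\tilde y})^2=\frac{mJ}{k-1}\sum_{y\neq\tilde y}(\delta_y-\delta_{\tilde y})^2 .
\]
Summing next over $\tilde y$ gives $\sum_{\tilde y}\sum_{y}(\delta_y-\delta_{\tilde y})^2 = 2k\Vert\vdelta\Vert_2^2 - 2\big(\sum_y\delta_y\big)^2$. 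The gauge fixing in \cref{eq:gauge-fix-unembeddings-logits} makes both logit vectors centered, so $\sum_y\delta_y=0$ and the double sum equals $2k\Vert\vdelta(\vx)\Vert_2^2$.

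Putting these together and taking the expectation over $\vx$:
\[
d_{\mathrm{rep}}^2 \le \frac{1}{kJ}\cdot\frac{1}{\sigma_{\min}^2}\cdot\frac{mJ}{k-1}\cdot 2k\,d_{\mathrm{logit}}^2 = \frac{2m}{k-1}\frac{d_{\mathrm{logit}}^2}{\sigma_{\min}^2}.
\]
Taking square roots gives the claim with $C=\sqrt{2m/(k-1)}$.

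The only delicate point is using the centering. Without it, the pairwise-difference identity would carry the extra term $-2(\sum_y\delta_y)^2$, which is nonpositive and so could simply be dropped. So centering is used, but the inequality does not actually depend on it.
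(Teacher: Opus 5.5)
Your proof is correct and follows essentially the paper's route. The paper also writes $\Vert\vu(\vx)-\vu'(\vx)\Vert_2^2$ as the pivot/subset average of $\Vert\tilde\vL_\calJ^\top\vf(\vx)-\tilde\vL_\calJ'^\top\vf'(\vx)\Vert_2^2$ (the same pairwise-difference counting with $N=\binom{k-2}{m-1}$). It then bounds each term below by $\sigma_{\min}^2\Vert\vf(\vx)-\tilde\vA_\calJ\vf'(\vx)\Vert_2^2$ via the SVD factor $\tilde\vB_\calJ^{-\top}$, which is equivalent to your operator-norm bound on $\tilde\vL_\calJ^{-\top}$, and gets the constant from $2N/J=2m/(k-1)$. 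Your remark that centering only removes a nonpositive term, so the inequality holds without it, is also correct.
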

%

Interestingly, this result highlights that the connection between distributional distance and representational dissimilarity is tightly connected to both embeddings and unembeddings. 
In fact, the bound depends on the minimum singular value among the unembeddings matrices $\tilde \vL_\calJ$, i.e., $\sigma_{\min}$, accounting for the direction in embedding space of minimum variation of the shifted unembeddings.
In particular, small values of $d_\mathrm{logit}$ will ensure that the embeddings of the two models are close to being a linear invertible transformation of each other, where the transformation is mediated by the unembeddings. 
This also links to a bound on $d_\mathrm{rep}$ through $d_{\mathrm{KL}}$ when models have lower bounded probabilities:

%
\begin{corollary}
\label{cor:kl-bounds-drep}
    Let $(\vf, \vg),(\vf', \vg') \in \Theta$ be two models from our model class \eqref{eq:model-class} which satisfy \cref{assu:tau-lower-bound} and \cref{assumption:general_position} . Let $C = \sqrt{\frac{2m}{k-1}}$ and $C_{KL} = \frac{2C\vert \log(\tau)\vert}{\sqrt{\tau}}$. Then 
    \begin{align}
        d_{\mathrm{rep}}((\vf, \vg), (\vf', \vg')) \leq
        C_{KL}\frac{\sqrt{d_{\mathrm{KL}}(p_{\vf,\vg}, p_{\vf', \vg'})}}{\sigma_{\min}}  \, .
    \end{align}
\end{corollary}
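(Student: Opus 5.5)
This corollary follows by chaining \cref{theorem:norm_dist_bounds_direct_dist} with the upper bound of \cref{theorem:bound_kl}. First I check that the hypotheses of both results hold. Both models satisfy \cref{assumption:general_position}, so every $\tilde\vL_\calJ$ is invertible. Hence $\sigma_{\min}>0$ and $d_{\mathrm{rep}}$ is well defined, and \cref{theorem:norm_dist_bounds_direct_dist} gives
\begin{align}
d_{\mathrm{rep}}((\vf, \vg), (\vf', \vg')) \leq C\,\frac{d_{\mathrm{logit}}(p_{\vf,\vg}, p_{\vf',\vg'})}{\sigma_{\min}} .
\end{align}
Both models are also $\tau$-lower bounded by \cref{assu:tau-lower-bound}, which puts us in the regime of \cref{eq:lower-bound-logits-KL_both_tau_bounded}.

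That bound requires $0<\tau<1/3$. For $k\geq 3$ labels this is essentially automatic, since $\tau\leq 1/k$. The edge case $\tau=1/k=1/3$ would need separate treatment; I would either restrict to $k>3$ or note that \cref{theorem:bound_kl} extends by continuity. Given the $\tau$ restriction, I take square roots in \cref{eq:lower-bound-logits-KL_both_tau_bounded}:
\begin{align}
d_{\mathrm{logit}} \leq \frac{2|\log\tau|}{\sqrt{\tau}}\sqrt{d_{\mathrm{KL}}} .
\end{align}
This step is valid because both sides are nonnegative and $\sqrt{\log(\tau)^2}=|\log\tau|$.

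Substituting the second display into the first gives
\begin{align}
d_{\mathrm{rep}} \leq C\,\frac{2|\log\tau|}{\sqrt{\tau}}\,\frac{\sqrt{d_{\mathrm{KL}}}}{\sigma_{\min}} = C_{KL}\,\frac{\sqrt{d_{\mathrm{KL}}}}{\sigma_{\min}} ,
\end{align}
which is exactly the claimed bound with $C_{KL}=2C|\log\tau|/\sqrt{\tau}$.

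No step here is a genuine obstacle, since all the analytic work sits in the two earlier theorems. The only care needed is bookkeeping the constants and confirming the range condition on $\tau$.
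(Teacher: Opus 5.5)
Your proof is correct and is exactly the argument the paper intends: the paper gives no separate written proof of this corollary and presents it as the direct combination of \cref{theorem:norm_dist_bounds_direct_dist} with the $\tau$-lower-bounded estimate \eqref{eq:lower-bound-logits-KL_both_tau_bounded}, after taking square roots. Your worry about the edge case $\tau=1/3$ is unnecessary, since the underlying \cref{le:upperKL1} is stated for $0<\tau\le 1/3$; your continuity argument (a $\tau$-lower-bounded model is also $\tau'$-lower bounded for every $\tau'<\tau$) would settle it as well.
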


\section{Applications: Distillation and Interpretability}
\label{sec:distill_interpretability}

\textbf{Knowledge distillation} (KD) aims to train a compact student model to match the predictive distribution of a larger teacher. 
In its original formulation \citep{hinton2015distilling}, KD combines a supervised cross-entropy loss, $\calL_{p_\calD}$, on the data distribution $p_\calD (y \mid \vx)$ with a KL divergence term that aligns the student distribution with that of a fixed teacher $(\vf, \vg)$ with distribution $p_{\vf, \vg}$. The student $(\vf',\vg')$ with distribution $p_{\vf' \vg'}$ is obtained by solving
\begin{equation*}
    (\vf', \vg') \in \argmin_{(\vf^*, \vg^*) \in \Theta} \big(
        \calL_{p_\calD}(\vf^*, \vg^*) + d_\mathrm{KL}(p_{\vf, \vg}, p_{\vf^*,\vg^*})
    \big)
\end{equation*}
typically using only a small subset of training samples.
A large body of work explores refinements of this objective to better match the teacher distribution \citep{mansourian2025comprehensive}, 
including alternative formulations of the KL term \citep{zhao2022decoupled}    
or by considering an equivalent loss to $d_\mathrm{logit}$ \citep{navaneet2021regression, kim2021comparing}. 
When available, the teacher's intermediate activations can further guide training \citep{romero2015fitnetshintsdeepnets, huang2023knowledge}, with regression on embeddings or unembeddings encouraging closer representational alignment. 

Our analysis highlights that, when only teacher probabilities are available, 
the original $d_\mathrm{KL}$ minimization does not significantly bound representational similarity (as measured with $d_\mathrm{rep}$) between teacher and student. 
When only teacher probabilities or logits are available, 
a better learning target is to minimize an equivalent loss to $d_\mathrm{logit}$.  
Apart from using $d_\mathrm{logit}$ as a loss directly, we consider training with a $L_1$ variant of the logit distance between teacher logits $\vu$ and student  $\vu'$, that is
\begin{align}
    &\mathcal{L}_{\mathrm{logit}}^{1}(p_{\vf, \vg}, p_{\vf', \vg'}) \coloneq \mathbb{E}_{\vx \sim p_\vx} \left\Vert \vu(\vx) - \vu'(\vx)  \right\Vert_1 \, ,
    \label{eq:l1-logits}
\end{align}
Using $\calL^1_\mathrm{logit}$ improves training stability \citep{qi2020mean} and, importantly,
still leads to decreasing $d_\mathrm{logit}(p_{\vf, \vg}, p_{\vf', \vg'})$:

\begin{proposition}
    For two models $(\vf, \vg), (\vf', \vg') \in \Theta$ which both satisfy Assumption~\ref{assu:tau-lower-bound} for some $\tau>0$, 
    then 
    \begin{align}
        d_{\mathrm{logit}}^2(p_{\vf, \vg}, p_{\vf', \vg'}) 
         \leq 
        2|\log(\tau)|  \calL^1_\mathrm{logit}(p_{\vf, \vg}, p_{\vf', \vg'}) \, .
    \end{align}
\end{proposition}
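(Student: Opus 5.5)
The plan is to prove the inequality pointwise in $\vx$ and then take expectations. For any vector $\vv\in\bbR^k$ we have the elementary bound
\begin{align}
    \Vert \vv\Vert_2^2=\sum_{i=1}^k v_i^2\le \Big(\max_i |v_i|\Big)\sum_{i=1}^k |v_i| = \Vert\vv\Vert_\infty \Vert\vv\Vert_1 .
\end{align}
We apply it with $\vv=\vu(\vx)-\vu'(\vx)$. It then suffices to show that, for $p_\vx$-almost every $\vx$,
\begin{align}
    \Vert \vu(\vx)-\vu'(\vx)\Vert_\infty\le 2|\log(\tau)|.
\end{align}
Integrating $\Vert\vu(\vx)-\vu'(\vx)\Vert_2^2\le 2|\log(\tau)|\,\Vert\vu(\vx)-\vu'(\vx)\Vert_1$ against $p_\vx$ then gives exactly $d_{\mathrm{logit}}^2\le 2|\log(\tau)|\,\calL^1_{\mathrm{logit}}$.

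To get the sup-norm bound, we use the gauge fixing from the preliminaries: unembeddings are centered, so the logits satisfy $\sum_i u(\vx)_i=0$. Since $\mathrm{softmax}(\vu(\vx))=p_{\vf,\vg}(\cdot\mid\vx)$, the centered logits are the centered log-probabilities:
\begin{align}
    u(\vx)_i=\log p_{\vf,\vg}(y_i\mid\vx)-\frac1k\sum_{j=1}^k\log p_{\vf,\vg}(y_j\mid\vx).
\end{align}
The same holds for $\vu'$. Therefore
\begin{align}
    u(\vx)_i-u'(\vx)_i=\Big(\log p(y_i\mid\vx)-\log p'(y_i\mid\vx)\Big)-\frac1k\sum_{j=1}^k\Big(\log p(y_j\mid\vx)-\log p'(y_j\mid\vx)\Big).
\end{align}
By Assumption~\ref{assu:tau-lower-bound} for both models, and because probabilities are at most $1$, every log-probability lies in $[\log\tau,0]$. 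Hence each difference $\log p-\log p'$ lies in $[-|\log\tau|,|\log\tau|]$. The average of such differences lies in the same interval, so the combination above is bounded in absolute value by $2|\log\tau|$.

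The only step needing care is identifying the logits with the centered log-probabilities; this is where the centering convention in the preliminaries is used. Without centering, the logits are determined only up to an additive shift per $\vx$, and no sup-norm bound would hold. Everything else is routine, and the null set where the $\tau$-bound may fail does not affect the expectations.
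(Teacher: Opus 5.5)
Your proof is correct and matches the paper's argument. The paper also writes the centered logits as $\log p(y_i\mid\vx)-\overline{\log p(\cdot\mid\vx)}$ and bounds each entry $|(\vu(\vx)-\vu'(\vx))_i|$ by $2|\log(\tau)|$. It then uses $(\vu(\vx)-\vu'(\vx))_i^2\le 2|\log(\tau)|\,|(\vu(\vx)-\vu'(\vx))_i|$ before summing and integrating, which is your $\Vert\cdot\Vert_2^2\le\Vert\cdot\Vert_\infty\Vert\cdot\Vert_1$ step written entrywise. The only difference is cosmetic: you group the terms as per-label log-ratios minus their mean, whereas the paper takes the difference of each model's centered log-probabilities, and both give the same $2|\log(\tau)|$ bound.
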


In particular, with this result we have that
minimizing $\mathcal{L}_{\mathrm{logit}}^{1}$ will naturally reduce $d_{\mathrm{logit}}$ between the teacher and student distributions, independently of the representation dimensionality of the two. 
Full proof in \cref{app:loss_bounds_d_LLD_full_proof}.

\begin{figure*}[t]

    \centering

    \begin{tabular}{c|ccc}
        \Synthetic 
        &
        Teacher 
        &
        $\mathrm{KL}$-student
        &
        $L_1$-student
        \\
         \includegraphics[width=0.225\textwidth]{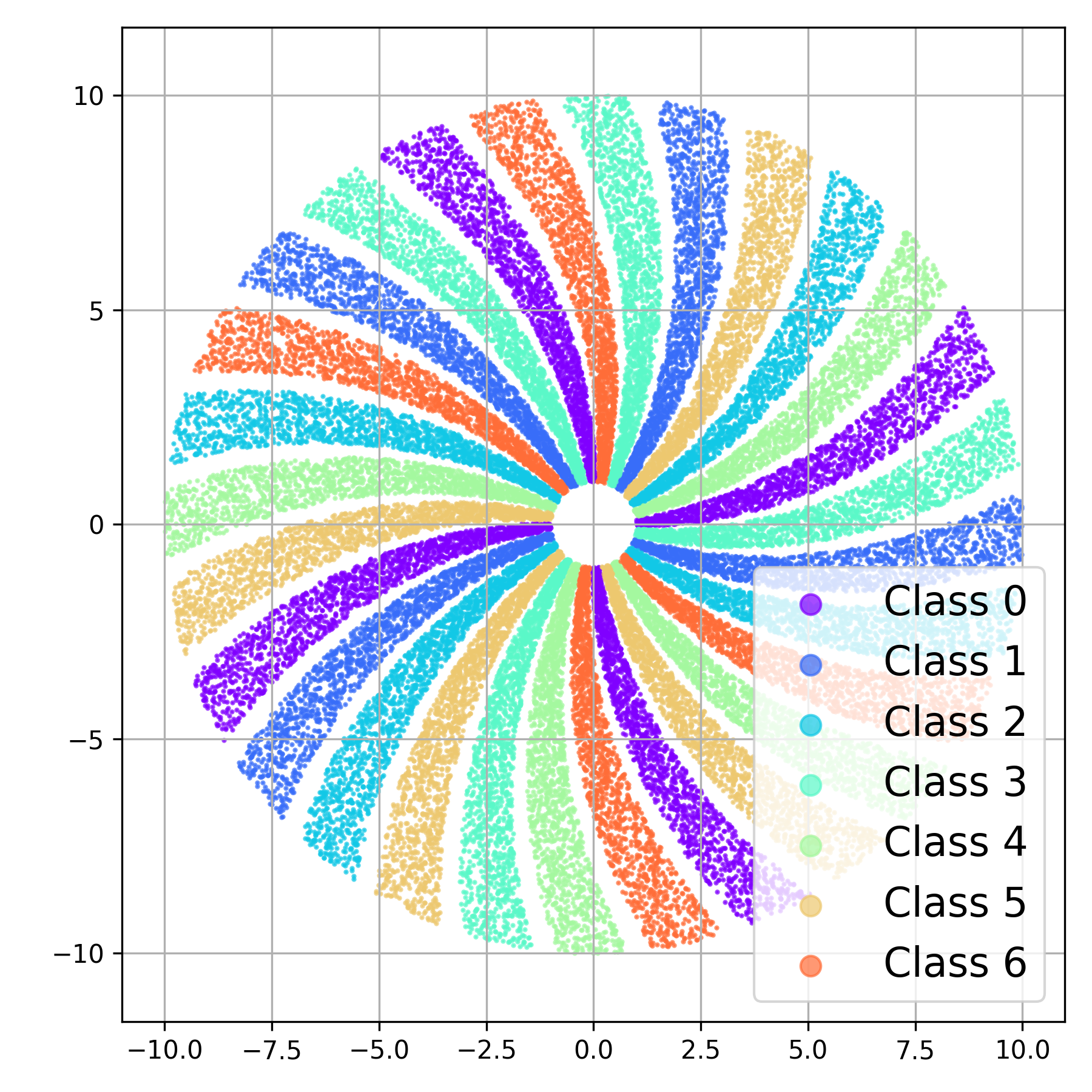}
         & 
         \includegraphics[width=0.225\textwidth]{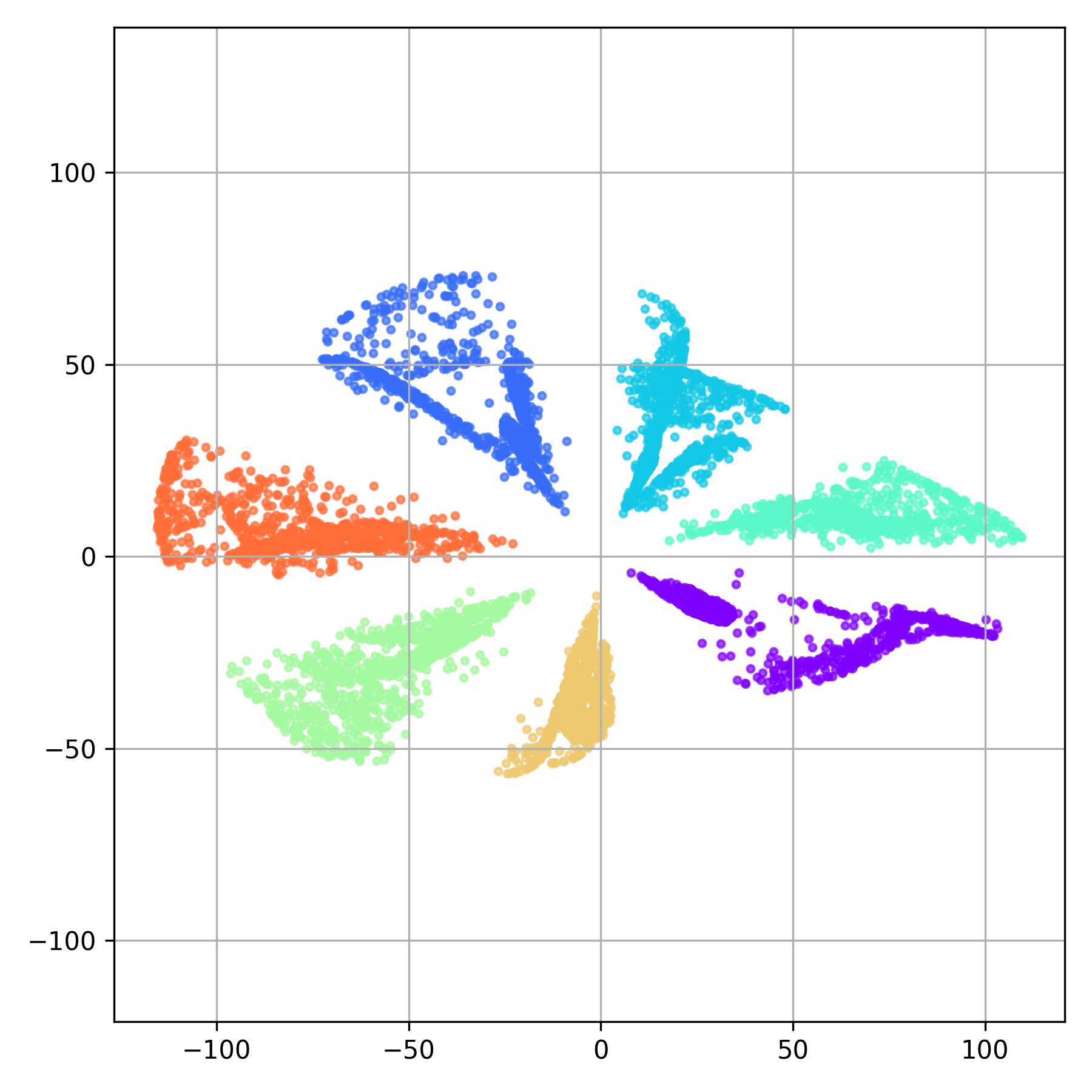}
         &
         \includegraphics[width=0.225\textwidth]{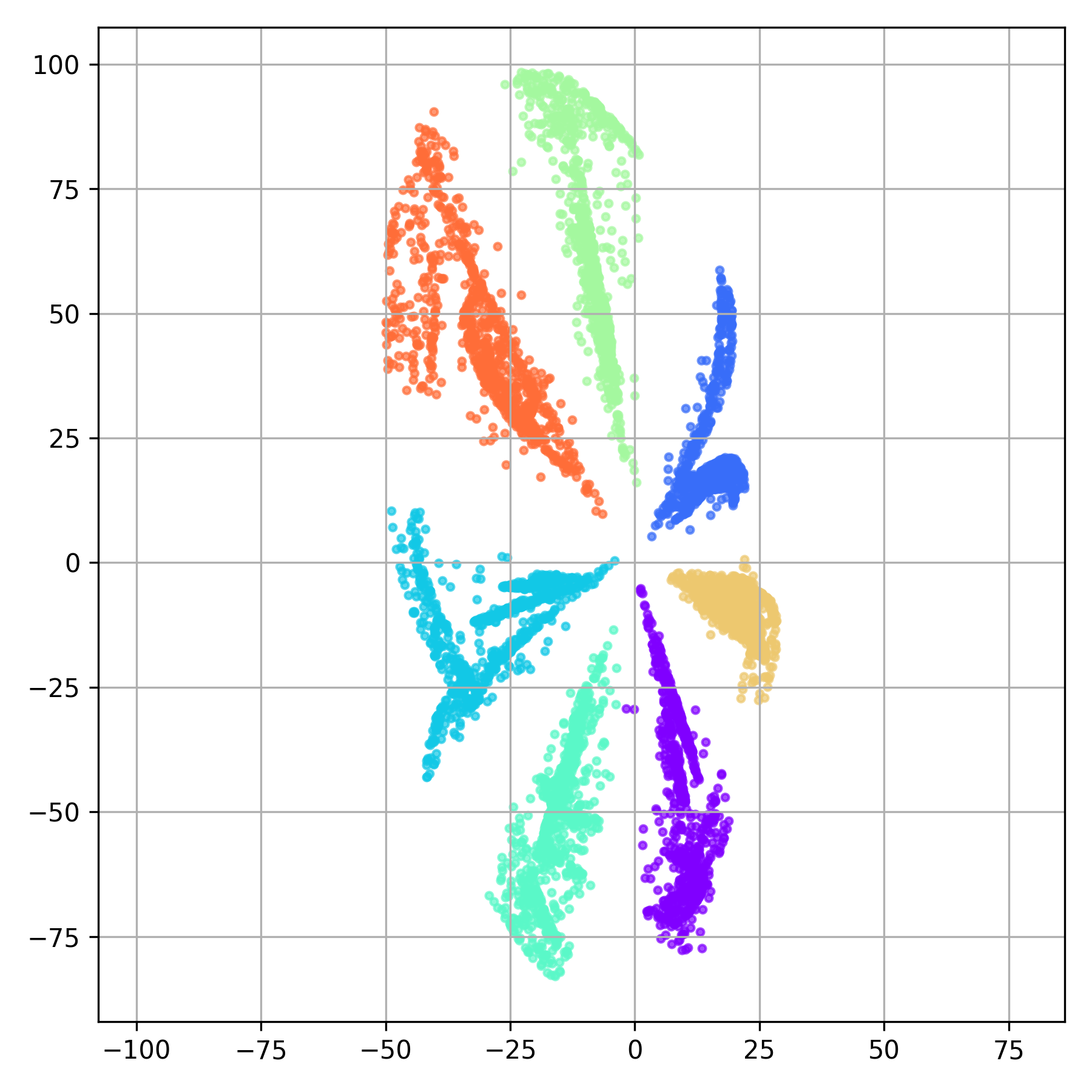} 
         & 
         \includegraphics[width=0.225\textwidth]{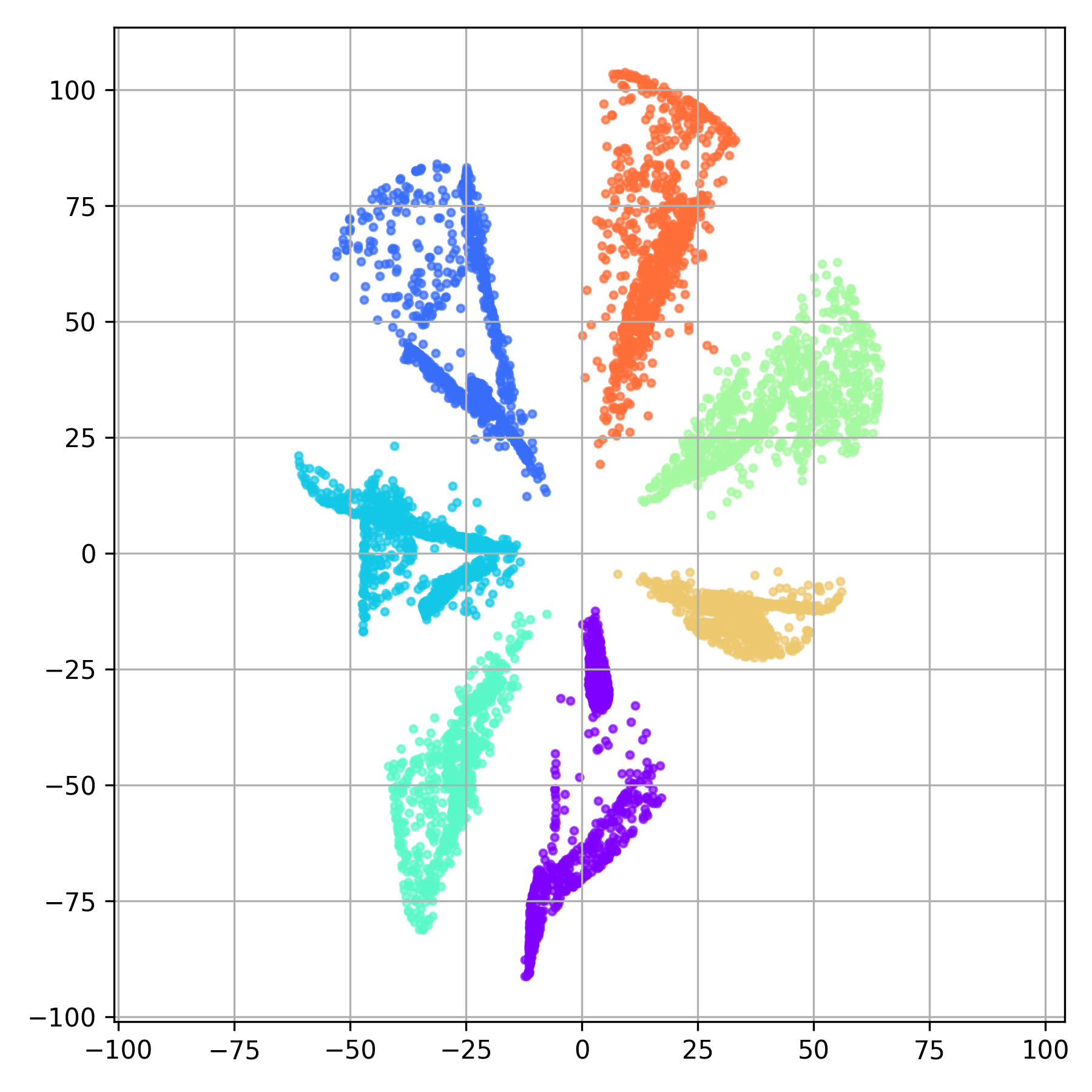}
    \end{tabular}

    \caption{\label{fig:synth_data_example} 
    On the left, input data of the \Synthetic dataset~(\cref{sec:experiments}), where inputs are colored based on their labels. 
    The remaining plots display the model embeddings of the teacher and student models. 
    We notice that embeddings of points belonging to  \textbf{\textcolor{blue2}{class 1}} are nearest neighbors to those of \textbf{\textcolor{red2}{class 6}} and \textbf{\textcolor{turquoise2}{class 2}} for the teacher and similarly for the $L_1$-student, but not for the KL student.
    Here, the KL-student has low linear similarity to the teacher ($d_\mathrm{rep} \approx 6.9 $ and $m_\mathrm{CCA} \approx 0.62$), while 
     the $L_1$ student has higher similarity ($d_\mathrm{rep} \approx 1.3$ and $m_\mathrm{CCA} \approx 0.99$).
    }
    \label{fig:synthetic-teacher-students-visualization}
    
\end{figure*}

\textbf{Interpretable properties.}
Prior works have investigated whether model embeddings linearly encode latent, interpretable concepts \citep{mikolov2013linguistic, kim2018interpretability, engels2025not}.
For example, \citet{marks2024geometry} showed that the truth value of a statement $\vx$ can be approximated by a linear function of its language model embedding $\vf(\vx)$.
Similar forms of linear separability have also been studied in pretrained neural networks for visual concepts in image classification \citep{alain2017understanding}.   

Formally, 
we define a \textit{categorical concept} as a function $\vh: \calX \to \Delta^{C-1}$, 
where $\Delta^{C-1}$
is the simplex over $C \geq 2$ values, giving a distribution over concept values
\[
    p_\vh(c \mid \vx) := \vh(\vx)_c, \; \forall c\in \{1, \ldots, C \} \, .
    \label{eq:concept-probability-distribution}
\]
%
Inspired by previous works~\citep{alain2017understanding, rajendran2024from},
we propose a definition that captures linear separability of concepts from model embeddings:
\begin{definition}[Linearly encoded concept $\vh$ in $\vf$]
\label{def:linear-probing-concept}
    For a concept $\vh: \calX \to \Delta^{C-1}$, with $C \geq 2$, 
    giving a distribution $p_\vh$ as in \cref{eq:concept-probability-distribution},
    we say that that $\vh$ is \emph{linearly encoded} in $\vf$ if 
    there exist $C$ linear weights $\vw_c \in \bbR^{m}$ and biases $b_c \in \bbR$ such that, for all $c \in \{1, \ldots, C \}$,
    \[  
        \frac{
        \exp \big( 
            \vw_c^\top \vf(\vx) + b_c
        \big)
        }{\sum_{j =1}^C \exp \big(\vw_j^\top \vf(\vx) + b_j \big)
        } 
        = 
        p_\vh(c \mid \vx) \,.
    \]
    Then, we indicate with $\vW \in \bbR^{m \times C}$ the matrix of weights $\vw_c$ and with $\vb \in \bbR^C$ the vector of biases $b_c$.  
\end{definition}
In words, \cref{def:linear-probing-concept} states that the probability assignments over concept values defined by $p_\vh$ can be recovered through a linear function of the model embeddings.  
This also implies that the embedding space is partitioned linearly into convex regions, each corresponding to the most likely concept label \citep{snell2017prototypical}, as illustrated in \cref{fig:main-fig} (left).
Crucially, we show that $d_\mathrm{logit}$ bounds the degree to which concepts are shared between models that are close: 
\begin{theorem}
    \label{thm:linear-concepts-robustness}
    For two models $(\vf, \vg), (\vf', \vg') \in \Theta$ and a concept $\vh:\calX \to \Delta^{C-1}$ with $C \geq 2$,  
    if $\vh$ is linearly encoded in $\vf$, 
    then
    \begin{align}
        \textstyle
        \min_{\vW', \vb'
        } 
        d_\mathrm{KL} \big(
            p_\vh &, 
            \mathrm{softmax} \big( \vW^{' \top} \vf'(\cdot) + \vb')
        \big) 
        \nonumber
        \\ 
        & \; \leq
       \frac{1}{2} \lVert \vA\rVert_{op}^2 d_{\mathrm{logit}}^2(p_{\vf, \vg}, p_{\vf', \vg'}) \, ,
    \end{align}
    where $\lVert\cdot \rVert_{\mathrm{op}}$ denotes the operator norm, $\vA$ is determined by $\vW = \vL \vA$, $\vL$ is the unembedding matrix, and $\vW$, $\vb$ are obtained from \cref{def:linear-probing-concept}. 
\end{theorem}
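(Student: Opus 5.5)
The idea is to pass the concept probe of the teacher through the logits. The statement assumes the probe weights factor as $\vW = \vL\vA$, with $\vL = (\vg(y_1),\ldots,\vg(y_k)) \in \bbR^{m\times k}$ the unembedding matrix and $\vA \in \bbR^{k\times C}$. Then the concept logits of the teacher are a fixed linear image of its output logits:
\[
\vW^\top \vf(\vx) + \vb = \vA^\top \vL^\top \vf(\vx) + \vb = \vA^\top \vu(\vx) + \vb .
\]
The minimum over $(\vW',\vb')$ is at most its value at any particular choice. I would take $\vW' := \vL'\vA$, with $\vL'$ the unembedding matrix of the student, and $\vb' := \vb$. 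This gives $\vW'^\top\vf'(\vx) + \vb' = \vA^\top\vu'(\vx) + \vb$. By \cref{def:linear-probing-concept}, $p_\vh(\cdot\mid\vx) = \mathrm{softmax}(\vz(\vx))$ with $\vz := \vA^\top\vu + \vb$. The student probe yields $\mathrm{softmax}(\vz'(\vx))$ with $\vz' := \vA^\top\vu' + \vb$. The problem therefore reduces to bounding a KL divergence between two softmax distributions in terms of the difference of their logits.

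The key step, and the only non-routine one, is a pointwise inequality. For any $\vz,\vz' \in \bbR^C$,
\[
\mathrm{KL}\big(\mathrm{softmax}(\vz)\,\|\,\mathrm{softmax}(\vz')\big) \le \tfrac12\lVert \vz-\vz'\rVert_2^2 .
\]
I would prove it with the log-partition function $\psi(\vz) = \log\sum_c e^{z_c}$. A direct computation shows the KL is the Bregman divergence of $\psi$:
\[
\mathrm{KL} = \psi(\vz') - \psi(\vz) - \nabla\psi(\vz)^\top(\vz'-\vz) .
\]
By Taylor's theorem with integral remainder, this equals
\[
\int_0^1 (1-t)\,(\vz'-\vz)^\top \nabla^2\psi(\vz+t(\vz'-\vz))\,(\vz'-\vz)\,dt .
\]
Here $\nabla^2\psi(\vz) = \mathrm{diag}(\vp) - \vp\vp^\top$ with $\vp = \mathrm{softmax}(\vz)$, the covariance of a one-hot vector. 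Its largest eigenvalue is at most $\max_c p_c \le 1$, for example via $\vv^\top(\mathrm{diag}(\vp)-\vp\vp^\top)\vv \le \sum_c p_c v_c^2 \le \lVert\vv\rVert^2$. Since $\int_0^1 (1-t)\,dt = \tfrac12$, the inequality follows. This is the same mechanism behind the first inequality of \cref{theorem:bound_kl}, so I could alternatively cite its proof.

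Finally, $\lVert \vz-\vz'\rVert_2 = \lVert \vA^\top(\vu-\vu')\rVert_2 \le \lVert\vA\rVert_{op}\lVert\vu-\vu'\rVert_2$, because the bias cancels. Squaring and taking expectation over $\vx\sim p_\vx$ gives
\[
d_\mathrm{KL}\big(p_\vh,\mathrm{softmax}(\vW'^\top\vf'+\vb')\big) \le \tfrac12\lVert\vA\rVert_{op}^2\, d_\mathrm{logit}^2 ,
\]
which bounds the minimum. I would remark that the centering convention on $\vg$ and $\vg'$ plays no role here. The main point to check is that the factorization $\vW=\vL\vA$ exists. By the diversity condition the rows of $\vL$ span $\bbR^m$, so $\vL$ has full row rank and some $\vA$ (for instance $\vA = \vL^+\vW$) always works. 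Any such $\vA$ gives a valid bound, and one may take the one minimizing $\lVert\vA\rVert_{op}$.
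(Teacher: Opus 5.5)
Your proof is correct and is essentially the paper's argument. The paper writes the concept logits as $\vA^\top\vu(\vx)+\vb$ and bounds the KL by $\tfrac12$ the squared logit difference via $1$-smoothness of the log-partition function (its \cref{le:kl_upper}, the same Bregman/Hessian argument you give), then takes $\vW'=\vL'\vA$, $\vb'=\vb$ and the operator-norm bound before averaging over $\vx$. One small point in your favour: you fix this $\vx$-independent choice before taking the expectation, which justifies the bound on $\min_{\vW',\vb'} d_\mathrm{KL}$ more cleanly than the paper, which minimizes pointwise in $\vx$ and only then takes the expectation.
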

This result (proof in \cref{app:proof_of_linear_robustness}) also shows that equality between model distributions preserves linearly encoded concepts between models that are linearly equivalent, mirroring previous results \citep{park2024linear, marconato2024all}.   

\textbf{Implications for KD and interpretability}.
The analysis we present here highlights two issues that arise when aiming to obtain student representations similar to those of a teacher model. First, standard KD strategies that leverage (symmetric variants of) KL divergence may fail to guarantee linear similarity to teacher representations. While not affecting the accuracy of the students, this can impact other aspects of the model distribution, e.g., rank orderings \citep{grivas2024taming} or uncertainty attributions \citep{wang2023calibration}.
Second, linear properties of the teacher representations may be less faithfully preserved in KL-distilled students. This is undesirable when students are expected to preserve interpretable properties of the teacher such as linear steerability or parallel vectors \citep{park2024linear, wu2025axbench}.


\section{Experiments}
\label{sec:experiments}

\textbf{Datasets.} We evaluate all methods on three datasets: 
(1) \Synthetic is a synthetically generated dataset with two-dimensional inputs and $7$ labels.
The inputs are designed so that the labels are not linearly separable (see \cref{fig:synth_data_example} for an illustration).
We train models with $(m=2)$-dimensional representations.
(2) \CIFAR$100$ \citep{krizhevsky2009learning} consists of natural images from $100$ classes. We use representations of dimension $m=50$. 
(3) \SUB \citep{bader2025sub} is a high-quality synthetic variant of \texttt{CUB200} \citep{wah2011caltech} containing bird images from $33$ classes. 
Each instance is also annotated with a human-interpretable attribute used during generation (e.g., crest color), yielding $33$ distinct values.
We train the teacher using both class labels and concept annotations, and use $m=10$.

\textbf{Methods.}
We fix the gauge on unembeddings by centering them as in \cref{eq:gauge-fix-unembeddings-logits}. 
In all experiments, we first train a reference teacher model by minimizing the cross-entropy loss on labeled data. 
We then compare three types of student models: $\mathrm{KL}$-students, trained to minimize $d_{\mathrm{KL}}$ 
to the teacher distribution, $L_2$-students and $L_1$-students, trained to minimize $d_\mathrm{logit}^2$ and $\calL^1_\mathrm{logit}$, respectively. 
We use ResNet \citep{he2016deep} and DINOv2 \citep{oquab2024dinov2} as feature extractors for \CIFAR and \SUB, respectively. 
All architectural details, hyperparameter choices, and train--validation--test splits are reported in \cref{sec:app-experiments}. 
Teachers and students share the same architectures and representational dimensionality, except on \CIFAR, where we use different ResNet sizes.


\textbf{Metrics.}
We measure the representational similarity between students and teachers using $m_\mathrm{CCA}$ on their embeddings and $d_\mathrm{rep}$, and distributional distance with $d_{\mathrm{KL}}$ and $d_{\mathrm{logit}}$. We report classification accuracy on ground-truth labels, denoted \texttt{Acc}$(Y)$. On \SUB, we additionally evaluate how well linear classifiers recover human-annotated concepts from the embeddings, reported as $\texttt{Acc}(C)$. 
We evaluate all metrics on the test set and average the results over $5$ teacher seeds and $5$ student seeds per teacher, reporting means and standard deviations. More details and
results are provided in \cref{sec:app-experiments}. 

\subsection{Distillation with $\calL^1_\mathrm{logit}$ or $d^2_\mathrm{logit}$ yields more similar representations than $d_\mathrm{KL}$
}
\label{sec:experiments-similarity}

All teachers excel in label prediction on \Synthetic, attaining the maximum $\texttt{Acc}(Y)$ (\cref{tab:results-synth-cifar-short}). On \SUB, 
they exhibit high accuracy for both labels and concepts (\cref{tab:results-sub-short}). Performance on \CIFAR is moderate.
The students, compatibly, score close to teachers in terms of \texttt{Acc}$(Y)$ on all datasets, with students improving over the teachers on \SUB. For \Synthetic and \SUB, we observe that $L_1$ and $L_2$ students display smaller $d_\mathrm{KL}$ than the KL student, and for all datasets $L_1$ and $L_2$ students have lower $d_\mathrm{logit}$ (even one order of magnitude below KL students in \Synthetic and \SUB, see  \cref{app:extended_experimental_results}). 
In all datasets, we observe a drastic increase in similarity when minimizing $\calL^1_\mathrm{logit}$ and $d_\mathrm{logit}$ rather than $d_\mathrm{KL}$. $L_1$ and $L_2$ students reduce $d_\mathrm{rep}$ by an order of magnitude compared to $\mathrm{KL}$ students in \Synthetic, while $L_2$ reduces by two orders in \CIFAR, and three orders of magnitude on \SUB. 
Also, $m_\mathrm{CCA}$ is always higher for $L_1$ and $L_2$, 
scoring (almost) perfectly on \Synthetic and \SUB and improving by more than $20$ pp on \CIFAR. 
This results in more similar representations, as qualitatively displayed in \cref{fig:synthetic-teacher-students-visualization} for the \Synthetic dataset.

\subsection{Distillation with $\calL^1_\mathrm{logit}$ or $d^2_\mathrm{logit}$ recovers linear concepts of the teacher, while $d_\mathrm{KL}$ does not}
\label{sec:exp-concepts}
We experiment on \SUB by training teachers to be (linearly) predictive of both labels and concepts from the embeddings. We observe that $\mathrm{KL}$ students, have worse $m_\mathrm{CCA}$ and $d_\mathrm{rep}$ than $L_1$ and $L_2$ students and fare extremely poorly on concept classification, scoring slightly above a random classifier ($1/33 \approx 0.03$). On the other hand, $L_1$ and $L_2$ students, while showing lower $\texttt{Acc}(Y)$, 
improve over all other metrics. 
Optimal $m_\mathrm{CCA}$ and lower $d_\mathrm{rep}$ reflect also in higher $\texttt{Acc}(C)$ (below teacher by $17$ p.p. for $L_1$). 
This is also qualitatively visible in \cref{fig:main-fig}, 
where the $2d$ LDA projection on $6$ concept attributes shows high linear separability for the $L_1$ student (on the left) but not for $\mathrm{KL}$ student (on the right).   
We further experiment on \SUB, varying the representation dimension of students from $m'=2$ to $m'=10=m$. Results in \cref{fig:ablation-reprs-dim-sub} display an upward trend for $L_1$ and $L_2$ students in all metrics, with always high $\texttt{mCCA}$. $\mathrm{KL}$ students, instead, while rapidly matching (and even surpassing) teacher $\texttt{Acc}(Y)$, always fare poorly in $\texttt{Acc}(C)$ and $\texttt{mCCA}$.


\begin{table}[!t]
\caption{\textbf{Results on \Synthetic and \CIFAR}. Best in bold. }
\label{tab:results-synth-cifar-short}
\scriptsize
\centering
\begin{tabular}{clccc}
\toprule
&  & \texttt{Acc}(Y)$(\uparrow)$ & $d_\mathrm{rep} (\downarrow)$ & $m_\mathrm{CCA}$$(\uparrow)$ \\
\hline
\multirow{5}{*}{\rotatebox{90}{\Synthetic}}
& 
\cellcolor{gray!30}
teach &
\cellcolor{gray!30} $0.999 \pm 0.001$ &
\cellcolor{gray!30} $-$ &
\cellcolor{gray!30} $-$  \\
\cmidrule{2-5} 
&
\cellcolor{lavender2!30}
$\mathrm{KL}$ &
\cellcolor{lavender2!30}
$0.999 \pm 0.001$ &
\cellcolor{lavender2!30}
$49.5 \pm 2.9$ &
\cellcolor{lavender2!30}
$0.580 \pm 0.048$ \\
&
\cellcolor{lightblue2!30}
$L_1$ &
\cellcolor{lightblue2!30}
$0.999 \pm 0.001$ &
\cellcolor{lightblue2!30}
$1.44 \pm 0.04$ &
\cellcolor{lightblue2!30}
$\mathbf{0.999 \pm 0.001}$ \\
& 
\cellcolor{blue!30}
$L_2$ &
\cellcolor{blue!30}
$0.999 \pm 0.001$ &
\cellcolor{blue!30}
$\mathbf{0.20 \pm 0.01 }$ &
\cellcolor{blue!30}
$\mathbf{0.999 \pm 0.001 }$ \\
\cmidrule{1-5} 
\multirow{5}{*}{\rotatebox{90}{\CIFAR}} & 
\cellcolor{gray!30}
teach &
\cellcolor{gray!30}
$0.540 \pm 0.010$ &
\cellcolor{gray!30}
$- $ &
\cellcolor{gray!30}
$- $ \\
\cmidrule{2-5}
&\cellcolor{lavender2!30}
$\mathrm{KL}$ &
\cellcolor{lavender2!30}
$0.480 \pm 0.001$ &
\cellcolor{lavender2!30}
$87.6 \pm 13.7$ &
\cellcolor{lavender2!30}
$0.515 \pm 0.001$ \\
&
\cellcolor{lightblue2!30}
$L_1$ &
\cellcolor{lightblue2!30}
$0.492 \pm 0.001$ &
\cellcolor{lightblue2!30}
$4.56 \pm 0.60$ &
\cellcolor{lightblue2!30}
$\mathbf{0.767 \pm 0.001}$\\
& 
\cellcolor{blue!30}
$L_2$ &
\cellcolor{blue!30}
$\mathbf{0.493 \pm 0.001}$ &
\cellcolor{blue!30}
$\mathbf{0.66 \pm 0.17}$ &
\cellcolor{blue!30}
${0.763 \pm 0.001}$ \\
\bottomrule
\end{tabular}
\end{table}

\section{Discussion and Conclusion}
\label{sec:discussion}

\textbf{Limitations.} \looseness=-1 \textbf{(i)} 
As in prior works~\citep{khemakhem2020ice, roeder2021linear, nielsen2025does},
we assume that the number of labels exceeds the representation dimension plus one. This is satisfied for most modern language models with large vocabularies (e.g. GPT-3~\citep{brown2020language}, Llama 3~\citep{Meta2024}, Mistral 7B~\citep{Jiangetal2023}), but may be restrictive in standard image-classification settings with few classes. Also, systematic experiments are needed to see how using logit distance will interact with and affect optimization in distillation of larger models. In future work, we would therefore like to do experiments on language models, where our assumptions are likely to be met.
Extending our results to regimes where the number of labels is equal to, or smaller than, the representation dimension~\citep{marconato2024all} is another key next step.
\textbf{(ii)} 
In our~\cref{def:direct_rep_distance}, we assumed our models to be in general position (\cref{assumption:general_position}). This resembles the diversity assumption (\cref{assu:diversity-condition}), but it is stronger. Similar to diversity, it is satisfied with high probability for random vectors in high dimension ($m$ large) under mild conditions. 
\textbf{(iii)} Our results suggest a trade-off between capturing the structure of teacher representations and ignoring information about unlikely labels which might be noisy. If we ignore the unlikely labels, as KL does, then we are not necessarily capturing the structure of teacher representations. On the other hand, if the structure of the teacher representations is important, e.g. we know the teacher has interpretable concepts we want to preserve, then we must also consider the unlikely labels, as the logit distance does. In the case where the structure of unlikely labels of the teacher is not meaningful, the logit distance may steer students to learn a ``noisy'' behavior.
\textbf{(iv)} Most of our results assume that the dimensions of the representations of the compared models are the same. In \cref{th:cca}, we have an initial theoretical result bounding canonical correlations, which allows different dimensions. This bound is illustrated in \cref{fig:f1_bound}. 

%
\textbf{Notions of representational similarity.}
Our results show that having models whose distributions are close in KL divergence does not provide strong guarantees for \emph{linear} representational similarity. If closeness in KL induces a robust notion of representational similarity, it is therefore likely to be of a different kind. Relatedly, \citet{huh2024position} report that standard linear similarity metrics reveal limited convergence, whereas a mutual $k$-nearest-neighbor metric on representation space uncovers stronger alignment across models. Other similarity notions are discussed in~\cref{app:related_work}.

\textbf{Drivers of representational similarity.} 
Our~\cref{theorem:bound_kl} implies weak linear representational similarity guarantees for KL, which is connected to the cross entropy loss~\eqref{eq:crossent-kl}; nonetheless, some works report empirical observations of linear representational similarity under the cross entropy loss---e.g.,~\citep{reizinger2025crossentropy}. 
There may thus be additional factors that drive the observed similarity, for example, 
learning dynamics in early phases of training (e.g., ~\citet{frankle2020early,kapoor2026bridging}), see also~\cref{app:related_work}.
%
%

\textbf{Distillation and representational similarity.} There is also a line of work on similarity-based knowledge distillation that explicitly matches hidden states between teacher and student, for example by aligning embedding spaces or internal features with auxiliary losses~\citep{singh2024simple,wang2023improving}. Our analysis is complementary: rather than adding representational regularizers, we study how purely distributional objectives---KL divergence and our logit-based distance---control (or fail to control) linear representational similarity.

\begin{table}[t]

\caption{\textbf{Results on \SUB}.
Teachers are in
\textcolor{gray}{\textbf{gray}}, KL-students in \textcolor{lavender2}{\textbf{pink}}, 
$L_1$-students in \textcolor{lightblue2}{\textbf{light blue}}, 
and $L_2$-students in \textcolor{blue}{\textbf{blue}}.
}
\label{tab:results-sub-short}
\scriptsize
\centering
\begin{tabular}{cccc}
\toprule
\texttt{Acc}(Y)$(\uparrow)$ 
& \texttt{Acc}(C)$(\uparrow)$ 
& $d_\mathrm{rep}$$(\downarrow)$
& $m_\mathrm{CCA}$$(\uparrow)$ \\
\hline
\cellcolor{gray!30}
$0.91 \pm 0.01$ &
\cellcolor{gray!30}
$0.92 \pm 0.01 $ &
\cellcolor{gray!30}
$-$ &
\cellcolor{gray!30}
$-$
\\
\hline
\cellcolor{lavender2!30}
$\mathbf{0.93 \pm 0.01} $ &
\cellcolor{lavender2!30}
$0.06 \pm 0.01 $ &
\cellcolor{lavender2!30}
$3100 \pm 170$ &
\cellcolor{lavender2!30}
$0.42 \pm 0.01$ 
\\
\cellcolor{lightblue2!30}
$0.92 \pm 0.01 $ &
\cellcolor{lightblue2!30}
$\mathbf{0.75 \pm 0.01}$ &
\cellcolor{lightblue2!30}
${10.0 \pm 0.9}$ &
\cellcolor{lightblue2!30}
$\mathbf{0.99 \pm 0.01}$
\\
\cellcolor{blue!30}
$0.92 \pm 0.01$ &
\cellcolor{blue!30} 
$0.72 \pm 0.01$ &
\cellcolor{blue!30}
$\mathbf{1.6 \pm 0.2} $ &
\cellcolor{blue!30}
$ \mathbf{0.99 \pm 0.01}$
\\
\bottomrule
\end{tabular}
\end{table}

%
\textbf{Conclusion.} 
We showed that the logit distance is a metric between model distributions and, under a lower bound on conditional probabilities, can be controlled by the KL divergence. Building on this, we introduced the linear identifiability dissimilarity, tightly linked to the identifiability of the model family, and proved that logit distance upper-bounds both this dissimilarity and the mCCA between representations. Empirically, minimizing the logit distance yields student models whose representations are substantially more linearly similar to their teachers’ than when training with standard KL-based distillation.
An important question for future research is whether these results extend to large-scale experiments, such as the distillation of LLMs. 

\section*{Impact Statement}
The focus of this paper is to improve our understanding of a model class often used in machine learning (e.g. autoregressive language models). Specifically we analyze the connection between probability distributions of such models and their representations. Although we hope this will lead to some impact in the future, for now this is mostly a theoretical work and there are no immediate societal consequences, which would make sense to highlight here.

\section*{Acknowledgments}

We thank Anton Rask Lundborg for pointers to the compositional data analysis literature and David Klindt for interesting discussions. 
B. M. G. N. was supported by the Danish Pioneer Centre for AI, DNRF grant number P1 and partially by the Novo Nordisk Foundation grant NNF24OC0092612.
E.M. acknowledges support from TANGO, Grant Agreement No. 101120763, funded by the European Union. Views and opinions expressed are however those of the author(s) only and do not necessarily reflect those of the European Union or the European Health and Digital Executive Agency (HaDEA). Neither the European Union nor the granting authority can be held responsible for them. S.B.\ was supported by the Tübingen AI Center.
L.G.\ was supported by the Danish Data Science Academy,
which is funded by the Novo Nordisk Foundation (NNF21SA0069429), and by the Pioneer Centre for AI, DNRF grant number P1.


\bibliography{references}
\bibliographystyle{icml2026/icml2026}

\newpage
\onecolumn
\appendix





\clearpage

\section{Additional Related Work}
\label{app:related_work}
In this section we reference additional related work.

\textbf{Measures of representational similarity.}
In our work, we take an identifiability perspective on the study of representational similarity, under which similarity measures that are invariant to linear transformations are the most natural choice. 
Our point is not that these are the right similarity measures in general, but rather that they are aligned with the identifiability perspective; those for which we can prove bounds; and they are directly connected to the preservation of linear properties such as those discussed in~\cref{sec:distill_interpretability}.
Our use of $m_{\mathrm{CCA}}$ is also in line with prior work on representational similarity based on CCA, such as SVCCA~\citep{raghu2017svcca} and PWCCA~\citep{morcos2018insights}, which likewise measure similarity between embeddings.
Other works consider different notions of similarity or dissimilarity, reflecting different objectives and invariance requirements.
For instance, \citet{kornblith2019similarity} introduce CKA, which is invariant only to orthogonal transformations and isotropic scaling, a choice motivated by considerations different from identifiability, such as invariance of gradient-descent dynamics~\citep{lecun1990second}; orthogonal Procrustes alignment~\citep{schonemann1966generalized} is similarly based on invariance up to orthogonal transformations.
For RSA~\citep{kriegeskorte2008representational}, the precise invariances depend on the choice of similarity functions.
Finally, \citet{nielsen2025does} introduce a dissimilarity measure for our model class, $d_{\vf,\vg}$.
Whether theory analogous to ours can, under suitable assumptions, be established for such similarity measures is an open question.
More broadly, what constitutes the most appropriate measure of representational similarity remains unsettled~\citep{bansal2021revisiting}, and existing approaches reflect different objectives and associated trade-offs~\citep{klabunde2025similarity}.

\textbf{Other drivers of representational similarity.} 
\citet{ciernikobjective} show that the training objective strongly impacts how consistently representational similarity generalizes across datasets. \citet{li2025exploring} find that dataset and task overlap both correlate strongly with increased representational similarity, especially in combination.
The role of early stages of training in shaping shared representational structure was investigated by~\citep{frankle2020early}.
\citet{braun2025not} showed that
enforcing robustness to parameter noise during training can 
force more 
aligned representations across runs.

\textbf{Extent of the theory}.
We remark that our theory is not specific to supervised classifiers, but it also encompasses autoregressive language models \citep{yang2019xlnet,brown2020language}, self-supervised classifiers \citep{oord2018representation,henaff2020data}, and deep metric learning \citep{sohn2016improved}, see also \citep{roeder2021linear,marconato2024all,nielsen2025does}.This also implies that, upon meeting our theoretical conditions, the same theoretical results hold for a wide variety of models beyond those we experiment with.
Comparing models that define representations on distinct data modalities \citep{huh2024position}, however, is currently out of the scope of our analysis, but future work may consider different views of the same latent, as in \citep{gresele2020incomplete}, as a starting point.

\textbf{Interpretable properties in model representations.} 
{A wealth of works studies learning concepts in weakly-supervised settings \citep{taeb2022provable,
marconato2023not, rajendran2024from, zheng2025nonparametric, bortolotti2025shortcuts, goyal2025causal}. Our work differs in spirit from these, as we show which distillation schemes preserve already existing linearly encoded concepts in the teacher representations, especially when distributional equality cannot be guaranteed.
The notion we propose of linearly encoded concepts shares some similarities to that of \citet{rajendran2024from}, but we focus on probability distributions over categorical values instead of conceptual subspaces.
}
Other interpretable properties can be linearly encoded in model representations, such as parallel directions in model unembeddings \citep{mikolov2013linguistic, park2024linear} and relational linearity in model embeddings of text sentences \citep{hernandez2024linearity, marconato2024all}. Both have also been shown to allowing steering model behavior \citep{hase2023does, stolfo2025improving}.
We expect small values of $d_\mathrm{logit}$ to preserve these properties as well, but we leave an in-depth analysis to future work.

\section{Definitions and background material}\label{app:background}
In this section, we recall useful mathematical definitions and results for later proofs of main paper claims.

\subsection{Background on Canonical Correlation Analysis (CCA)}\label{app:cca}

Consider two random vectors $\vX\in \mathbb{R}^m$
and $\vY\in \mathbb{R}^{m'}$. 
The first canonical variables are given by 
\begin{align}
    (\va_1,\vb_1)=\argmax_{\va\in \mathbb{R}^m,\vb\in \mathbb{R}^{m'}} \mathrm{corr}(\va \cdot \vX,\vb\cdot \vY)
\end{align}
where $\mathrm{corr}$ denotes the correlation coefficient. And we denote by
\begin{align}
    \rho_1=\max_{\va\in \mathbb{R}^m,\vb\in \mathbb{R}^{m'}} \mathrm{corr}(\va \cdot \vX,\vb\cdot \vY)
\end{align}
the first canonical correlation.
The second and all further canonical variables and canonical correlations are defined similarly except that we restrict the maximum to the orthogonal complements of all previous canonical variables.
There are a total of $\min(m,m')$ canonical correlations.
It can be shown that these correspond to the 
left and right singular vectors and the singular values of the matrix 
\begin{align}\label{eq:cca_matrix}  \vSigma_{\vX\vX}^{-1/2}\vSigma_{\vX\vY}\vSigma_{\vY\vY}^{-1/2}
\end{align}
where $\vSigma_{\vX\vX}$ and $\vSigma_{\vY\vY}$ are the covariance matrices of $\vX$ and $\vY$ and 
\begin{align}
    \vSigma_{\vX\vY}
    =\mathbb{E}\left((\vX-\mathbb{E}(\vX))(\vY-\mathbb{E}(\vY))^\top\right)
\end{align}
the cross correlation. 
Clearly, this is invariant to linear transformations of the variables. 
For $m=m'$ We consider the mean CCA 
\begin{align}\label{eq:def_mcc}
    m_{\mathrm{CCA}} = \frac{1}{m}\sum_{i=1}^{m}\rho_i.
\end{align}
Variables $\vX$ and $\vY$ are linearly related iff
$m_{\mathrm{CCA}}=1$.
Sums of canonical correlations can be related to matrix norms through \eqref{eq:cca_matrix}, e.g.,
by
\begin{align}
    \sum_{i=1}^{\min(m,m')} \rho_i^2
    =\tr\left(\vSigma_{\vX\vX}^{-1/2}\vSigma_{\vX\vY}\vSigma_{\vY\vY}^{-1}\vSigma_{\vY\vX}\vSigma_{\vX\vX}^{-1/2}\right).
\end{align}


\subsection{Von Neumann's trace inequality}

For reference we state von Neumann's trace inequality which is a bound for the trace of the product of two positive semi-definite matrices.
\begin{theorem}[Von Neumann's trace inequality]
\label{th:vonNeumann}
    For positive semi-definite symmetric (or hermitian) matrices $\vA,\vB\in \mathbb{R}^{m\times m}$ with eigenvalues
    $a_1\geq a_2\geq\ldots\geq a_m$
    and $b_1\geq b_2\geq\ldots\geq b_m$
    the following bound holds
    \begin{align}\label{eq:vonNeumann}
        \sum_{i=1}^m a_ib_{m+1-i}\leq
        \tr(\vA\vB)\leq \sum_{i=1}^m a_ib_i.
    \end{align}
\end{theorem}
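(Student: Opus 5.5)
The proof reduces $\tr(\vA\vB)$ to a bilinear form in the eigenvalues, with a doubly stochastic coupling matrix, and then bounds this form by Abel summation. This avoids citing Birkhoff's theorem and the rearrangement inequality.

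\textbf{Step 1 (reduction).} Diagonalize $\vA=\sum_i a_i \vu_i\vu_i^\top$ and $\vB=\sum_j b_j \vv_j\vv_j^\top$, where $\{\vu_i\}$ and $\{\vv_j\}$ are orthonormal eigenbases ordered so the eigenvalues are non-increasing. Then
\begin{align}
\tr(\vA\vB)=\sum_{i,j} a_i b_j P_{ij}, \qquad P_{ij}\coloneqq(\vu_i^\top\vv_j)^2 .
\end{align}
Since both bases are orthonormal, every row sum and every column sum of $P$ equals $1$ (Parseval), and $P_{ij}\geq 0$. So $P$ is doubly stochastic, and the problem reduces to bounding $\va^\top P\vb$ over such $P$.

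\textbf{Step 2 (Abel summation).} Set $a_{m+1}=b_{m+1}=0$ and write $a_i=\sum_{k\geq i}\alpha_k$ and $b_j=\sum_{l\geq j}\beta_l$, where $\alpha_k=a_k-a_{k+1}$ and $\beta_l=b_l-b_{l+1}$. Because the eigenvalues are sorted and, crucially, $a_m,b_m\geq 0$ by positive semi-definiteness, all $\alpha_k,\beta_l\geq 0$. Exchanging the order of summation gives
\begin{align}
\va^\top P\vb=\sum_{k,l}\alpha_k\beta_l S_{kl}, \qquad S_{kl}\coloneqq\sum_{i\leq k,\, j\leq l}P_{ij}.
\end{align}

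\textbf{Step 3 (bounding $S_{kl}$).} This is the main obstacle. The row sums of $P$ give $S_{kl}\leq k$, and the column sums give $S_{kl}\leq l$. For the lower bound, note that the total mass of $P$ in rows $\leq k$ is $k$, and at most $m-l$ of it lies in columns $>l$; hence $S_{kl}\geq k-(m-l)$, and also $S_{kl}\geq 0$. Therefore
\begin{align}
\max(0,k+l-m)\leq S_{kl}\leq\min(k,l).
\end{align}

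\textbf{Step 4 (conclusion).} Since all the coefficients $\alpha_k\beta_l$ are nonnegative, the bounds on $S_{kl}$ transfer directly to $\va^\top P\vb$. It remains to identify the two extreme sums.
\begin{itemize}
\item $\min(k,l)$ is exactly the value of $S_{kl}$ for $P=\mathbf I$. Undoing the Abel summation, $\sum_{k,l}\alpha_k\beta_l\min(k,l)=\sum_i a_ib_i$.
\item $\max(0,k+l-m)$ is exactly $S_{kl}$ for the anti-diagonal permutation matrix. This gives $\sum_{k,l}\alpha_k\beta_l\max(0,k+l-m)=\sum_i a_ib_{m+1-i}$.
\end{itemize}
Together these yield both inequalities in \eqref{eq:vonNeumann}. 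The identification in Step 4 only requires checking $S_{kl}$ for the two permutation matrices; the genuinely delicate points are the bounds on $S_{kl}$ in Step 3 and the sign condition $a_m,b_m\geq 0$ in Step 2.
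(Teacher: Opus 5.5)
Your proof is correct. The paper gives no proof of its own: it states von Neumann's inequality ``for reference'' as a standard fact, and only ever uses the lower bound, in Lemmas~\ref{le:eigenvalue_bound1} and~\ref{le:eigenvalue_bound2}. The useful comparison is therefore with the textbook proof.

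That proof shares your Step~1, writing $\tr(\vA\vB)=\sum_{i,j}a_ib_jP_{ij}$ with $P$ orthostochastic. It then uses Birkhoff's theorem to reduce to permutation matrices, and the rearrangement inequality to single out the identity and the reversal. Your Abel summation with the Fr\'echet-type bounds $\max(0,k+l-m)\le S_{kl}\le\min(k,l)$ does both jobs at once, so the proof is self-contained. The identifications in Step~4 are right. For the reversal, $S_{kl}=\#\{i\le k:\ m+1-i\le l\}=\max(0,k+l-m)$. Since the identity in Step~2 holds for every $P$, substituting $\mathbf I$ or the reversal does recover $\sum_i a_ib_i$ and $\sum_i a_ib_{m+1-i}$.

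One correction to your commentary: the sign condition $a_m,b_m\ge 0$ is not actually crucial. For $k<m$ (resp.\ $l<m$), $\alpha_k$ (resp.\ $\beta_l$) is nonnegative by the ordering alone. So only the coefficients with $k=m$ or $l=m$ can have a sign that depends on semi-definiteness. For these, the row and column sums of $P$ give $S_{ml}=l$ and $S_{km}=k$ exactly. These values equal both $\min(k,l)$ and $\max(0,k+l-m)$, so those terms meet your bounds with equality whatever their sign.

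Your argument therefore proves the inequality for all real symmetric matrices, and for all Hermitian ones if you set $P_{ij}=|\vu_i^{*}\vv_j|^2$. Alternatively, under $\vA\mapsto\vA-c\,\mathbf I$, both $\tr(\vA\vB)$ and the two eigenvalue sums change by the same amount $-c\,\tr\vB$. Positive semi-definiteness can therefore always be assumed without loss of generality.
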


\section{Theoretical Material for Section \ref{sec:preliminaries}}
\label{sec:app-preliminaries}

\subsection{Proof of Theorem \ref{theorem:identifiability}}
\label{app:proof_identifiability}

\begin{proof}
    Let $(\vf, \vg), (\vf', \vg') \in \Theta$ and let $(\vf, \vg)$ satisfy the diversity condition (\cref{assu:diversity-condition}). The theorem then says that 
    \begin{align}
        &p_{\vf, \vg} (y \mid \vx) = p_{\vf', \vg'} (y \mid \vx), \; \forall (\vx, y) \in \calX \times \calY \iff (\vf, \vg) \sim_L (\vf', \vg') \, ,        
    \end{align}
    and in particular, we can set $\vA = \tilde\vA_{\calJ}$. We will prove each implication separately.

    ``$\impliedby$'': For this direction, we have that $(\vf, \vg) \sim_L (\vf', \vg')$. This means that for an invertible matrix $\vA$ we have that 
    \begin{align}
        \vf(\vx) = \vA \vf' (\vx), \quad \tilde\vg(y) = \vA^{-\top} \tilde\vg'(y) \, .
    \end{align}

    If we consider the probability the model $(\vf, \vg)$ assigns to a label $y$, we see that for a pivot, $\tilde y \in \calY$, we get:
    \begin{align}
        p_{\vf, \vg} (y \mid \vx) &= \frac{\exp (\vf(\vx)^{\top}\vg(y))}{\sum_{y' \in \calY} \exp (\vf(\vx)^{\top}\vg(y'))} \\
        \label{eq:translation_invariant}
        &= \frac{\exp (\vf(\vx)^{\top}(\vg(y) -\vg(\tilde y)))}{\sum_{y' \in \calY} \exp (\vf(\vx)^{\top}(\vg(y') -\vg(\tilde y)))} \\
        \label{eq:assumption_equivalent}
        &= \frac{\exp ((\vA \vf' (\vx))^{\top}\vA^{-\top}(\vg'(y) -\vg'(\tilde y)))}{\sum_{y' \in \calY} \exp ((\vA \vf' (\vx))^{\top}\vA^{-\top}(\vg'(y') -\vg'(\tilde y)))} \\
        &= \frac{\exp ( \vf' (\vx)^{\top}\vA^{\top}\vA^{-\top}(\vg'(y) -\vg'(\tilde y)))}{\sum_{y' \in \calY} \exp (\vf' (\vx)^{\top}\vA^{\top}\vA^{-\top}(\vg'(y') -\vg'(\tilde y)))} \\
        &= \frac{\exp ( \vf' (\vx)^{\top}(\vg'(y) -\vg'(\tilde y)))}{\sum_{y' \in \calY} \exp (\vf' (\vx)^{\top}(\vg'(y') -\vg'(\tilde y)))} \\
        \label{eq:translation_invariant_again}
        &= \frac{\exp ( \vf' (\vx)^{\top}\vg'(y))}{\sum_{y' \in \calY} \exp (\vf' (\vx)^{\top}\vg'(y') )} \\
        &= p_{\vf', \vg'} (y \mid \vx) \, .
    \end{align}
    Where \cref{eq:translation_invariant,eq:translation_invariant_again} are because translation with a constant vector does not change the probability and \cref{eq:assumption_equivalent} comes from the initial assumption and recalling that $\tilde\vg(y) \coloneq \vg(y) -\vg(\tilde y)$.
    
    ``$\implies$'': This proof is very similar to the one in \citet{nielsen2025does}, Appendix B, but we include it here for completeness:

    We first prove that $p_{\vf, \vg} (y \mid \vx) = p_{\vf', \vg'} (y \mid \vx), \; \forall (\vx, y) \in \calX \times \calY \implies \vf(\vx) = \vA \vf'(\vx)$ for $\vA = \vA_\calJ = \tilde \vL_\calJ^{-\top} \tilde \vL_\calJ^{' \top}$ and $\vA_\calJ$ is invertible.

By assumption, the models $(\vf, \vg), (\vf', \vg')$ have equal likelihoods. Moreover, by construction, the two models have representations in $\bbR^m$. 
Let $Z(\vx, \calY) = \sum_{y' \in \calY} \exp(\vf(\vx)^\top\vg(y'))$, and similarly for $Z'(\vx, \calY)$.    
Then 
\begin{align}
    p_{\vf, \vg} (y \mid \vx) &= p_{\vf', \vg'} (y \mid \vx) \\
    \implies \vf(\vx)^\top\vg(y) - \log(Z(\vx, \calY)) &= \vf'(\vx)^\top\vg'(y) - \log(Z'(\vx, \calY)) 
\end{align}
for all $\vx \in \calX$ and all $y \in \calY$. In particular, this equation holds for any fixed $y$.
From the diversity condition (\cref{assu:diversity-condition}), we can choose a pivot, $\tilde y$ and $m $ $y$'s, $y_1, \ldots, y_m$, such that the displaced unembeddings $ \{ \tilde\vg (y_i)  \}_{i=1}^m$ are linearly independent.
By subtracting the log-conditional distribution for the pivot $\tilde y \in \calY$, we obtain for each $y_i \in \calY$ the following:

\begin{align}
    \vf(\vx)^\top\vg(y_i) - \vf(\vx)^\top\vg(\tilde y) &+ \log(Z(\vx, \calY)) - \log(Z(\vx, \calY)) \notag \\
    = \vf'(\vx)^\top\vg'(y_i) &- \vf'(\vx)^\top\vg'(\tilde y) + \log(Z'(\vx, \calY)) - \log(Z'(\vx, \calY)) \\
    \vf(\vx)^\top\vg(y_i) - \vf(\vx)^\top\vg(\tilde y) &= \vf'(\vx)^\top\vg'(y_i) - \vf'(\vx)^\top\vg'(\tilde y) \\
    \vf(\vx)^\top \tilde \vg(y_i)  &= \vf'(\vx)^\top \tilde\vg'(y_i)  \, ,
\end{align}
where the last passage holds by definition of $\tilde\vg(y) \coloneq \vg(y) -\vg(\tilde y)$, see~\cref{sec:preliminaries}. 
Let $\tilde \vL_\calJ$ be the matrix which has $\tilde \vg (y_i)$ as columns and $\tilde \vL_\calJ'$ be the matrix which has $\tilde\vg'(y_i)$ as columns. Notice that the diversity condition (\cref{assu:diversity-condition}) implies that $\tilde \vL_\calJ$ is an invertible matrix.
We can then stack the equations to get 
\begin{align}
    \tilde \vL_\calJ^{\top} \vf(\vx) &= \tilde \vL_\calJ^{'\top} \vf'(\vx)
\end{align}
and since $\tilde \vL_\calJ$ is invertible, 
\begin{align}
     \vf(\vx) &= \tilde \vL_\calJ^{-\top} \tilde \vL_\calJ^{'\top} \vf'(\vx) \, .
\end{align}
If we set $\vA := (\tilde \vL_\calJ'\tilde \vL_\calJ^{-1})^\top $, we only need to show that $\vA$ is invertible. 
Using the diversity condition, 
we pick points $\vx_1, ..., \vx_m \in \calX$ such that the embeddings $\{ \vf(\vx_i) \}_{i=1}^m$ are linearly independent. 
Let $\vN$ be the matrix with $\vf(\vx_i)$ as columns and $\vN'$ be the matrix with $\vf'(\vx_i) $ as columns. Notice that, from the diversity condition $\vN$ is an invertible matrix.
Then  
\begin{align}
     \vN &= \vA \vN' \, . \label{eq:N-equals-A-Nprime}
\end{align}
Since any two matrices, %
$\vB, \mathbf{C} \in \bbR^{m \times m}$, have the property $\text{rank}(\vB\mathbf{C}) \leq \min (\text{rank}(\vB),\text{rank}(\mathbf{C})) $, and $\vN$ has rank equal to $m$, 
we have that necessarily also $\vA$ and $\vN'$ have rank $m$. In particular, $\vA$ is an invertible matrix.

Next we prove that $p_{\vf, \vg} (y \mid \vx) = p_{\vf', \vg'} (y \mid \vx), \; \forall (\vx, y) \in \calX \times \calY \implies \tilde\vg(y) = \vA^{-\top} \tilde\vg'(y)$ for $\vA=\vA_\calJ =\tilde \vL_\calJ^{-\top} \tilde \vL_\calJ^{'\top}$, and $\vA_\calJ$ is invertible. 

As before, we have that 
\begin{align}
    \vf(\vx)^\top\vg(y) - \log(Z(\vx, \calY)) &= \vf'(\vx)^\top\vg'(y) - \log(Z'(\vx, \calY)) 
\end{align}
holds for all $\vx \in \calX$ and all $y \in \calY$. In particular, this equation holds for any specific $\vx$.
From the diversity condition (\cref{assu:diversity-condition}), we can choose $m$ $\vx$'s, $\vx_1, \ldots, \vx_m$, such that the embeddings $ \{ \vf (\vx_i)  \}_{i=1}^m$ are linearly independent. By using these values of $\vx$ in the equation from above, we get the following: 
\begin{align}
    \vf(\vx_i)^\top\vg(y)  - \log(Z(\vx_i, \calY)) &= \vf'(\vx_i)^\top\vg'(y) - \log(Z'(\vx_i, \calY)) \\
    \vf(\vx_i)^\top\vg(y)  &= \vf'(\vx_i)^\top\vg'(y)  + c_i  \, ,
\end{align}
where 
\begin{align}
    c_i &=  \log\left( \frac{Z(\vx_i, \calY)}{Z'(\vx_i, \calY)} \right)   \, .
\end{align}
Let $\vN$ be the matrix with $\vf(\vx_i)$ as columns, let $\vN'$ be the matrix with $\vf'(\vx_i)$ as columns and let $\mathbf{c}$ be the vector with $c_i$ as entries. 
Then, since $\vN$ is invertible 
\begin{align}    
    \vN^{\top}\vg(y) &= \vN'^{\top}\vg'(y) + \mathbf{c} \\
    \vg(y) &= \vN^{-\top}\vN'^{\top}\vg'(y) + \vN^{-\top}\mathbf{c} \, .
\end{align}
Since we found in \cref{eq:N-equals-A-Nprime} that $\vA$ is invertible, we have that 
\begin{align}
     \vN' = \vA^{-1} \vN \, .
\end{align}
Therefore, we get 
\begin{align}        
    \vg(y) &= \vN^{-\top}\vN'^{\top}\vg'(y) + \vN^{-\top}\mathbf{c} \\
    \vg(y) &= \vN^{-\top}(\vA^{-1}\vN)^{\top}\vg'(y) + \vN^{-\top}\mathbf{c} \\
    \vg(y) &= \vN^{-\top}\vN^{\top}\vA^{-\top}\vg'(y) + \vN^{-\top}\mathbf{c} \\
    \vg(y) &= \vA^{-\top}\vg'(y) + \mathbf{b} \, ,
\end{align}
where $\mathbf{b} = \vN^{-\top}\mathbf{c}$.
So, considering the displaced unembedding vectors, we get:
\begin{align}
    \tilde\vg(y) &= \vg(y) - \vg(\tilde y) \\
    &= \vA^{-\top}\vg'(y) + \mathbf{b} - \vA^{-\top}\vg'(\tilde y) - \mathbf{b} \\
    &= \vA^{- \top} \tilde\vg'(y)
\end{align}
This proves the claim.

\end{proof}

\subsection{The Choice of Pivot Point does not Change the Transformation Matrix}

\begin{proposition}
\label{prop:A-independence-pivot}
    For any two models $(\vf, \vg), (\vf', \vg') \in \Theta$ that satisfy the diversity condition (\cref{assu:diversity-condition}) and such that
    \[
        p_{\vf, \vg}(y \mid \vx) = p_{\vf', \vg'}(y \mid \vx), \quad \forall (\vx, y) \in \calX \times \calY \, ,  \label{eq:equal-likelihood-app}
    \]
    any choice of $\tilde y$ and $\tilde \calJ \subseteq \calY \setminus \{\tilde y\}$ 
    such that $\tilde \vL_\calJ$ is invertible,  
    gives the same matrix $\vA = \tilde \vA_\calJ = \tilde \vL_\calJ^{-\top} \tilde \vL_\calJ^{' \top}$.
\end{proposition}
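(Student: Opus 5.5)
Two different valid choices $(\tilde y_1,\calJ_1)$ and $(\tilde y_2,\calJ_2)$ yield matrices $\tilde\vA_{\calJ_1}$ and $\tilde\vA_{\calJ_2}$. By \cref{theorem:identifiability}, applied once with each choice, both satisfy $\vf(\vx)=\tilde\vA_{\calJ_i}\vf'(\vx)$ for all $\vx\in\mathrm{supp}(p_\vx)$. The plan is to show from this relation alone that the two matrices coincide, using the diversity condition on the embeddings.

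Subtracting the two relations gives $(\tilde\vA_{\calJ_1}-\tilde\vA_{\calJ_2})\vf'(\vx)=\mathbf 0$ for all $\vx$ in the support. Hence the difference vanishes on $\mathrm{span}\{\vf'(\vx)\}$, and it is enough to show that this span is all of $\bbR^m$. Diversity is assumed for both models, so $\mathrm{span}\{\vf'(\vx):\vx\in\mathrm{supp}(p_\vx)\}=\bbR^m$ directly.

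If one only wanted to assume diversity for $(\vf,\vg)$, one would argue instead as in the proof of \cref{theorem:identifiability}. Pick $\vx_1,\dots,\vx_m$ with $\vN=(\vf(\vx_1)\cdots\vf(\vx_m))$ invertible. Then $\vN=\tilde\vA_{\calJ_1}\vN'$ forces $\vN'$ to be invertible by the rank argument. Since $\vN=\tilde\vA_{\calJ_2}\vN'$ as well, we get $\tilde\vA_{\calJ_1}=\vN\vN'^{-1}=\tilde\vA_{\calJ_2}$.

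The only point requiring care is that \cref{theorem:identifiability} is stated for an arbitrary admissible pivot and label set. One must check that its proof of $\vf=\tilde\vA_\calJ\vf'$ uses only the invertibility of $\tilde\vL_\calJ$ for that particular choice, which it does. The pointwise equalities must also hold on the whole support, so that a spanning set of embeddings can be chosen. Once these are confirmed, the argument is immediate and has no real obstacle.
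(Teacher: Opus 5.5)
Your proposal is correct and follows essentially the paper's route: obtain $\vf(\vx)=\tilde\vA_{\calJ}\vf'(\vx)$ for each admissible choice of pivot and labels, subtract the two relations, and use that $\vf'$ spans $\bbR^m$. The paper additionally introduces the change-of-basis matrix $\vO$ with $\hat\vL_\calK=\tilde\vL_\calJ\vO$, which is unnecessary once the difference of the two matrices annihilates a spanning set; your side remark that diversity of $(\vf,\vg)$ alone suffices, via $\tilde\vA_{\calJ_1}=\vN\vN'^{-1}=\tilde\vA_{\calJ_2}$, is also valid and slightly weakens the hypothesis.
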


\begin{proof}
    Consider two difference choices of pivots $\tilde y, \hat y \in \calY $ and corresponding $m$-dimensional subsets $\calJ = \{ \tilde y_1, \ldots, \tilde y_m\} \subseteq \calY \setminus \{\tilde y\}$ and $\calK = \{ \hat y_1, \ldots, \hat y_m\} \subseteq \calY \setminus \{\hat y\}$ for which, both
    \[
        \tilde \vL_\calJ = \begin{pmatrix}
            \vg(\tilde y_1) - \vg(\tilde y) 
            &
            \cdots 
            & \vg(\tilde y_m) - \vg(\tilde y)
        \end{pmatrix}, 
        \quad 
        \hat \vL_\calK = \begin{pmatrix}
            \vg(\hat y_1) - \vg(\hat y) 
            &
            \cdots 
            & \vg(\hat y_m) - \vg(\hat y)
        \end{pmatrix}
    \]
    are $m \times m$ invertible matrices. Because they are invertible, there exists an invertible matrix $\vO \in \bbR^{m \times m}$ such that
    \[
        \hat \vL_\calK = \tilde \vL_\calJ \vO \, .
        \label{eq:invertible-matrix-between-pivots}
    \]
    We now make use of \cref{eq:equal-likelihood-app} to rewrite $\vf$ in terms of $\vf'$ by considering shifted unembeddings $\tilde \vg (y) := \vg(y ) - \vg( \tilde y)$ 
    \begin{align}
        \log \frac{p_{\vf, \vg}(y \mid \vx)}{p_{\vf, \vg}(\tilde y \mid \vx)} &= 
        \log \frac{p_{\vf', \vg'}(y \mid \vx)}{p_{\vf', \vg'}(\tilde y \mid \vx)}
        \\
        \implies \vf(\vx)^\top \tilde \vg(y) &= \vf'(\vx)^\top \tilde \vg'(y)  \, .
    \end{align}
    In particular, this is true also for $\hat \vg (y) := \vg(y ) - \vg( \hat y)$, giving
    \[
        \vf(\vx)^\top \hat \vg(y) = \vf'(\vx)^\top \hat \vg'(y)  \, .
    \]
    Considering the subsets $\calJ, \calK$ of labels, we can write
    \begin{align}
        \tilde \vL_\calJ^{\top} \vf(\vx) &=  \tilde \vL_\calJ^{'\top} \vf'(\vx)  \\
        \hat \vL_\calK^{\top}   \vf(\vx) &=  \hat \vL_\calK^{'\top} \vf'(\vx) \, ,
    \end{align}
    and taking the inverse of $\tilde \vL_\calJ^\top$ and of $\hat \vL_\calK^\top$, we have 
    \begin{align}
        \vf(\vx) &= \tilde \vL_\calJ^{-\top} \tilde \vL_\calJ^{'\top} \vf'(\vx) \, , \\
        \vf(\vx) &= \hat \vL_\calK^{-\top} \hat \vL_\calK^{'\top} \vf'(\vx) \, .
    \end{align}
    We subtract the two expressions and obtain
    \begin{align}
        \big( \tilde \vL_\calJ^{-\top} \tilde \vL_\calJ^{'\top} - \hat \vL_\calK^{-\top} \hat \vL_\calK^{'\top} \big) \vf'(\vx) &= \mathbf 0 \\
        \big( \tilde \vL_\calK^{-\top}  \vO^\top \tilde \vL_\calJ^{'\top} - \hat \vL_\calK^{-\top} \hat \vL_\calK^{'\top} \big) \vf'(\vx) &= \mathbf 0 \\
        \tilde \vL_\calK^{-\top}
        \big(   \vO^\top \tilde \vL_\calJ^{'\top} - \hat \vL_\calK^{'\top}  \big) \vf'(\vx) &= \mathbf 0 \, .
        \label{eq:second-last-step-equality-A}
    \end{align}
    Now, the diversity condition for the two models gives that $\hat \vL_\calK$ is an invertible matrix and that $\vf'$ spans the whole $\bbR^m$. This means that
    \[
        \vO^\top \tilde \vL_\calJ^{'\top} - \hat \vL_\calK^{'\top} = \mathbf 0
    \]
    and so we get
    \[
        \hat \vL_\calK^{'\top} = \vO^\top \tilde \vL_\calJ^{'\top} \, .
        \label{eq:expression-invertible-matrix}
    \]
    Combining \cref{eq:invertible-matrix-between-pivots} and \cref{eq:expression-invertible-matrix}, we get:
    \begin{align}
        \vA &= \hat \vL_\calK^{-\top} \hat \vL_\calK^{'\top} \\ 
        &=  \tilde \vL_\calJ^{-\top} \vO^{-\top}  \vO^\top \tilde \vL_\calJ^{'\top} \\
        &= \tilde \vL_\calJ^{-\top} \tilde \vL_\calJ^{'\top}
    \end{align}
    which shows the claim. 
\end{proof}


\section{Theoretical Material for first part of Section \ref{sec:logit_distance_and_rep_sim}: How Logit Distance is Connected to Log Probabilities and Proof that it is a Metric}
\label{sec:d_logit_connection_to_log_prob_and_metric}

In this section we first show how the logit distance, $d_{\mathrm{logit}}(p, p')$, from \cref{def:logit_dist} can be written as a difference between log probabilities of sets of distributions with only non-zero probabilities, \cref{sec:logit_dist_equiv_to_log_probabilities}. We then show that logit distance is a proper metric, \cref{app:LLDD_is_metric_full_proof}.

\subsection{Logit Distance in Terms of log Probabilities}
\label{sec:logit_dist_equiv_to_log_probabilities}

We recall that we defined the logits of models from our model class as:
\begin{align}
    \vu(\vx) = \begin{pmatrix}
        \vg(y_1)^\top\vf(\vx) \\
        \vdots \\
        \vg(y_k)^\top\vf(\vx) \\
    \end{pmatrix}
\end{align}
We will show that the logit distance is equal to a difference of log probabilities as in the following result: 
\begin{lemma}
    \label{lemma:distribution_dist_model_class_is_log_prob_diff}
    Let $p, p'$, be distributions generated by the model class \cref{eq:model-class}, where $m$ is the dimension of the representations. Let $N = \binom{k-2}{m-1}$. When evaluating $d_{\mathrm{logit}}(p, p')$, we have that
    \begin{align}
    \begin{split}
    &\Vert\vu(\vx) - \vu'(\vx) \Vert_2^2 = \frac{1}{2kN} \sum_{\tilde y \in \mathcal{Y}} \sum_{\calJ \subseteq \calY \setminus \{\tilde y\}}
    \left\Vert \left( \log\left( \frac{p(y_j\vert \vx)}{p'(y_j\vert \vx)} \right) - \log\left( \frac{p(\tilde y\vert \vx)}{p'(\tilde y\vert \vx)} \right) \right)_{j\in \calJ} \right\Vert_2^2 \;. 
    \end{split}
    \end{align}
    where on the right we have a vector indexed with $j$ running over the set $\calJ$. 
\end{lemma}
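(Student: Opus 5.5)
The plan is to reduce both sides to expressions in the logit difference $\boldsymbol{\delta}(\vx) := \vu(\vx) - \vu'(\vx) \in \bbR^k$. We then use a simple counting argument together with the centering convention \cref{eq:gauge-fix-unembeddings-logits}.

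\textbf{Step 1: rewrite the log-ratios in terms of logits.} Since all probabilities of models in \cref{eq:model-class} are strictly positive, the log-ratios are well defined. Writing $Z(\vx)$ and $Z'(\vx)$ for the normalizers, we have $\log p(y\mid\vx) = u(\vx)_y - \log Z(\vx)$, and similarly for $p'$. Hence
\begin{align}
\log\frac{p(y\mid\vx)}{p'(y\mid\vx)} = \delta_y - \big(\log Z(\vx) - \log Z'(\vx)\big).
\end{align}
Subtracting the same expression at the pivot $\tilde y$ cancels the normalizer term. Each entry of the vector on the right-hand side is therefore exactly $\delta_j - \delta_{\tilde y}$. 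The right-hand side becomes
\begin{align}
\frac{1}{2kN}\sum_{\tilde y}\sum_{\calJ\subseteq\calY\setminus\{\tilde y\}}\sum_{j\in\calJ}(\delta_j-\delta_{\tilde y})^2 .
\end{align}

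\textbf{Step 2: count the subsets.} Fix $\tilde y$ and a label $j\neq\tilde y$. The number of $m$-subsets $\calJ\subseteq\calY\setminus\{\tilde y\}$ that contain $j$ is $\binom{k-2}{m-1}=N$. Swapping the order of summation, the sum over $\calJ$ and $j\in\calJ$ collapses to $N\sum_{j\neq\tilde y}(\delta_j-\delta_{\tilde y})^2$. The factor $N$ cancels, and the right-hand side equals
\begin{align}
\frac{1}{2k}\sum_{i,j\in\calY}(\delta_i-\delta_j)^2 ,
\end{align}
where including the diagonal terms $i=j$ is harmless because they vanish.

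\textbf{Step 3: expand the square.} We have $\sum_{i,j}(\delta_i-\delta_j)^2 = 2k\sum_i\delta_i^2 - 2\big(\sum_i\delta_i\big)^2$. Both $\vu$ and $\vu'$ are centered by \cref{eq:gauge-fix-unembeddings-logits}, so $\sum_i\delta_i=0$. The expression therefore reduces to $\frac{1}{2k}\cdot 2k\,\Vert\boldsymbol{\delta}(\vx)\Vert_2^2 = \Vert\vu(\vx)-\vu'(\vx)\Vert_2^2$, which proves the lemma.

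\textbf{Main point to watch.} The only delicate point is the use of the centering gauge. Without it, the identity would hold with $\boldsymbol{\delta}$ replaced by its centered version, i.e.\ the identity is really about the Aitchison-type norm of $\boldsymbol{\delta}$. The combinatorial count in Step 2 is the other place where care is needed, to get exactly $N=\binom{k-2}{m-1}$.
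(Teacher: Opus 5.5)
Your proof is correct and follows essentially the same route as the paper. The paper likewise rewrites each log-ratio difference as the shifted logit difference $\delta_j-\delta_{\tilde y}$, uses the same count $\binom{k-2}{m-1}$ of $m$-subsets containing a fixed label, and uses the centering of the logits to collapse $\frac{1}{2k}\sum_i\Vert\vu_i(\vx)-\vu_i'(\vx)\Vert_2^2$ to $\Vert\vu(\vx)-\vu'(\vx)\Vert_2^2$; the only cosmetic difference is that it expands $\Vert\vu_i-\vu_i'\Vert_2^2=\Vert\vu-\vu'\Vert_2^2+k\,\delta_i^2$ per pivot before summing, whereas you expand the double sum $\sum_{i,j}(\delta_i-\delta_j)^2$ directly.
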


We show the result in two steps: First we show that the logit distance can be written in terms of shifted logits, defined in the following way:
\begin{align}
    \vu_i(\vx) = \begin{pmatrix}
        \vg(y_1)^\top\vf(\vx) \\
        \vdots \\
        \vg(y_k)^\top\vf(\vx) \\
    \end{pmatrix} - \vg(y_i)^\top \vf(\vx) \begin{pmatrix}
        1 \\ \vdots \\ 1
    \end{pmatrix} 
\end{align}
 Then we will show that the shifted logits are the same as a difference of log probabilities. 

\begin{proposition}
\label{prop:d_LLD_closer_to_d_rep}
   The following relation holds
    \begin{align} \label{eq:identity_vui}
        ||\vu(\vx) - \vu'(\vx) ||^2_2 = 
        \frac{1}{2k}\sum_{y_i \in \calY}
        ||\vu_i(\vx) - \vu'_i(\vx) ||^2_2 
        =\frac{1}{2k \binom{k-2}{m-1}} 
        \sum_{y_i \in \calY} \sum_{\calJ \subseteq \calY \setminus \{y_i\}, \, |\calJ|=m} \sum_{j \in \calJ} \left( u_i(\vx)_j - u'_i(\vx)_j \right)^2.
    \end{align}
\end{proposition}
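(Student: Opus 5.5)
Let $\vd := \vu(\vx) - \vu'(\vx) \in \bbR^k$. By the gauge fixing in \cref{eq:gauge-fix-unembeddings-logits}, both $\vu(\vx)$ and $\vu'(\vx)$ have zero-sum entries, so $\sum_{j=1}^k d_j = 0$. The shifted logits satisfy $\vu_i(\vx) - \vu'_i(\vx) = \vd - d_i \mathbf{1}$. Both identities therefore reduce to combinatorial statements about a single zero-sum vector $\vd$, and we prove them in turn.

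\textbf{First equality.} Expand
\begin{align}
\sum_{i=1}^k \|\vd - d_i\mathbf 1\|_2^2 = \sum_{i=1}^k\sum_{j=1}^k (d_j-d_i)^2 = 2k\|\vd\|_2^2 - 2\Big(\sum_j d_j\Big)^2 = 2k\|\vd\|_2^2,
\end{align}
where the last step uses $\sum_j d_j = 0$. Dividing by $2k$ gives the first equality. This is the only place where the centering assumption enters. Without it, we would pick up an extra term $-\frac{1}{k}(\sum_j d_j)^2$, so the argument must invoke the gauge explicitly.

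\textbf{Second equality.} Fix $i$ and consider the sum over all $m$-subsets $\calJ \subseteq \calY\setminus\{y_i\}$ of $\sum_{j\in\calJ}(d_j-d_i)^2$. Exchange the order of summation. Each fixed $j \neq i$ lies in exactly $\binom{k-2}{m-1}$ such subsets: we choose the remaining $m-1$ elements from the $k-2$ labels other than $y_i$ and $y_j$. Hence
\begin{align}
\sum_{\calJ}\sum_{j\in\calJ}(d_j-d_i)^2 = \binom{k-2}{m-1}\sum_{j\neq i}(d_j-d_i)^2 = \binom{k-2}{m-1}\,\|\vu_i(\vx)-\vu_i'(\vx)\|_2^2,
\end{align}
since the $j=i$ entry of $\vu_i - \vu'_i$ vanishes. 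Dividing by $2k\binom{k-2}{m-1}$ and summing over $i$ gives the second equality.

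No step is a real obstacle. The one point needing care is the first identity: the cross term cancels only because of the zero-sum gauge. For the second identity, the counting is routine once the vanishing $j=i$ term is noted.
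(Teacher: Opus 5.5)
Your proof is correct and follows essentially the same route as the paper: the paper expands each term as $\lVert \vu_i(\vx)-\vu_i'(\vx)\rVert_2^2=\lVert \vu(\vx)-\vu'(\vx)\rVert_2^2+k\,(u(\vx)_i-u'(\vx)_i)^2$ (the cross term vanishes by centering) and then sums over $i$, whereas you evaluate the double sum $\sum_{i,j}(d_j-d_i)^2$ directly and apply centering afterwards — the same computation, namely the identity $\mathbb{E}(X-X')^2=2\,\mathrm{Var}(X)$ noted in the paper's remark. For the second equality the paper uses exactly your argument: each $j\neq i$ lies in $\binom{k-2}{m-1}$ of the $m$-subsets, and the $j=i$ entry of $\vu_i(\vx)-\vu_i'(\vx)$ vanishes.
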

\begin{remark}
    The first identity is essentially a version of the well-known identity 
    \begin{align}
        \mathbb{E}(\vX-\vX')^2=2\mathrm{Var}(\vX)
    \end{align}
    holding true for two independent copies $\vX$
and $\vX'$ of a random variable.
\end{remark}
\begin{proof}
We observe that since we assume that the unembeddings $\vg(y_i)$ and $\vg'(y_i)$ are centred we get
\begin{align}
    \left(\vu(\vx)-\vu'(\vx)\right)\cdot \begin{pmatrix}
        1 \\ \vdots \\ 1
    \end{pmatrix}
    =\sum_{i=1}^k \vf(\vx)^\top \vg(y_i)- \vf'(\vx)^\top \vg'(y_i)
    =\vf(\vx)^\top \left(\sum_{i=1}^k \vg(y_i)\right)
    -\vf'(\vx)^\top \left(\sum_{i=1}^k\vg'(y_i)\right)=0.
\end{align}
We then note that using the definition of $\vu_i$ and $\vu_i'$
and then expand the squared norm we get
\begin{align}
\begin{split}
    \lVert \vu_i(\vx)-\vu_i'(\vx)\rVert^2
   & =  \lVert \vu(\vx)-\vu'(\vx)\rVert^2
    -2 (\vu(\vx)-\vu'(\vx))\cdot \begin{pmatrix}
        1 \\ \vdots \\ 1
    \end{pmatrix} (\vg(y_i)^\top \vf(\vx)-\vg'(y_i)^\top \vf'(\vx))
    \\
    &\qquad+(\vg(y_i)^\top \vf(\vx)-\vg'(y_i)^\top \vf'(\vx))^2\left\lVert 
     \begin{pmatrix}
        1 \\ \vdots \\ 1
    \end{pmatrix}\right\rVert^2
    \\
    &=\lVert \vu(\vx)-\vu'(\vx)\rVert^2+k((u(\vx))_i-(u'(\vx))_i)^2.
    \end{split}
\end{align}
Summing this over $i$ we get
\begin{align}\label{eq:ident1vu}
  \sum_{i=1}^k  \lVert \vu_i(\vx)-\vu_i'(\vx)\rVert^2
  =k\lVert \vu(\vx)-\vu'(\vx)\rVert^2
  +k\sum_{i=1}^k ((u(\vx))_i-(u'(\vx))_i)^2
  =2k\lVert \vu(\vx)-\vu'(\vx)\rVert^2
\end{align}
and therefore the first relation.
For the second relation we
first note that by definition
\begin{align}
    u_i(\vx)_i - u'_i(\vx)_i=
    (\vf(\vx)^\top \vg(y_i)-\vf(\vx)^\top \vg(y_i))
    -(\vf'(\vx)^\top \vg'(y_i)-\vf'(\vx)^\top \vg'(y_i))=0.
\end{align}
Then we use that
each index $j\neq i$ appears in $\binom{k-2}{m-1}$
subsets of size $m$ not containing index $i$.
Indeed, we have to select $m-1$ remaining points of the set of size $m$ from $k-2$ options ($i$ and $j$ are removed from the $k$ options).
Therefore, for each pivot $y_i$
\begin{align}\label{eq:ident2vu}
\begin{split}
  \sum_{\calJ \subseteq \calY \setminus \{y_i\}, \, |\calJ|=m} \sum_{j \in \calJ} \left( u_i(\vx)_j - u'_i(\vx)_j \right)^2 
  &= \binom{k-2}{m-1}\sum_{j\neq i}
  \left( u_i(\vx)_j - u'_i(\vx)_j \right)^2 
  \\
  &= \binom{k-2}{m-1}\sum_{j=1}^k
  \left( u_i(\vx)_j - u'_i(\vx)_j \right)^2
 \\
 &=\binom{k-2}{m-1} \lVert \vu_i(\vx)-\vu'_i(\vx)\rVert^2.
 \end{split}
\end{align}
This implies the second part of \eqref{eq:identity_vui} by summing over $y_i$.

\end{proof}

We recall that the matrix of shifted unembeddings constructed from $\tilde \vg$ using a subset of $m$ labels $\calJ = \{y_1, \ldots, y_m\} \subseteq \calY \setminus \{\tilde y\}$: 
\[
    \tilde \vg(y) := \vg(y) - \vg(\tilde y) , \quad 
    \tilde\vL_{\calJ} := \begin{pmatrix}
        \tilde\vg(y_1) & \cdots & \tilde\vg (y_m)
    \end{pmatrix}
    \in \bbR^{m \times m} \, .
\]
With this definition we can also write the following result.

\begin{lemma}
    \label{lemma:distribution_dist_model_class}
    Let $N = \binom{k-2}{m-1}$. When evaluating $d_{\mathrm{logit}}(p, p')$ on distributions, $p, p'$, generated by the model class \cref{eq:model-class}, where $m$ is the dimension of the representations, we have that:
    \begin{align}
    &||\vu(\vx) - \vu'(\vx) ||^2 = \frac{1}{2kN} \sum_{\tilde y \in \mathcal{Y}} \sum_{\calJ \subseteq \calY \setminus \{\tilde y\}}
    \Vert \tilde\vL_{\calJ}^{\top}\vf(\vx) - \tilde\vL_{\calJ}'^{\top}\vf'(\vx) \Vert_2^2 
    \end{align}
\end{lemma}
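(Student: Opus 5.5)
This lemma follows directly from \cref{prop:d_LLD_closer_to_d_rep}. That proposition already expresses $\Vert\vu(\vx)-\vu'(\vx)\Vert_2^2$ as
\begin{align*}
\frac{1}{2kN}\sum_{y_i \in \calY}\sum_{\calJ \subseteq \calY\setminus\{y_i\}}\sum_{j\in\calJ}\big(u_i(\vx)_j-u_i'(\vx)_j\big)^2 .
\end{align*}
So the only remaining task is to identify, for each fixed pivot $\tilde y = y_i$ and subset $\calJ=\{y_{j_1},\dots,y_{j_m}\}$, the inner sum with $\Vert \tilde\vL_{\calJ}^{\top}\vf(\vx) - \tilde\vL_{\calJ}'^{\top}\vf'(\vx)\Vert_2^2$.

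The key step is to rewrite the shifted-logit entries. By the definition of $\vu_i$, the $j$-th entry is
\begin{align*}
u_i(\vx)_j = \vg(y_j)^\top\vf(\vx) - \vg(y_i)^\top\vf(\vx) = \tilde\vg(y_j)^\top\vf(\vx),
\end{align*}
where $\tilde\vg$ is taken with pivot $\tilde y = y_i$. The primed quantities satisfy $u_i'(\vx)_j = \tilde\vg'(y_j)^\top\vf'(\vx)$ in the same way.

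Next, stack these entries over $j\in\calJ$. By the definition of $\tilde\vL_\calJ$ as the matrix with columns $\tilde\vg(y_j)$, we get
\begin{align*}
\big(u_i(\vx)_j\big)_{j\in\calJ} = \tilde\vL_\calJ^\top \vf(\vx),
\end{align*}
and likewise $\big(u_i'(\vx)_j\big)_{j\in\calJ} = \tilde\vL_\calJ'^\top\vf'(\vx)$. Hence the inner sum $\sum_{j\in\calJ}(u_i(\vx)_j-u_i'(\vx)_j)^2$ equals $\Vert \tilde\vL_{\calJ}^{\top}\vf(\vx) - \tilde\vL_{\calJ}'^{\top}\vf'(\vx)\Vert_2^2$. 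Substituting this into the proposition gives the claim.

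There is no real obstacle here; the work is careful bookkeeping. Two conventions must be kept consistent. First, the pivot used to define $\tilde\vL_\calJ$ must be the same label $y_i$ that indexes the outer sum. Second, the sum over $\calJ$ ranges over subsets of size exactly $m$, which is what makes $N=\binom{k-2}{m-1}$ the correct multiplicity in \cref{prop:d_LLD_closer_to_d_rep}. The step requiring the most care is checking that the ordering of the columns of $\tilde\vL_\calJ$ is irrelevant, since the squared norm is invariant under permuting coordinates.
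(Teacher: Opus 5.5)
Your proposal is correct and is the same argument as the paper's: the lemma is read off from \cref{prop:d_LLD_closer_to_d_rep} by noting that the entries of the shifted logits $\vu_i(\vx)$ indexed by $\calJ$ are exactly $\tilde\vL_{\calJ}^{\top}\vf(\vx)$, with pivot $\tilde y = y_i$. The one essential hypothesis is the centering of the unembeddings, and you correctly take it in through that proposition; the left-hand side depends on this gauge while the right-hand side does not.
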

\begin{proof}
    This is since taking the entries of the shifted logits $\vu_i(\vx)$ corresponding to the $m$ labels from $\calJ$, is the same as $\tilde\vL_{\calJ}^{\top}\vf(\vx)$.
\end{proof}

\begin{lemma}
    \label{lemma:L_fx_is_log_prob_diff}
    When evaluating $d_{\mathrm{logit}}(p, p')$ on distributions, $p, p'$, generated by the model class \cref{eq:model-class}, where $m$ is the dimension of the representations, we have that:
    \begin{align}
    \begin{split}
    &\Vert \tilde\vL_{\calJ}^{\top}\vf(\vx) - \tilde\vL_{\calJ}'^{\top}\vf'(\vx) \Vert_2 = \left\Vert \left( \log\left( \frac{p(y_j\vert \vx)}{p'(y_j\vert \vx)} \right) - \log\left( \frac{p(\tilde y\vert \vx)}{p'(\tilde y\vert \vx)} \right) \right)_{j\in \calJ}\right\Vert_2  
    \end{split}
    \end{align}
    where on the right we have a vector with entries indexed by $j$ running over the set $\calJ$. 
\end{lemma}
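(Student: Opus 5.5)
The plan is to verify the identity coordinatewise, since both sides are Euclidean norms of vectors indexed by $j\in\calJ$. It therefore suffices to show that, for each $y_j\in\calJ$, the $j$-th entry of $\tilde\vL_{\calJ}^{\top}\vf(\vx) - \tilde\vL_{\calJ}'^{\top}\vf'(\vx)$ equals the corresponding entry of the log-ratio vector. By the definition of $\tilde\vL_\calJ$, the $j$-th entry on the left is
\[
\vf(\vx)^\top\big(\vg(y_j)-\vg(\tilde y)\big) - \vf'(\vx)^\top\big(\vg'(y_j)-\vg'(\tilde y)\big).
\]

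Next, I would express each shifted logit as a log-probability ratio using \cref{eq:model-class}. For the model $(\vf,\vg)$, taking the logarithm of $p(y_j\mid\vx)/p(\tilde y\mid\vx)$ cancels the partition function $\sum_{y'}\exp(\vf(\vx)^\top\vg(y'))$, so
\[
\log p(y_j\mid\vx)-\log p(\tilde y\mid\vx)=\vf(\vx)^\top\vg(y_j)-\vf(\vx)^\top\vg(\tilde y).
\]
The same holds for $(\vf',\vg')$ with $p'$. This is exactly the normalizer-cancellation step already used in the proof of \cref{theorem:identifiability} and \cref{prop:A-independence-pivot}. All logarithms are finite, since softmax probabilities are strictly positive.

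Subtracting the two identities gives the $j$-th entry as $\log p(y_j\mid\vx)-\log p(\tilde y\mid\vx)-\log p'(y_j\mid\vx)+\log p'(\tilde y\mid\vx)$. Regrouping these four terms yields $\log\frac{p(y_j\mid\vx)}{p'(y_j\mid\vx)}-\log\frac{p(\tilde y\mid\vx)}{p'(\tilde y\mid\vx)}$. Since the two vectors agree entry by entry, their norms agree.

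There is no real obstacle here; the only care needed is the bookkeeping of the regrouping. Combined with \cref{lemma:distribution_dist_model_class}, this immediately gives \cref{lemma:distribution_dist_model_class_is_log_prob_diff}.
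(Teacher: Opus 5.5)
Your proposal is correct and follows essentially the same route as the paper: expand the norm entrywise, cancel the partition function so that $\vf(\vx)^\top\vg(y_j)-\vf(\vx)^\top\vg(\tilde y)=\log p(y_j\mid\vx)-\log p(\tilde y\mid\vx)$ (and likewise for $p'$), then regroup the four log terms into the difference of log-ratios. Your bookkeeping is actually cleaner than the paper's displayed computation, which writes $p$ where $p'$ should appear in the $\tilde y$ term.
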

\begin{proof}
    We consider first the definition of the norm:
    \begin{align}
        \Vert \tilde\vL_{\calJ}^{\top}\vf(\vx) - \tilde\vL_{\calJ}'^{\top}\vf'(\vx) \Vert_2^2 &= \sum_{j=1}^m (\vf(\vx)^{\top}\vg(y_j) - \vf'(\vx)^{\top}\vg'(y_j) - \vf(\vx)^{\top}\vg(\tilde y) + \vf'(\vx)^{\top}\vg'(\tilde y))^2  
    \end{align}
    We then consider each term of the sum, and see that if we both add and subtract a normalization term for each model, then the value will be unchanged. That is, we have
    \begin{align}
    \begin{split}\label{eq:logit_proba}
        \vf(\vx)^{\top}\vg(y_j) - \vf(\vx)^{\top}\vg(\tilde y) &= \log (\exp (\vf(\vx)^{\top}\vg(y_j))) - \log (\exp (\vf(\vx)^{\top}\vg(\tilde y))) \\
        & \quad - \log\left( \sum_{y_n}\exp(\vf(\vx)^{\top}\vg(y_n)) \right) + \log\left( \sum_{y_n}\exp(\vf(\vx)^{\top}\vg(y_n)) \right) \\
        &= \log(p(y_j \vert \vx)) - \log(p(\tilde y \vert \vx))
        \end{split}
    \end{align}
    And thus we see that 
    \begin{align}
        \Vert \tilde\vL_{\calJ}^{\top}\vf(\vx) - \tilde\vL_{\calJ}'^{\top}\vf'(\vx) \Vert_2^2 &= \sum_{j=1}^m (\log(p(y_j \vert \vx)) - \log(p(\tilde y \vert \vx)) - \log(p'(y_j \vert \vx)) + \log(p(\tilde y \vert \vx)) )^2 \\
        &= \sum_{j=1}^m \left(\log\left(\frac{p(y_j \vert \vx)}{p'(y_j \vert \vx)}\right) - \log\left(\frac{p(\tilde y \vert \vx)}{p'(\tilde y \vert \vx)}\right)  \right)^2 \\
        &= \left\Vert \left(\log\left( \frac{p(y_j\vert \vx)}{p'(y_j\vert \vx)} \right) - \log\left( \frac{p(\tilde y\vert \vx)}{p'(\tilde y\vert \vx)} \right) \right)_{j\in \calJ}\right\Vert_2^2. 
    \end{align}
\end{proof}

\subsection{Connection between the Logit Distance and the Aitchison Distance}
\label{app:logit_dist_vs_Aitchison}

In this section we show how the Logit Distance is related to the Aitchison distance. The Aitchison distance takes inputs from the hull of the simplex for $C$ values defined as, 
\begin{align}
    \Delta^{C-1} =\left\{ \vz = \{z_1, ..., z_C\} \vert z_i > 0, i=1, ..., C, \sum_{i=1}^C z_i = 1 \right\}\; ,
\end{align}
and the distance is defined for $\vz, \vw \in \Delta^{C-1}$ as
\begin{align}
    d_a(\vz, \vw) = \sqrt{\frac{1}{2C}\sum_{i=1}^C\sum_{j=1}^C \left( \log \left(\frac{z_j}{z_i}\right) - \log \left(\frac{w_j}{w_i}\right) \right)^2}\; .
\end{align}

Since our models always assign non-zero probabilities and sum to $1$, the probabilities are in the hull of the simplex. This means that the squared distance between logits is the same as the Aitchison distance between the probabilities for a given input, $\vx$.

\begin{proposition}
    Let $p, p'$ be sets of probabilties from our model class and let $\vu(\vx), \vu'(\vx)$ be the logits. Then for each $\vx$ we have
    \begin{align}
        \Vert \vu(\vx)-\vu'(\vx) \Vert_2^2 = d_a^2(p(\cdot \vert \vx), p'(\cdot \vert \vx)) \; .
    \end{align}
\end{proposition}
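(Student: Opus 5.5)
The plan is to write both sides in terms of logits and reduce the claim to the variance identity already used in \cref{prop:d_LLD_closer_to_d_rep}. Fix $\vx$ and write $\vz = p(\cdot\mid\vx)$ and $\vw = p'(\cdot\mid\vx)$, with $C=k$. Because the model assigns strictly positive probabilities, both lie in $\Delta^{k-1}$, so $d_a$ is well defined.

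As in \cref{eq:logit_proba}, the normalising constants cancel in log-ratios. This gives
\begin{align}
\log\frac{z_j}{z_i} - \log\frac{w_j}{w_i} = \big(u(\vx)_j - u(\vx)_i\big) - \big(u'(\vx)_j - u'(\vx)_i\big).
\end{align}
Set $\vv = \vu(\vx)-\vu'(\vx)$. Then each summand in $d_a^2$ equals $(v_j - v_i)^2$, and
\begin{align}
d_a^2(\vz,\vw) = \frac{1}{2k}\sum_{i=1}^k\sum_{j=1}^k (v_j - v_i)^2 .
\end{align}

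Next I expand the double sum. It equals $2k\lVert\vv\rVert_2^2 - 2\big(\sum_i v_i\big)^2$. By the centring convention \cref{eq:gauge-fix-unembeddings-logits}, shown explicitly at the start of the proof of \cref{prop:d_LLD_closer_to_d_rep}, we have $\sum_i v_i = 0$. Hence $d_a^2 = \lVert\vv\rVert_2^2$, which is the claim.

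Alternatively, one can note that the inner sum over $j$ for fixed $i$ is exactly $\lVert \vu_i(\vx)-\vu_i'(\vx)\rVert_2^2$. The claim then follows directly from the first identity of \cref{prop:d_LLD_closer_to_d_rep}.

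The only delicate point is that the result depends on the centring gauge. Without centring, $\lVert\vu-\vu'\rVert^2$ would pick up the extra term $(\sum_i v_i)^2/k$. I therefore state explicitly that the centring assumption is being used, and the remaining steps are routine algebra.
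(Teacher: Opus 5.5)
Your proof is correct and is essentially the paper's argument. The paper identifies each Aitchison summand with $\big(\vu_i(\vx)-\vu_i'(\vx)\big)_j$ and then invokes the first identity of \cref{prop:d_LLD_closer_to_d_rep}, which is exactly your ``alternative'' route; your main route simply inlines that identity via $\sum_{i,j}(v_j-v_i)^2 = 2k\lVert\vv\rVert_2^2 - 2\big(\sum_i v_i\big)^2$ and centring, and your log-ratio formula is the correct form of the paper's display, which mistakenly writes a ratio of logarithms instead of $\log\big(p(y_j\mid\vx)/p(y_i\mid\vx)\big)$.
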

\begin{proof}
    We see that 
    \begin{align}
        \left( \vu_i(\vx)-\vu'_i(\vx)\right)_j
       =\vf(\vx)^{\top}\vg(y_j) - \vf(\vx)^{\top}\vg(y_i)
       -\vf'(\vx)^{\top}\vg'(y_j) +\vf'(\vx)^{\top}\vg'(y_i)
       =
       \frac{\log(p(y_j|\vx))}{\log(p(y_i|\vx))}
       -
       \frac{\log(p'(y_j|\vx))}{\log(p'(y_i|\vx))} \; .
    \end{align}
    Which means that 
    \begin{align}
        \Vert \vu(\vx)-\vu'(\vx) \Vert_2^2 &= \frac{1}{2k}\sum_{y_i \in \calY}
        ||\vu_i(\vx) - \vu'_i(\vx) ||^2_2 \\
        &=\frac{1}{2k}\sum_{i=1}^k\sum_{j=1}^k \left(\frac{\log(p(y_j|\vx))}{\log(p(y_i|\vx))} - \frac{\log(p'(y_j|\vx))}{\log(p'(y_i|\vx))} \right)^2 \\
        &= d_a^2(p(\cdot \vert \vx), p'(\cdot \vert \vx))
    \end{align}
\end{proof}

\subsection{The Logit Distance is a Metric}
\label{app:LLDD_is_metric_full_proof}

In this section we provide a full proof that the logit distance is a metric. We will begin by defining a distance between sets of probability distributions which are not necessarily from our model class: 

\begin{definition}
    \label{app:def:log_lik_diff_dist}
    Let $p, p'$ be two sets of conditional distributions with $k$ labels and only non-zero probabilities. Let $m\in \mathbb{N}$, $N = \binom{k-2}{m-1}$. The shifted log-likelihood distance for $p, p'$ is defined as:
    \begin{align}
        \label{eq:log_lik_diff}
        d^m_{\mathrm{LLD}}(p, p') = \frac{1}{2kN} \sum_{\tilde y \in \mathcal{Y}} \sum_{\calJ \subseteq \calY \setminus \{\tilde y\}} \mathbb{E}_{\vx \sim p_\vx} \left\Vert \left( \log\left( \frac{p(y_j\vert \vx)}{p'(y_j\vert \vx)} \right) - \log\left( \frac{p(\tilde y\vert \vx)}{p'(\tilde y\vert \vx)} \right) \right)_j \right\Vert_2 
    \end{align}
\end{definition}
\begin{theorem}
    For any $m\in \mathbb{N}$, the function from \cref{app:def:log_lik_diff_dist} is a metric (which can have infinite value) between sets of conditional distributions, $p, p'$, over $k$ labels with non-zero probabilities.
\end{theorem}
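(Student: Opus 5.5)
We will verify the metric axioms directly, by reducing everything to a pointwise seminorm on the vector of log-probability ratios. For fixed $\vx$, write $\vr(\vx) \in \bbR^k$ for the vector with entries $r_j(\vx) = \log p(y_j\mid\vx) - \log p'(y_j\mid\vx)$. This is well defined (possibly after allowing $+\infty$ in the expectation) because all probabilities are non-zero. Each summand in \cref{app:def:log_lik_diff_dist} is then
\[
\phi_{\tilde y,\calJ}(\vr(\vx)) := \big\Vert (r_j(\vx) - r_{\tilde y}(\vx))_{j\in\calJ}\big\Vert_2 ,
\]
which is a linear map of $\vr$ followed by the Euclidean norm. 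Hence each $\phi_{\tilde y,\calJ}$ is a seminorm on $\bbR^k$. A nonnegative combination of these, integrated against $p_\vx$, is again a (possibly infinite-valued) seminorm-type functional of $\vr$.

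\textbf{Nonnegativity, symmetry and $d(p,p)=0$.} Nonnegativity is immediate. Swapping $p$ and $p'$ replaces $\vr$ by $-\vr$, which leaves each norm unchanged, so the distance is symmetric. If $p=p'$, then $\vr\equiv 0$ and the distance is $0$.

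\textbf{Triangle inequality.} The key observation is that $\vr$ is additive:
\[
\log\frac{p}{p''} = \log\frac{p}{p'} + \log\frac{p'}{p''}.
\]
Applying the triangle inequality of the Euclidean norm inside each summand gives $\phi(\vr_{pp''}) \le \phi(\vr_{pp'}) + \phi(\vr_{p'p''})$ pointwise in $\vx$. Linearity of expectation and of the finite sum over $(\tilde y,\calJ)$ then yields the triangle inequality, with the convention that $\infty$ is allowed on the right-hand side.

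\textbf{Identity of indiscernibles.} This is the step we expect to be the main obstacle, since the quantity only sees differences of $\vr$ and the equality holds only almost surely in $\vx$. Suppose $d^m_{\mathrm{LLD}}(p,p')=0$. Then for $p_\vx$-a.e. $\vx$, every summand vanishes. We use that every index $j\neq\tilde y$ lies in some $\calJ$; this requires $m\le k-1$, which we assume in the intended setting $k>m$. It follows that $r_j(\vx) = r_{\tilde y}(\vx)$ for all $j$, so $\vr(\vx) = c(\vx)\mathbf 1$, i.e. $p(y\mid\vx) = e^{c(\vx)}\, p'(y\mid\vx)$ for all $y$. Summing over $y$ and using that both distributions normalize to $1$ forces $e^{c(\vx)}=1$. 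Hence $p(\cdot\mid\vx) = p'(\cdot\mid\vx)$ for $p_\vx$-a.e. $\vx$. This is equality of the conditional distributions as elements identified $p_\vx$-almost everywhere, which is the sense in which the metric is meant.

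Finally, we remark how this transfers to \cref{def:logit_dist}. By \cref{lemma:distribution_dist_model_class_is_log_prob_diff}, $d_{\mathrm{logit}}^2$ equals the analogous averaged sum of squared norms, i.e. an $L^2$-type version of the same construction. The same three arguments apply, with Minkowski's inequality in $L^2(p_\vx)$ (and in the finite sum) replacing linearity of the expectation in the triangle inequality step.
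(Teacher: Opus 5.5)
Your proposal is correct and follows essentially the same route as the paper. Both arguments use the triangle inequality of the Euclidean norm pointwise after inserting the intermediate log-probabilities, symmetry under $\mathbf{r}\mapsto-\mathbf{r}$, and, for the identity of indiscernibles, the a.e.\ vanishing of all pivot-shifted log-ratio differences followed by normalization. The paper sums the ratios $p(y_j\mid\vx)/p(\tilde y\mid\vx)$ to get $1/p(\tilde y\mid\vx)=1/p'(\tilde y\mid\vx)$, which is the same step as your $e^{c(\vx)}=1$.

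Your remark that every $j\neq\tilde y$ must lie in some $\calJ$, i.e.\ that $m\le k-1$, spells out a condition the paper uses only tacitly. Without it $N=0$ and the definition is vacuous.
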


\begin{proof}
    We will show that \cref{app:def:log_lik_diff_dist} is a metric. That is, 
    \begin{enumerate}[label=\roman*]
      \item it is non-negative, $d^m_{\mathrm{LLD}}(p, p') \geq 0$, and zero only when all distributions are equal, $d^m_{\mathrm{LLD}}(p, p') = 0 \iff p=p'$.
      \item It is symmetric, $d^m_{\mathrm{LLD}}(p, p') = d^m_{\mathrm{LLD}}(p', p)$.
      \item It satisfies the triangle inequality, $d^m_{\mathrm{LLD}}(p_1, p_2) \leq d^m_{\mathrm{LLD}}(p_1, p_3) + d^m_{\mathrm{LLD}}(p_3, p_2)$. 
    \end{enumerate}

    W.r.t. i) we see that \cref{eq:log_lik_diff} is non-negative, since it is a norm. We will show that if $p=p'$, then $d^m_{\mathrm{LLD}}(p, p') = 0$: When $p=p'$, $p(y_j|\vx) = p'(y_j|\vx)$ for any choice of $\{ y_j\}_{j = 0}^m$, therefore $d^m_{\mathrm{LLD}}(p, p') = 0$. 
    We will now show that if $d^m_{\mathrm{LLD}}(p, p') = 0$ then $p=p'$: Assume $d^m_{\mathrm{LLD}}(p, p') = 0$, that is:
    \begin{align}
        \frac{1}{2kN} \sum_{\tilde y \in \mathcal{Y}} \sum_{\calJ \subseteq \calY \setminus \{\tilde y\}} \mathbb{E}_{\vx \sim p_\vx} \left\Vert \left( \log \left(\frac{p(y_j|\vx)}{p'(y_j|\vx)}\right) - \log \left(\frac{p(\tilde{y}|\vx)}{p'(\tilde{y}|\vx)}\right)\right)_j \right\Vert = 0 
    \end{align}
    Since the norm is always non-negative, this means that
    \begin{align}
        \label{eq:mean_log_lik_diff_eq_0}
        \mathbb{E}_{\vx \sim p_\vx} \left\Vert \left( \log \left(\frac{p(y_j|\vx)}{p'(y_j|\vx)}\right) - \log \left(\frac{p(\tilde{y}|\vx)}{p'(\tilde{y}|\vx)}\right)\right)_j \right\Vert = 0, \; \forall \{ y_j\}_{j = 0}^m \subseteq \mathcal{Y}
    \end{align}
    Which means that for all choices of $\{ y_j\}_{j = 0}^m \subseteq \mathcal{Y}$, we have that each entry in the vector $\left( \log \left(\frac{p(y_j|\vx)}{p'(y_j|\vx)}\right) - \log \left(\frac{p(\tilde{y}|\vx)}{p'(\tilde{y}|\vx)}\right)\right)_j$ is zero almost everywhere\footnote{Almost everywhere, also abbreviated a.e.: Everywhere except on some null-set, that is except on a set with measure zero.}.
    
    That is, we have
    \begin{align}
        \log (p(y_j|\vx)) - \log (p(\tilde{y}|\vx)) &-\log (p'(y_j|\vx)) + \log (p'(\tilde{y}|\vx)) = 0 \\
        \log (p(y_j|\vx)) - \log (p(\tilde{y}|\vx)) &= \log (p'(y_j|\vx)) - \log (p'(\tilde{y}|\vx)) \\
        \log \left(\frac{p(y_j|\vx)}{p(\tilde{y}|\vx)}\right) &= \log \left(\frac{p'(y_j|\vx)}{p'(\tilde{y}|\vx)}\right) \\
        \frac{p(y_j|\vx)}{p(\tilde{y}|\vx)} &= \frac{p'(y_j|\vx)}{p'(\tilde{y}|\vx)} \label{eq:rel_eq_ex}
    \end{align}
    for all $y_j, \tilde{y} \in \mathcal{Y}$. 
    Now since $p(y_i|\vx)$ and $p'(y_i|\vx)$ are probability distributions, we have that 
    \begin{align}
        \frac{1}{p(\tilde{y}|\vx)} = \sum_{j=0}^c \frac{p(y_j|\vx)}{p(\tilde{y}|\vx)} = \sum_{j=0}^c \frac{p'(y_j|\vx)}{p'(\tilde{y}|\vx)} = \frac{1}{p'(\tilde{y}|\vx)}
    \end{align}
    which means that \cref{eq:rel_eq_ex} gives us
    \begin{align}
        p(y_j|\vx) = p'(y_j|\vx), \forall y_j \in \mathcal{Y}
    \end{align}
    and almost everywhere w.r.t. $\vx$ thus, $p = p'$ a.e.. 

    W.r.t. ii) \cref{eq:log_lik_diff} is symmetric, since the norm is symmetric. 

    W.r.t. iii): Assume we have three sets of distributions, $p_1, p_2, p_3$ and let us consider $d^m_{\mathrm{LLD}}(p_1, p_2)$.
    \begin{align}
        &\frac{1}{2kN} \sum_{\tilde y \in \mathcal{Y}} \sum_{\calJ \subseteq \calY \setminus \{\tilde y\}} \mathbb{E}_{\vx \sim p_\vx}  \left\Vert \left( \log \left(\frac{p(y_j|\vx)}{p'(y_j|\vx)}\right) - \log \left(\frac{p(\tilde{y}|\vx)}{p'(\tilde{y}|\vx)}\right)\right)_j \right\Vert \\
        &= \frac{1}{2kN} \sum_{\tilde y \in \mathcal{Y}} \sum_{\calJ \subseteq \calY \setminus \{\tilde y\}} \mathbb{E}_{\vx \sim p_\vx}  \left\Vert \left( \log \left(p_1(y_j|\vx')\right) - \log(p_2(y_j|\vx')) - \log \left(p_1(\tilde{y}|\vx')\right) + \log (p_2(\tilde{y}|\vx'))\right)_j \right\Vert \\
        &= \frac{1}{2kN} \sum_{\tilde y \in \mathcal{Y}} \sum_{\calJ \subseteq \calY \setminus \{\tilde y\}} \mathbb{E}_{\vx \sim p_\vx} \Vert ( \log (p_1(y_j|\vx')) - \log(p_3(y_j|\vx'))  + \log(p_3(y_j|\vx'))- \log(p_2(y_j|\vx')) \nonumber \\
         &\phantom{==}- \log (p_1(\tilde{y}|\vx')) + \log (p_3(\tilde{y}|\vx')) - \log (p_3(\tilde{y}|\vx')) + \log (p_2(\tilde{y}|\vx')))_j \Vert \\
         &= \frac{1}{2kN} \sum_{\tilde y \in \mathcal{Y}} \sum_{\calJ \subseteq \calY \setminus \{\tilde y\}} \mathbb{E}_{\vx \sim p_\vx} \left\Vert ( \log (\frac{p_1(y_j|\vx')}{p_3(y_j|\vx')})- \log (\frac{p_1(\tilde{y}|\vx')}{p_3(\tilde{y}|\vx')}) + \log(\frac{p_3(y_j|\vx')}{p_2(y_j|\vx')} - \log (\frac{p_3(\tilde{y}|\vx')}{p_2(\tilde{y}|\vx')}) )_j \right\Vert \\
        &\leq \frac{1}{2kN} \sum_{\tilde y \in \mathcal{Y}} \sum_{\calJ \subseteq \calY \setminus \{\tilde y\}} \mathbb{E}_{\vx \sim p_\vx}  \left(\left\Vert ( \log (\frac{p_1(y_j|\vx')}{p_3(y_j|\vx')})- \log (\frac{p_1(\tilde{y}|\vx')}{p_3(\tilde{y}|\vx')}) )_j\right\Vert + \left\Vert (\log(\frac{p_3(y_j|\vx')}{p_2(y_j|\vx')} - \log (\frac{p_3(\tilde{y}|\vx')}{p_2(\tilde{y}|\vx')}) )_j \right\Vert \right) \\
        &= \frac{1}{2kN} \sum_{\tilde y \in \mathcal{Y}} \sum_{\calJ \subseteq \calY \setminus \{\tilde y\}} \mathbb{E}_{\vx \sim p_\vx}  \left\Vert ( \log (\frac{p_1(y_j|\vx')}{p_3(y_j|\vx')})- \log (\frac{p_1(\tilde{y}|\vx')}{p_3(\tilde{y}|\vx')}) )_j\right\Vert \\
        & \quad + \frac{1}{2kN} \sum_{\tilde y \in \mathcal{Y}} \sum_{\calJ \subseteq \calY \setminus \{\tilde y\}} \mathbb{E}_{\vx \sim p_\vx} \left\Vert (\log(\frac{p_3(y_j|\vx')}{p_2(y_j|\vx')} - \log (\frac{p_3(\tilde{y}|\vx')}{p_2(\tilde{y}|\vx')}) )_j \right\Vert 
    \end{align}    
    In other words: $d^m_{\mathrm{LLD}}(p_1, p_2) \leq d^m_{\mathrm{LLD}}(p_1, p_3)  + d^m_{\mathrm{LLD}}(p_3, p_2)$. 
\end{proof}

We are now ready to prove the theorem:
\begin{theorem}
    For any two models from our model class, the logit distance from \cref{def:logit_dist} is a metric between their sets of distributions, $p, p'$.  
\end{theorem}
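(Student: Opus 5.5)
The most direct route is to verify the metric axioms for $d_{\mathrm{logit}}$ itself, using the pointwise identity with the Aitchison distance. The general $d^m_{\mathrm{LLD}}$ result is a useful template, but it averages the norm rather than taking the square root of an averaged square, so it does not literally coincide with $d_{\mathrm{logit}}$.

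First, I would observe that $d_{\mathrm{logit}}(p,p') = \big(\mathbb{E}_{\vx\sim p_\vx}\lVert \vu(\vx)-\vu'(\vx)\rVert_2^2\big)^{1/2}$ is the $L^2(p_\vx;\bbR^k)$ norm of the difference of the logit maps $\vu-\vu'$. Symmetry and non-negativity are then immediate, and the triangle inequality is Minkowski's inequality in $L^2(p_\vx)$ applied to $\vu_1-\vu_2=(\vu_1-\vu_3)+(\vu_3-\vu_2)$, where $\vu_1,\vu_2,\vu_3$ are the centred logits of the three models. The inequality holds in $[0,\infty]$, which covers the possibly infinite values.

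The main step is well-definedness and identity of indiscernibles. The issue is that $d_{\mathrm{logit}}$ is defined through the logits, while the metric is claimed on distributions. With the gauge fixing $\sum_i u(\vx)_i=0$ from \cref{eq:gauge-fix-unembeddings-logits}, the centred logits are determined by the distribution: $u(\vx)_j = \log p(y_j\mid\vx) - \frac1k\sum_i \log p(y_i\mid\vx)$. This is the centred log-ratio transform. I would derive it from \cref{eq:logit_proba}, which gives $u(\vx)_j - u(\vx)_i = \log p(y_j\mid\vx)-\log p(y_i\mid\vx)$, and then average over $i$.

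Consequently, two parametrizations producing the same $p$ yield the same $\vu$, so $d_{\mathrm{logit}}$ depends only on $p,p'$. The Aitchison identity in \cref{app:logit_dist_vs_Aitchison} gives the same conclusion. Conversely, suppose $d_{\mathrm{logit}}(p,p')=0$. Then $\vu=\vu'$ for $p_\vx$-a.e.\ $\vx$, hence $\mathrm{softmax}(\vu(\vx))=\mathrm{softmax}(\vu'(\vx))$, i.e.\ $p=p'$ almost everywhere. Equivalently, one can argue via \cref{lemma:distribution_dist_model_class_is_log_prob_diff}: each log-ratio difference vanishes a.e., and the argument of \cref{eq:rel_eq_ex} then gives $p=p'$.

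The only subtlety I expect is this a.e.\ qualification. Distributions should be identified when they agree $p_\vx$-almost surely, exactly as in the proof for $d^m_{\mathrm{LLD}}$; I would state this convention explicitly.
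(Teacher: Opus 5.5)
Your proposal is correct and follows essentially the paper's proof. The paper checks the same three axioms and proves the triangle inequality by expanding the square and applying Cauchy--Schwarz twice, which is Minkowski's inequality in $L^2(p_\vx)$ done by hand. For identity of indiscernibles it goes through \cref{lemma:distribution_dist_model_class_is_log_prob_diff} and the ratio-summing step at \cref{eq:rel_eq_ex}, whereas you use the centred log-ratio formula (which the paper derives elsewhere as \cref{eq:fg_log_prob}) together with softmax; this is a slightly more direct route to the same conclusion, and your explicit remark that centring makes $d_{\mathrm{logit}}$ depend only on $p,p'$ spells out what the paper leaves implicit.
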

\begin{proof}
    We will show that \cref{def:logit_dist} is a metric. That is, 
    \begin{enumerate}[label=\roman*]
      \item it is non-negative, $d_{\mathrm{logit}}(p, p') \geq 0$, and zero only when all distributions are equal, $d_{\mathrm{logit}}(p, p') = 0 \iff p=p'$.
      \item It is symmetric, $d_{\mathrm{logit}}(p, p') = d_{\mathrm{logit}}(p', p)$.
      \item It satisfies the triangle inequality, $d_{\mathrm{logit}}(p_1, p_2) \leq d_{\mathrm{logit}}(p_1, p_3) + d_{\mathrm{logit}}(p_3, p_2)$. 
    \end{enumerate}
    W.r.t. i) we see that \cref{eq:logit_dist} is non-negative, since it is a norm. We will now show that if $p=p'$, then $d_{\mathrm{logit}}(p, p') = 0$. Assume $p=p'$. Then, $p(y_j|\vx) = p'(y_j|\vx)$ for all $j\in \{1, ..., k\}$ and $\vx$ a.e.. Since we saw in \cref{lemma:distribution_dist_model_class_is_log_prob_diff} that 
    \begin{align}
        &||\vu(\vx) - \vu'(\vx) ||^2 = \frac{1}{2kN} \sum_{\tilde y \in \mathcal{Y}} \sum_{\calJ \subseteq \calY \setminus \{\tilde y\}}
    \left\Vert \left( \log\left( \frac{p(y_j\vert \vx)}{p'(y_j\vert \vx)} \right) - \log\left( \frac{p(\tilde y\vert \vx)}{p'(\tilde y\vert \vx)} \right) \right)_j \right\Vert_2^2 
    \end{align}
    We have that for each $\vx$, $p(y_j|\vx) = p'(y_j|\vx), \forall j\in\{1, ..., k\} \implies ||\vu(\vx) - \vu'(\vx) ||^2 = 0$, which in turn gives us that $||\vu(\vx) - \vu'(\vx) || = 0$. So when $p=p'$, $d_{\mathrm{logit}}(p, p') = 0$ for $\vx$ a.e.. 

    We will now show that if $d_{\mathrm{logit}}(p, p') = 0$ then $p=p'$ a.e.. Assume $d_{\mathrm{logit}}(p, p') = 0$. Then $||\vu(\vx) - \vu'(\vx) || = 0$ for $\vx$ a.e. and \cref{lemma:distribution_dist_model_class_is_log_prob_diff} gives us that
    \begin{align}
        &  \left\Vert \left( \log\left( \frac{p(y_j\vert \vx)}{p'(y_j\vert \vx)} \right) - \log\left( \frac{p(\tilde y\vert \vx)}{p'(\tilde y\vert \vx)} \right) \right)_j \right\Vert_2 = 0, \; \forall \{ y_j\}_{j = 1}^m \subseteq \mathcal{Y}, \tilde y \in \mathcal{Y} 
    \end{align}
    for $\vx$ a.e. which is the same as in \cref{eq:mean_log_lik_diff_eq_0}. By the same steps as in the other proof we get that $p=p'$ a.e.. 

    W.r.t. ii) \cref{eq:logit_dist} is symmetric, since the norm is symmetric. 

    W.r.t. iii): Assume we have three models with sets of distributions, $p, p', p''$ and logits $\vu(\vx), \vu'(\vx), \vu''(\vx)$ and let us consider $d_{\mathrm{logit}}(p, p'')$. We see that
    \begin{align}
    \label{eq:dist_squared_cal}
        \begin{split}
        d^2_{\mathrm{logit}}(p, p'') &= \mathbb{E}_{\vx\sim p_{\vx}} \lVert \vu(\vx)-\vu''(\vx)\rVert^2_2
        =\mathbb{E}_{\vx\sim p_{\vx}} \left(( \vu(\vx)-\vu'(\vx))+(\vu'(\vx)-\vu''(\vx))\right)^2 \\
        & = \mathbb{E}_{\vx\sim p_{\vx}} \left( \vu(\vx)-\vu'(\vx)\right)^2 + 2 \mathbb{E}_{\vx\sim p_{\vx}} 
        \left(\vu(\vx)-\vu'(\vx)\right)\cdot\left(\vu'(\vx)-\vu''(\vx)\right)
        + \mathbb{E}_{\vx\sim p_{\vx}} \left((\vu'(\vx)-\vu''(\vx)\right)^2.
        \end{split}
    \end{align}
    Considering the mixed term, we see that by applying the Cauchy Schwarz inequality first to the inner product on $\mathbb{R}^k$ and then to $\mathbb{E}_{\vx\sim p_\vx}(\cdot)$ we find
    \begin{align}
    \label{eq:mixed_term_ineq}
        \begin{split}
            \mathbb{E}_{\vx\sim p_{\vx}} 
            \left(\vu(\vx)-\vu'(\vx)\right)\cdot\left(\vu'(\vx)-\vu''(\vx)\right)
            &\leq 
            \mathbb{E}_{\vx\sim p_{\vx}} 
            \left(\lVert \vu(\vx)-\vu'(\vx)\rVert_2\cdot\lVert\vu'(\vx)-\vu''(\vx)\rVert_2\right)
           \\
           &\leq 
            \left(\mathbb{E}_{\vx\sim p_{\vx}} 
            \lVert \vu(\vx)-\vu'(\vx)\rVert_2^2\right)^{1/2}
            \cdot
            \left(\mathbb{E}_{\vx\sim p_{\vx}} 
            \lVert \vu'(\vx)-\vu''(\vx)\rVert_2^2\right)^{1/2}.
        \end{split}
    \end{align}
    Combining \cref{eq:dist_squared_cal} and \cref{eq:mixed_term_ineq}, we get 
    \begin{align}
        d^2_{\mathrm{logit}}(p, p'') &\leq  \mathbb{E}_{\vx\sim p_{\vx}} \left( \vu(\vx)-\vu'(\vx)\right)^2 + \mathbb{E}_{\vx\sim p_{\vx}} \left((\vu'(\vx)-\vu''(\vx)\right)^2 \\
        &\qquad + \left(\mathbb{E}_{\vx\sim p_{\vx}} 
            \lVert \vu(\vx)-\vu'(\vx)\rVert_2^2\right)^{1/2}
            \cdot
            \left(\mathbb{E}_{\vx\sim p_{\vx}} 
            \lVert \vu'(\vx)-\vu''(\vx)\rVert_2^2\right)^{1/2} \\
            &= d^2_{\mathrm{logit}}(p, p') + d^2_{\mathrm{logit}}(p', p'') +2d_{\mathrm{logit}}(p, p')d_{\mathrm{logit}}(p', p'') \\
            &= (d_{\mathrm{logit}}(p, p') + d_{\mathrm{logit}}(p', p''))^2
    \end{align}
    Thus, the triangle inequality is satisfied. 
\end{proof}

\section{Theoretical Material for Section \ref{sec:kl-divergence-analysis}: Bounding logit difference by the KL divergence}
\label{sec:app-theory-kl}

\subsection{Assumption \ref{assu:tau-lower-bound} guarantees that embeddings and unembeddings are bounded}\label{app:tau}

We observe that requiring that a model $(\vf, \vg) \in \Theta$ is $\tau$-lower bounded (\cref{assu:tau-lower-bound}) is equivalent to ask that embeddings and unembeddings  are bounded.
Note that the unembeddings are always bounded when the number of labels is finite.
We first show that bounded $\vf$ and $\vg$ implies lower bounded probability. For this, notice that:
\begin{align}
    p_{\vf, \vg}(y \mid \vx) &= 
    \frac{\exp (\vf(\vx)^\top \vg(y) ) }{\sum_{y' \in \calY} \exp (\vf(\vx)^\top \vg(y') )}
    \\ 
    &\geq
    \frac{\exp \big( {-||\vf(\vx)||\cdot \max_{y\in \calY} ||\vg(y)||} \big)}{k \exp \big( {||\vf(\vx)||\cdot \max_{y\in \calY} ||\vg(y)||}
    \big)
    } \\
    &= 
    \min_{y\in \calY} \frac{\exp \big({-2||\vf(\vx)||\cdot ||\vg(y)||} \big)}{k} \\
    &\geq 
    \essinf_{\vx \in \mathrm{supp}(p_{\vx}) }
    \min_{y\in \calY} \frac{\exp{ \big( -2||\vf(\vx)||\cdot ||\vg(y)|| \big) }}{k}.
\end{align}
The last quantity is finite for bounded $\vf$ and $\vg$ and  \cref{assu:tau-lower-bound} holds for 
\[
   \tau= \essinf_{\vx \in \mathrm{supp}(p_\vx)} \min_{y \in \calY} p(y \mid \vx) .
\]

On the contrary, if embeddings or unembeddings are not bounded, 
\cref{eq:condition-tau-min} cannot be satisfied for $\tau > 0$. 
Since we only consider a finite number of labels we focus on unbounded embeddings.
Assume the embeddings are unbounded. Then there is  a sequence $\vx_i$
so that $\lVert\vf(\vx_i)\rVert\to \infty$.
By the diversity assumption we  find that
\begin{align}
    \inf_{\vv\in S^{m-1}} \max_{y,\bar{y}\in \mathcal{Y}} \left(\vg(y)-\vg(\bar{y})\right)\vv\geq \alpha>0
\end{align}
for some $\alpha$. Indeed, the left hand side is a continuous function on a compact domain and pointwise positive.
Thus we find a sequence $y_i$, $\bar{y}_i$ 
such that
\begin{align}
   \vf(\vx_i) \cdot  \left(\vg(y_i)-\vg(\bar{y}_i)\right)
   =\vf(\vx_i) \cdot \max_{y,\bar{y}\in \mathcal{Y}} \left(\vg(y)-\vg(\bar{y})\right)\geq
   \alpha \lVert \vf(\vx_i)\rVert.
\end{align}
This implies that 
\begin{align}
    p_{\vf,\vg}(\bar{y}_i|\vx_i)
    =
    p_{\vf,\vg}({y}_i|\vx_i)e^{-\vf(\vx_i)^\top  \left(\vg(y_i)-\vg(\bar{y}_i)\right)}
    \leq 1\cdot e^{-\alpha \lVert \vf(\vx_i)\rVert}\to 0
\end{align}
as $i\to \infty$. Therefore, probabilities cannot be $\tau$-bounded for any $\tau>0$.

\subsection{Bounding logit distance with KL divergence}
\label{app:bounding_logit_dist_w_kl}

In this section, we start by considering two generic probability distributions $\vp=(p_1,\ldots, p_d)$ and $\vq=(q_1,\ldots,q_d)$ with $d$ outcomes, such that $p_i, q_i \geq 0$. We show how to provide bounds for 
the logit differences $\sum_i (\log(p_i)-\log(q_i))^2$ in terms of the KL-divergence 
$\sum_i p_i \log(p_i/q_i)$
While these bounds are generally standard, we could not locate the required bounds in the literature and therefore provide complete proofs here. 

Let us first clarify that this is not generally possible. 
We consider the two probability 
distributions with two outcomes ($d=2$) whose probabilities are given by $(p_1,p_2)=(\tau, 1-\tau)$
and $(q_1,q_2)=(\tau^2,1-\tau^2)$ for some $0\leq \tau< 1$. 
Then, on one hand, we find
\begin{align}
    \sum_i (\log(p_i)-\log(q_i))^2\geq \log(\tau)^2
\end{align}
and on the other hand
\begin{align}
    \mathrm{KL}(\vp||\vq)
    =\tau \log(\tau / \tau^2)
    +(1-\tau) \log((1-\tau)/(1-\tau^2))
    \leq \tau |\log(\tau)|.
\end{align}
Since the former diverges as $\tau\to 0$ while the latter converges to $0$ it is generally not possible 
to bound the squared logit difference by a function of the KL-divergence only. This is also the root of the observation that small KL-divergence between models in general does not imply similar representations \citep{nielsen2025does}.

We therefore consider assuming a lower bound on $p_i$
for such a bound. We first assume in addition that
$q_i$ is also lower bounded. Before stating the bound in this case we provide one auxiliary lemma that we will need in the proof.

\begin{lemma}\label{le:function}
For $\tau\leq 1/3$ the following bound
holds for all $x\geq \tau$
\begin{align}\label{eq:bound_lnsq}
    \log(x)^2\leq 4\log(\tau)^2(x\log(x)-x+1).
\end{align}
\end{lemma}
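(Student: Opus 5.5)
The plan is to compare the two sides through a single auxiliary function and its monotonicity, rather than splitting into ad hoc regimes. Set $L = |\log(\tau)|$, $c = 4L^2$, $h(x) = x\log(x) - x + 1$, and
\begin{align*}
    \psi(x) = c\,h(x) - \log(x)^2 .
\end{align*}
The claim \eqref{eq:bound_lnsq} is exactly $\psi(x)\geq 0$ for all $x\geq\tau$. Since $h'(x)=\log(x)$, the derivative factorizes as
\begin{align*}
    \psi'(x) = c\log(x) - \frac{2\log(x)}{x} = \log(x)\Big(c - \frac{2}{x}\Big).
\end{align*}
We also have $\psi(1)=0$. Everything therefore reduces to reading off the sign of this product on three intervals.

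\textbf{Step 1 ($x\geq 1$).} Here $\log(x)\geq 0$. We need $c\geq 2$: since $\tau\leq 1/3$, we have $L\geq\log 3>1$, so $c>4$. Then $c-2/x\geq c-2>0$, so $\psi$ is nondecreasing on $[1,\infty)$ and $\psi(x)\geq\psi(1)=0$.

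\textbf{Step 2 ($2/c\leq x<1$).} Because $c>2$, the point $x_0 = 2/c$ lies in $(0,1)$. On $(x_0,1)$ we have $\log(x)<0$ and $c-2/x>0$, so $\psi'<0$. Thus $\psi$ decreases towards $\psi(1)=0$ and is nonnegative on $[x_0,1]$.

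\textbf{Step 3 ($\tau\leq x<x_0$, if $\tau<x_0$).} Here both factors of $\psi'$ are negative, so $\psi$ is increasing. Its minimum on $[\tau,x_0]$ is therefore $\psi(\tau)$, and it suffices to check $\psi(\tau)\geq 0$. Using $\log(\tau)^2=L^2$ and $\log(\tau)=-L$, this condition becomes
\begin{align*}
    4L^2(1-\tau-\tau L)\geq L^2
    \iff \tau(1+L)\leq \tfrac34 .
\end{align*}

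\textbf{Main obstacle.} The only genuinely quantitative step is verifying $\tau(1+|\log\tau|)\leq 3/4$ for all $\tau\leq 1/3$; this is where the constant $4$ and the threshold $1/3$ enter. Write $\phi(\tau)=\tau-\tau\log(\tau)$. Then $\phi'(\tau)=-\log(\tau)>0$ on $(0,1)$, so $\phi$ is increasing and it suffices to evaluate at $\tau=1/3$. There $\phi(1/3)=\tfrac13(1+\log 3)\approx 0.699\leq 0.75$. Combining Steps 1--3 gives $\psi\geq 0$ on $[\tau,\infty)$, which proves the lemma.
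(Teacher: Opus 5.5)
Your proof is correct, and it reaches the same endpoint check as the paper by a somewhat different route. The paper studies the ratio $\Phi(x)=\log(x)^2/h(x)$ with $h(x)=x\log(x)-x+1$, extended by $\Phi(1)=2$, and proves that $\Phi$ is non-increasing on all of $(0,\infty)$. Because $\Phi'(x)=2h(x)^{-2}\log(x)\Psi(x)$, this needs a second auxiliary function $\Psi(x)=\log(x)-1+1/x-\log(x)^2/2$, together with the fact that $\Psi$ is decreasing and vanishes at $1$.

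You instead work with the difference $\psi(x)=c\,h(x)-\log(x)^2$. Its derivative factors at once as $\log(x)(c-2/x)$, so a three-interval sign analysis replaces the auxiliary function. You also never divide by $h$, which sidesteps the removable singularity at $x=1$.

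Both proofs then reduce to $h(\tau)\geq 1/4$ for $\tau\leq 1/3$. This is exactly your condition $\tau(1+|\log\tau|)\leq 3/4$, and both verify it by the same monotonicity in $\tau$.

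What the paper's version buys is a $\tau$-independent structural fact: the worst case over $x\geq\tau$ is always the left endpoint. Hence $\log(\tau)^2/h(\tau)$ is the optimal constant, and $4\log(\tau)^2$ is a convenient relaxation of it. Your argument, run with a general constant $c\geq 2$, gives the same sharp conclusion: the bound holds if and only if $c\geq\log(\tau)^2/h(\tau)$. So your route is the more economical of the two at no real cost.
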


\begin{proof}
We define the function
 $\Phi:\mathbb{R}_+\setminus \{1\}\to \mathbb{R}$ given by
    \begin{align}
        \Phi(x)=
        \frac{\log(x)^2}{x\log(x)-x+1}.
    \end{align}
We  observe that  the function at thet denominator $x\to x\log(x)-x+1$
    is convex, minimized at $x=1$, and non-negative
   on $\mathbb{R}_+$ and thus $\Phi$ is well defined.
    We notice that $\Phi$ can be extended to a continuous function at $1$ by setting $\Phi(1)=2$.
    To prove the lemma we need to show that
    $\Phi(x)\leq 4\log(\tau)^2$ for $x\geq \tau$.
    We will proceed in two steps: First we show that $\Phi$ is non-increasing and then we bound $\Phi(\tau)$.
    To show that $\Phi$ is non-increasing  we will need the following auxiliary fact: The function $\Psi:\mathbb{R}_+\to \mathbb{R}$ given by
    \begin{align}
       \Psi( x)=\log(x)-1+\frac{1}{x}-\frac{\log(x)^2}{2}
    \end{align}
    is decreasing on $(0,\infty)$ and satisfies
    $\Psi(1)=0$, so in particular $\Psi(x)\geq 0$
    for $x\leq 1$ and $\Psi(x)\leq 0$ for $x\geq 1$.
To show this we note that
\begin{align}
    \Psi'(x)=\frac{1}{x}-\frac{1}{x^2}-\frac{\log(x)}{x}=-\frac{1}{x^2}\left(x\log(x)-x+1\right)\leq 0.
\end{align}
    Now we calculate $\Phi'$ using $(\log(x)x-x+1)'=\log(x)$ and find
    \begin{align}
    \begin{split}
        \Phi'(x)=
        \frac{2\frac{\log(x)}{x}\left(x\log(x)-x+1\right) - \log(x)^2 \cdot \log(x)}{\left(x\log(x)-x+1\right)^2}
        &=
        \frac{2}{\left(x\log(x)-x+1\right)^2}
        \log(x)\left(\log(x)-1+\frac{1}{x}-\frac{\log(x)^2}{2}\right)
        \\
    &= \frac{2}{\left(x\log(x)-x+1\right)^2}
        \log(x)\Psi(x)\leq 0
        \end{split}
    \end{align}
    where we used that $\log(x)$ and $\Psi(x)$ both change their sign at $x=1$ so the product is always non-positive.
    Therefore we conclude that
     \begin{align}
        \sup_{x\in [\tau, \tau^{-1}]\setminus \{1\}}
        \Phi(x)=
        \Phi(\tau)=\frac{\log(\tau)^2}{\tau\log(\tau)-\tau+1}\leq 4\log(\tau)^2
    \end{align}
    where we used $\tau\log(\tau)-\tau+1\geq 1/3\log(1/3)-1/3+1\geq 1/4$ for $\tau\leq 1/3$ in the last step.
\end{proof}

We can now state and prove the upper bound.
\begin{lemma}\label{le:upperKL1}
   Let $\vp$ and $\vq$ be two probability distributions satisfying $p_i\geq \tau$ and
   $q_i\geq \tau$ for all $i$ and some $0<\tau\leq 1/3$.
   Then 
   \begin{align}
       \sum_i (\log(p_i)-\log(q_i))^2
       \leq \frac{4\log(\tau)^2}{\tau}\mathrm{KL}(\vp||\vq).
   \end{align}
\end{lemma}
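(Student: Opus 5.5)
The plan is to reduce the statement to \cref{le:function} by applying it term by term with $x = q_i/p_i$. Write the KL divergence as
\begin{align}
    \mathrm{KL}(\vp\|\vq) = \sum_i p_i \log\frac{p_i}{q_i} = \sum_i p_i\left(x_i\log x_i - x_i + 1\right), \qquad x_i := \frac{q_i}{p_i}.
\end{align}
The identity holds because $\sum_i p_i x_i = \sum_i q_i = 1 = \sum_i p_i$, so the added terms $\sum_i p_i(1-x_i)$ vanish. We also have $p_i x_i\log x_i = q_i\log(q_i/p_i)$, which means the right-hand side as written is really $\mathrm{KL}(\vq\|\vp)$. So I would first decide which orientation to use. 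The cleaner choice is $x_i = p_i/q_i$ with weights $q_i$:
\begin{align}
    \mathrm{KL}(\vp\|\vq) = \sum_i q_i\left(x_i\log x_i - x_i + 1\right), \qquad x_i = \frac{p_i}{q_i}.
\end{align}
Here $q_i x_i \log x_i = p_i\log(p_i/q_i)$, and $\sum_i q_i(1-x_i) = 0$. Each summand is nonnegative, since $x\log x - x + 1 \geq 0$.

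Next I apply \cref{le:function} to each $x_i$. This needs $x_i \geq \tau$, which holds because $x_i = p_i/q_i \geq p_i \geq \tau$, using $q_i \leq 1$. This step uses only the lower bound on $\vp$. The lemma then gives
\begin{align}
    (\log p_i - \log q_i)^2 = \log(x_i)^2 \leq 4\log(\tau)^2\left(x_i\log x_i - x_i + 1\right).
\end{align}
To reintroduce the weights, I use $q_i \geq \tau$, i.e.\ $1 \leq q_i/\tau$, and multiply the nonnegative right-hand side by it:
\begin{align}
    (\log p_i - \log q_i)^2 \leq \frac{4\log(\tau)^2}{\tau}\, q_i\left(x_i\log x_i - x_i + 1\right).
\end{align}
Summing over $i$ and using the displayed identity yields the claimed bound.

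The only delicate point is matching the orientation of the ratio to the hypotheses of \cref{le:function}. The lemma needs $x \geq \tau$, and the weights need a lower bound $\tau$. Choosing $x_i = p_i/q_i$ with weights $q_i$ uses the bound on $\vp$ for the first and the bound on $\vq$ for the second. The case $x_i = 1$ is handled by the continuous extension $\Phi(1) = 2$, or trivially, since both sides vanish there.
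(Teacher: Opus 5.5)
Your proposal is correct and is essentially the paper's proof. You write $p_i = x_i q_i$, express $\mathrm{KL}(\vp\|\vq)=\sum_i q_i(x_i\log x_i - x_i + 1)$, apply \cref{le:function} using $x_i \geq \tau$, and absorb the weights via $q_i/\tau \geq 1$ before summing.
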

\begin{remark}\label{rem:tight_KL_tau}
    The bound is  tight up to the logarithmic terms. Indeed, consider
    $\vp=(2\tau,1-2\tau)$
    and $\vq=(\tau, 1-\tau)$. Then the left hand side is bigger than $\log(2)^2$ while the KL-divergence of these two distributions is bounded by $2\log(2)\tau$, indeed
    \begin{align}
        \KL(\vp,\vq)=2\tau \ln\left(\frac{2\tau}{\tau}\right)+(1-2\tau)\ln\left(\frac{1-2\tau}{1-\tau}\right)
        \leq 2\ln(2)\tau.
    \end{align}
    Moreover, the bound remains tight (up to logarithmic terms) even when the symmetric 
    Jensen-Shannon divergence is considered because
    \begin{align}
        \KL(\vq,\vp)=\tau \ln\left(\frac{\tau}{2\tau}\right)+(1-\tau)\ln\left(\frac{1-\tau}{1-2\tau}\right)
        \leq -\tau\ln(2)-\ln(1-2\tau)\leq
        (2-\ln(2))\tau +4\tau^2
    \end{align}
    for $\tau\leq 1/4$ so that the Jensen-Shannon divergence is of order $\tau$ (for small $\tau$).
\end{remark}
\begin{proof}
    We write $p_i=x_i q_i$. By assumption, $p_i=x_i q_i \geq \tau$, which means that $x_i \geq \frac{\tau}{q_i} \geq \tau$. 
    Also, $x_i = \frac{p_i}{q_i} \leq \frac{1}{q_i} \leq \frac{1}{\tau}$. 
    So we have $x_i\in [\tau, \tau^{-1}]$.
    The starting point is to rewrite the KL-divergence as follows
    \begin{align}
        \mathrm{KL}(\vp||\vq)
        = \sum_i p_i\log(p_i/q_i)
        = \sum_i q_i(x_i \log (x_i)
        - x_i + 1).
    \end{align}
    Here we used in the last step that
    $\sum_i q_i x_i=\sum_i p_i=1=\sum_i q_i$.
    Recall that the function $x\to x\log(x)-x+1$
    is convex, minimized at $x=1$, and non-negative.
    We apply \eqref{eq:bound_lnsq} from Lemma~\ref{le:function} to control 
    \begin{align}
    \begin{split}
        \sum_i (\log(p_i)-\log(q_i))^2
        &=\sum_i \log(x_i)^2
        \leq 4\log(\tau)^2\sum_i(x_i\log(x_i)-x_i+1)
        \\
        &\leq 4\log(\tau)^2\sum_i \frac{q_i}{\tau}(x_i\log(x_i)-x_i+1)
        =\frac{4\log(\tau)^2}{\tau}\mathrm{KL}(\vp||\vq).
     \end{split}   
    \end{align}
\end{proof}

We can similarly show a slightly weaker bound without assuming that $\vq$ is lower bounded.
\begin{lemma}\label{le:upperKL2}
     Let $\vp$ and $\vq$ be two probability distributions satisfying $p_i\geq \tau$ for all $i$ and some $0<\tau\leq 1/3$.
   Then 
   \begin{align}
       \sum_i (\log(p_i)-\log(q_i))^2
       \leq \frac{12\log(\tau)^2}{\tau}\mathrm{KL}(\vp||\vq)+\frac{9}{\tau^2}\mathrm{KL}(\vp||\vq)^2.
   \end{align}
\end{lemma}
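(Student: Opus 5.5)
Fix $x_i = p_i/q_i$, so that, as in the proof of Lemma~\ref{le:upperKL1}, $\mathrm{KL}(\vp\|\vq)=\sum_i q_i\,\phi(x_i)$ with $\phi(x)=x\log x-x+1\ge 0$. Since $q_i$ is no longer bounded below, we only have $x_i\ge p_i\ge\tau$ (because $q_i\le 1$); there is no upper bound on $x_i$ in general, but small $q_i$ means large $x_i$. The plan is to split the indices into two groups: $I_1=\{i: x_i\le 1/\tau\}$ (equivalently $q_i\ge \tau p_i$, roughly ``$q_i$ not too small'') and $I_2=\{i: x_i>1/\tau\}$.

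On $I_1$ the argument of Lemma~\ref{le:upperKL1} almost goes through: Lemma~\ref{le:function} gives $\log(x_i)^2\le 4\log(\tau)^2\phi(x_i)$ for $x_i\ge\tau$ (the monotonicity of $\Phi$ proved there applies on all of $[\tau,\infty)$). To convert $\phi(x_i)$ into $q_i\phi(x_i)$ we need a lower bound on $q_i$: on $I_1$, $q_i=p_i/x_i\ge \tau\cdot\tau$, which is too weak. So instead I would split $I_1$ further by $x_i\le 3$ versus $x_i>3$: for $x_i\le 3$ we have $q_i\ge p_i/3\ge\tau/3$, giving $\sum\log(x_i)^2\le \frac{12\log(\tau)^2}{\tau}\sum q_i\phi(x_i)$, which matches the constant $12$ in the statement. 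For all indices with $x_i>3$ (including $I_2$) I would use the weight $p_i$ instead: $q_i\phi(x_i)=p_i(\log x_i-1+1/x_i)\ge \tau(\log x_i-1)\ge c\,\tau\log x_i$ for $x_i\ge 3$, e.g.\ $\log x_i-1+1/x_i\ge \tfrac13\log x_i$ when $x_i\ge 3$ (check: at $x=3$, $\log 3-2/3\approx0.43\ge 0.37$, and the difference is increasing). Hence each such $\log x_i\le \frac{3}{\tau}q_i\phi(x_i)\le \frac{3}{\tau}\mathrm{KL}$, and summing, $\sum_{x_i>3}\log(x_i)^2\le\big(\sum_{x_i>3}\log x_i\big)^2\le \frac{9}{\tau^2}\mathrm{KL}(\vp\|\vq)^2$, since all terms are nonnegative and $\sum_{x_i>3} q_i\phi(x_i)\le\mathrm{KL}$.

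Adding the two contributions yields the claimed $\frac{12\log(\tau)^2}{\tau}\mathrm{KL}+\frac{9}{\tau^2}\mathrm{KL}^2$. The main obstacle is the elementary inequality $x\log x-x+1\ge \frac{x}{3}\log x$ for $x\ge3$ (i.e.\ $\tfrac23\log x-1+1/x\ge0$, true at $x=3$ since $\tfrac23\cdot1.0986\approx0.732\ge 0.667$, and increasing for $x\ge 3/2$), together with the observation that $\Phi$ being non-increasing on $[\tau,\infty)$ lets Lemma~\ref{le:function} be used for all $x\ge\tau$; the rest is bookkeeping. If the first split constant doesn't match exactly, the threshold $3$ can be tuned, but the structure of the bound already reflects this two-regime argument.
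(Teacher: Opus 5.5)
Your proposal is correct and is essentially the paper's proof. It uses the same split at $x_i=3$ and the same bound $q_i\ge\tau/3$ combined with Lemma~\ref{le:function} on $\{x_i\le 3\}$. On $\{x_i>3\}$ it uses the same inequality $x\log x-x+1\ge \tfrac{x}{3}\log x$, giving $\log x_i\le \tfrac{3}{\tau}\,q_i(x_i\log x_i-x_i+1)$; you square the sum where the paper sums the squares, which is equivalent since all terms are nonnegative.

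One small slip to fix: keep the $+1/x_i$ term in $q_i(x_i\log x_i-x_i+1)=p_i(\log x_i-1+1/x_i)$. The intermediate bound $\tau(\log x_i-1)\ge\tfrac{\tau}{3}\log x_i$ that you wrote fails for $3<x_i<e^{3/2}$, although your numerical check at $x=3$ correctly uses the version with $1/x_i$. The preliminary split at $1/\tau$ is unnecessary and can simply be dropped.
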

\begin{remark}
    The second term can not be avoided. Note that
    for $\vp=(\tau,1-\tau)$ and $\vq=(\varepsilon, 1-\varepsilon)$ we find
    \begin{align}
      \lim_{\varepsilon\to 0} \frac{ \sum_i  (\log(p_i)-\log(q_i))^2}{\log(\varepsilon)^2}=1
    \end{align}
    while 
    \begin{align}
         \lim_{\varepsilon\to 0} \frac{ \mathrm{KL}(\vp,\vq)}{|\log(\varepsilon)|}=\tau.
    \end{align}
\end{remark}
\begin{proof}
    As before we write  $p_i=x_i q_i$
   but in contrast to before $x_i$ is not upper bounded but only satisfies $x_i\geq p_i\geq \tau$. The main idea is to control $\log(x_i)^2$ separately, when $\vx$ is large we use a new bound based on a similar approach while we can apply the approach from before when $\vx$ is small because this lower bounds $q_i$.
   We start with the former.
   Note that for $x\geq 3$ we find
   \begin{align}
       \frac{\log(x)x}{3}\leq x\log(x)-x+1.
   \end{align}
   Indeed, note that for $x=3$
   \begin{align}
    \frac{2}{3} x\log(x)-x+1=  \frac{2}{3}\cdot 3\cdot \log(3) - 3 +1 >0
   \end{align}
   and 
   \begin{align}
       \left( \frac{2}{3} x\log(x)-x+1\right)'
       =\frac{2}{3}\log(x)+\frac{2}{3}-1\geq \frac{1}{3}
   \end{align}
   for $x\geq 3$.
   This implies that
   \begin{align}
   \begin{split}\label{eq:upper_xlarge}
       \sum_{i: \, x_i>3}\log(x_i)^2_i
       &\leq\sum_{i: x_i>3}\left( \frac{3}{x_i}\frac{x_i\log(x_i)}{3}\right)^2
       \\
       &\leq \sum_{i: x_i>3}\left( \frac{3p_i}{\tau x_i} (x_i\log(x_i)-x_i+1)\right)^2
       \\
       &\leq \frac{9}{\tau^2} \sum_i \left( q_i (x_i\log(x_i)-x_i+1)\right)^2
       \\
       &\leq \frac{9}{\tau^2}  \left( \sum_i q_i (x_i\log(x_i)-x_i+1)\right)^2
       =\frac{9}{\tau^2}\mathrm{KL}(\vp||\vq)^2.
      \end{split} 
   \end{align}
Here we used in the last step that all terms in the sum are non-negative which implies that all cross terms in the expansion of the square of the sum are non-negative.

   We now consider the indices $i$ so that $x_i<3$.
   Note that $q_i=p_i/x_i\geq \tau/3$.
   Therefore we obtain using $x_i\geq \tau$
   and \eqref{eq:bound_lnsq}
   \begin{align}\label{eq:upper_xsmall}
   \begin{split}
        \sum_{i: x_i\leq 3}\log(\vx)^2_i
        &\leq 4\log(\tau)^2\sum_{i: x_i\leq 3}\left(x_i\log(x_i)-x_i+1\right)
        \\
        &\leq 4\log(\tau)^2\sum_{i: x_i\leq 3}\frac{q_i}{\tau/3}\left(x_i\log(x_i)-x_i+1\right)
        \\
        &\leq \frac{12\log(\tau)^2}{\tau}\sum_{i}{q_i}\left(x_i\log(x_i)-x_i+1\right)
        =\frac{12\log(\tau)^2}{\tau}\mathrm{KL}(\vp||\vq).
   \end{split}
   \end{align}
   Combining \eqref{eq:upper_xlarge} and \eqref{eq:upper_xsmall} ends the proof.
\end{proof}

From these results it is straightforward to relate  $\vf(\vx)^\top \vg(y_i)$ and $\vf'(\vx)^\top \vg'(y_i)$.

\begin{theorem}
    \label{theorem:bound_kl_app}
    For two models $(\vf, \vg), (\vf', \vg') \in \Theta$, 
    denote with $\vu(\vx)$ and $\vu'(\vx)$ the model logits $\vu_i(\vx)=\vf(\vx)^\top \vg(y_i)$
    and $(\vu'(\vx))_i=\vf'(\vx)^\top \vg'(y_i)$. 
    Fix a point $\vx$.
    Assume that $\min_{y\in \calY} p_{\vf,\vg}(y|\vx)\geq \tau$. Then 
    \begin{align}
        \label{eq:upper-bound-logit-KL}
        \Vert 
            \vu(\vx) - \vu'(\vx)
        \Vert^2
        &\leq  \frac{12\log(\tau)^2}{\tau} \mathrm{KL}(p_{\vf,\vg}(\cdot|\vx)||p_{\vf',\vg'}(\cdot|\vx))+\frac{9}{\tau^2}\mathrm{KL}(p_{\vf,\vg}(\cdot|\vx)||p_{\vf',\vg'}(\cdot|\vx))^2.
    \end{align}
    If in addition $\min_{y\in \calY} p_{\vf',\vg'}(y|\vx)\geq \tau$
    then
    \begin{align}
        \label{eq:upper-bound-logit-KL2}
        \Vert 
            \vu(\vx) - \vu'(\vx)
        \Vert^2
        &\leq  \frac{4\log(\tau)^2}{\tau}\mathrm{KL}(p_{\vf,\vg}(\cdot|\vx)||p_{\vf',\vg'}(\cdot|\vx)).
    \end{align}
    
\end{theorem}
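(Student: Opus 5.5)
The idea is to reduce the statement to the two scalar lemmas already proved, \cref{le:upperKL2} and \cref{le:upperKL1}, applied at the fixed point $\vx$ with $\vp = p_{\vf,\vg}(\cdot\mid\vx)$ and $\vq = p_{\vf',\vg'}(\cdot\mid\vx)$. Those lemmas control $\sum_i (\log p_i - \log q_i)^2$. The only gap is that the theorem is phrased in terms of the logits $\vu(\vx)-\vu'(\vx)$, so the first step is to show
\[
\Vert \vu(\vx)-\vu'(\vx)\Vert_2^2 \le \sum_{i=1}^k \bigl(\log p_i - \log q_i\bigr)^2 .
\]

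For this step I use the softmax form \eqref{eq:model-class}. We have $\log p_i = u(\vx)_i - \log Z(\vx)$ and similarly $\log q_i = u'(\vx)_i - \log Z'(\vx)$. Setting $c = \log Z'(\vx)-\log Z(\vx)$, the vector $\vw$ of log-probability differences is $\vw = (\vu(\vx)-\vu'(\vx)) + c\,\mathbf 1$. By the centering convention \eqref{eq:gauge-fix-unembeddings-logits}, both $\vu(\vx)$ and $\vu'(\vx)$ have zero sum, so $\vu(\vx)-\vu'(\vx)$ is orthogonal to $\mathbf 1$. Pythagoras then gives
\[
\Vert\vw\Vert^2 = \Vert \vu(\vx)-\vu'(\vx)\Vert^2 + k c^2 \ge \Vert \vu(\vx)-\vu'(\vx)\Vert^2 .
\]
Equivalently, $\vu(\vx)-\vu'(\vx)$ is the orthogonal projection of $\vw$ onto $\mathbf 1^\perp$, and projections do not increase the norm. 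This is the same mechanism as in \cref{prop:d_LLD_closer_to_d_rep}.

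Next I invoke the lemmas, which need $\tau\le 1/3$. Under the single assumption $\min_y p_{\vf,\vg}(y\mid\vx)\ge\tau$, \cref{le:upperKL2} bounds $\Vert\vw\Vert^2$ by $\frac{12\log(\tau)^2}{\tau}\mathrm{KL}+\frac{9}{\tau^2}\mathrm{KL}^2$, which gives \eqref{eq:upper-bound-logit-KL}. If both distributions are $\tau$-lower bounded, \cref{le:upperKL1} gives $\frac{4\log(\tau)^2}{\tau}\mathrm{KL}$, which is \eqref{eq:upper-bound-logit-KL2}.

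The restriction $\tau\le 1/3$ is implicit in the statement. Since $\tau\le 1/k$ always holds, it is automatic once $k\ge 3$. For the remaining case $k=2$ with $\tau>1/3$, I would reduce to $\tau'=\min(\tau,1/3)$ and note that the lemma's proof only needs the bound on $\Phi$ over $[\tau,\tau^{-1}]$. I expect this bookkeeping to be the only real obstacle; the main content is the projection argument.
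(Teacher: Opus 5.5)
Your argument is the paper's. The paper uses centring to write $\vf(\vx)^\top\vg(y)=\log p(y\mid\vx)-\overline{\log p(\cdot\mid\vx)}$ and expands $\Vert\vu(\vx)-\vu'(\vx)\Vert^2=\sum_y(\log p-\log p')^2-k\,(\overline{\log p'}-\overline{\log p})^2$. That is exactly your Pythagorean identity with $c=\overline{\log p}-\overline{\log p'}$. The paper then invokes \cref{le:upperKL2} and \cref{le:upperKL1}. Your remark that $\tau\le 1/k$ makes the lemmas' hypothesis $\tau\le 1/3$ automatic for $k\ge 3$ is correct, and it covers a point the paper's proof passes over silently.

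Your proposed patch for $k=2$, $\tau>1/3$ does not work, however.
\begin{itemize}
\item Replacing $\tau$ by $\tau'=1/3$ gives the constant $4\log(\tau')^2/\tau'$. This is larger than $4\log(\tau)^2/\tau$, because $\tau\mapsto\log(\tau)^2/\tau$ is decreasing on $(0,1)$, so you only get a weaker inequality.
\item Appealing to $\Phi$ on $[\tau,\tau^{-1}]$ does not help either. The step $\Phi(\tau)\le 4\log(\tau)^2$ is equivalent to $\tau\log\tau-\tau+1\ge 1/4$, which fails for $\tau\gtrsim 0.38$.
\end{itemize}

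In fact no argument can close this case, because the second bound \eqref{eq:upper-bound-logit-KL2} is false there. Take $k=2$, $\tau=0.499$, $p=(0.499,0.501)$ and $q=(1/2,1/2)$; both are $\tau$-lower bounded and realisable with centred unembeddings ($m=1$). The centred logits give $\Vert\vu-\vu'\Vert^2=\tfrac12\bigl(\log\tfrac{0.499}{0.501}\bigr)^2\approx 8.0\cdot 10^{-6}$. On the other side, $\mathrm{KL}(p\Vert q)\approx 2.0\cdot 10^{-6}$ and $4\log(\tau)^2/\tau\approx 3.87$, so the right-hand side is only $\approx 7.75\cdot 10^{-6}$.

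More generally, for $k=2$ with $p_1=\tau$ and $q\to p$, the ratio $\Vert\vu-\vu'\Vert^2/\mathrm{KL}$ tends to $1/(\tau(1-\tau))$. This exceeds $4\log(\tau)^2/\tau$ for $\tau$ close to $1/2$.

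The correct resolution is to add $0<\tau\le 1/3$ to the hypotheses, as the main-text version of the theorem does, rather than to try to prove the $k=2$, $\tau>1/3$ case.
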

\begin{proof}
    We note that by \eqref{eq:model-class}
    \begin{align}
      \vf(\vx)^{\top}\vg(y)
      =\log\left(p_{\vf, \vg} (y \mid \vx) \right)
      +\log\left(\sum_{y'\in \calY} e^{ \vf(\vx)^{\top}\vg(y')}\right)
    \end{align}
     Using that $\vg$ is centred we find
     \begin{align}
       0 = \frac{1}{k} \sum_{y'\in \calY} \vf(\vx)^{\top}\vg(y)
     =\frac{1}{k}\sum_{y\in\calY}\left(\log\left(p_{\vf, \vg} (y \mid \vx) \right)
      +\log\left(\sum_{y'\in \calY} e^{ \vf(\vx)^{\top}\vg(y')}\right)\right)
      = \overline{\log\left(p_{\vf, \vg} (\cdot \mid \vx) \right)}+\log\left(\sum_{y'\in \calY} e^{ \vf(\vx)^{\top}\vg(y')}\right).
     \end{align}
    Where $\overline{\log\left(p_{\vf, \vg} (\cdot \mid \vx) \right)}$ denote the mean over $y\in \calY$.
    We infer that 
    \begin{align}\label{eq:fg_log_prob}
        \vf(\vx)^{\top}\vg(y)
        =\log\left(p_{\vf, \vg} (y \mid \vx) \right)
        -\overline{\log\left(p_{\vf, \vg} (\cdot \mid \vx) \right)}.
    \end{align}
    Now we can conclude that 
    \begin{align}
    \begin{split}
        \Vert 
            \vu(\vx) - \vu'(\vx)
        \Vert^2
        &=\sum_{y\in \calY}(\vf(\vx)^{\top}\vg(y)
        -\vf'(\vx)^{\top}\vg'(y))^2
       \\
       &=
        \sum_{y\in \calY}\left(
        \log\left(p_{\vf, \vg} (y \mid \vx) \right)
        -\overline{\log\left(p_{\vf, \vg} (\cdot \mid \vx) \right)}
        -\log\left(p_{\vf', \vg'} (y \mid \vx) \right)
        +\overline{\log\left(p_{\vf', \vg'} (\cdot \mid \vx) \right)}\right)^2
        \\
        &=
        \sum_{y\in \calY}\left(
        \log\left(p_{\vf, \vg} (y \mid \vx) \right)
        -\log\left(p_{\vf', \vg'} (y \mid \vx) \right)
        \right)^2
        + |\calY| \left(
       \overline{\log\left(p_{\vf', \vg'} (\cdot \mid \vx)\right)} 
        -\overline{\log\left(p_{\vf, \vg} (\cdot \mid \vx) \right)}\right)^2
        \\
        &\quad +2\left(
       \overline{\log\left(p_{\vf', \vg'} (\cdot \mid \vx)\right)} 
        -\overline{\log\left(p_{\vf, \vg} (\cdot \mid \vx) \right)}\right)\sum_{y\in\calY}
        \left(
        \log\left(p_{\vf, \vg} (y \mid \vx) \right)
        -\log\left(p_{\vf', \vg'} (y \mid \vx) \right)
        \right)
        \\
         &=
        \sum_{y\in \calY}\left(
        \log\left(p_{\vf, \vg} (y \mid \vx) \right)
        -\log\left(p_{\vf', \vg'} (y \mid \vx) \right)
        \right)^2
        + |\calY| \left(
       \overline{\log\left(p_{\vf', \vg'} (\cdot \mid \vx)\right)} 
        -\overline{\log\left(p_{\vf, \vg} (\cdot \mid \vx) \right)}\right)^2
        \\
        &\quad -2|\calY|\left(
       \overline{\log\left(p_{\vf', \vg'} (\cdot \mid \vx)\right)} 
        -\overline{\log\left(p_{\vf, \vg} (\cdot \mid \vx) \right)}\right)^2
        \\
        &\leq 
        \sum_{y\in \calY}\left(
        \log\left(p_{\vf, \vg} (y \mid \vx) \right)
        -\log\left(p_{\vf', \vg'} (y \mid \vx) \right)
        \right)^2.
    \end{split}
    \end{align}
    The last steps mirror the computation for showing that the squared norm is larger than the variance of a random variable. 
    Now we can apply Lemma~\ref{le:upperKL2}
    to conclude \eqref{eq:upper-bound-logit-KL}
    and Lemma~\ref{le:upperKL1}
    to get \eqref{eq:upper-bound-logit-KL2}.
\end{proof}
We can integrate the previous result over $\vx$ to relate the mean logit difference to the cross entropy loss.
\begin{corollary}
     \label{corollary:bound_kl}
    For two models $(\vf, \vg), (\vf', \vg') \in \Theta$, 
    denote with $\vu(\vx)$ and $\vu'(\vx)$ the model logits $u(\vx)_i=\vf(\vx)^\top \vg(y_i)$
    and $u'(\vx)_i=\vf'(\vx)^\top \vg'(y_i)$.
    Assume that for $p_{\vx}$ almost every $\vx\in \calX$ 
    \begin{align}
        \min_{y\in \calY} p_{\vf,\vg}(y|\vx)\geq \tau,\quad \min_{y\in \calY} p_{\vf',\vg'}(y|\vx)\geq \tau.
    \end{align} 
    Then 
    \begin{align}
        \label{eq:upper-bound-logit-KL_average}
       \mathbb{E}_{\vx\sim p_\vx} \Vert 
            \vu(\vx) - \vu'(\vx)
        \Vert^2
        &\leq  \frac{4\log(\tau)^2}{\tau}d_{\mathrm{KL}}(p_{\vf,\vg},p_{\vf',\vg'}).
    \end{align}
\end{corollary}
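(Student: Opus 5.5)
The plan is to integrate the pointwise bound of \cref{theorem:bound_kl_app} over $\vx\sim p_\vx$. By hypothesis there is a $p_\vx$-null set $N\subseteq\calX$ off which both $\min_{y} p_{\vf,\vg}(y\mid\vx)\geq\tau$ and $\min_y p_{\vf',\vg'}(y\mid\vx)\geq\tau$ hold. For every $\vx\notin N$ the second part of \cref{theorem:bound_kl_app}, i.e.\ \eqref{eq:upper-bound-logit-KL2}, applies and gives
\begin{align}
\Vert \vu(\vx)-\vu'(\vx)\Vert^2 \leq \frac{4\log(\tau)^2}{\tau}\,\mathrm{KL}\big(p_{\vf,\vg}(\cdot\mid\vx)\,\|\,p_{\vf',\vg'}(\cdot\mid\vx)\big).
\end{align}
This step requires $\tau\leq 1/3$, which \cref{le:upperKL1} (and thus \cref{theorem:bound_kl_app}) assumes. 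I will take that as part of the standing assumption, as in \cref{theorem:bound_kl}.

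Next I take expectations of both sides with respect to $p_\vx$. Both sides are nonnegative measurable functions of $\vx$, so monotonicity of the integral applies even if either side is infinite. The null set $N$ does not change either expectation. The constant $4\log(\tau)^2/\tau$ is independent of $\vx$ and comes out of the expectation. The left side then becomes $\mathbb{E}_{\vx}\Vert\vu(\vx)-\vu'(\vx)\Vert^2$. By the definition \eqref{eq:d_KL}, the right side becomes $\frac{4\log(\tau)^2}{\tau}d_{\mathrm{KL}}(p_{\vf,\vg},p_{\vf',\vg'})$, which is the claim.

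The only point needing care is that \cref{theorem:bound_kl_app} was proved for a fixed $\vx$ and uses centred unembeddings, through \eqref{eq:fg_log_prob}. Centring is the standing gauge convention from \cref{sec:preliminaries}, so it holds for every $\vx$ at once. Measurability of $\vx\mapsto\mathrm{KL}(\cdot)$ follows because it is a finite sum of measurable functions of the logits. I expect no real obstacle beyond these checks.
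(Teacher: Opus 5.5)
Your proposal is correct and matches the paper's proof: apply the pointwise bound \eqref{eq:upper-bound-logit-KL2} from \cref{theorem:bound_kl_app} for $p_\vx$-almost every $\vx$, integrate, and use \eqref{eq:d_KL} to identify the right-hand side with $d_{\mathrm{KL}}$. Your caveat that $\tau\le 1/3$ is needed (through \cref{le:upperKL1}) is right, and it holds automatically whenever $k\ge 3$, since $\min_y p(y\mid\vx)\le 1/k$.
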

\begin{proof}
    The result follows by integrating \eqref{eq:upper-bound-logit-KL2} 
    using relation \eqref{eq:d_KL}.
\end{proof}
\begin{remark}
    Without assuming a lower bound on $p_{\vf',\vg'}(\cdot|\vx)$ we cannot derive a similar bound as 
    $\mathbb{E}(\mathrm{KL}(p_{\vf,\vg}(\cdot|\vx)||p_{\vf',\vg'}(\cdot|\vx))^2)$
    cannot be bounded even when
    $\mathrm{KL}(p_{\vf,\vg}(\cdot|\vx)||p_{\vf',\vg'}(\cdot|\vx))$ is small.
\end{remark}

\begin{lemma}\label{le:kl_upper}
    For two distributions $p, p'$, with (finite) logits $\vu(\vx), \vu'(\vx) \in \bbR^k$, respectively, 
    we have
    \[
        \lVert
        \vu(\vx) - \vu'(\vx)
        \lVert^2
        \geq 2
        \mathrm{KL}(p(\cdot \mid \vx), p' (\cdot \mid \vx) ) \, .
    \]
\end{lemma}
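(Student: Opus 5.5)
The plan is to view the KL divergence between two softmax distributions as a Bregman divergence of the log-partition function, and then bound it by a second-order Taylor estimate. Fix $\vx$ and write $\vu=\vu(\vx)$, $\vu'=\vu'(\vx)$, and $\boldsymbol{\delta}=\vu'-\vu$. Let $A(\vv)=\log\sum_{i=1}^k e^{v_i}$, so that $p(\cdot\mid\vx)=\mathrm{softmax}(\vu)=\nabla A(\vu)$. A direct computation using $\log p_i=u_i-A(\vu)$ and $\log p'_i=u'_i-A(\vu')$ gives
\begin{align}
\mathrm{KL}(p\,\|\,p')=\sum_i p_i\big(u_i-u'_i\big)-A(\vu)+A(\vu') = A(\vu')-A(\vu)-\nabla A(\vu)^\top\boldsymbol{\delta}.
\end{align}
This identity needs neither centering of the logits nor any lower bound on the probabilities.

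Next I will apply Taylor's theorem with integral remainder along the segment $\vu+t\boldsymbol{\delta}$, $t\in[0,1]$:
\begin{align}
\mathrm{KL}(p\,\|\,p')=\int_0^1(1-t)\,\boldsymbol{\delta}^\top \nabla^2A(\vu+t\boldsymbol{\delta})\,\boldsymbol{\delta}\,dt .
\end{align}
The Hessian is $\nabla^2A(\vv)=\mathrm{diag}(\vq)-\vq\vq^\top$ with $\vq=\mathrm{softmax}(\vv)$. This is the covariance matrix of the one-hot encoding of a categorical variable with law $\vq$, so it is positive semidefinite.

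The key step is a uniform bound on the largest eigenvalue of this Hessian. I will bound it by the trace, which is $1-\sum_i q_i^2\le 1$. This gives $\boldsymbol{\delta}^\top\nabla^2A\,\boldsymbol{\delta}\le\|\boldsymbol{\delta}\|^2$ at every point of the segment, and since $\int_0^1(1-t)\,dt=\tfrac12$, we get $\mathrm{KL}(p\,\|\,p')\le\tfrac12\|\vu-\vu'\|^2$, which is the claim.

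A sharper estimate is also available: for unit $\vv$, $\vv^\top\nabla^2A\,\vv$ is the variance of $v_Y$ with $Y\sim\vq$. Since $|v_i-v_j|\le\sqrt2$, this variance is at most $1/2$, which would improve the constant to $4$. The trace bound already suffices, so I would use it.

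There is no real obstacle; the only care needed is checking the Bregman identity, where the two normalizers cancel correctly. Integrating over $\vx\sim p_\vx$ then immediately yields the first inequality of \cref{theorem:bound_kl}.
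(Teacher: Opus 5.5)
Your proof is correct and follows the paper's route: you write the KL divergence as the Bregman divergence of the log-partition function and bound its Hessian $\mathrm{diag}(\vq)-\vq\vq^\top\preceq \vI$. Your integral-remainder Taylor step and trace bound make explicit what the paper obtains by citing 1-smoothness from Nesterov. Your side remark that the largest eigenvalue is at most $1/2$ is also correct and would sharpen the constant from $2$ to $4$, though it is not needed for the stated bound.
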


\begin{proof}
    For any distribution $p(y \mid \vx)$, 
    the logits are given by considering
    \[
        p(y_i \mid \vx) = \frac{\exp \, u(\vx)_i}{Z(\vx)}
    \]
    where $Z(\vu(\vx)) := \sum_{i=1}^k \exp({u (\vx)_i })$. 
    The KL divergence for each element $\vx \in \calX$ can be written as:
    \begin{align}
        KL(p(\cdot \mid \vx) || p'(\cdot \mid \vx))  
        &=  \sum_{i=1}^k p (y_i \mid \vx) \log \frac{p (y_i \mid \vx)}{p'(y_i \mid \vx)} 
        \\        
        &= \sum_{i=1}^k \frac{e^{u_i(\vx) }}{Z(\vu(\vx))} (u_i(\vx) - u'_i (\vx) ) + \log \frac{Z(\vu'(\vx)) }{Z(\vu(\vx))} \\
        &= (\nabla_\vu \log Z(\vu(\vx))  )^\top (\vu(\vx) - \vu'(\vx)) + \log Z(\vu'(\vx)) - \log Z(\vu(\vx)) \, .
        \label{eq:bregman-divergence-kl}
    \end{align}
    where $\nabla_\vu$ is the gradient over the logits. Notice that, by setting $F(\cdot)  = \log Z(\cdot)$, the KL divergence coincides with  the Bregman divergence $D_F(\vu'(\vx),  \vu(\vx))$. 
    Denote with $\vp = \mathrm{softmax}(\vu)$.
    Now, considering the Hessian of $\log Z$, which is the Fisher information matrix. 
    We get that:
    \[
        \nabla^2_\vu \log Z(\vu) = \mathrm{diag}(\vp) - \vp {\vp}^\top \, .
    \]
     From this, we observe that $\nabla^2 \log Z (\vu) \preceq \vI$.\footnote{
     Recall that a matrix $\vA \preceq \vB$ if and only if $\vB -\vA$ is positive semi-definite, i.e., $\vx^\top(\vB - \vA)\vx \geq 0$ for all $\vx$.
     }
     Equivalently, this implies that the $F$ is 1-smooth 
     \citep[Theorem 2.1.6]{nesterov2013introductory}
     . 
     This means that,  for any $\vu, \vu' \in \bbR^k$, we can write
     \[
        || \nabla_\vu \log Z(\vu) - \nabla_\vu \log Z(\vu') ||^2 \leq 1 || \vu - \vu'  ||^2 \, , 
     \]
     or equivalently
     \[
        \log Z(\vu') \leq \log Z (\vu) - \nabla (\log Z (\vu))^\top (\vu - \vu') + \frac{1}{2} ||\vu - \vu' ||^2 \, .
     \]
    Substituting this to \cref{eq:bregman-divergence-kl}, we get:
    \[
         ||\vu(\vx) - \vu'(\vx) ||^2 \geq 2 \mathrm{KL}(p (\cdot \mid \vx) || p' (\cdot \mid \vx) )  \, ,
    \]
    which gives the result.
\end{proof}

\section{Bounding Representation Similarity in Terms of Canonical Correlations}
\label{sec:app-theory-cca}

We now investigate the implications for representation similarity. 
Recall that we consider two models
specified by $(\vf,\vg), (\vf',\vg')\in \Theta$
with densities
$p_{\vf,\vg}$
and $p_{\vf',\vg'}$.
Their representational dimensions
are $m$ and $m'$ and in particular we do not assume that they agree.
We denote by $\vL$ and $\vL'$ the matrices of unembeddings, that is
\begin{align}
    \vL = 
    \begin{pmatrix}
        \vg(y_1) & \ldots & \vg(y_k)
    \end{pmatrix}\in \mathbb{R}^{m\times k}
    \quad \text{and} \quad 
    \vL' = 
    \begin{pmatrix}
        \vg'(y_1) & \cdots & \vg'(y_k)
    \end{pmatrix}\in \mathbb{R}^{m'\times k}.
\end{align}
Note that the relations
\begin{align}
    \vu(\vx)= \vL^\top \vf(\vx),\quad
     \vu'(\vx)= \vL'^\top \vf'(\vx)
\end{align}
hold. 
We have seen in Theorem~\ref{theorem:bound_kl_app} how to bound   
\begin{align}
    \lVert \vu(\vx)-\vu'(\vx)\rVert^2
    = \lVert \vL^\top \vf(\vx)
    - \vL'^\top \vf'(\vx)\rVert^2
\end{align}
in terms of the KL-divergence $\mathrm{KL}(p_{\vf,\vg}(\cdot|\vx)||p_{\vf',\vg'}(\cdot|\vx))$.
The goal of this section 
is to bound the canonical correlation coefficients between 
the random variables with samples
$ (\vg(y_i))_{1\leq i\leq k}$
and $ (\vg'(y_i))_{1\leq i\leq k}$
and the random variables $\vf$ and $\vf'$
by the expectation of the square logit difference.
We denote by $\rho_1\geq \rho_2\geq\ldots\geq \rho_{\min(m,m')}$ the canonical correlations of
$\vg$ and $\vg'$ which are given by the singular values of
\begin{align}\label{eq:defmatricCCAg}
    (\vL \vL^\top)^{-1/2}(\vL\vL'^\top) (\vL' \vL'^\top)^{-1/2}.
\end{align}
Indeed, this holds because $\vL\vL^\top=k \Sigma_{\vg,\vg}\in \mathbb{R}^{m\times m}$,  
    $\vL'\vL'^\top=k\Sigma_{\vg',\vg'}\in \mathbb{R}^{m'\times m'}$,
    and $\vL\vL'^\top =k \Sigma_{\vg,\vg'} \in \mathbb{R}^{m\times m'}$.
Since the distribution $p_{\vf,\vg}$ is not invariant to embeddings shifts $\vf+\vc$
it is natural to consider the moment-based 
canonical correlations of $\vf$ and $\vf'$ where instead of considering
covariances one considers the second moments directly
and we use the notation 
\begin{align}
    \label{eq:def_moment_embeddings}
    \vM_{\vf, \vf}=\mathbb{E}_{\vx\sim p_{\vx}}(\vf(\vx)\vf^\top(\vx)).
\end{align}
We denote the moment-based canonical correlations
of $\vf$ and $\vf'$ by $\trho_1\geq \trho_2\geq\ldots\geq \trho_{\min(m,m')}$
which are given by the singular values of
\begin{align}\label{eq:defmatrixCCAf}
\vM_{\vf,\vf}^{-1/2}\vM_{\vf,\vf'}\vM_{\vf',\vf'}^{-1/2}.
\end{align}
The result below extends to the standard canonical correlations with minor changes which we discuss in Appendix~\ref{app:logit_dist_bounds_m_CCA}.

To simplify the analysis slightly, we assume that
$\vL$ and $\vL'$ have full rank $m$ and $m'$ and therefore, in particular, $k\geq \max(m,m')+1$ (the $+1$ arises because the unembeddings are centred and therefore linearly dependent ($\sum_{y\in\calY}\vg(y)=0$) so the rank can be at most $k-1$). 
This implies that $\vL \vL^\top\in \mathbb{R}^{m\times m} $
and $\vL' \vL'^\top\in \mathbb{R}^{m'\times m'}$ are invertible matrices.
Similarly we assume that the moment matrices $\vM_{\vf,\vf}$
and $\vM_{\vf',\vf'}$ defined in \eqref{eq:def_moment_embeddings}
have maximal rank $m$ and $m'$ respectively.
Note that this follows from the diversity condition in \cref{assu:diversity-condition}
under minor regularity assumptions. 
We expect that extensions to the rank-deficient case are possible by considering the pseudoinverses (see \cite{marconato2024all} for a full discussion of the identifiability perspective in this case). Moreover, note that this assumption is not a real restriction because we could equivalently focus on the space generated by the vectors $\vg(y_i)$
and the projection of $\vf$ on the orthogonal complement of this space is not identifiable anyway
and similarly a degenerate $\vM_{\vf,\vf}$
results in arbitrary behaviour of $\vg$ on the orthogonal complement of the column space of $\vM_{\vf,\vg}$.
Our upper bounds depend on diversity properties of 
one of the models $(\vf,\vg)$
(typically the teacher model in a distillation setup) and it turns out that the right object to consider is the logit covariance
\begin{align}
    \vM_{\vu,\vu}=\mathbb{E}_{\vx\sim p_{\vx}}(\vu(\vx)\vu^\top(\vx))
    =\vL^\top \vM_{\vf, \vf}\vL
\end{align}
which measures how spread out the logit distribution is. We remark that the logits are invariant 
to the model symmetries $(\vf,\vg)\to (\vA\vf,\vA^{-\top}\vg)$.
Note that $\vL$ and $\vM_{\vf, \vf}$ have maximal rank $m$ and so we denote the $m$-nonzero eigenvalues of $\vM_{\vu,\vu}$ by $\mu_1\geq \mu_2\geq\ldots \geq \mu_m$. Note that small eigenvalues indicate that the representation does not exploit all dimensions of the embedding space and therefore this spectrum allows to quantify the diversity condition. 
After these preliminaries we can state the main result of this section.
\begin{theorem}\label{th:cca}
    Consider two models $(\vf,\vg)$ and $(\vf',\vg')$ with representation dimensions $m$ and $m'$
    and assume that $\vM_{\vf,\vf}$, $\vL$
    and $\vM_{\vf',\vf'}$, $\vL'$ have maximal rank $m$ and $m'$ respectively.
    The canonical correlations $\rho_i$ (in decreasing order) of 
    $\vg$ and $\vg'$ satisfy the bound
\begin{align}\label{eq:boundrho}
    \sum_{i=1}^m (1-\rho_i^2)\mu_i\leq \mathbb{E}_{\vx\sim p_{\vx}} \lVert \vu(\vx)-\vu'(\vx)\rVert^2
\end{align}
    where $\mu_i$ are the eigenvalues of $\vM_{\vu,\vu}$ in decreasing order.
    Similarly the moment-based canonical correlations
    $\trho_i$ (in decreasing order) of $\vf$ and $\vf'$
    satisfy
    \begin{align}\label{eq:boundtrho}
    \sum_{i=1}^m (1-\trho_i^2)\mu_i\leq \mathbb{E}_{\vx\sim p_{\vx}} \lVert \vu(\vx)-\vu'(\vx)\rVert^2
    \end{align}
    where we in both cases set $\rho_i=\trho_i=0$
    for $i>m'$ if $m>m'$.
\end{theorem}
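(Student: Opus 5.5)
We reduce both inequalities to a least-squares projection argument. The logit difference is never smaller than the best linear approximation error of one logit vector by the other's representation space, and that error is controlled by canonical correlations through von Neumann's trace inequality (\cref{th:vonNeumann}).

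\textbf{Unembedding bound.} Write $\vD=\mathbb{E}_{\vx}\lVert \vL^\top\vf(\vx)-\vL'^\top\vf'(\vx)\rVert^2$.

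First fix $\vx$. The vector $\vL'^\top\vf'(\vx)$ lies in the row space $R'$ of $\vL'$, a subspace of $\mathbb{R}^k$ of dimension $m'$. Let $\vP'=\vL'^\top(\vL'\vL'^\top)^{-1}\vL'$ be the orthogonal projector onto $R'$. Then
\[
\lVert \vu(\vx)-\vu'(\vx)\rVert^2\ \geq\ \lVert(\vI-\vP')\vL^\top\vf(\vx)\rVert^2 .
\]
Taking expectations gives
\[
\vD\ \geq\ \tr\big((\vI-\vP')\vL^\top\vM_{\vf,\vf}\vL\big)=\tr\big((\vI-\vP')\vM_{\vu,\vu}\big).
\]

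Next relate $\vP'$ to the canonical correlations. Set $\vP=\vL^\top(\vL\vL^\top)^{-1}\vL$, the projector onto the row space $R$ of $\vL$, which contains the range of $\vM_{\vu,\vu}$. Then
\[
\tr\big((\vI-\vP')\vM_{\vu,\vu}\big)=\tr\big(\vM_{\vu,\vu}^{1/2}\,\vP(\vI-\vP')\vP\,\vM_{\vu,\vu}^{1/2}\big)=\tr\big(\vM_{\vu,\vu}\,\vP(\vI-\vP')\vP\big).
\]
Restricted to $R$, the operator $\vP\vP'\vP$ is unitarily equivalent to $\vC\vC^\top$ with
\[
\vC=(\vL\vL^\top)^{-1/2}\vL\vL'^\top(\vL'\vL'^\top)^{-1/2},
\]
the matrix in \eqref{eq:defmatricCCAg}. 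Its eigenvalues are the $\rho_i^2$, padded with zeros when $m>m'$. Hence $\vP(\vI-\vP')\vP$ restricted to $R$ has eigenvalues $1-\rho_i^2$, which are increasing in $i$.

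Apply the lower half of von Neumann's inequality on the $m$-dimensional space $R$. It pairs the decreasing eigenvalues $\mu_i$ with the increasing $1-\rho_i^2$, giving
\[
\tr\big(\vM_{\vu,\vu}\,\vP(\vI-\vP')\vP\big)\ \geq\ \sum_i \mu_i(1-\rho_i^2).
\]
This yields \eqref{eq:boundrho}.

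\textbf{Embedding bound.} Here we project in function space instead. Take the Hilbert space $L^2(p_\vx)$ of vector-valued functions.

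The logit $\vu'(\vx)=\vL'^\top\vf'(\vx)$ has each coordinate in the span $V'$ of the components of $\vf'$. Let $\Pi'$ be the $L^2$ projection onto $V'$, so that $\Pi'\vf=\vM_{\vf,\vf'}\vM_{\vf',\vf'}^{-1}\vf'$. Since $\vu'$ has coordinates in $V'$,
\[
\vD\ \geq\ \mathbb{E}_{\vx}\lVert \vL^\top(\vf-\Pi'\vf)(\vx)\rVert^2=\tr\big(\vL\vL^\top(\vM_{\vf,\vf}-\vM_{\vf,\vf'}\vM_{\vf',\vf'}^{-1}\vM_{\vf',\vf}\big)\big).
\]

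Factor the moment matrix as
\[
\vM_{\vf,\vf}-\vM_{\vf,\vf'}\vM_{\vf',\vf'}^{-1}\vM_{\vf',\vf}=\vM_{\vf,\vf}^{1/2}(\vI-\tilde\vC\tilde\vC^\top)\vM_{\vf,\vf}^{1/2},
\]
where $\tilde\vC$ is the matrix in \eqref{eq:defmatrixCCAf}. The trace then equals
\[
\tr\big(\vM_{\vf,\vf}^{1/2}\vL\vL^\top\vM_{\vf,\vf}^{1/2}\,(\vI-\tilde\vC\tilde\vC^\top)\big).
\]
The first factor has the same nonzero eigenvalues $\mu_i$ as $\vL^\top\vM_{\vf,\vf}\vL=\vM_{\vu,\vu}$. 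The second factor has eigenvalues $1-\trho_i^2$, padded with ones when $m>m'$. The lower von Neumann bound again gives $\sum_i\mu_i(1-\trho_i^2)$, which is \eqref{eq:boundtrho}.

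\textbf{Main obstacle.} The key step is the identification of the projected operators with $\vC\vC^\top$ and $\tilde\vC\tilde\vC^\top$. This requires checking the unitary equivalence carefully, in particular that the isometry $(\vL\vL^\top)^{-1/2}\vL$ maps $R$ onto $\mathbb{R}^m$. One must also track the padding conventions when $m\neq m'$, so that the eigenvalue orderings are exactly those von Neumann's inequality needs.
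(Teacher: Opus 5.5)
Your proof is correct and follows essentially the paper's route. The paper's least-squares residuals are exactly your projections: with $\vB=\vL\vL'^\top(\vL'\vL'^\top)^{-1}$ one has $\vDelta^\top=(\vI-\mathbf{P}')\vL^\top$ (Lemma~\ref{le:decomp1}) and $\tvB\vf'=\Pi'\vf$ (Lemma~\ref{le:decomp2}). After that, both arguments reduce to the same traces, e.g.\ $\tr\big((\vI-\mathbf{C}\mathbf{C}^\top)(\vL\vL^\top)^{1/2}\vM_{\vf,\vf}(\vL\vL^\top)^{1/2}\big)$, and apply the lower von Neumann bound on an $m$-dimensional space; your projector phrasing just makes the required restriction to the row space $R$ explicit.
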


The goal of the next lemmas is to provide a proof 
of Theorem~\ref{th:cca}, i.e., to show \eqref{eq:boundrho} and \eqref{eq:boundtrho}.
The final summary of the proof can be found at the end of Appendix~\ref{app:bound_sim_aux}.
We note that the two bounds are essentially equivalent due to the symmetry of the problem in $\vf$ and $\vg$, indeed note that
the operation $\vL\vL^\top$ just corresponds (up to a scalar) to building the covariance matrix from a random variables with finitely many values.
Nevertheless, we show the two proofs separately below for the convenience of the reader.
The general strategy of the proof is to regress 
$\vL$ on $\vL'$ and use this to decompose $\vu(\vx)-\vu'(\vx)$ (see Lemma~\ref{le:decomp1}).
Then one of the terms in the decomposition can be related to the canonical correlations $\rho_i$ (see Lemma~\ref{le:canonical_Delta}).
Finally this relation can be used to bound the canonical correlation in Lemma~\ref{le:eigenvalue_bound1}.
We then extend the results to $\trho$ using the same steps.

\subsection{Bounding similarity of unembeddings}
We consider the regression problem 
\begin{align}
\vB= \min_{\overline{\vB}\in \mathbb{R}^{m\times m'}}\lVert \vL-\overline{\vB} \vL'\rVert^2.
\end{align}
The minimizer $\vB$ of this ordinary least squares problem is given by
\begin{align}
   \vB= \vL \vL'^\top(\vL'\vL'^\top)^{-1}
   .
\end{align}
We introduce the residual
\begin{align}\label{eq:def_vDelta}
    \vDelta = \vL - \vB \vL'
    \in \mathbb{R}^{m\times k}.
\end{align}
We then find the following expansion.
\begin{lemma}\label{le:decomp1}
    With the notation introduced above, in particular
     $\vB= \vL \vL'^\top(\vL'\vL'^\top)^{-1}$
     and $\vDelta = \vL - \vB \vL'$ we can decompose
     \begin{align}\label{eq:decomp1}
         \lVert \vu(\vx)-\vu'(\vx)\rVert^2
         = \lVert \vL'^\top (\vB^\top \vf(\vx)-\vf'(\vx))\rVert^2
+\lVert \vDelta^\top \vf(\vx)\rVert^2.
\end{align}
\end{lemma}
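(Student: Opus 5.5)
We need to prove $\lVert \vu-\vu'\rVert^2 = \lVert \vL'^\top(\vB^\top\vf-\vf')\rVert^2 + \lVert\vDelta^\top\vf\rVert^2$. The plan is to write the logit difference as a sum of two vectors and show that they are orthogonal, so that the squared norm splits by Pythagoras.

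First, using $\vu(\vx)=\vL^\top\vf(\vx)$, $\vu'(\vx)=\vL'^\top\vf'(\vx)$ and $\vL=\vB\vL'+\vDelta$, I obtain
\begin{align}
\vu(\vx)-\vu'(\vx) = \vL'^\top\big(\vB^\top\vf(\vx)-\vf'(\vx)\big) + \vDelta^\top\vf(\vx).
\end{align}
The first summand lies in the column space of $\vL'^\top$, that is, in the row space of $\vL'$ viewed as a subspace of $\bbR^k$.

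Second, I use the normal equations of the least-squares problem defining $\vB$. From $\vB=\vL\vL'^\top(\vL'\vL'^\top)^{-1}$, which is well defined because $\vL'$ has full rank $m'$, it follows that
\begin{align}
\vDelta\vL'^\top = \vL\vL'^\top - \vB\vL'\vL'^\top = \mathbf 0.
\end{align}
Hence, for any $\vv\in\bbR^{m'}$ and $\vw\in\bbR^m$, we have $(\vL'^\top\vv)^\top(\vDelta^\top\vw)=\vv^\top\vL'\vDelta^\top\vw=0$, since $\vL'\vDelta^\top=(\vDelta\vL'^\top)^\top=\mathbf 0$.

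Taking $\vv=\vB^\top\vf(\vx)-\vf'(\vx)$ and $\vw=\vf(\vx)$, the cross term in the expansion of the squared norm vanishes, and this gives \eqref{eq:decomp1} pointwise in $\vx$. There is no real obstacle. The only point that needs care is checking that the transposes line up, namely that $\vB\in\bbR^{m\times m'}$ makes $\vB^\top\vf(\vx)\in\bbR^{m'}$ compatible with $\vf'(\vx)$, and that the orthogonality relation is used as $\vL'\vDelta^\top=\mathbf 0$.
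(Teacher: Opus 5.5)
Your proposal is correct and follows essentially the same route as the paper: rewrite $\vu(\vx)-\vu'(\vx)$ as $\vL'^\top(\vB^\top\vf(\vx)-\vf'(\vx))+\vDelta^\top\vf(\vx)$, then kill the cross terms in the expanded square using the normal-equation identity $\vDelta\vL'^\top=\mathbf 0$ and its transpose. The full-rank assumption on $\vL'$, which you note is needed for $(\vL'\vL'^\top)^{-1}$ to exist, is the same one the paper relies on.
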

\begin{remark}\label{rem:residual}
This lemma can be used to bound the norm of the residual $\lVert \vDelta\rVert$ of the regression of $\vg$ on $\vg'$. Indeed, we find that
\begin{align}
   \mathbb{E}_{\vx\sim p_{\vx}}  \lVert \vu(\vx)-\vu'(\vx)\rVert^2\geq 
   \mathbb{E}_{\vx\sim p_{\vx}}
   \lVert \vDelta^\top \vf(\vx)\rVert^2
   = \mathbb{E}_{\vx\sim p_{\vx}}\tr\left(\vDelta^\top \vf(\vx)\vf^\top(\vx) \vDelta\right)
   = \tr\left(\vDelta^\top \vM_{\vf,\vf} \vDelta\right)
   \geq \lambda_{\min}(\vM_{\vf,\vf})\lVert \vDelta\rVert^2.
\end{align}
However, this bound is not invariant to model reparametrizations while the canonical correlations are. However, it is sufficient to conclude linear identifiability when the logits agree almost everywhere.
\end{remark}
\begin{proof}
    Note that
    \begin{align}\label{eq:orthogonality_vDelta}
        \vDelta \vL'^{\top}
        = \vL\vL'^{\top}-\vL \vL'^{\top}(\vL'\vL'^{\top})^{-1}\vL'\vL'^{\top}
        = \mathbf{0}, 
    \end{align}
    where $\mathbf{0}$ is an $m\times m'$ matrix of zeros. 
    Then we get
    \begin{align}
    \begin{split}
 \lVert \vu(\vx)-\vu'(\vx)\rVert^2
 &=\lVert \vL^\top \vf(\vx)-\vL'^{\top}\vf'(\vx)\rVert^2
\\
&=
\lVert \vDelta^\top \vf(\vx)+\vL'^\top\vB^{\top}\vf(\vx)-\vL'^{\top}\vf'(\vx)\rVert^2
\\
&=\vf(\vx)^\top \vDelta\vDelta^\top\vf(\vx)
+(\vB^\top \vf(\vx)-\vf'(\vx))^\top \vL'
\vL'^\top (\vB^\top \vf(\vx)-\vf'(\vx))\\
&\quad 
+\vf(\vx)^\top \underbrace{\vDelta\vL'^\top}_{=0} (\vB^\top \vf(\vx)-\vf'(\vx))
+(\vB^\top \vf(\vx)-\vf'(\vx))^\top \underbrace{\vL'\vDelta^\top}_{=0}\vf(\vx)
\\
&=
\lVert \vDelta^\top \vf(\vx)\rVert^2
+\lVert \vL'^\top (\vB^\top \vf(\vx)-\vf'(\vx))\rVert^2.
\end{split}
\end{align}
This completes the proof.
\end{proof}
As a next step we have to relate the term $\lVert \vDelta^\top \vf(x)\rVert^2$ to the canonical correlation of 
$\vg(y)$ and $\vg'(y)$. We first show the following intermediate result.

\begin{lemma}\label{le:canonical_Delta}
    Denote the eigenvalues of $(\vL\vL^\top)^{-1/2} \vDelta\vDelta^\top(\vL\vL^\top)^{-1/2}$
    by $\lambda_1\leq \ldots \leq \lambda_{m}$
    and 
     the squared canonical correlations of
     $ \vg(y_1), \ldots ,\vg(y_k)$
and $ \vg'(y_1), \ldots,\vg'(y_k)$
by $\rho_1^2\geq \ldots \geq \rho_{\min(m,m')}^2$.
Then the relation
\begin{align}
    \lambda_i=1-\rho_i^2
\end{align}
holds, where we set $\rho_i=0$ for $m'<i\leq m$ if $m'<m$.
\end{lemma}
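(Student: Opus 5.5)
The plan is to express $\vDelta\vDelta^\top$ through the covariance-type matrices of $\vL$ and $\vL'$, and then recognise the normalised matrix as the identity minus the canonical-correlation matrix. By \eqref{eq:orthogonality_vDelta} we have $\vDelta\vL'^\top=\mathbf 0$. Using this and $\vL=\vDelta+\vB\vL'$, we get
\begin{align}
\vDelta\vDelta^\top=\vDelta\vL^\top=\vL\vL^\top-\vB\vL'\vL^\top=\vL\vL^\top-\vL\vL'^\top(\vL'\vL'^\top)^{-1}\vL'\vL^\top .
\end{align}
This is the standard Pythagorean identity for least-squares residuals. It uses only the full-rank assumption on $\vL'$, which makes $\vL'\vL'^\top$ invertible.

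Next, conjugate by $(\vL\vL^\top)^{-1/2}$, which exists since $\vL$ has rank $m$. This gives
\begin{align}
(\vL\vL^\top)^{-1/2}\vDelta\vDelta^\top(\vL\vL^\top)^{-1/2}=\vI_m-\vC\vC^\top,\qquad \vC:=(\vL\vL^\top)^{-1/2}\vL\vL'^\top(\vL'\vL'^\top)^{-1/2}\in\bbR^{m\times m'} .
\end{align}
Here we used $(\vL'\vL'^\top)^{-1}=(\vL'\vL'^\top)^{-1/2}(\vL'\vL'^\top)^{-1/2}$ and $\vL'\vL^\top=(\vL\vL'^\top)^\top$. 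By \eqref{eq:defmatricCCAg}, $\vC$ is precisely the matrix whose singular values are the canonical correlations $\rho_1\ge\dots\ge\rho_{\min(m,m')}$ of $\vg$ and $\vg'$.

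Finally, from the SVD of $\vC$, the matrix $\vC\vC^\top\in\bbR^{m\times m}$ is symmetric PSD with eigenvalues $\rho_1^2\ge\dots\ge\rho_{\min(m,m')}^2$. When $m'<m$, it has an additional $m-m'$ zero eigenvalues, which matches the convention $\rho_i=0$ for $i>m'$. Therefore $\vI-\vC\vC^\top$ has eigenvalues $1-\rho_i^2$, with the same eigenvectors. Since $\rho_i^2$ is decreasing in $i$, the values $1-\rho_i^2$ are increasing, so listing the $\lambda_i$ in increasing order gives exactly $\lambda_i=1-\rho_i^2$.

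There is no real obstacle. The only points needing care are the index bookkeeping (the reversed orderings of $\lambda$ and $\rho$, and the padding with zeros when $m'<m$) and checking that the factor $k$ relating $\vL\vL^\top$ to $k\Sigma_{\vg,\vg}$ cancels in $\vC$. It does, because $\vC$ is homogeneous of degree zero in that scaling.
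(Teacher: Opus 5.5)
Your proof is correct and follows the paper's argument. You establish $\vDelta\vDelta^\top=\vL\vL^\top-\vL\vL'^\top(\vL'\vL'^\top)^{-1}\vL'\vL^\top$, conjugate by $(\vL\vL^\top)^{-1/2}$ to obtain $\boldsymbol{1}-\mathbf{C}\mathbf{C}^\top$ with $\mathbf{C}$ the canonical-correlation matrix of \eqref{eq:defmatricCCAg}, and read off the eigenvalues from the singular values of $\mathbf{C}$ (with zero-padding when $m'<m$ and the reversed ordering handled correctly); the only cosmetic difference is that you shortcut the paper's four-term expansion of $\vDelta\vDelta^\top$ by using $\vDelta\vL'^\top=\mathbf 0$.
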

\begin{proof}
    We expand
    \begin{align}
        \begin{split}
        \vDelta\vDelta^\top
            &=
            \vL \vL^\top 
            -\vL\vL'^\top \vB^\top 
            -\vB\vL'\vL^\top 
            +\vB\vL'\vL'^\top \vB^\top
         \\
         &= \vL \vL^\top 
            -\vL\vL'^\top (\vL'\vL'^\top)^{-1}
            \vL'\vL^\top
            -\vL \vL'^\top(\vL'\vL'^\top)^{-1}\vL'\vL^\top
            +
            \vL \vL'^\top(\vL'\vL'^\top)^{-1}\vL'
            \vL'^\top (\vL'\vL'^\top)^{-1}
            \vL'\vL^\top
            \\
            &=\vL\vL^\top -\vL\vL'^\top (\vL'\vL'^\top)^{-1}
            \vL'\vL^\top.
        \end{split}
    \end{align}
    Therefore
    \begin{align}
    \begin{split}\label{eq:relation_Delta}
        (\vL\vL^\top)^{-1/2} \vDelta\vDelta^\top(\vL\vL^\top)^{-1/2}
        &= (\vL\vL^\top)^{-1/2}\vL\vL^\top
         (\vL\vL^\top)^{-1/2}
        - (\vL\vL^\top)^{-1/2}\vL\vL'^\top (\vL'\vL'^\top)^{-1}
            \vL'\vL^\top
        (\vL\vL^\top)^{-1/2}
        \\
        &=
        \boldsymbol{1}_{m\times m}
        -(\vL\vL^\top)^{-1/2}\vL\vL'^\top (\vL'\vL'^\top)^{-1/2}
        \left((\vL\vL^\top)^{-1/2}\vL\vL'^\top (\vL'\vL'^\top)^{-1/2}\right)^\top.
        \end{split}
    \end{align}
    Recall that by \eqref{eq:defmatricCCAg} the singular
    values $\rho_1,\ldots, \rho_{\min(m,m')}$ of $(\vL\vL^\top)^{-1/2}\vL\vL'^\top (\vL'\vL'^\top)^{-1/2}$
    are the canonical correlation of $\vg$ and $\vg'$.
    Moreover, from \eqref{eq:relation_Delta}
    (and using that $\vA\vA^\top$ has squared singular values as eigenvalues) we find
    \begin{align}
        \lambda_i = 1 - \rho_i^2
    \end{align}
    where we extend $\rho_i=0$ for $m'<i\leq m$
    if $m\geq m'$.
\end{proof}

\begin{lemma}\label{le:eigenvalue_bound1}
Recall that the non-zero eigenvalues of the second moment of the logits $\mathbb{E}_{\vx\sim p_{\vx}}(\vu(\vx)\vu^\top(\vx))\in \mathbb{R}^{k\times k}$
are denoted by $\mu_1\geq \mu_2\geq \ldots\geq \mu_m$ and the canonical correlations of $\vg$ and $\vg'$ by $\rho_1\geq \rho_2\geq \ldots \geq \rho_{\min(m,m')}$. 
Then the following bound holds
\begin{align}\label{eq:eigenvalue_bound1}
    \sum_{i=1}^m (1-\rho_i^2)\mu_i\leq 
         \mathbb{E}_{\vx\sim p_{\vx}}\lVert \vDelta^\top \vf(\vx)\rVert^2.
\end{align}
\end{lemma}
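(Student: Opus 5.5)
We start by writing the right-hand side as a trace. As in \cref{rem:residual},
\begin{align}
\mathbb{E}_{\vx\sim p_{\vx}}\lVert \vDelta^\top \vf(\vx)\rVert^2=\tr\left(\vDelta^\top \vM_{\vf,\vf}\vDelta\right)=\tr\left(\vM_{\vf,\vf}\vDelta\vDelta^\top\right).
\end{align}
The aim is to bring in both the eigenvalues $\lambda_i=1-\rho_i^2$ from \cref{le:canonical_Delta} and the logit eigenvalues $\mu_i$. To do this we insert $(\vL\vL^\top)^{1/2}(\vL\vL^\top)^{-1/2}$ on both sides of $\vDelta\vDelta^\top$, obtaining
\begin{align}
\tr\left(\vM_{\vf,\vf}\vDelta\vDelta^\top\right)=\tr\left(\vP\,\vQ\right),\qquad \vP:=(\vL\vL^\top)^{1/2}\vM_{\vf,\vf}(\vL\vL^\top)^{1/2},\quad \vQ:=(\vL\vL^\top)^{-1/2}\vDelta\vDelta^\top(\vL\vL^\top)^{-1/2}.
\end{align}
This uses only cyclicity of the trace. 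Both $\vP$ and $\vQ$ are symmetric positive semi-definite $m\times m$ matrices.

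Next we identify the spectra. By \cref{le:canonical_Delta}, the eigenvalues of $\vQ$ are exactly $1-\rho_i^2$, with $\rho_i=0$ for $i>m'$. For $\vP$ we use that $\vA\vA^\top$ and $\vA^\top\vA$ share their nonzero eigenvalues. Take $\vA=\vM_{\vf,\vf}^{1/2}(\vL\vL^\top)^{1/2}$. Then $\vA^\top\vA=\vP$, and $\vA\vA^\top=\vM_{\vf,\vf}^{1/2}\vL\vL^\top\vM_{\vf,\vf}^{1/2}$. The latter has the same nonzero spectrum as $\vL^\top\vM_{\vf,\vf}\vL=\vM_{\vu,\vu}$, by the same argument applied to $\vM_{\vf,\vf}^{1/2}\vL$. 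Since $\vP$ is invertible by the rank assumptions, its eigenvalues are exactly $\mu_1\geq\dots\geq\mu_m$.

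Finally we apply the lower bound in von Neumann's trace inequality (\cref{th:vonNeumann}). It gives $\tr(\vP\vQ)\geq\sum_i a_i b_{m+1-i}$, where the $a_i$ are the eigenvalues of $\vP$ and the $b_i$ those of $\vQ$, both in decreasing order. The eigenvalues of $\vQ$ in increasing order are $1-\rho_1^2\leq 1-\rho_2^2\leq\dots$, because the $\rho_i$ are decreasing. So pairing the largest $\mu$ with the smallest $1-\rho^2$ gives exactly $\sum_i\mu_i(1-\rho_i^2)$, which is \eqref{eq:eigenvalue_bound1}.

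The step most likely to cause trouble is this ordering bookkeeping. \cref{le:canonical_Delta} lists the $\lambda_i$ in increasing order, so we must confirm that the anti-aligned pairing in von Neumann's lower bound produces $\mu_i(1-\rho_i^2)$ with matching indices, and not the weaker aligned pairing. We also need the spectral identification of $\vP$ with the nonzero spectrum of $\vM_{\vu,\vu}$; this requires invertibility of $\vL\vL^\top$ and $\vM_{\vf,\vf}$, which holds by the full-rank assumptions.
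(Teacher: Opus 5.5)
Your proposal is correct and is essentially the paper's proof. Both write $\mathbb{E}_{\vx\sim p_{\vx}}\lVert\vDelta^\top\vf(\vx)\rVert^2$, by cyclicity of the trace, as the trace of the product of $(\vL\vL^\top)^{-1/2}\vDelta\vDelta^\top(\vL\vL^\top)^{-1/2}$ (spectrum $1-\rho_i^2$ by Lemma~\ref{le:canonical_Delta}) and $(\vL\vL^\top)^{1/2}\vM_{\vf,\vf}(\vL\vL^\top)^{1/2}$ (spectrum $\mu_i$ via the $AB$/$BA$ spectral identity), and then apply the lower, anti-aligned half of von Neumann's trace inequality. Taking the expectation before applying von Neumann, and checking the index pairing explicitly, are if anything slightly cleaner than the paper's write-up, which loosely says the $1-\rho_i^2$ are in decreasing order.
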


\begin{proof}
    We rewrite using cyclicity of the trace
    \begin{align}
    \begin{split}\label{eq:cyclic}
        \lVert \vDelta^\top \vf(\vx)\rVert^2
       & =
        \tr\left(\vDelta^\top(\vL\vL^\top)^{-1/2}
(\vL\vL^\top)^{1/2}\vf(\vx)\vf^\top(\vx)
(\vL\vL^\top)^{1/2}(\vL\vL^\top)^{-1/2}\vDelta\right)
\\
&=
\tr\left(\left((\vL\vL^\top)^{-1/2}\vDelta\vDelta^\top(\vL\vL^\top)^{-1/2}\right)
\left((\vL\vL^\top)^{1/2}\vf(\vx)\vf^\top(\vx)
(\vL\vL^\top)^{1/2}\right)\right).
\end{split}
    \end{align}
    We have seen in Lemma~\ref{le:canonical_Delta}
    that the eigenvalues 
    of $(\vL\vL^\top)^{-1/2}\vDelta\vDelta^\top(\vL\vL^\top)^{-1/2}$ are given (in decreasing order) by $1-\rho_i^2$
    where $\rho_i$ are the canonical correlations of $\vg$ and $\vg'$.
We now consider the other term and claim that the eigenvalues of 
\begin{align}
   (\vL\vL^\top)^{1/2}  \mathbb{E}_{\vx\sim p_{\vx}}\left(\vf(\vx)\vf^\top(\vx)\right)(\vL\vL^\top)^{1/2}
\end{align}
    are given by $\mu_1\geq \mu_2\geq\ldots\geq \mu_m$ defined as the non-zero eigenvalues of
    \begin{align}
   \mathbb{E}_{\vx\sim p_{\vx}}(\vu(\vx)\vu^\top(\vx))
    =\mathbb{E}_{\vx\sim p_{\vx}}
    \left(\vL^\top \vf(\vx)\vf^\top(\vx)\vL\right)
    \in \mathbb{R}^{k\times k}.
    \end{align}
    Note that there are indeed at most $m$ non-zero eigenvalues since $\vL$ has rank $m$, potentially there are less if the image of $f$ is restricted to a hyperplane.
    Since the eigenvalues of $\vA\vB$ and $\vB\vA$ agree for all matrices we find that the spectrum $\sigma$ satisfies
    \begin{align}
        \sigma\left( (\vL\vL^\top)^{1/2}  \mathbb{E}_{\vx\sim p_{\vx}}\left(\vf(\vx)\vf^\top(\vx)\right)(\vL\vL^\top)^{1/2}\right)
        =
        \sigma\left(
         \vL\vL^\top \mathbb{E}_{\vx\sim p_{\vx}}\left(\vf(\vx)\vf^\top(\vx)\right)
         \right)
         =
        \sigma\left(
        \mathbb{E}_{\vx\sim p_{\vx}}
    \left(\vL^\top \vf(\vx)\vf^\top(\vx)\vL\right)
         \right)
    \end{align}
    We can now apply von Neumann's trace inequality (see \eqref{eq:vonNeumann}) and conclude that
    (note that $\rho_i$ and $\mu_i$ were both defined as decreasing so $(1-\rho_i^2)$ is increasing 
    \begin{align}
        \sum_{i=1}^m (1-\rho_i^2)\mu_i\leq 
        \mathbb{E}_{\vx\sim p_{\vx}}\tr\left(\left((\vL\vL^\top)^{-1/2}\vDelta\vDelta^\top(\vL\vL^\top)^{-1/2}\right)
\left((\vL\vL^\top)^{1/2}\vf(\vx)\vf^\top(\vx)
(\vL\vL^\top)^{1/2}\right)\right)=
         \mathbb{E}_{\vx\sim p_{\vx}}\lVert \vDelta^\top \vf(\vx)\rVert^2.
    \end{align}
\end{proof}

\subsection{Bounding similarity of the embeddings}
\label{app:bound_sim_aux}
We now perform essentially the same analysis
to obtain bounds for the canonical correlations of $\vf$ and $\vf'$. We  present the results in slightly shorter form.

We consider the regression problem 
\begin{align}
    \tvB = \min_{\overline{\boldsymbol{B}}\in\mathbb{R}^{m\times m'}}\mathbb{E}_{\vx\sim p_{\vx}}\lVert \vf(\vx)-\overline{\boldsymbol{B}}\vf'(\vx)\rVert^2
\end{align}
which has the solution
\begin{align}
\tvB = 
    \mathbb{E}_{\vx\sim p_{\vx}}(\vf(\vx)\vf'(\vx)^\top)
    \left( \mathbb{E}_{\vx\sim p_{\vx}}(\vf'(\vx)\vf'(\vx)^\top)\right)^{-1}.
\end{align}
It is convenient to introduce the notation
\begin{align}
    \vM_{\vf,\vf} =
    \mathbb{E}_{\vx\sim p_{\vx}}(\vf(\vx)\vf(\vx)^\top)
\end{align}
for the moment matrices and $\vM_{\vf,\vf'}$
are defined similarly. 
Then we can write
\begin{align}
     \tvB = \vM_{\vf,\vf'}\vM_{\vf',\vf'}^{-1}.
\end{align}
We again consider the residual
\begin{align}
    \tvDelta(\vx)=\vf(\vx)-\tvB\vf'(\vx)
\end{align}
which satisfies 
\begin{align}\label{eq:tdelta_ortho}
    \vM_{\tvDelta, \vf'}=0
\end{align}
where $\vM_{\tvDelta, \vf'}=\mathbb{E}(\tvDelta(\vx)\vf(\vx)^\top)$ is the moment matrix
defined similar to \eqref{eq:def_moment_embeddings}.
Similar to Lemma~\ref{le:decomp1} we get the following result.
\begin{lemma}\label{le:decomp2}
    With the notation introduced above,  we can decompose
     \begin{align}\label{eq:decomp2}
        \mathbb{E}_{\vx\sim p_{\vx}} \lVert \vu(\vx)-\vu'(\vx)\rVert^2
         =\mathbb{E}_{\vx\sim p_{\vx}}\left(\lVert \vL^\top \tvDelta(\vx)\rVert^2
+ 
\lVert\vL^\top \tvB \vf'(\vx)-\vL'^\top \vf'(\vx)\rVert^2\right).
\end{align}
\end{lemma}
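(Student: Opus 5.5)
The plan is to mirror the proof of Lemma~\ref{le:decomp1}, now splitting $\vf$ rather than $\vL$. Write $\vf(\vx) = \tvDelta(\vx) + \tvB\vf'(\vx)$, which is just the definition of the residual $\tvDelta$. Then
\begin{align}
    \vu(\vx)-\vu'(\vx) = \vL^\top\vf(\vx)-\vL'^\top\vf'(\vx) = \vL^\top\tvDelta(\vx) + \bigl(\vL^\top\tvB-\vL'^\top\bigr)\vf'(\vx).
\end{align}
Expanding the squared norm gives the two squared terms appearing in \eqref{eq:decomp2} plus a cross term
\begin{align}
    2\,\tvDelta(\vx)^\top\vL\bigl(\vL^\top\tvB-\vL'^\top\bigr)\vf'(\vx).
\end{align}
Unlike in Lemma~\ref{le:decomp1}, this cross term need not vanish pointwise in $\vx$. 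This is why the statement is formulated in expectation.

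The key step is to show that the expectation of the cross term is zero. I would write it as a trace,
\begin{align}
    \tr\Bigl(\vL\bigl(\vL^\top\tvB-\vL'^\top\bigr)\,\vf'(\vx)\tvDelta(\vx)^\top\Bigr).
\end{align}
By linearity of trace and expectation, the matrix $\vL(\vL^\top\tvB-\vL'^\top)$ does not depend on $\vx$ and can be pulled out. What remains is $\mathbb{E}_{\vx\sim p_\vx}\bigl(\vf'(\vx)\tvDelta(\vx)^\top\bigr) = \vM_{\vf',\tvDelta}$, the transpose of $\vM_{\tvDelta,\vf'}$.

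The orthogonality relation \eqref{eq:tdelta_ortho} should give this vanishing moment. I would verify it directly:
\begin{align}
    \mathbb{E}\bigl(\tvDelta\vf'^\top\bigr) = \vM_{\vf,\vf'} - \tvB\vM_{\vf',\vf'} = \vM_{\vf,\vf'}-\vM_{\vf,\vf'}\vM_{\vf',\vf'}^{-1}\vM_{\vf',\vf'} = \mathbf{0}.
\end{align}
This uses the invertibility of $\vM_{\vf',\vf'}$ assumed in Theorem~\ref{th:cca}. The check is worth doing explicitly because the displayed definition of $\vM_{\tvDelta,\vf'}$ contains a typo, writing $\vf$ where $\vf'$ is meant.

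Once the cross term is gone, taking expectations of the expansion yields \eqref{eq:decomp2} exactly. The only real subtlety is recognizing that orthogonality holds only in $L^2(p_\vx)$ and not pointwise. Beyond that, I need finite second moments of $\vf$ and $\vf'$ so that all the expectations exist and linearity applies; this is implicit in the moment matrices being defined.
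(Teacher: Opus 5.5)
Your proposal is correct and follows the paper's proof: you split $\vf=\tvDelta+\tvB\vf'$, expand the squared norm, write the cross term as a trace, and show it vanishes in expectation because $\mathbb{E}(\tvDelta\vf'^\top)=\vM_{\vf,\vf'}-\tvB\vM_{\vf',\vf'}=\mathbf{0}$. You are also right that the displayed definition of $\vM_{\tvDelta,\vf'}$ should contain $\vf'(\vx)^\top$ rather than $\vf(\vx)^\top$, and the orthogonality you verify directly is exactly what the argument needs.
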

\begin{remark}
    Similar to Remark~\ref{rem:residual} we can bound the  norm of the residual
\begin{align}
   \mathbb{E}_{\vx\sim p_{\vx}}  \lVert \vu(\vx)-\vu'(\vx)\rVert^2\geq  \lambda_{\min}(\vL\vL^\top)\mathbb{E}_{\vx\sim p_{\vx}}\lVert \tvDelta(\vx)\rVert^2.
\end{align}
\end{remark}
\begin{proof}
   We calculate
    \begin{align}
    \begin{split}\label{eq:decomp_tdelta}
 \lVert \vu(\vx)-\vu'(\vx)\rVert^2
 &=\lVert \vL^\top \vf(\vx)-\vL'^{\top}\vf'(\vx)\rVert^2
\\
&=
\lVert \vL^\top \tvDelta(\vx)+
\vL^\top \tvB \vf'(\vx)-\vL'^\top \vf'\rVert^2
\\
&= \tvDelta(\vx)^\top \vL\vL^\top \tvDelta(\vx)
+
\left(\vL^\top \tvB \vf'(\vx)-\vL'^\top \vf'(\vx)\right)^\top 
\left(\vL^\top \tvB \vf'(\vx)-\vL'^\top \vf'(\vx)\right)
\\
&\quad
+\tvDelta(\vx)^\top\vL\left(\vL^\top \tvB \vf'(\vx)-\vL'^\top \vf'(\vx)\right)
+ \left(\vL^\top \tvB \vf'(\vx)-\vL'^\top \vf'(\vx)\right)^\top          \vL^\top \tvDelta(\vx)
\\
&=\lVert \vL^\top \tvDelta(\vx)\rVert^2
+ 
\lVert\vL^\top \tvB \vf'(\vx)-\vL'^\top \vf'(\vx)\rVert^2
\\
&\quad
+\tr\left(\vf'(\vx)\tvDelta(\vx)^\top\vL\left(\vL^\top \tvB -\vL'^\top \right)\right)
+ \tr\left(\left(\tvB^\top \vL -\vL'\right)^\top          \vL^\top \tvDelta(\vx)\vf'(\vx)^\top\right).
\end{split}
\end{align}
Integrating over $\vx$ we find using 
\eqref{eq:tdelta_ortho} that the cross terms vanish
\begin{align} 
\tr\left(
   \underbrace{\mathbb{E}_{\vx\sim p_\vx}\left( \vf'(\vx)\tvDelta(\vx)^\top\right)}_{=0}\vL\left(\vL^\top \tvB -\vL'^\top \right)\right)
+ \tr\left(\left(\tvB^\top \vL -\vL'\right)^\top          \vL^\top \underbrace{\mathbb{E}_{\vx\sim p_\vx}\left(\tvDelta(\vx)\vf'(\vx)^\top\right)}_{=0}\right)=0.
\end{align}
And the claim follows by taking expectation in \eqref{eq:decomp_tdelta}.
\end{proof}
We now want to relate the term $\mathbb{E}_{\vx\sim p_{\vx}}\left(\lVert \vL^\top \tvDelta(\vx)\rVert^2\right)$ to the canonical correlation of 
$\vf(\vx)$ and $\vf'(\vx)$. For this we prove a result similar to Lemma~\ref{le:canonical_Delta}.
Recall that the canonical correlation of $\vf$
and $\vf'$, given by the singular values of
$\vM_{\vf,\vf}^{-1/2}\vM_{\vf,\vf'}\vM_{\vf',\vf'}^{-1/2}$, by $\trho_1\geq \trho_2\geq \ldots\geq \trho_{\min(m,m')}$
(which we extend to 0 for $i>\min(m,m')$ if necessary).

\begin{lemma}\label{le:canonical_Delta2}
    Denote the eigenvalues of $\vM_{\vf,\vf}^{-1/2} \mathbb{E}_{\vx\sim p_{\vx}}\left(\tvDelta(\vx)\tvDelta^\top(\vx)\right)\vM_{\vf,\vf}^{-1/2}$
    by $\tlambda_1\leq \ldots \leq \tlambda_{m}$.
Then the relation
\begin{align}
    \tlambda_i=1-\trho_i^2
\end{align}
holds, where we set $\rho_i=0$ for $m'<i\leq m$ if $m'<m$.
\end{lemma}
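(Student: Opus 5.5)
The plan mirrors the proof of Lemma~\ref{le:canonical_Delta}, with the finite-sample Gram matrices $\vL\vL^\top$, $\vL'\vL'^\top$, $\vL\vL'^\top$ replaced by the moment matrices $\vM_{\vf,\vf}$, $\vM_{\vf',\vf'}$, $\vM_{\vf,\vf'}$. First we compute $\mathbb{E}_{\vx\sim p_\vx}\big(\tvDelta(\vx)\tvDelta(\vx)^\top\big)$ explicitly. Expanding $\tvDelta(\vx)=\vf(\vx)-\tvB\vf'(\vx)$ and taking expectations gives
\begin{align}
\mathbb{E}\big(\tvDelta\tvDelta^\top\big)
=\vM_{\vf,\vf}-\vM_{\vf,\vf'}\tvB^\top-\tvB\vM_{\vf',\vf}+\tvB\vM_{\vf',\vf'}\tvB^\top .
\end{align}
Substituting $\tvB=\vM_{\vf,\vf'}\vM_{\vf',\vf'}^{-1}$ and using $\vM_{\vf',\vf}=\vM_{\vf,\vf'}^\top$, the last three terms all equal $\pm\vM_{\vf,\vf'}\vM_{\vf',\vf'}^{-1}\vM_{\vf',\vf}$ and collapse to a single negative copy. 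This yields $\mathbb{E}(\tvDelta\tvDelta^\top)=\vM_{\vf,\vf}-\vM_{\vf,\vf'}\vM_{\vf',\vf'}^{-1}\vM_{\vf',\vf}$, the Schur complement. Invertibility of $\vM_{\vf',\vf'}$ follows from the maximal-rank assumption of Theorem~\ref{th:cca}.

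Next we conjugate by $\vM_{\vf,\vf}^{-1/2}$, which is well defined since $\vM_{\vf,\vf}$ is positive definite by the same assumption. Writing $\vC:=\vM_{\vf,\vf}^{-1/2}\vM_{\vf,\vf'}\vM_{\vf',\vf'}^{-1/2}$ and splitting $\vM_{\vf',\vf'}^{-1}=\vM_{\vf',\vf'}^{-1/2}\vM_{\vf',\vf'}^{-1/2}$, we obtain
\begin{align}
\vM_{\vf,\vf}^{-1/2}\,\mathbb{E}\big(\tvDelta\tvDelta^\top\big)\,\vM_{\vf,\vf}^{-1/2}=\boldsymbol{1}_{m\times m}-\vC\vC^\top .
\end{align}
By \eqref{eq:defmatrixCCAf}, the singular values of $\vC$ are the moment-based canonical correlations $\trho_i$.

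The eigenvalues of $\vC\vC^\top\in\mathbb{R}^{m\times m}$ are therefore $\trho_1^2\geq\ldots\geq\trho_{\min(m,m')}^2$. If $m'<m$, $\vC$ has rank at most $m'$, so the remaining $m-m'$ eigenvalues are $0$; this justifies the convention $\trho_i=0$ for $i>m'$. Consequently $\boldsymbol{1}-\vC\vC^\top$ has eigenvalues $1-\trho_i^2$. Listing them in increasing order matches decreasing $\trho_i$, which gives $\tlambda_i=1-\trho_i^2$.

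The only point needing care is the ordering and padding convention. I would handle it explicitly via the eigen-decomposition $\vC\vC^\top=\vU\,\mathrm{diag}(\trho_i^2)\,\vU^\top$, from which $\boldsymbol{1}-\vC\vC^\top=\vU\,\mathrm{diag}(1-\trho_i^2)\,\vU^\top$ follows immediately. Otherwise the argument is routine algebra.
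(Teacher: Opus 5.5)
Your proof is correct and follows the paper's argument essentially step by step. You expand $\mathbb{E}(\tvDelta\tvDelta^\top)$ into the Schur complement $\vM_{\vf,\vf}-\vM_{\vf,\vf'}\vM_{\vf',\vf'}^{-1}\vM_{\vf',\vf}$, conjugate by $\vM_{\vf,\vf}^{-1/2}$ to obtain $\boldsymbol{1}$ minus a Gram matrix of the CCA matrix, and read off the eigenvalues; your $\mathbf{C}\mathbf{C}^\top$ ($m\times m$) is in fact the dimensionally correct form, whereas the paper's display writes $\mathbf{C}^\top\mathbf{C}$ ($m'\times m'$), which is consistent only when $m=m'$, and your explicit handling of the ordering and zero-padding fills in what the paper leaves implicit.
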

\begin{proof}
    We expand
    \begin{align}
        \begin{split}
       \mathbb{E}_{\vx\sim p_{\vx}}\left( \tvDelta(\vx)\tvDelta^\top(\vx)\right)
            &= \mathbb{E}_{\vx\sim p_{\vx}}\left(
            \vf(\vx)\vf(\vx)^\top
            -\vf(\vx)\vf'(\vx)^\top\vM_{\vf',\vf'}^{-1}\vM_{\vf',\vf}
            -\vM_{\vf,\vf'}\vM_{\vf',\vf'}^{-1}
            \vf(\vx)^\top\vf'(\vx)\right.
            \\
            &\quad\left.
            +\vM_{\vf,\vf'}\vM_{\vf',\vf'}^{-1}
\vf(\vx)^\top\vf'(\vx)^\top\vM_{\vf',\vf'}^{-1}\vM_{\vf',\vf}\right)
\\
&=\vM_{\vf, \vf}-2\vM_{\vf,\vf'}\vM_{\vf',\vf'}^{-1}\vM_{\vf',\vf}
+\vM_{\vf,\vf'}\vM_{\vf',\vf'}^{-1}\vM_{\vf',\vf'}\vM_{\vf',\vf'}^{-1}\vM_{\vf',\vf}
\\
&= \vM_{\vf, \vf}-\vM_{\vf,\vf'}\vM_{\vf',\vf'}^{-1}\vM_{\vf',\vf}.
        \end{split}
    \end{align}
    Thus we find
    \begin{align}
    \begin{split}\label{eq:relation_Delta2}
        \vM_{\vf,\vf}^{-1/2} \mathbb{E}_{\vx\sim p_{\vx}}\left(\tvDelta(\vx)\tvDelta^\top(\vx)\right)\vM_{\vf,\vf}^{-1/2}
        &= \boldsymbol{1}_{m\times m}
        -\left(\vM_{\vf,\vf}^{-1/2}\vM_{\vf,\vf'}\vM_{\vf',\vf'}^{-1/2}\right)^\top\left(\vM_{\vf,\vf}^{-1/2}\vM_{\vf,\vf'}\vM_{\vf',\vf'}^{-1/2}\right).
        \end{split}
    \end{align}
    This then implies that 
    \begin{align}
        \tlambda_i = 1 - \trho_i^2
    \end{align}
    where we extend $\trho_i=0$ for $m'<i\leq m$
    if $m\geq m'$.
\end{proof}
Based on this result we can now derive a bound 
for canonical correlation coefficients deriving the analogue of Lemma~\ref{le:eigenvalue_bound1}.

\begin{lemma}\label{le:eigenvalue_bound2}
Recall that we denoted the eigenvalues of the second moment of the logits $\mathbb{E}_{\vx\sim p_{\vx}}(\vu(\vx)\vu^\top(\vx))\in \mathbb{R}^{k\times k}$
by $\mu_1\geq \mu_2\geq \ldots\geq \mu_m$ and the canonical correlations of $\vf$ and $\vf'$ by $\trho_1\geq \trho_2\geq \ldots \geq \trho_{\min(m,m')}$ 
Then the following bound holds
\begin{align}
    \sum_{i=1}^m (1-\trho_i^2)\mu_i\leq 
         \mathbb{E}_{\vx\sim p_{\vx}}\lVert \vL^\top \tvDelta(\vx)\rVert^2.
\end{align}
\end{lemma}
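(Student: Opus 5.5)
We mirror the proof of Lemma~\ref{le:eigenvalue_bound1}, with the roles of $\vf$ and $\vg$ exchanged. The plan is to write $\mathbb{E}_{\vx\sim p_{\vx}}\lVert \vL^\top \tvDelta(\vx)\rVert^2$ as the trace of a product of two positive semi-definite $m\times m$ matrices. One factor should have spectrum $\{1-\trho_i^2\}$ (by Lemma~\ref{le:canonical_Delta2}), and the other should have spectrum $\{\mu_i\}$. Von Neumann's trace inequality (Theorem~\ref{th:vonNeumann}) then finishes the proof.

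Concretely, we first use linearity of expectation and cyclicity of the trace:
\begin{align}
\mathbb{E}_{\vx\sim p_{\vx}}\lVert \vL^\top \tvDelta(\vx)\rVert^2
=\tr\left(\vL\vL^\top\, \mathbb{E}_{\vx\sim p_{\vx}}\left(\tvDelta(\vx)\tvDelta(\vx)^\top\right)\right).
\end{align}
Next we insert $\vM_{\vf,\vf}^{-1/2}\vM_{\vf,\vf}^{1/2}$ on both sides of the middle factor, which is possible because $\vM_{\vf,\vf}$ has full rank. This rewrites the quantity as $\tr(\vP\vQ)$ with
\begin{align}
\vP=\vM_{\vf,\vf}^{-1/2}\,\mathbb{E}_{\vx\sim p_{\vx}}\left(\tvDelta(\vx)\tvDelta(\vx)^\top\right)\vM_{\vf,\vf}^{-1/2},\qquad
\vQ=\vM_{\vf,\vf}^{1/2}\vL\vL^\top\vM_{\vf,\vf}^{1/2}.
\end{align}
Both matrices are symmetric positive semi-definite. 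By Lemma~\ref{le:canonical_Delta2}, the eigenvalues of $\vP$ are $\tlambda_i=1-\trho_i^2$, with the convention $\trho_i=0$ for $i>m'$.

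For $\vQ$ we use that $\vA\vB$ and $\vB\vA$ have the same nonzero spectrum. Applying this twice gives
\begin{align}
\sigma(\vQ)=\sigma(\vL\vL^\top\vM_{\vf,\vf})\setminus\{0\}=\sigma(\vL^\top\vM_{\vf,\vf}\vL)\setminus\{0\}=\{\mu_1,\ldots,\mu_m\},
\end{align}
since $\vM_{\vu,\vu}=\vL^\top\vM_{\vf,\vf}\vL$. Because $\vL$ and $\vM_{\vf,\vf}$ both have rank $m$, $\vQ$ is invertible and its $m$ eigenvalues are exactly $\mu_1\geq\ldots\geq\mu_m$.

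Finally we apply the lower bound in \eqref{eq:vonNeumann}. The $\mu_i$ are decreasing and the $1-\trho_i^2$ are increasing in $i$, since the $\trho_i$ are decreasing. The smallest possible pairing is therefore the one that matches $\mu_i$ with $1-\trho_i^2$, so
\begin{align}
\tr(\vP\vQ)\geq\sum_{i=1}^m(1-\trho_i^2)\mu_i,
\end{align}
which is the claim.

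The only delicate point is this ordering bookkeeping: we must check that von Neumann's lower bound pairs the largest $\mu$ with the smallest $1-\trho^2$, and that the padding convention for $m>m'$ is consistent with Lemma~\ref{le:canonical_Delta2}. Everything else is routine.
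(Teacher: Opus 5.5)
Your proof is correct and follows the paper's argument essentially step by step. You write the quantity as the trace of $\vM_{\vf,\vf}^{-1/2}\,\mathbb{E}_{\vx\sim p_{\vx}}(\tvDelta(\vx)\tvDelta(\vx)^\top)\,\vM_{\vf,\vf}^{-1/2}$ times $\vM_{\vf,\vf}^{1/2}\vL\vL^\top\vM_{\vf,\vf}^{1/2}$, read off the two spectra from Lemma~\ref{le:canonical_Delta2} and the $\vA\vB$/$\vB\vA$ identity, and finish with the lower half of von Neumann's inequality. You also take the expectation before invoking Lemma~\ref{le:canonical_Delta2}, which is the right order: that lemma concerns the averaged matrix, not the rank-one pointwise matrix that the paper's write-up nominally refers to.
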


\begin{proof}
    We rewrite using cyclicity of the trace
    as in \eqref{eq:cyclic}
    \begin{align}
    \begin{split}
        \lVert \vL^\top \tvDelta(\vx)\rVert^2
       & =
        \tr\left( \tvDelta(\vx)^\top\vM_{\vf,\vf}^{-1/2}
        \vM_{\vf,\vf}^{1/2}
        \vL \vL^\top \vM_{\vf,\vf}^{1/2} 
        \vM_{\vf,\vf}^{-1/2}\tvDelta(\vx)\right)
\\
&=
        \tr\left(  \left(\vM_{\vf,\vf}^{-1/2}\tvDelta(\vx)\tvDelta(\vx)^\top\vM_{\vf,\vf}^{-1/2}\right) \left(
        \vM_{\vf,\vf}^{1/2}
        \vL\vL^\top \vM_{\vf,\vf}^{1/2} \right)
       \right).
\end{split}
    \end{align}
   Applying Lemma~\ref{le:canonical_Delta2}
   we find  that the eigenvalues 
    of $\vM_{\vf,\vf}^{-1/2}\tvDelta(\vx)\tvDelta(\vx)^\top\vM_{\vf,\vf}^{-1/2}$ are given (in decreasing order) by $1-\trho_i^2$
    where $\trho_i$ are the canonical correlations of $\vf$ and $\vf'$.
As in Lemma~\ref{le:eigenvalue_bound1} the other term 
\begin{align}
   \vM_{\vf,\vf}^{1/2}
        \vL\vL^\top \vM_{\vf,\vf}^{1/2}
\end{align}
has eigenvalues $\mu_1\geq \mu_2\geq\ldots\geq \mu_m$. Indeed, we find
    \begin{align}
        \sigma\left( 
   \vM_{\vf,\vf}^{1/2}
        \vL\vL^\top \vM_{\vf,\vf}^{1/2}\right)
        =
        \sigma\left(
   \vM_{\vf,\vf}
        \vL\vL^\top \right)
         =
        \sigma\left(
    \vL^\top \vM_{\vf,\vf}\vL\right).
    \end{align}
    Then von Neumann's trace inequality \eqref{eq:vonNeumann} implies
    \begin{align}
        \sum_{i=1}^m (1-\trho_i^2)\mu_i\leq 
         \mathbb{E}_{\vx\sim p_{\vx}}\lVert \vL^\top \tvDelta(\vx)\rVert^2.
    \end{align}
\end{proof}
After these preparations the proof of Theorem~\ref{th:cca} is straightforward.
\begin{proof}[Proof of Theorem~\ref{th:cca}]
    Applying Lemma~\ref{le:eigenvalue_bound1}
    followed by Lemma~\ref{le:decomp1}
    we conclude that
\begin{align}
     \sum_{i=1}^m (1-\rho_i^2)\mu_i\leq 
         \mathbb{E}_{\vx\sim p_{\vx}}\lVert \vDelta^\top \vf(\vx)\rVert^2\leq 
         \mathbb{E}_{\vx\sim p_{\vx}} \lVert \vu(\vx)-\vu'(\vx)\rVert^2
\end{align}    
which is \eqref{eq:boundrho}.
The bound \eqref{eq:boundtrho} follows
similarly from Lemma~\ref{le:decomp2} and \ref{le:eigenvalue_bound2}.
\end{proof}

\subsection{Relating embeddings and unembeddings}
\label{sec:unembeddings_embeddings}
The results in Theorem~\ref{th:cca} can be used to show that $\vf$ and $\vf'$
and $\vg$ and $\vg'$ are linearly related 
when the model logits are similar. Concretely we bounded the canonical correlations and we can also bound the residual (see Remark~\ref{rem:residual}).
However, while these are quantitative versions of the linear identifiability of $\vf$ and $\vg$ 
this alone is not sufficient to recover the identifiability result as this also requires that
$\vf=\tvB \vf'$, $\vg=\vB\vg'$
and $\tilde\vB_{\calJ}^{-\top}=\tvB$.
Therefore, a natural question is whether
we can show quantitative bounds for 
$\vB^{\top}\tvB-\boldsymbol{1}_{m'\times m'}$ (this matrix can also be considered for $m'\neq m$).
However, when considering the equivalent model $(\vA^{\top}\vf',\vA^{-1}\vg')$ for $\vA\in \mathrm{GL}(m')$ we find that the transition matrix becomes
$\vA^{-1}\vB^{\top}\tvB\vA$. 
However, if $\vB^{\top}\tvB$ is not a multiple of the identity we have
\begin{align}
 \sup_{\vA\in \mathrm{GL}(m')}  \lVert \vA^{-1}\vB^{\top}\tvB\vA\rVert=\infty 
\end{align}
and thus no meaningful bounds on $\lVert \vB^{\top}\tvB\rVert$ in terms of 
$\mathbb{E}_{\vx\sim p_{\vx}}
        \lVert \vu(\vx)-\vu'(\vx)\rVert^2$
        are possible. Instead, we can only show a weaker statement for the relation of $\vB$ and $\tvB$.
 To achieve this we  argue as in Lemmas~\ref{le:decomp1} and \ref{le:decomp2}.
\begin{lemma}
When $\mathbb{E}_{\vx\sim p_{\vx}} \lVert\vu'(\vx) - \vu(\vx)
    \rVert^2=0$, then $\tvB^\top\tvB=\boldsymbol{1}_{m'\times m'}$.
    Moreover, $\tvB^\top\tvB$ behaves like an approximate identity in relevant model direction in the sense that
    \begin{align}
 \mathbb{E}_{\vx\sim p_{\vx}}
    \lVert\vL'^\top \vB^\top \tvB \vf'(\vx) - \vL'^\top \vf'(\vx)
    \rVert^2
   \leq 
    \mathbb{E}_{\vx\sim p_{\vx}} \lVert \vu(\vx)-\vu'(\vx)\rVert^2
    \end{align}
    and the logits $\vL'^\top \vB^\top \tvB \vf'(\vx)$ of the model $(\vB\vg', \tvB\vf')$
    also satisfy
    \begin{align}
    \mathbb{E}_{\vx\sim p_{\vx}}
    \lVert\vL'^\top \vB^\top \tvB \vf'(\vx) - \vu(\vx)
    \rVert^2 
    \leq 4  \mathbb{E}_{\vx\sim p_{\vx}} \lVert\vu'(\vx) - \vu(\vx)
    \rVert^2.
\end{align}
\end{lemma}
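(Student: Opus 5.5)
The plan is to reuse the two orthogonal decompositions already established, Lemma~\ref{le:decomp1} and Lemma~\ref{le:decomp2}, together with the orthogonality relation $\vDelta\vL'^\top=\mathbf 0$ from \eqref{eq:orthogonality_vDelta}. Throughout, I read the first claim as $\vB^\top\tvB=\boldsymbol{1}_{m'\times m'}$, since that is the matrix appearing in the displayed inequalities and in the discussion preceding the lemma. The standing assumptions of Theorem~\ref{th:cca} are kept: $\vL,\vL'$ and $\vM_{\vf,\vf},\vM_{\vf',\vf'}$ have maximal rank.

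\textbf{Step 1 (first inequality).} Start from Lemma~\ref{le:decomp2}. It gives
\begin{align}
\mathbb{E}_{\vx\sim p_{\vx}}\lVert \vu(\vx)-\vu'(\vx)\rVert^2\geq \mathbb{E}_{\vx\sim p_{\vx}}\lVert\vL^\top \tvB \vf'(\vx)-\vL'^\top \vf'(\vx)\rVert^2 .
\end{align}
Next substitute $\vL^\top=\vDelta^\top+\vL'^\top\vB^\top$, using the definition \eqref{eq:def_vDelta}. This yields
\begin{align}
\vL^\top \tvB \vf'(\vx)-\vL'^\top \vf'(\vx)=\vDelta^\top\tvB\vf'(\vx)+\left(\vL'^\top\vB^\top\tvB\vf'(\vx)-\vL'^\top\vf'(\vx)\right).
\end{align}
The first summand lies in the column space of $\vDelta^\top$. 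The second lies in the column space of $\vL'^\top$. Since $\vDelta\vL'^\top=\mathbf 0$, these two subspaces of $\mathbb{R}^k$ are orthogonal, so the cross term vanishes pointwise in $\vx$. Dropping the nonnegative term $\lVert\vDelta^\top\tvB\vf'(\vx)\rVert^2$ gives the first inequality. The step I expect to need the most care is exactly this orthogonality: it holds pointwise, not only in expectation as in Lemma~\ref{le:decomp2}. I will state it explicitly as $(\vDelta^\top \va)^\top(\vL'^\top \vb)=\va^\top\vDelta\vL'^\top\vb=0$.

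\textbf{Step 2 (second inequality).} Write
\begin{align}
\vL'^\top\vB^\top\tvB\vf'-\vu=(\vL'^\top\vB^\top\tvB\vf'-\vL'^\top\vf')+(\vu'-\vu),
\end{align}
using $\vu'=\vL'^\top\vf'$. Apply Minkowski's inequality in $L^2(p_\vx)$, as in the triangle-inequality proof for $d_{\mathrm{logit}}$. Each of the two terms has $L^2$ norm at most $d_{\mathrm{logit}}$, the first by Step~1. Squaring $(d_{\mathrm{logit}}+d_{\mathrm{logit}})^2$ gives the constant $4$. Alternatively one can use $(a+b)^2\le 2a^2+2b^2$, which gives the same constant.

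\textbf{Step 3 (exact case).} If the logit distance is zero, Step~1 gives $\vL'^\top(\vB^\top\tvB-\boldsymbol{1})\vf'(\vx)=0$ for $p_\vx$-a.e.\ $\vx$. Because $\vL'$ has rank $m'$, the map $\vL'^\top$ is injective, so $(\vB^\top\tvB-\boldsymbol{1})\vf'(\vx)=0$ almost surely. Taking second moments gives $(\vB^\top\tvB-\boldsymbol{1})\vM_{\vf',\vf'}(\vB^\top\tvB-\boldsymbol{1})^\top=0$. Since $\vM_{\vf',\vf'}$ is invertible, $\vB^\top\tvB=\boldsymbol{1}_{m'\times m'}$.
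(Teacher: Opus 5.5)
Your proposal is correct and follows the paper's proof essentially step for step. You start from Lemma~\ref{le:decomp2}, split $\vL^\top=\vDelta^\top+\vL'^\top\vB^\top$, use the Pythagorean identity coming from $\vDelta\vL'^\top=\mathbf{0}$ to get the first inequality, and obtain the constant $4$ from $(a+b)^2\le 2a^2+2b^2$. Your Step~3 makes explicit the exact-case conclusion (injectivity of $\vL'^\top$ plus invertibility of $\vM_{\vf',\vf'}$) that the paper asserts in one line, and your reading of the claim as $\vB^\top\tvB=\boldsymbol{1}_{m'\times m'}$ matches what the paper's proof actually establishes.
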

\begin{remark}
    One could derive bounds on $\lVert \tvB^\top\tvB-\boldsymbol{1}_{m'\times m'}\rVert$
    when assuming lower bounds on the spectrum of the moments
    $\vM_{\vf',\vf'}$ and $\vL'\vL'^\top$, however
    these assumptions are again not invariant under equivalence transformations of the model.
\end{remark}
\begin{proof}
We start from \eqref{eq:decomp2}
and decompose the second term further
using 
\begin{align}
\begin{split}
   \lVert\vL^\top \tvB \vf'(\vx)-\vL'^\top \vf'(\vx)\rVert^2
  & = \lVert \vDelta^\top \tvB \vf'(\vx)+\vL'^\top \vB^\top \tvB \vf'(\vx) - \vL'^\top \vf'(\vx)\rVert^2
  \\
  &=\lVert \vDelta^\top \tvB \vf'(\vx)\rVert^2+\lVert\vL'^\top \vB^\top \tvB \vf'(\vx) - \vL'^\top \vf'(\vx)\rVert^2.
  \end{split}
\end{align}
Here the Pythagorean identity in the last step 
follows as in Lemma~\ref{le:decomp1} from $\vDelta\vL'^\top=0$ (see \eqref{eq:orthogonality_vDelta}).
From Lemma~\ref{le:decomp2} we infer that
\begin{align}
\begin{split}
 \mathbb{E}_{\vx\sim p_{\vx}}
    \lVert\vL'^\top \vB^\top \tvB \vf'(\vx) - \vL'^\top \vf'(\vx)
    \rVert^2
   \leq 
    \mathbb{E}_{\vx\sim p_{\vx}} \lVert \vu(\vx)-\vu'(\vx)\rVert^2.
  \end{split}  
\end{align}
This implies that $\vB^\top \tvB=\boldsymbol{1}_{m'\times m'}$ (when $\vM_{\vf',\vf'}$ and $\vL \vL^\top$ are non-degenerate), in particular $m=m'$.
Moreover, we conclude using $(a+b)^2\leq 2a^2+2b^2$ that 
\begin{align}
\begin{split}
    \mathbb{E}_{\vx\sim p_{\vx}}
    \lVert\vL'^\top \vB^\top \tvB \vf'(\vx) - \vu(\vx)
    \rVert^2 
    &\leq 2 \mathbb{E}_{\vx\sim p_{\vx}}\left(\lVert\vL'^\top \vB^\top \tvB \vf'(\vx) - \vu'(\vx)\rVert^2
    +\lVert\vu'(\vx) - \vu(\vx)
    \rVert^2\right)
    \\
    &\leq 4  \mathbb{E}_{\vx\sim p_{\vx}} \lVert\vu'(\vx) - \vu(\vx)
    \rVert^2.
    \end{split}
\end{align}
In particular the logits of $(\vB\vg', \tvB\vf')$
are, on average, close to the logits of the model $(\vg,\vf)$.
\end{proof}

\subsection{Lower Bounds for the Mean Canonical Correlation}
\label{app:logit_dist_bounds_m_CCA}
In this section we collect applications of Theorem~\ref{th:cca}. The proof of Theorem~\ref{th:bound_mcc} can be found at the end of this section.
First we show how the theorem can be used to bound the mean canonical correlation coefficient, $m_{\mathrm{CCA}}$.  
\begin{theorem}\label{th:application1}
    Assume the hypothesis of Theorem~\ref{th:cca}, 
    in particular we assume that the moment matrix $\vM_{\vu,\vu}=\mathbb{E}_{\vx\sim p_{\vx}}(\vu(\vx)\vu(\vx)^\top)$ has rank $m$.
    Denote by $\mu_{\min}=\mu_m>0$ the smallest non-zero eigenvalue of $\vM_{\vu,\vu}$.
    Then the bounds
    \begin{align}
        \sum_{i=1}^m \rho_i\geq
        \sum_{i=1}^m \rho_i^2
        \geq m - \frac{\mathbb{E}_{\vx\sim p_{\vx}} \lVert \vu(\vx)-\vu'(\vx)\rVert^2}{\mu_{\min}}\quad \text{and}
        \\
        \sum_{i=1}^m \trho_i\geq
        \sum_{i=1}^m \trho_i^2
        \geq m - \frac{\mathbb{E}_{\vx\sim p_{\vx}} \lVert \vu(\vx)-\vu'(\vx)\rVert^2}{\mu_{\min}}
    \end{align}
    hold.
\end{theorem}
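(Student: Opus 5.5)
The plan is to read off both inequalities from Theorem~\ref{th:cca} using only that $\mu_i\geq\mu_{\min}$ and that canonical correlations lie in $[0,1]$. I give the argument for the unembedding correlations $\rho_i$; the one for the moment-based correlations $\trho_i$ is identical, using \eqref{eq:boundtrho} in place of \eqref{eq:boundrho}.

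\textbf{Step 1.} Every factor $1-\rho_i^2$ is non-negative, because each $\rho_i\in[0,1]$ is a singular value of the normalized cross-moment matrix \eqref{eq:defmatricCCAg}. By the rank assumption on $\vM_{\vu,\vu}$, each of the $m$ eigenvalues $\mu_i$ is at least $\mu_{\min}=\mu_m>0$. Therefore
\begin{align}
\mu_{\min}\sum_{i=1}^m (1-\rho_i^2)\leq \sum_{i=1}^m (1-\rho_i^2)\mu_i\leq \mathbb{E}_{\vx\sim p_{\vx}}\lVert \vu(\vx)-\vu'(\vx)\rVert^2,
\end{align}
where the last inequality is \eqref{eq:boundrho}. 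Dividing by $\mu_{\min}$ and rearranging gives $\sum_{i=1}^m\rho_i^2\geq m-\mathbb{E}_{\vx\sim p_{\vx}}\lVert\vu(\vx)-\vu'(\vx)\rVert^2/\mu_{\min}$. If $m'<m$, the convention $\rho_i=0$ for $i>m'$ is harmless: those terms contribute $\mu_i\geq\mu_{\min}$ to the left-hand side, so the inequality still holds.

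\textbf{Step 2.} The remaining inequality $\sum_i\rho_i\geq\sum_i\rho_i^2$ follows termwise from $0\leq\rho_i\leq1$, since then $\rho_i^2\leq\rho_i$.

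\textbf{Main point to check.} The only step that needs care is confirming that the $\mu_i$ appearing in Theorem~\ref{th:cca} are the moment eigenvalues of $\vM_{\vu,\vu}$, so that they coincide with those used here. This is immediate, since Theorem~\ref{th:cca} is stated with exactly these $\mu_i$. Consequently no new estimate is required beyond invoking Theorem~\ref{th:cca}.
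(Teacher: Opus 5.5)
Your proposal is correct and is essentially the paper's own proof. The paper likewise divides the bound of Theorem~\ref{th:cca} by $\mu_{\min}$ and uses $\mu_i/\mu_{\min}\geq 1$ and $1-\rho_i^2\geq 0$ (and the same for $\trho_i$); your Step~2, $\rho_i^2\leq\rho_i$ for $\rho_i\in[0,1]$, just spells out the first inequality that the paper leaves implicit.
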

\begin{proof}
    Theorem~\ref{th:cca} implies 
    \begin{align}
       \frac{ \mathbb{E}_{\vx\sim p_{\vx}} \lVert \vu(\vx)-\vu'(\vx)\rVert^2}{\mu_{\min}}
        \geq \sum_{i=1}^m (1-\rho_i^2)\frac{\mu_i}{\mu_{\min}}
        \geq \sum_{i=1}^m (1-\rho_i^2)
    \end{align}
    where we used $\rho_i^2\leq 1$ (so $1-\rho_i^2\geq 0$) and $\frac{\mu_i}{\mu_{\min}} \geq 1$ in the last step.
    This implies the first bound and the second estimate follows similarly.
\end{proof}
Note that there are at most $\min(m', m)$ non-zero canonical correlations so the result directly implies 
lower bound on $m'$ to match 
the other models distribution.  Or, stated equivalently, we can bound the information loss a smaller model must incur.
In particular, we conclude lower bounds for the representation dimension of a student that 
tries to match
the teacher's logits with a given accuracy
and those bounds are explicit functions of the teacher's logits.
This result can be slightly refined and we obtain the following theorem which closely resembles the reconstruction bound for PCA.
\begin{theorem}\label{th:mcca_like_pca}
   Assume the hypothesis of Theorem~\ref{th:cca}
   and denote the eigenvalues of $\vM_{\vu,\vu}$
   in decreasing order by $\mu_i$.
    Suppose the dimension $m'$ of the student model satisfies $m'<m$. 
    Then we find
    \begin{align}
        \mathbb{E}_{\vx\sim p_{\vx}} \lVert \vu(\vx)-\vu'(\vx)\rVert^2\geq 
        \sum_{i=m'+1}^m \mu_i.
    \end{align}
    
\end{theorem}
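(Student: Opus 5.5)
This follows directly from \cref{th:cca}, using the convention that $\rho_i=0$ and $\trho_i=0$ for $i>m'$ when $m'<m$. First apply the bound \eqref{eq:boundrho}:
\begin{align}
    \mathbb{E}_{\vx\sim p_{\vx}} \lVert \vu(\vx)-\vu'(\vx)\rVert^2\geq \sum_{i=1}^m (1-\rho_i^2)\mu_i .
\end{align}
Each term on the right-hand side is non-negative: the canonical correlations satisfy $0\leq\rho_i\leq 1$, since they are singular values of $(\vL\vL^\top)^{-1/2}\vL\vL'^\top(\vL'\vL'^\top)^{-1/2}$, which is a product of contractions. Also $\mu_i\geq 0$, since $\vM_{\vu,\vu}$ is positive semi-definite.

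Next, drop the first $m'$ terms, which can only decrease the sum because they are non-negative. For the remaining indices $i=m'+1,\ldots,m$ we have $\rho_i=0$. There are only $\min(m,m')=m'$ canonical correlations, and the convention in \cref{th:cca} and \cref{le:canonical_Delta} sets the others to zero. Hence $(1-\rho_i^2)\mu_i=\mu_i$ for these indices, and we obtain
\begin{align}
    \mathbb{E}_{\vx\sim p_{\vx}} \lVert \vu(\vx)-\vu'(\vx)\rVert^2\geq \sum_{i=m'+1}^m \mu_i ,
\end{align}
which is the claim. The same argument works with \eqref{eq:boundtrho} and the $\trho_i$.

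The only point needing care is that the zero-padding convention is legitimate, i.e.\ that \cref{le:canonical_Delta} really gives eigenvalue $1$ for the padded indices. This holds because $(\vL\vL^\top)^{-1/2}\vDelta\vDelta^\top(\vL\vL^\top)^{-1/2}=\boldsymbol{1}-\vC\vC^\top$ with $\vC\in\mathbb{R}^{m\times m'}$. The matrix $\vC\vC^\top$ has rank at most $m'$, so at least $m-m'$ of its eigenvalues vanish. Consequently at least $m-m'$ eigenvalues of $\boldsymbol{1}-\vC\vC^\top$ equal $1$.

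In the von Neumann step of \cref{le:eigenvalue_bound1}, these eigenvalues $1$ are the largest ones. They are paired with the smallest $\mu_i$, namely $\mu_{m'+1},\ldots,\mu_m$, which is exactly the ordering used above. So no new estimate is required beyond this ordering check.
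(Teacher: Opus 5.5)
Your proposal is correct and matches the paper's argument: the paper also proves this in one line, applying \eqref{eq:boundrho} (or \eqref{eq:boundtrho}) from Theorem~\ref{th:cca} and using $\rho_i=0$ for $i>m'$. Your extra checks make explicit what the paper leaves implicit: that the discarded terms $i\le m'$ are non-negative, and that the zero-padding is consistent with the rank of the $m\times m'$ canonical-correlation matrix and with the von Neumann pairing.
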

\begin{proof}
    The result follows from Theorem~\ref{th:cca}
    by using that $\rho_i=0$ for $i>m'$.
\end{proof}

Finally, we discuss the modifications to consider the standard covariance matrix based canonical correlations. For this, we will need the slightly stronger diversity condition that the covariance matrix 
\begin{align}
\vSigma_{\vu,\vu}=\mathbb{E}_{\vx\sim p_{\vx}}\left(\left(\vu(\vx)-\mathbb{E}_{\vx\sim p_{\vx}}(\vu(\vx))\right)\left(\vu(\vx)-\mathbb{E}_{\vx\sim p_{\vx}}(\vu(\vx))\right)^\top\right)
\end{align}
has rank $m$.
Let us denote the standard (covariance-based) canonical correlations of $\vf$ and $\vf'$ 
by $\bar{\rho}_i$. Then the following result holds.
\begin{theorem}
\label{th:application3}
    Consider two models $(\vf,\vg)$ and $(\vf',\vg')$ with representation dimensions $m$ and $m'$
    and assume that $\vSigma_{\vf,\vf}$, $\vL$
    and $\vSigma_{\vf',\vf'}$, $\vL'$ have maximal rank $m$ and $m'$ respectively.
    Denote by $\mu_m>0$ the smallest non-zero eigenvalue of $\vSigma_{\vu,\vu}$.
    Then the bounds
    \begin{align}\label{eq:canonical_restate}
        \sum_{i=1}^m \rho_i\geq
        \sum_{i=1}^m \rho_i^2
        \geq m - \frac{\mathbb{E}_{\vx\sim p_{\vx}} \lVert \vu(\vx)-\vu'(\vx)\rVert^2}{\mu_{m}}\quad \text{and}
        \\ 
        \sum_{i=1}^m \bar{\rho}_i\geq
        \sum_{i=1}^m \bar{\rho}_i^2
        \geq m - \frac{\mathbb{E}_{\vx\sim p_{\vx}} \lVert \vu(\vx)-\vu'(\vx)\rVert^2}{\mu_{m}}
    \end{align}
    hold.
\end{theorem}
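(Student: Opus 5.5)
The plan is to reduce \cref{th:application3} to the moment-based result \cref{th:application1}, applied to centered quantities.

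\textbf{Unembedding bound.} The canonical correlations $\rho_i$ of $\vg$ and $\vg'$ do not depend on $\vf$. I would rerun Lemmas~\ref{le:decomp1}, \ref{le:canonical_Delta} and \ref{le:eigenvalue_bound1} with $\vf(\vx)$ replaced by the centered embedding $\bar\vf(\vx)=\vf(\vx)-\mathbb{E}_{\vx\sim p_\vx}\vf(\vx)$, and similarly for $\vf'$. The decomposition of Lemma~\ref{le:decomp1} is pointwise in $\vx$ and linear in $(\vf,\vf')$, so it holds for the centered logit difference $\bar\vu-\bar\vu'$. This gives
\[
\sum_i (1-\rho_i^2)\mu_i \leq \mathbb{E}\lVert (\vu-\vu')(\vx) - \mathbb{E}(\vu-\vu')\rVert^2 .
\]
Here $\mu_i$ are now the eigenvalues of $\vL^\top\vSigma_{\vf,\vf}\vL=\vSigma_{\vu,\vu}$; the identification goes through the same spectrum argument via $\sigma(\vA\vB)=\sigma(\vB\vA)$. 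Next I use the variance inequality $\mathbb{E}\lVert \vZ-\mathbb{E}\vZ\rVert^2\leq \mathbb{E}\lVert \vZ\rVert^2$ with $\vZ=\vu-\vu'$. This bounds the right-hand side by $d_{\mathrm{logit}}^2$.

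\textbf{Embedding bound.} For $\bar\rho_i$ I apply Lemmas~\ref{le:decomp2}, \ref{le:canonical_Delta2} and \ref{le:eigenvalue_bound2} to the pair $(\bar\vf,\bar\vf')$ with unembedding matrices $\vL,\vL'$. For centered variables the moment matrices $\vM_{\bar\vf,\bar\vf}$, $\vM_{\bar\vf,\bar\vf'}$, $\vM_{\bar\vf',\bar\vf'}$ are exactly the covariances. So the moment-based correlations of $(\bar\vf,\bar\vf')$ are the standard correlations $\bar\rho_i$, and the relevant spectrum is that of $\vL^\top\vSigma_{\vf,\vf}\vL=\vSigma_{\vu,\vu}$. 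The rank hypotheses on $\vSigma_{\vf,\vf}$ and $\vSigma_{\vf',\vf'}$ are what make the required inverses exist. The resulting right-hand side is the expected squared norm of the centered logit difference $\bar\vu-\bar\vu'$, which the variance inequality again bounds by $d_{\mathrm{logit}}^2$.

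\textbf{Conclusion.} With $\sum_i(1-\rho_i^2)\mu_i\leq d_{\mathrm{logit}}^2$, and the same for $\bar\rho_i$, I argue as in \cref{th:application1}. Divide by $\mu_m$ and use $\mu_i/\mu_m\geq1$ together with $1-\rho_i^2\geq0$. This gives $\sum\rho_i^2\geq m-d_{\mathrm{logit}}^2/\mu_m$. Finally $\rho_i\in[0,1]$ gives $\rho_i\geq\rho_i^2$, which yields the first inequality in each chain.

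The step needing the most care is checking that nothing in the lemmas depended on $\vf$ being uncentered. In particular, centering must not interfere with the gauge (centered unembeddings), and the $\mu_i$ must be identified with the nonzero eigenvalues of $\vSigma_{\vu,\vu}$. The rank-$m$ assumption on $\vSigma_{\vu,\vu}$ ensures $\mu_m>0$.
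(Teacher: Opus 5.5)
Your proposal is correct and takes essentially the paper's route. You center the embeddings, so that the logit second-moment matrix becomes $\vSigma_{\vu,\vu}$ and the moment-based correlations of the centered pair coincide with the covariance-based ones. You then apply the moment-based machinery (the lemmas behind Theorem~\ref{th:application1}, which the paper invokes directly) and finish with $\mathbb{E}\lVert \vu^c-(\vu')^c\rVert^2\leq \mathbb{E}\lVert \vu-\vu'\rVert^2$. Your remark that the $\rho_i$ do not depend on $\vf$ usefully spells out the unembedding case, which the paper only calls ``similar''.
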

\begin{proof}
Denote the mean of the embeddings by
$\bar{\vf}=\mathbb{E}_{\vx\sim p_{\vx}}(\vf(\vx))$
and 
$\bar{\vf}'=\mathbb{E}_{\vx\sim p_{\vx}}(\vf'(\vx))$.
We consider the embedding-centred models
$(\vf^c, \vg)=(\vf-\bar{\vf}, \vg)$ and $((\vf')^c,\vg')=(\vf'-\bar{\vf}',\vg')$.
The logits $\vu^c$ of the shifted model are given by
\begin{align}
 \vu^c=   \vL^\top (\vf(\vx)-\bar{\vf})
    =\vu(\vx)-\mathbb{E}_{\vx\sim p_{\vx}}(\vu(\vx))
\end{align}
where $\vu$ are the logits of the original model 
and a similar expression holds for $\vu'$.
This implies 
$\vM_{\vu^c,\vu^c}=\vSigma_{\vu,\vu}$.
Moreover, the moment-based and covariance-based canonical correlations agree for the shifted models
$(\vf^c, \vg)$ and $((\vf')^c,\vg')$
and they further agree with the covariance-based (i.e., shift invariant) canoncial correlations of $\vf$ and $\vf'$.
We can thus now apply Theorem~\ref{th:application1}
to the models with the shifted logits and
we find 
\begin{align}
    \sum_{i=1}^m \bar{\rho}_i\geq
        \sum_{i=1}^m \bar{\rho}_i^2
        \geq m - \frac{\mathbb{E}_{\vx\sim p_{\vx}} \lVert \vu^c(\vx)-(\vu')^c(\vx)\rVert^2}{\mu_{m}}
        \geq m - \frac{\mathbb{E}_{\vx\sim p_{\vx}} \lVert \vu(\vx)-\vu'(\vx)\rVert^2}{\mu_{m}}.
\end{align}
Here we used 
\begin{align}
   \mathbb{E}_{\vx\sim p_{\vx}} \lVert \vu(\vx)-\vu'(\vx)\rVert^2\geq \mathbb{E}_{\vx\sim p_{\vx}} \lVert \vu^c(\vx)-(\vu')^c(\vx)\rVert^2
\end{align}
in the last step.
The proof of \eqref{eq:canonical_restate} is similar.
\end{proof}

\begin{proof}[Proof of Theorem~\ref{th:bound_mcc}]
The statement follows directly from Theorem~\ref{th:application3} when plugging in the definition \eqref{eq:def_mcc} of  $m_{\mathrm{CCA}}$. 
\end{proof}

\subsection{Illustration of Bound from \cref{th:cca}}
\label{app:illustration_bound_f1}

In \cref{fig:f1_bound} we illustrate the bound from \cref{th:cca} using $5$ seeds of students trained to match a teacher model on \CIFAR. We see that the dashed lines (computed values of the left hand side of the inequality) are above the solid lines (computed values of the right hand side of the inequality) for all measured epochs as the theorem says.

\begin{figure}[h]
    \centering
    \includegraphics[width=0.7\textwidth]{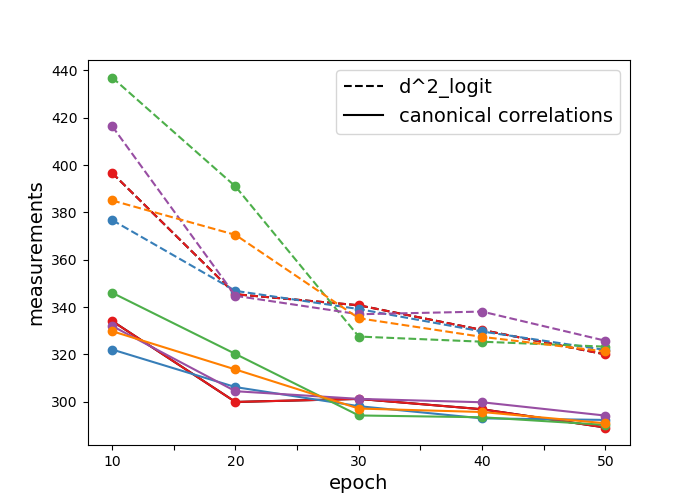}
    \caption{Illustration of the bound in \cref{th:cca} for $5$ seeds of students to a teacher on \CIFAR at training epochs $10, 20, 30, 40$ and $50$. The solid lines are the sum including the canonical correlations between student and teacher from the left hand side of the inequality, while the dashed lines are the squared logit distances from the right hand side. We see that the squared logit distances are larger for all epochs. }
    \label{fig:f1_bound}
\end{figure}

\section{Example of Models with small KL Divergence and Equal Embeddings, but Dissimilar Unembeddings}
\label{app:example_small_kl_eq_emb_dissim_unemb}

In this section, we give an illustration of why we want to measure representational similarity that takes into account the equivalence relation of the  identifiability class and why it is not enough to consider a measure which only considers similarity between the embeddings (or unembeddings) (see \cref{app:example_max_mCCA_diff_dists} for another example). 

We do this by constructing a pair of models with equal embeddings and small KL divergence, but dissimilar unembeddings as measured with $m_{\mathrm{CCA}}$. Note that these models will also be dissimilar in terms of $d_{\mathrm{rep}}$, since $d_{\mathrm{rep}}$ includes the unembeddings through the unembeddings matrices. These constructed models are displayed in \cref{fig:model_1_emb_unemb,fig:model_2_emb_unemb}

\begin{figure}[h]
    \noindent\begin{minipage}[t]{0.47\linewidth}
        \centering
        \includegraphics[width=\textwidth]{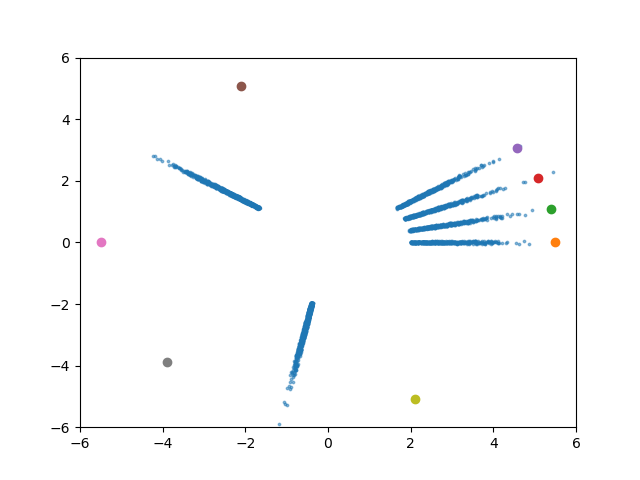}
        \caption{Embeddings of $(\vf, \vg)$ in \textcolor{blue}{\textbf{blue}}. 
        Other coloured dots mark the unembeddings, each belonging to a label.}
        \label{fig:model_1_emb_unemb}
    \end{minipage}
    \hspace*{\fill} 
    \begin{minipage}[t]{0.47\linewidth}
        \centering
        \includegraphics[width=\textwidth]{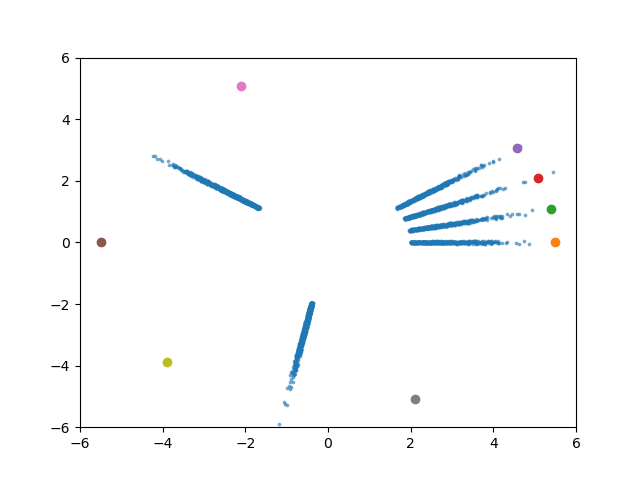}
        \caption{Embeddings of $(\vf', \vg')$  in \textcolor{blue}{\textbf{blue}}. Other coloured dots mark the unembeddings. 
        Note that brown and pink-colored unembeddings are swapped compared to \cref{fig:model_1_emb_unemb}, as are grey and yellow.}
          \label{fig:model_2_emb_unemb}
    \end{minipage}
\end{figure}

Assume we have models defining a distribution over $8$ labels, i.e., $\calY = \{ y_1, ..., y_8 \}$, and with representation dimensionality $m=2$.
For a subset of input data, a reference model $(\vf, \vg) \in \Theta$ assigns high probability to one among the first four labels, as displayed by embeddings aligning to unembeddings directions $\vg(y_i)$, $i=1, \ldots, 4$.  
On the remaining input points, the model assign high mass, but equal probability, to either the labels $y_5$ and $y_6$, or $y_7$ and $y_8$. This can be seen from the embeddings aligned to $\vg(y_5) + \vg(y_6)$ or to $\vg(y_7) + \vg(y_8)$. 





A second model $(\vf', \vg') \in \Theta$ has equal embeddings to $(\vf, \vg)$, that is $\vf'(\vx) = \vf(\vx) $ for all $\vx \in \calX$, but dissimilar unembeddings. It is constructed so as to have equal unembeddings for the first $4$ labels, i.e., $\vg'(y_i) = \vg(y_i)$ for $i=1, \ldots, 4$, and swapped unembeddings for $y_5$ and $y_6$ swapped and for $y_7$ and $y_8$:
\[
\begin{cases}
    \vg'(y_5)  =\vg(y_6) \\
    \vg'(y_6)  =\vg(y_5) 
\end{cases}
\quad \text{ and } \quad 
\begin{cases}
    \vg'(y_7)  =\vg(y_8) \\
    \vg'(y_8)  =\vg(y_7) 
\end{cases} \, .
\]


Despite the unembeddings of the models being dissimilar in a linear sense, we can now make $d_\mathrm{KL}$ arbitrarily small
by increasing the norm of the embeddings or the unembeddings (For the illustrated example, we have $d_\mathrm{KL}((f, g)| (f', g')) = 0.002 $ and $d_\mathrm{KL}((f', g') | (f, g)) = 0.0017 $). 
Notice that the embeddings that are between the unembeddings on the left are exactly in the middle in terms of angle. Therefore, since all the unembeddings have the same norm, probability values of the two most likely labels will be equal for the two models. 
However, increasing the norm, will not make the unembeddings of the two models closer to being linear transformations of each other. 

Note also that we can choose any of the first four labels $y_1, ..., y_4$, to make the $\mathbf{L}, \mathbf{L}$ matrices with columns $\tilde{\mathbf{g}}(y_i)$, $\tilde{\mathbf{g}}'(y_i)$, 
and this will give us $\mathbf{L} = \mathbf{L'}$. In particular, we have $\mathbf{A}= \mathbf{L}^{-\top} \mathbf{L'}^{\top} = \mathbf{I}$, so 
\begin{align}
    \mathbf{f}(\mathbf{x}) = \mathbf{A}\mathbf{f}'(\mathbf{x}) 
\end{align}
for all $\mathbf{x}$ like in the identifiability result, but we do not have the corresponding part for the unembeddings. That is, $\tilde\vg(y) \neq \vA^{-\top} \tilde\vg'(y)$. 

In terms of the mean canonical correlation, $m_{\mathrm{CCA}}$, we have for the embeddings $m_{\mathrm{CCA}}(\vf(\vx), \vf'(\vx))=1$ and for the unembeddings $m_{\mathrm{CCA}}(\vg(y), \vg'(y))=0.67$ (The $m_{\mathrm{CCA}}$ for the unembeddings can be made smaller by spreading out unembeddings for labels $y_5, y_6, y_7, y_8$ further).  

Thus, we see that there can be a perfect linear relationship between embeddings and, even if the distributions are very close in terms of KL divergence, there is still no guarantee for the unembeddings to be similar up to the inverse linear transformation of the embeddings. 


\section{Example of Models with Maximal Mean Canonical Correlation, but Different Distributions}
\label{app:example_max_mCCA_diff_dists}

In this section, we give an illustration of why we want to measure representational similarity that takes into account the equivalence relation of the  identifiability class and why it is not enough to consider a measure which only considers similarity between the embeddings (or unembeddings) (see \cref{app:example_small_kl_eq_emb_dissim_unemb} above for another example). 

The main idea of this example is that we can make two models, where the representations of one is a rotated version of the representations of the other, but because the rotations of the embeddings and unembeddings do not match, the distributions of the models are very different (see \cref{fig:model_1_emb_unemb_before_rotation,fig:model_2_emb_unemb_after_rotation}). We give the specific details below.

Let the models have $5$ possible labels. We construct model 1, $(\vf, \vg)$, to have unembeddings at angles $0, \pi/2, \pi, \frac{5\pi}{4}$, and $\frac{7\pi}{4}$ with lengths $8$. The embeddings are gaussian  distributions with centers at the same angles and lengths $5$, \cref{fig:model_1_emb_unemb_before_rotation}. 

We construct model 2, $(\vf', \vg')$, by giving it unembeddings equal a clockwise rotation of the unembeddings of model 1 by $\frac{\pi}{4}$, and embeddings equal to a counter-clockwise rotation of the embeddings of model 1 by $\frac{\pi}{4}$, \cref{fig:model_2_emb_unemb_after_rotation}.

\begin{figure}[h]
    \noindent\begin{minipage}[t]{0.47\linewidth}
        \centering
        \includegraphics[width=\textwidth]{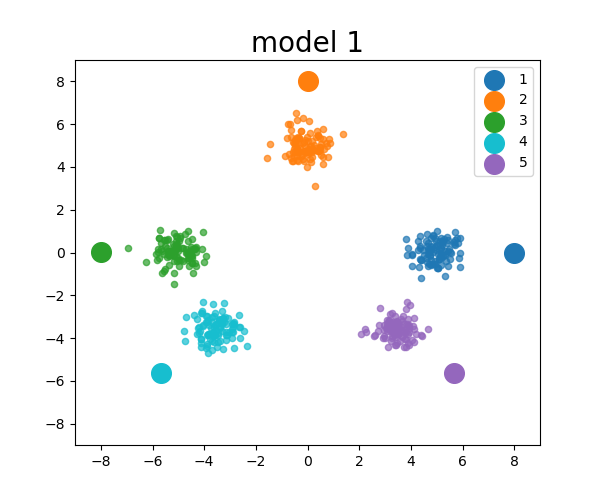}
        \caption{Embeddings and unembeddings of model 1, $(\vf, \vg)$. Embeddings are coloured by the label the model will assign to them.}
        \label{fig:model_1_emb_unemb_before_rotation}
    \end{minipage}
    \hspace*{\fill} 
    \begin{minipage}[t]{0.47\linewidth}
        \centering
        \includegraphics[width=\textwidth]{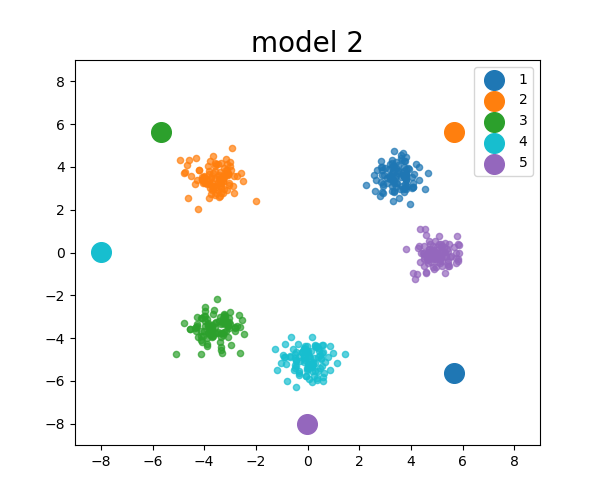}
        \caption{Embeddings and unembeddings of model 2, $(\vf', \vg')$. Embeddings are coloured by the label model 1 will assign to them. Both embeddings and unembeddings of model 2 are rotations of the embeddings and unembeddings of model 1 (so will have $m_{\mathrm{CCA}}$ of $1$), but the distributions of the models are very different. For example, the embeddings which model 1 will assign label 1, will be assigned label 2 by model 2. }
          \label{fig:model_2_emb_unemb_after_rotation}
    \end{minipage}
\end{figure}

Now by construction, $m_{\mathrm{CCA}}(\vf(\vx), \vf'(\vx)) = 1$ and $m_{\mathrm{CCA}}(\vg(y), \vg'(y)) = 1$, but the distributions of the models are very different. For example, the embeddings which model 1 will assign label 1, will be assigned label 2 by model 2, and those assigned label 2 by model 1, will be assigned label 3 by model 2.


\section{Theoretical Material for Section \ref{sec:logit_dist_bounds_d_rep}: The Logit Distance Bounds the Linear Identifiability Dissimilarity}
\label{app:d_LDD_is_norm_d_reps_full_proof}

In this section, we will first show how the squared distance between logits can be written as a normalized distance between representations \cref{app:logit_dist_vs_rep_dist}, and then show how the logit distance bounds the linear identifiability dissimilarity from \cref{def:direct_rep_distance} in \cref{app:norm_dist_bounds_direct_dist_full_proof}.

\subsection{Connection Between Distance between Logits and Representations}
\label{app:logit_dist_vs_rep_dist}

\begin{theorem}
    \label{theorem:distrubutions_same_as_representations_app}
    Let $p, p'$ be as defined above, and $m$ the dimension of the representations. Let $N = \binom{k-2}{m-1}$. Then the squared distance between logits can be written as a normalized distance between the representations in the following sense: Let $\tilde\vL_{\calJ} = \vU\vD\vV^{\top}$ be the singular value decomposition of $\tilde\vL_{\calJ}$, and let $\tilde\vB_{\calJ} = \vD^{-1}\vU^{\top}$. Then we have
    \begin{align}
        \label{eq:norm_d_rep_app}
        &\left\Vert \vu(\vx) - \vu'(\vx) \right\Vert_2^2 = \frac{1}{2kN} \sum_{\tilde y \in \mathcal{Y}} \sum_{\calJ \subseteq \calY \setminus \{\tilde y\}} \left\Vert \tilde\vB_{\calJ}^{-\top} \left( \vf(\vx) - \tilde\vA_{\calJ}\vf'(\vx) \right) \right\Vert_2^2 \; .
    \end{align}
\end{theorem}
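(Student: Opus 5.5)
The plan is to start from \cref{lemma:distribution_dist_model_class}. It already expresses $\lVert \vu(\vx)-\vu'(\vx)\rVert_2^2$ as $\frac{1}{2kN}\sum_{\tilde y}\sum_{\calJ}\lVert \tilde\vL_{\calJ}^{\top}\vf(\vx)-\tilde\vL_{\calJ}'^{\top}\vf'(\vx)\rVert_2^2$, with the same normalisation $N=\binom{k-2}{m-1}$. So it suffices to prove, for each fixed pivot $\tilde y$ and label set $\calJ$, the termwise identity
\begin{align}
\lVert \tilde\vL_{\calJ}^{\top}\vf(\vx)-\tilde\vL_{\calJ}'^{\top}\vf'(\vx)\rVert_2 = \lVert \tilde\vB_{\calJ}^{-\top}(\vf(\vx)-\tilde\vA_{\calJ}\vf'(\vx))\rVert_2 .
\end{align}
Summing these identities then yields \eqref{eq:norm_d_rep_app} directly. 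Invertibility of every $\tilde\vL_{\calJ}$ comes from \cref{assumption:general_position}, which is implicitly in force since $\tilde\vA_{\calJ}$ must be defined for all choices.

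For the termwise identity, I first factor out $\tilde\vL_{\calJ}^{\top}$. Using $\tilde\vA_{\calJ}=\tilde\vL_{\calJ}^{-\top}\tilde\vL_{\calJ}'^{\top}$, we have $\tilde\vL_{\calJ}^{\top}\tilde\vA_{\calJ}=\tilde\vL_{\calJ}'^{\top}$, and therefore
\begin{align}
\tilde\vL_{\calJ}^{\top}\vf(\vx)-\tilde\vL_{\calJ}'^{\top}\vf'(\vx)=\tilde\vL_{\calJ}^{\top}\bigl(\vf(\vx)-\tilde\vA_{\calJ}\vf'(\vx)\bigr).
\end{align}
Next I insert the SVD $\tilde\vL_{\calJ}=\vU\vD\vV^{\top}$, which gives $\tilde\vL_{\calJ}^{\top}=\vV\vD\vU^{\top}$. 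Since $\vV$ is orthogonal, multiplying by it preserves the Euclidean norm, so the norm above equals $\lVert \vD\vU^{\top}(\vf(\vx)-\tilde\vA_{\calJ}\vf'(\vx))\rVert_2$.

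It remains to identify $\vD\vU^{\top}$ with $\tilde\vB_{\calJ}^{-\top}$. Here $\tilde\vB_{\calJ}=\vD^{-1}\vU^{\top}$, so $\tilde\vB_{\calJ}^{-1}=\vU\vD$ and $\tilde\vB_{\calJ}^{-\top}=\vD\vU^{\top}$, using that $\vD$ is diagonal (hence symmetric) and invertible. This completes the termwise identity.

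Every step is routine. The only points needing care are the direction of the transpose in $\tilde\vA_{\calJ}$ when pulling out $\tilde\vL_{\calJ}^{\top}$, and remembering that the orthogonal factor on the left is $\vV$ rather than $\vU$ because we are working with $\tilde\vL_{\calJ}^{\top}$.
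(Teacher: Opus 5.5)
Your proof is correct and follows the paper's argument. Both reduce to \cref{lemma:distribution_dist_model_class} and prove the termwise identity via the SVD: the paper writes $\tilde\vB_{\calJ}^{-\top}\tilde\vL_{\calJ}^{-\top}=(\tilde\vB_{\calJ}\tilde\vL_{\calJ})^{-\top}=\vV^{\top}$ and drops the orthogonal factor, while you factor out $\tilde\vL_{\calJ}^{\top}=\vV\vD\vU^{\top}$ and identify $\vD\vU^{\top}=\tilde\vB_{\calJ}^{-\top}$, which is the same computation read in the opposite direction.
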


\begin{proof}
    We consider the norm on the right hand side of \cref{eq:norm_d_rep_app}:
    \begin{align}
        &\Vert \tilde\vB_{\calJ}^{-\top} \left( \vf(\vx) - \tilde\vA_{\calJ}\vf'(\vx) \right) \Vert_2^2 \\
        &= \Vert \tilde\vB_{\calJ}^{-\top} \tilde\vL_{\calJ}^{-\top} \left(\tilde\vL_{\calJ}^{\top}\vf(\vx) - \tilde\vL_{\calJ}^{'\top}\vf'(\vx) \right) \Vert_2^2\\
        &= \Vert (\tilde\vB_{\calJ} \tilde\vL_{\calJ})^{-\top} \left(\tilde\vL_{\calJ}^{\top}\vf(\vx) - \tilde\vL_{\calJ}^{'\top}\vf'(\vx) \right) \Vert_2^2
    \end{align}
    
    Using the definition of $\tilde\vB_{\calJ} = \vD^{-1}\vU^{\top}$ and recalling that the singular value decomposition of $\tilde\vL_{\calJ}$ was $\tilde\vL_{\calJ} = \vU\vD\vV^{\top}$, we see that 
    \begin{align}
        \tilde\vB_{\calJ}\tilde\vL_{\calJ} = \vD^{-1}\vU^{\top}\vU\vD\vV^{\top} = \vV^{\top}
    \end{align}
    So we have that 
    \begin{align}
        &\Vert (\tilde\vB_{\calJ} \tilde\vL_{\calJ})^{-\top} \left(\tilde\vL_{\calJ}^{\top}\vf(\vx) - \tilde\vL_{\calJ}^{'\top}\vf'(\vx) \right) \Vert_2^2 \\
        &= \Vert \vV^{\top} \left(\tilde\vL_{\calJ}^{\top}\vf(\vx) - \tilde\vL_{\calJ}^{'\top}\vf'(\vx) \right) \Vert_2^2
    \end{align}
    Now since $\vV^{\top}$ is an orthonormal matrix, it does not change the norm of a vector. Recall, that for a vector $\va$, we have:
    \begin{align}
        \Vert \vV^{\top}\va \Vert_2^2 =  (\vV^{\top}\va)^{\top}\vV^{\top}\va = \va^{\top}\vV\vV^{\top}\va = \va^{\top}\va = \Vert \va\Vert_2^2
    \end{align}
    So we have that 
    \begin{align}
        &\Vert \vV^{\top} \left(\tilde\vL_{\calJ}^{\top}\vf(\vx) - \tilde\vL_{\calJ}^{'\top}\vf'(\vx) \right) \Vert_2^2 \\
        &= \Vert \tilde\vL_{\calJ}^{\top}\vf(\vx) - \tilde\vL_{\calJ}^{'\top}\vf'(\vx) \Vert_2^2
    \end{align}
    So using \cref{lemma:distribution_dist_model_class}, we see that
    \begin{align}
        \frac{1}{2kN} \sum_{\tilde y \in \mathcal{Y}} \sum_{\calJ \subseteq \calY \setminus \{\tilde y\}} \mathbb{E}_{\vx \sim p_\vx}  \Vert \tilde\vB_{\calJ}^{-\top} \left(\vf(\vx) - \tilde\vA_{\calJ}\vf'(\vx) \right) \Vert_2^2 &= \frac{1}{2kN} \sum_{\tilde y \in \mathcal{Y}} \sum_{\calJ \subseteq \calY \setminus \{\tilde y\}} \mathbb{E}_{\vx \sim p_\vx}   \Vert \tilde\vL_{\calJ}^{\top}\vf(\vx) - \tilde\vL_{\calJ}^{'\top}\vf'(\vx) \Vert_2^2 \\
        &= \left\Vert \vu(\vx) - \vu'(\vx) \right\Vert_2^2
    \end{align}
\end{proof}

This gives us the following corollary (\cref{cor:d_rep_zero_makes_distributions_equal} from the main paper):
\begin{corollary}
    \label{app:cor:d_rep_zero_makes_distributions_equal}
    For $(\vf, \vg), (\vf', \vg') \in \Theta$, we have 
    \begin{align}
        \nonumber
        d_{\mathrm{rep}}((\vf, \vg), (\vf', \vg')) = 0 \iff d_{\mathrm{logit}}(p_{\vf, \vg}, p_{\vf', \vg'}) = 0 \, .
    \end{align}
\end{corollary}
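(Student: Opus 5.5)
The plan is to read the corollary off the exact identity in \cref{theorem:distrubutions_same_as_representations_app}. Taking expectations over $\vx\sim p_\vx$ there gives
\begin{align}
    d_{\mathrm{logit}}^2(p_{\vf,\vg},p_{\vf',\vg'}) = \frac{1}{2kN}\sum_{\tilde y\in\calY}\sum_{\calJ\subseteq\calY\setminus\{\tilde y\}} \mathbb{E}_{\vx\sim p_\vx}\left\Vert \tilde\vB_{\calJ}^{-\top}\left(\vf(\vx)-\tilde\vA_{\calJ}\vf'(\vx)\right)\right\Vert_2^2 ,
\end{align}
whereas $d_{\mathrm{rep}}^2$ from \cref{def:direct_rep_distance} is the same double sum, up to the positive constant $1/(kJ)$, of $\mathbb{E}_{\vx}\Vert \vf(\vx)-\tilde\vA_{\calJ}\vf'(\vx)\Vert_2^2$. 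Both quantities are therefore nonnegative weighted sums over the same index set $(\tilde y,\calJ)$ of nonnegative terms. Each vanishes if and only if every summand vanishes. This requires the sums to be finite or to be read in $[0,\infty]$, which holds in either case.

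It thus suffices to show, for each fixed $(\tilde y,\calJ)$, that
\[
\mathbb{E}_\vx\Vert \tilde\vB_{\calJ}^{-\top}\vv(\vx)\Vert^2=0 \iff \mathbb{E}_\vx\Vert \vv(\vx)\Vert^2=0,
\qquad \vv(\vx):=\vf(\vx)-\tilde\vA_{\calJ}\vf'(\vx).
\]
Under \cref{assumption:general_position}, $\tilde\vL_{\calJ}=\vU\vD\vV^\top$ is invertible, so $\vD$ has strictly positive diagonal. Hence $\tilde\vB_{\calJ}=\vD^{-1}\vU^\top$ is invertible, and $\tilde\vB_{\calJ}^{-\top}=\vU\vD$. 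The resulting norm equivalence is
\[
\sigma_{\min}(\tilde\vL_\calJ)\,\Vert\vv\Vert\le\Vert\vU\vD\vv\Vert\le\sigma_{\max}(\tilde\vL_\calJ)\,\Vert\vv\Vert .
\]
Integrating gives the equivalence of the two expectations vanishing, since an expectation of a nonnegative function is zero if and only if the function vanishes $p_\vx$-almost everywhere.

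Combining these steps gives both directions at once. If $d_{\mathrm{rep}}=0$, then every summand $\mathbb{E}\Vert\vv\Vert^2$ vanishes, so every weighted summand vanishes, so $d_{\mathrm{logit}}=0$; the converse runs the same chain backwards. No further step is really an obstacle. The only points to be careful about are that general position is needed for \emph{every} $(\tilde y,\calJ)$, so that $\tilde\vA_\calJ$ and $\tilde\vB_\calJ$ are defined across the whole sum, and that equality is only almost everywhere in $\vx$. This matches the a.e.\ sense in which $d_{\mathrm{logit}}=0$ characterizes $p=p'$ in \cref{app:LLDD_is_metric_full_proof}. As a by-product, the lower bound $\sigma_{\min}$ already suggests the quantitative inequality of \cref{theorem:norm_dist_bounds_direct_dist}.
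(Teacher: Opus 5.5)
Your argument is correct. For the direction $d_{\mathrm{rep}}=0\Rightarrow d_{\mathrm{logit}}=0$ it coincides with the paper's proof, which reads it off \cref{theorem:distrubutions_same_as_representations_app}.

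For the converse the paper takes a different route. It uses that $d_{\mathrm{logit}}$ is a metric to obtain $p_{\vf,\vg}=p_{\vf',\vg'}$. It then invokes \cref{cor:identifiability_every_choice} to get $\vf=\tilde\vA_{\calJ}\vf'$ for every pivot and label set, hence $d_{\mathrm{rep}}=0$. You instead run the same identity backwards, using only that each $\tilde\vB_{\calJ}^{-\top}$ is invertible with the singular values of $\tilde\vL_{\calJ}$ (essentially \cref{lemma:norm_terms_bounds_direct_terms_app}).

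The paper's route makes the link to the identifiability class explicit: vanishing $d_{\mathrm{rep}}$ is tied to equal distributions and hence to $\sim_L$-equivalence. However, $d_{\mathrm{logit}}=0$ only delivers equality almost everywhere, while \cref{cor:identifiability_every_choice} assumes equality for all $(\vx,y)$. Bridging that gap is harmless, since the embedding part of the identifiability argument is pointwise in $\vx$, but the paper leaves it implicit.

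Your route is self-contained and symmetric, handles the almost-everywhere issue directly, and needs only invertibility of the $\tilde\vL_{\calJ}$. It is exactly the qualitative content of the two-sided bound $C\,d_{\mathrm{logit}}/\sigma_{\max}\le d_{\mathrm{rep}}\le C\,d_{\mathrm{logit}}/\sigma_{\min}$ proved in \cref{app:theorem:logit_dist_bounds_linear_identifiability_dissim}.

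One small slip: since $\tilde\vB_{\calJ}=\vD^{-1}\vU^\top$, you have $\tilde\vB_{\calJ}^{-\top}=\vD\vU^\top$, not $\vU\vD$ (the latter is $\tilde\vB_{\calJ}^{-1}$). Your sandwich inequality holds for either matrix, so the argument is unaffected.
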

\begin{proof}
    ``$\impliedby$'': Assume $d_{\mathrm{logit}}(p, p')=0$. Since $d_{\mathrm{logit}}$ is a metric between probability distributions, this gives us $p=p'$. Therefore, \cref{cor:identifiability_every_choice} gives us $d_{\mathrm{rep}}((\vf, \vg), (\vf', \vg')) = 0$.

    ``$\implies$'': Assume $d_{\mathrm{rep}}((\vf, \vg), (\vf', \vg')) = 0$. From \cref{theorem:distrubutions_same_as_representations_app} we get that $d_{\mathrm{logit}}(p, p')=0$.
\end{proof}

\subsection{Logit Distance Bounds the Linear Identifiability Dissimilarity}
\label{app:norm_dist_bounds_direct_dist_full_proof}

In this section we present the full proof of \cref{theorem:norm_dist_bounds_direct_dist}, however, we first prove a lemma to help with the main result. 

\begin{lemma}    
    \label{lemma:norm_terms_bounds_direct_terms_app}
    Let $(\vf, \vg), (\vf, \vg) \in \Theta$. Let $\tilde\vL_{\calJ}$ be as defined in the priliminaries \cref{eq:unembeddings-matrix-m-m}, and let $\tilde\vL_{\calJ} = \vU\vD\vV^{\top}$ be the singular value decomposition of $\tilde\vL_{\calJ}$. Let $\tilde\vB_{\calJ} = \vD^{-1}\vU^{\top}$. Let $\sigma_{\tilde\vL_{\calJ} \min}$ be the smallest singular value of $\tilde\vL_{\calJ}$, and let $\sigma_{\min}$ be the smallest singular value of all $\tilde\vL_{\calJ}$. Then for every choice of labels and input, $\vx$, we have the following bounds:
    \begin{align}
        \label{eq:dist_to_rep_term_ineq}
        \frac{\Vert \tilde\vB_{\calJ}^{-\top} \left(\vf(\vx) - \tilde\vA_{\calJ}\vf'(\vx) \right) \Vert_2}{\sigma_{\min} } \geq \frac{\Vert \tilde\vB_{\calJ}^{-\top} \left(\vf(\vx) - \tilde\vA_{\calJ}\vf'(\vx) \right) \Vert_2}{\sigma_{\tilde\vL_{\calJ} \min} } \geq  \Vert \vf(\vx) - \tilde\vA_{\calJ}\vf'(\vx)  \Vert_2
    \end{align} 
    And similarly, if we let $\sigma_{\tilde\vL_{\calJ} \max}$ be the largest singular value of $\tilde\vL_{\calJ}$, and $\sigma_{\max}$ be the largest singular value of all $\tilde\vL_{\calJ}$, we also have:
    \begin{align}
        \frac{\Vert \tilde\vB_{\calJ}^{-\top} \left(\vf(\vx) - \tilde\vA_{\calJ}\vf'(\vx) \right) \Vert_2}{\sigma_{\max} } \leq \frac{\Vert \tilde\vB_{\calJ}^{-\top} \left(\vf(\vx) - \tilde\vA_{\calJ}\vf'(\vx) \right) \Vert_2}{\sigma_{\tilde\vL_{\calJ} \max} } \leq  \Vert \vf(\vx) - \tilde\vA_{\calJ}\vf'(\vx)  \Vert_2
    \end{align} 
\end{lemma}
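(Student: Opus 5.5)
The plan is to compute $\tilde\vB_{\calJ}^{-\top}$ explicitly and reduce everything to singular-value bounds for a diagonal matrix acting on a vector. Write $\tilde\vL_{\calJ} = \vU\vD\vV^{\top}$ with $\vU,\vV$ orthogonal and $\vD$ diagonal with positive entries. This is possible since $\tilde\vL_{\calJ}$ is invertible by \cref{assumption:general_position}. Then $\tilde\vB_{\calJ} = \vD^{-1}\vU^{\top}$ is invertible, and
\begin{align}
    \tilde\vB_{\calJ}^{-\top} = \left(\vU\vD\right)^{\top} = \vD\vU^{\top}.
\end{align}
Setting $\vv := \vf(\vx) - \tilde\vA_{\calJ}\vf'(\vx)$, the middle quantity in both chains is $\Vert \vD\vU^{\top}\vv\Vert_2 / \sigma$ for the appropriate $\sigma$.

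The key step is the two-sided estimate
\begin{align}
    \sigma_{\tilde\vL_{\calJ}\min}\Vert\vv\Vert_2 \leq \Vert \vD\vU^{\top}\vv\Vert_2 \leq \sigma_{\tilde\vL_{\calJ}\max}\Vert\vv\Vert_2 .
\end{align}
To see this, note that $\vU^{\top}$ is orthogonal, so $\vw := \vU^{\top}\vv$ has $\Vert\vw\Vert_2 = \Vert\vv\Vert_2$. Moreover, $\Vert\vD\vw\Vert_2^2 = \sum_i d_i^2 w_i^2$, and this is sandwiched between $\min_i d_i^2\Vert\vw\Vert_2^2$ and $\max_i d_i^2\Vert\vw\Vert_2^2$. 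The diagonal entries $d_i$ are exactly the singular values of $\tilde\vL_{\calJ}$. Dividing the lower bound by $\sigma_{\tilde\vL_{\calJ}\min}$ gives the right inequality of the first chain, and dividing the upper bound by $\sigma_{\tilde\vL_{\calJ}\max}$ gives the right inequality of the second chain.

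The outer inequalities come from comparing the per-$\calJ$ extreme singular values with the global ones. By definition $0<\sigma_{\min}\leq \sigma_{\tilde\vL_{\calJ}\min}$ and $\sigma_{\max}\geq\sigma_{\tilde\vL_{\calJ}\max}$ hold for every choice of $\tilde y$ and $\calJ$. Since the numerator is nonnegative, dividing it by the smaller positive number $\sigma_{\min}$ gives a larger value, and dividing by the larger number $\sigma_{\max}$ gives a smaller value. This yields the left inequalities of both chains.

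There is no real obstacle. The only point needing care is the correct computation of $\tilde\vB_{\calJ}^{-\top}$. Naively one might guess $(\vD^{-1}\vU^{\top})^{-\top} = \vU^{\top}\vD$, but the correct expression is $\vD\vU^{\top}$, and the order matters for where the orthogonal factor sits. Here $\vU^{\top}$ acts first, which is why the norm of $\vv$ is preserved before the diagonal scaling.
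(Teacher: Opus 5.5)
Your proof is correct and follows essentially the same route as the paper. You compute $\tilde\vB_{\calJ}^{-\top}=\vD\vU^{\top}$, use that $\vU^{\top}$ preserves norms while the diagonal scaling by $\vD$ is sandwiched between the extreme singular values of $\tilde\vL_{\calJ}$, and then compare the per-$\calJ$ extreme singular values with the global ones $\sigma_{\min}$ and $\sigma_{\max}$. You also state explicitly that $\tilde\vL_{\calJ}$ must be invertible (here from general position) for $\tilde\vB_{\calJ}$ and $\tilde\vA_{\calJ}$ to be defined, which the paper leaves implicit.
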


\begin{proof}
    We consider the norm on the left hand side of \cref{eq:dist_to_rep_term_ineq}:, where we recall that $\tilde\vB_{\calJ}^{-\top} = \vD\vU^{\top}$:
    \begin{align}
        &\Vert \tilde\vB_{\calJ}^{-\top} \left(\vf(\vx) - \tilde\vA_{\calJ}\vf'(\vx) \right) \Vert_2 \\
        &= \Vert \vD\vU^{\top} \left(\vf(\vx) - \tilde\vA_{\calJ}\vf'(\vx) \right) \Vert_2 \\
        &= \left\Vert \vD \left(\vU^{\top}\vf(\vx) - \vU^{\top}\tilde\vA_{\calJ}\vf'(\vx) \right) \right\Vert_2
    \end{align}
    Now $\vD$ is the diagonal matrix with the singular values of $\tilde\vL_{\calJ}$ in the diagonal. We define $\vD_{\min}$ to be the diagonal matrix with the minimum singular value of $\tilde\vL_{\calJ}$, $\sigma_{\tilde\vL_{\calJ} \min}$, in all the diagonal entries. This means that multiplying a vector with $\vD_{\min}$ will make the norm smaller compared to multiplying with $\vD$. In other words, we have:
    \begin{align}
        \label{eq:using_min_singular}
        &\left\Vert \vD \left(\vU^{\top}\vf(\vx) - \vU^{\top}\tilde\vA_{\calJ}\vf'(\vx) \right) \right\Vert_2 \geq \left\Vert \vD_{\min} \left(\vU^{\top}\vf(\vx) - \vU^{\top}\tilde\vA_{\calJ}\vf'(\vx) \right) \right\Vert_2
    \end{align}
    We can calculate the effect of multiplying a vector, $\va$, with $\vD_{\min}$:
    \begin{align}
        & \left\Vert \vD_{\min} \va \right\Vert_2 = \sqrt{\sum_{i = 1}^D(\sigma_{\tilde\vL_{\calJ} \min}a_i)^2} = \sqrt{\sigma_{\tilde\vL_{\calJ} \min}^2\sum_{i = 1}^D a_i^2} = \sigma_{\tilde\vL_{\calJ} \min}\sqrt{\sum_{i = 1}^D a_i^2} = \sigma_{\tilde\vL_{\calJ} \min} \left\Vert \va \right\Vert_2
    \end{align}
    So we have that 
    \begin{align}
        &\left\Vert \vD_{\min} \left(\vU^{\top}\vf(\vx) - \vU^{\top}\tilde\vA_{\calJ}\vf'(\vx) \right) \right\Vert_2 \\
        &= \sigma_{\tilde\vL_{\calJ} \min}\left\Vert \vU^{\top}\vf(\vx) - \vU^{\top}\tilde\vA_{\calJ}\vf'(\vx) \right\Vert_2 \\
        &= \sigma_{\tilde\vL_{\calJ} \min}\left\Vert \vU^{\top}\left(\vf(\vx) - \tilde\vA_{\calJ}\vf'(\vx) \right) \right\Vert_2
    \end{align}
    Since $\vU^{\top}$ is an orthonormal matrix, it does not change the norm of a vector, so we have that
    \begin{align}
        &\sigma_{\tilde\vL_{\calJ} \min}\left\Vert \vU^{\top}\left(\vf(\vx) - \tilde\vA_{\calJ}\vf'(\vx) \right) \right\Vert_2 \\
        &= \sigma_{\tilde\vL_{\calJ} \min}\left\Vert \vf(\vx) - \tilde\vA_{\calJ}\vf'(\vx) \right\Vert_2
    \end{align}
    All in all, we have that for any choice of $\tilde\vL_{\calJ}$ and $\vx$:
    \begin{align}
        \Vert \tilde\vB_{\calJ}^{-\top} \left(\vf(\vx) - \tilde\vA_{\calJ}\vf'(\vx) \right) \Vert_2 \geq \sigma_{\tilde\vL_{\calJ} \min}\left\Vert \vf(\vx) - \tilde\vA_{\calJ}\vf'(\vx) \right\Vert_2
    \end{align}
    which means that 
    \begin{align}
        \frac{\Vert \tilde\vB_{\calJ}^{-\top} \left(\vf(\vx) - \tilde\vA_{\calJ}\vf'(\vx) \right) \Vert_2}{\sigma_{\tilde\vL_{\calJ} \min}} \geq \left\Vert \vf(\vx) - \tilde\vA_{\calJ}\vf'(\vx) \right\Vert_2
    \end{align}
    And since $\sigma_{\min} \leq \sigma_{\tilde\vL_{\calJ} \min} $ we get the second inequality.
    
    We can make a similar argument by defining $\vD_{\max}$ as the diagonal matrix with the maximum singular value of $\tilde\vL_{\calJ}$, $\sigma_{\vL \max}$, in all the diagonal entries, and go through the steps from \cref{eq:using_min_singular} with the inequality reversed.     
    
\end{proof}

We are now ready to prove the main result:
\begin{theorem}
    \label{app:theorem:logit_dist_bounds_linear_identifiability_dissim}
    Let $p, p'$ be as before, and $m$ the dimension of the representations and $k$ the number of labels. Let $C=\sqrt{\frac{2m}{k-1}}$. Let $\sigma_{\tilde\vL_{\calJ} \min}$ be the smallest singular value of $\tilde\vL_{\calJ}$, and let $\sigma_{\min}$ be the smallest singular value of all $\tilde\vL_{\calJ}$. Then $d_{\mathrm{logit}}$, bounds the \textit{linear identifiability dissimilarity}, $d_{\mathrm{rep}}$, in the following way:
    \begin{align}
        C\frac{d_{\mathrm{logit}}(p, p')}{\sigma_{\min} }   \geq d_{\mathrm{rep}}((\vf, \vg), (\vf', \vg')) 
    \end{align}
    And similarly, if we let $\sigma_{\tilde\vL_{\calJ} \max}$ be the largest singular value of $\tilde\vL_{\calJ}$, and $\sigma_{\max}$ be the largest singular value of all $\tilde\vL_{\calJ}$, we also have
    \begin{align}
        C\frac{d_{\mathrm{logit}}(p, p')}{\sigma_{\max} }   \leq d_{\mathrm{rep}}((\vf, \vg), (\vf', \vg')) 
    \end{align}
\end{theorem}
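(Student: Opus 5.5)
The plan is to combine the exact identity of \cref{theorem:distrubutions_same_as_representations_app} with the termwise bounds of \cref{lemma:norm_terms_bounds_direct_terms_app}, and then check that the normalising constants match. Write $N=\binom{k-2}{m-1}$ and $J=\binom{k-1}{m}$. Taking expectations over $\vx\sim p_\vx$ in \cref{theorem:distrubutions_same_as_representations_app} gives
\begin{align}
d_{\mathrm{logit}}^2(p,p') = \frac{1}{2kN}\sum_{\tilde y\in\calY}\sum_{\calJ\subseteq\calY\setminus\{\tilde y\}}\mathbb{E}_{\vx\sim p_\vx}\bigl\Vert \tilde\vB_{\calJ}^{-\top}\bigl(\vf(\vx)-\tilde\vA_{\calJ}\vf'(\vx)\bigr)\bigr\Vert_2^2 .
\end{align}

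For the upper bound, I square the first chain of inequalities in \cref{lemma:norm_terms_bounds_direct_terms_app} for each pair $(\tilde y,\calJ)$ and each $\vx$:
\begin{align}
\Vert \vf(\vx)-\tilde\vA_{\calJ}\vf'(\vx)\Vert_2^2 \le \sigma_{\min}^{-2}\,\bigl\Vert \tilde\vB_{\calJ}^{-\top}\bigl(\vf(\vx)-\tilde\vA_{\calJ}\vf'(\vx)\bigr)\bigr\Vert_2^2 .
\end{align}
Then I take the expectation and sum over all pivots and label subsets. Comparing with \cref{def:direct_rep_distance} yields
\begin{align}
kJ\, d_{\mathrm{rep}}^2 \le \sigma_{\min}^{-2}\, 2kN\, d_{\mathrm{logit}}^2,
\qquad\text{i.e.}\qquad
d_{\mathrm{rep}}^2 \le \frac{2N}{J}\,\frac{d_{\mathrm{logit}}^2}{\sigma_{\min}^2}.
\end{align}

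The lower bound follows the same steps using the second chain in the lemma, $\Vert\tilde\vB_{\calJ}^{-\top}(\cdot)\Vert_2\le\sigma_{\max}\Vert\cdot\Vert_2$. Squaring, summing and normalising gives $d_{\mathrm{rep}}^2\ge \frac{2N}{J}\,d_{\mathrm{logit}}^2/\sigma_{\max}^2$.

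The only step that needs care is the combinatorial constant. From $\binom{k-1}{m}=\frac{k-1}{m}\binom{k-2}{m-1}$ we get
\begin{align}
\frac{N}{J}=\frac{m}{k-1},
\qquad\text{so}\qquad
\frac{2N}{J}=\frac{2m}{k-1}=C^2 .
\end{align}
Taking square roots of nonnegative quantities then gives exactly both inequalities with $C=\sqrt{2m/(k-1)}$. I also need to confirm that the sum over $\calJ$ runs over $m$-subsets in both \cref{def:direct_rep_distance} and \cref{lemma:distribution_dist_model_class}, so that the two normalisations refer to the same index set.

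Finally, \cref{assumption:general_position} makes every $\tilde\vL_{\calJ}$ invertible, so $\sigma_{\min}>0$ and all the $\tilde\vA_{\calJ}$ and $\tilde\vB_{\calJ}$ are well defined. If $d_{\mathrm{logit}}=\infty$ the upper bound holds trivially.
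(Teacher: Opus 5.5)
Your proposal is correct and follows the paper's proof step by step. You average the exact identity of \cref{theorem:distrubutions_same_as_representations_app} over $\vx$, apply the termwise singular-value bounds of \cref{lemma:norm_terms_bounds_direct_terms_app} ($\sigma_{\min}$ for the upper bound, $\sigma_{\max}$ for the lower bound), and identify $2N/J=2m/(k-1)=C^2$; your bookkeeping is slightly cleaner than the written proof, which drops a square root in one intermediate display before reaching the same constant.
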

\begin{proof}
    Let $N = \binom{k-2}{m-1}$ and $J = \binom{k-1}{m}$. We recall from \cref{theorem:distrubutions_same_as_representations_app} that 
    \begin{align}
        &\left\Vert \vu(\vx) - \vu'(\vx) \right\Vert_2^2 = \frac{1}{2kN} \sum_{\tilde y \in \mathcal{Y}} \sum_{\calJ \subseteq \calY \setminus \{\tilde y\}} \left\Vert \tilde\vB_{\calJ}^{-\top} \left( \vf(\vx) - \tilde\vA_{\calJ}\vf'(\vx) \right) \right\Vert_2^2 \; .
    \end{align}
    Also, from \cref{eq:dist_to_rep_term_ineq} from \cref{lemma:norm_terms_bounds_direct_terms_app}, we get have that
    \begin{align}
        \frac{\Vert \tilde\vB_{\calJ}^{-\top} \left(\vf(\vx) - \tilde\vA_{\calJ}\vf'(\vx) \right) \Vert_2}{\sigma_{\tilde\vL_{\calJ} \min} } &\geq  \Vert \vf(\vx) - \tilde\vA_{\calJ}\vf'(\vx)  \Vert_2
    \end{align}
    which means that since $\sigma_{\min} \leq \sigma_{\tilde\vL_{\calJ} \min}$ we have that
    \begin{align}
        \frac{\left\Vert \vu(\vx) - \vu'(\vx) \right\Vert_2^2}{\sigma_{\min}^2} \geq \frac{1}{2kN} \sum_{\tilde y \in \mathcal{Y}} \sum_{\calJ \subseteq \calY \setminus \{\tilde y\}} \Vert \vf(\vx) - \tilde\vA_{\calJ}\vf'(\vx)  \Vert_2^2 \; . 
    \end{align}
    Taking the expectation over $\vx$, we get
    \begin{align}
        \mathbb{E}_{\vx \sim p_x}\frac{\left\Vert \vu(\vx) - \vu'(\vx) \right\Vert_2^2}{\sigma_{\min}^2} \geq \frac{1}{2kN} \sum_{\tilde y \in \mathcal{Y}} \sum_{\calJ \subseteq \calY \setminus \{\tilde y\}} \mathbb{E}_{\vx \sim p_x}\Vert \vf(\vx) - \tilde\vA_{\calJ}\vf'(\vx)  \Vert_2^2 \; . 
    \end{align}
    Which means we have that 
    \begin{align}
        \frac{d_{\mathrm{logit}}^2(p, p')}{\sigma_{\min}^2} &\geq \frac{J}{2N}\frac{1}{kJ} \sum_{\tilde y \in \mathcal{Y}} \sum_{\calJ \subseteq \calY \setminus \{\tilde y\}} \mathbb{E}_{\vx \sim p_x}\Vert \vf(\vx) - \tilde\vA_{\calJ}\vf'(\vx)  \Vert_2^2 \\
        &= \frac{J}{2N} d_{\mathrm{rep}}^2((\vf, \vg), (\vf', \vg'))\; , 
    \end{align}
    and therefore
    \begin{align}
        \frac{d_{\mathrm{logit}}(p, p')}{\sigma_{\min}} &\geq \frac{J}{2N}\frac{1}{kJ} \sum_{\tilde y \in \mathcal{Y}} \sum_{\calJ \subseteq \calY \setminus \{\tilde y\}} \mathbb{E}_{\vx \sim p_x}\Vert \vf(\vx) - \tilde\vA_{\calJ}\vf'(\vx)  \Vert_2^2 \\
        &= \frac{\sqrt{J}}{\sqrt{2N}} d_{\mathrm{rep}}((\vf, \vg), (\vf', \vg'))\; . 
    \end{align}
    We now see that
    \begin{align}
        \frac{2N}{J} &= \frac{2(k-2)!m!(k-m-1)!}{(m-1)!(k-m-1)!(k-1)!} \\
        &= \frac{2(k-2)!m(m-1)!(k-m-1)!}{(m-1)!(k-m-1)!(k-1)(k-2)!} \\
        &= \frac{2m}{(k-1)}\; .
    \end{align}
    So for $C = \sqrt{\frac{2m}{(k-1)}}$ we get
    \begin{align}
        C\frac{d_{\mathrm{logit}}(p, p')}{\sigma_{\min}} &\geq d_{\mathrm{rep}}((\vf, \vg), (\vf', \vg'))\; , 
    \end{align}
    which was what we wanted to prove. If we let $\sigma_{\tilde\vL_{\calJ} \max}$ be the largest singular value of $\tilde\vL_{\calJ}$, and $\sigma_{\max}$ be the largest singular value of all $\tilde\vL_{\calJ}$, we can go through the same steps with the reversed inequality.
\end{proof}

\section{Proof of Theorem~\ref{thm:linear-concepts-robustness}}
\label{app:proof_of_linear_robustness}

In this Section we show that when $d_{\mathrm{logit}}(p,p')$ between two models is small, then concepts in one model are approximately represented in another model, in particular we prove Theorem~\ref{thm:linear-concepts-robustness}. 
We recall that $\vL$ is the matrix of unembeddings and that a concept $\vh$ defines a conditional distribution $p_\vh(c \mid \vx)$, over $C \geq 2$ values.

Consider a model $(\vf, \vg)$, such that $\vh$ is linearly represented in$\vf$.
For every $c \in \{1, \ldots, C\}$,
we write the concept vectors $\vw_c\in \mathbb{R}^m$ 
as $\vL \valpha_c$ for some $\valpha_c\in \mathbb{R}^k$
which we call \textit{concept weights} as they are the weights when writing $\vw_c=\sum_{i=1}^k \vg(y_i)(\alpha_c)_i$.
Since we assume that $\vL$ has full rank, this is possible for all $\vw_c$. 
Moreover, this is a natural way to parametrize the concepts as it is invariant to linear transformations of the embedding space and concepts of interest are generally aligned with the unembeddings $\vg(y)$.
%
We also define the matrix $\vA=(\valpha_c)_{c\in \{1, \ldots, C\}}\in \mathbb{R}^{k\times C}$ and this is related to the concept weight matrix through 
$\vW=\vL \vA$.

We now show that these concepts can be approximately realized on the embedding space of another model $(\vf', \vg') \in \Theta$
when the logit distance to $(\vf', \vg')$ is small.
Indeed, we consider the concept matrix
\begin{align}
    \vW'=\vL' \vA'  \in \mathbb{R}^{ m \times C }
\end{align}
and therefore the individual concept vectors are given by
\begin{align}
    \vw'_c =\vL' \valpha'_c.
\end{align}
We can then define (the biases remain invariant $b_c=b_c'$)
\begin{align}
    p'_{\vh}(c|\vx)
    =\frac{
        \exp \big( 
            \vw_c'^\top \vf'(\vx) + b_c
        \big)
        }{\sum_{j = 1}^C \exp \big(\vw_j'^\top \vf'(\vx) + \vb_j \big)
        } 
\end{align}
We then get the following result.
\begin{theorem}
    For two models $(\vf, \vg), (\vf', \vg') \in \Theta$ and a concept $\vh:\calX \to \Delta_{[C]}$ with $C \geq 2$,  
    if $\vh$ is linearly encoded in $\vf$, 
    then
    \[
        \min_{\vW' \in \bbR^{C \times m},
        \vb' \in \bbR^C
        } 
        d_\mathrm{KL} \big(
            p_\vh (\cdot \mid \vx), 
            \mathrm{softmax} \big( \vW' \vf'(\vx) + \vb')
        \big)\leq 
       \frac{1}{2} \lVert \vA\rVert_{op}^2 d_{\mathrm{logit}}^2(p,p'). 
    \]
    where $\vA$ is determined from $\vW = \vL \vA$, 
    $\lVert\cdot \rVert_{\mathrm{op}}$ denotes the operator norm, where $\vW \in \bbR^{m \times C}$ is the matrix of weights $\vw_c$ from \cref{def:linear-probing-concept} for $\vf$ linearly encoding $\vh$. 
\end{theorem}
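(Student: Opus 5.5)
The plan is to bound the minimum by a specific feasible choice. I take $\vW'=\vL'\vA$ (i.e.\ $\valpha'_c=\valpha_c$) and $\vb'=\vb$, exactly as in the construction preceding the statement, and show that for this choice the KL is at most the right-hand side. The point of this choice is that the concept logits of the two models become linear images of the model logits. For the teacher, $\vW^\top\vf(\vx)=\vA^\top\vL^\top\vf(\vx)=\vA^\top\vu(\vx)$, and for the student, $\vW'^\top\vf'(\vx)=\vA^\top\vL'^\top\vf'(\vx)=\vA^\top\vu'(\vx)$. Since $\vh$ is linearly encoded in $\vf$, we have $p_\vh(\cdot\mid\vx)=\mathrm{softmax}(\vA^\top\vu(\vx)+\vb)$ exactly. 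The candidate student distribution is $\mathrm{softmax}(\vA^\top\vu'(\vx)+\vb)$.

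Next I apply Lemma~\ref{le:kl_upper}, the 1-smoothness of $\log Z$ giving $\mathrm{KL}\le\frac12\|\text{logit difference}\|^2$, to these two categorical distributions on $C$ values. Pointwise in $\vx$ this gives
\begin{align}
\mathrm{KL}\big(p_\vh(\cdot\mid\vx)\,\|\,\mathrm{softmax}(\vA^\top\vu'(\vx)+\vb)\big)\le \tfrac12\big\|\vA^\top(\vu(\vx)-\vu'(\vx))\big\|_2^2\le \tfrac12\|\vA\|_{op}^2\,\|\vu(\vx)-\vu'(\vx)\|_2^2 .
\end{align}
The bias cancels in the difference, and $\|\vA^\top\|_{op}=\|\vA\|_{op}$. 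Taking $\bbE_{\vx\sim p_\vx}$ gives $d_\mathrm{KL}\le\frac12\|\vA\|_{op}^2 d^2_\mathrm{logit}(p,p')$ for this choice, and hence for the minimum over $(\vW',\vb')$.

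The step needing care is Lemma~\ref{le:kl_upper} itself. Its proof states the smoothness inequality somewhat loosely, so I would re-derive it cleanly. Write the KL as the Bregman divergence $D_F(\vv',\vv)=F(\vv')-F(\vv)-\nabla F(\vv)^\top(\vv'-\vv)$ with $F=\log\sum\exp$. Then Taylor's theorem with $\nabla^2F=\mathrm{diag}(\vp)-\vp\vp^\top\preceq\vI$ gives $D_F\le\frac12\|\vv'-\vv\|^2$. The Hessian bound holds because its largest eigenvalue is at most $\max_i p_i\le1$.

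Two minor points remain. First, $\vA$ must exist, meaning $\vL$ has full row rank $m$, which holds under Assumption~\ref{assumption:general_position}. If $\vA$ is not unique, the bound holds for any choice, in particular the one with smallest operator norm. Second, the operator-norm step is just $\|\vA^\top\vz\|\le\|\vA\|_{op}\|\vz\|$. No dimension matching between $m$ and $m'$ is needed.
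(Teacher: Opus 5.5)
Your proposal is correct and follows essentially the paper's argument: apply Lemma~\ref{le:kl_upper} to bound the concept KL by half the squared difference of concept logits, and take $\vb'=\vb$, $\vW'=\vL'\vA$ (the paper's choice $\vA'=\vA$) so that this difference becomes $\vA^\top(\vu(\vx)-\vu'(\vx))$; then apply the operator-norm bound and average over $\vx$. Fixing this $\vx$-independent choice before taking expectations, as you do, is slightly cleaner than the paper's pointwise minimum followed by integration, because it makes the interchange of the minimum and the expectation immediate.
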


\begin{proof}
    Since $\vh$ is linearly encoded in $\vf$, we have $\vW \in \bbR^{m \times C}$ and $\vb \in \bbR^C$ as weights and biases that gives:
    \[
        \mathrm{softmax} \big(
        \vW^\top \vf(\vx) + \vb
        \big) = p_\vh(c \mid \vx) \, .
    \]
    Let
    \[
        p'_\vh(c \mid \vx) =
        \mathrm{softmax} \big(
        \vW^{'\top} \vf'(\vx) + \vb'
        \big)  \, .
    \]
    We bound the minimum of $d_\mathrm{KL}$ starting from KL divergence.
    We bound for fixed $\vx$ using Lemma~\ref{le:kl_upper} (last line)
    \begin{align}
        \min_{\vW' \in \bbR^{m \times C},
        \vb' \in \bbR^C
        } 
        &
        \KL (p_{\vh}(\cdot|\vx)|| p'_{\vh}(\cdot|\vx))
        = \\
        &=
        \min_{\vW' \in \bbR^{m \times C},
        \vb' \in \bbR^C
        } 
        \KL\left( \mathrm{softmax}(\vW^\top \vf(\vx)+\vb)||
        \mathrm{softmax}(\vW^{' \top} \vf'(\vx)+\vb)\right) \\
        &\leq
        \min_{\vW' \in \bbR^{m \times C},
        \vb' \in \bbR^C
        } 
        \frac{1}{2}\lVert \vW^\top \vf(\vx) - \vW^{' \top} \vf'(\vx)
        + \vb - \vb'
        \rVert^2 
        \label{eq:expression-minimum-logit-diff}
    \end{align}
    Next, we bound the minimum in \cref{eq:expression-minimum-logit-diff} setting $\vb' = \vb$.
    We use that $\vW:= \vL \vA$ and  
    $\vW' = \vL' \vA'$, 
    so we turn minimization of $\vW'$ into a minimization of $\vA'$:
    \begin{align}
        \min_{\vW' \in \bbR^{m \times C}
        } 
        \frac{1}{2}\lVert \vW ^\top\vf(\vx) - \vW^{'\top} \vf'(\vx) \rVert^2 
        &= \min_{\vA' \in \bbR^{k \times C}
        } 
        \frac{1}{2}\lVert \vA^\top \vL^\top \vf(\vx) - \vA^{' \top} \vL^{' \top} \vf'(\vx) \rVert^2 \\
        &\leq
        \frac{1}{2} \lVert \vA^\top \vu(\vx) - \vA^\top \vu' (\vx) \rVert^2 \\
        & \leq \frac{1}{2}\lVert \vA^\top\rVert_{\mathrm{op}}^2
        \cdot \lVert \vu(\vx)-\vu'(\vx)\rVert^2.
    \end{align}
    where, in the second line, we used that $ \min_{\vA'} \Vert
    \vA^\top \vz - \vA^{' \top} \vz' \Vert^2 \leq \Vert
    \vA^\top \vz - \vA^\top \vz' \Vert^2$ for any $\vz, \vz' \in \bbR^k$ and,
    in the last line, we used that the $L_2$ norm for a matrix vector product satisfies $\lVert \vA\vv\rVert \leq \lVert \vv\rVert\cdot\lVert \vA\rVert_{op}$.
    We end the proof by taking the expectation over $\vx \sim p_\vx$.
\end{proof}

\section{$L_1$ Based Loss Bounds Distribution Distance}
\label{app:loss_bounds_d_LLD_full_proof}
We here present the proof of the following statement:

\begin{proposition}
    Assume that $p$, $p'$ are two models which both satisfy Assumption~\ref{assu:tau-lower-bound}
    for some $\tau>0$. 
    Then 
    \begin{align}
        \mathbb{E}_{\vx\sim p_{\vx}}\lVert \vu(\vx)-\vu'(\vx)\rVert_2^2
        \leq 2|\log(\tau)| \mathbb{E}_{\vx\sim p_{\vx}}\lVert \vu(\vx)-\vu'(\vx)\rVert_1.
    \end{align}
\end{proposition}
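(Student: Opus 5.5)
The plan is to prove the inequality pointwise in $\vx$ and then take expectations. The elementary fact we use is that for any vector $\vv\in\bbR^k$,
\[
\lVert \vv\rVert_2^2=\sum_i v_i^2\le \Big(\max_i |v_i|\Big)\sum_i |v_i| = \lVert \vv\rVert_\infty \lVert \vv\rVert_1 .
\]
Applied with $\vv=\vu(\vx)-\vu'(\vx)$, this reduces the claim to a uniform sup-norm bound
\[
\lVert \vu(\vx)-\vu'(\vx)\rVert_\infty\le 2|\log(\tau)|
\]
for $p_\vx$-almost every $\vx$. Integrating over $\vx\sim p_\vx$ then gives the statement.

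To get the sup-norm bound, we express the logits through log-probabilities using the centering convention \eqref{eq:gauge-fix-unembeddings-logits}. As computed in the proof of \cref{theorem:bound_kl_app} (equation \eqref{eq:fg_log_prob}),
\[
u(\vx)_i=\log p(y_i\mid\vx)-\overline{\log p(\cdot\mid\vx)},
\]
where the bar denotes the mean over the $k$ labels. The same identity holds for $\vu'$. By \cref{assu:tau-lower-bound}, every $\log p(y_i\mid\vx)$ lies in $[\log\tau,0]$, so the mean $\overline{\log p(\cdot\mid\vx)}$ lies in the same interval. The same holds for $p'$.

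We then write
\[
u(\vx)_i-u'(\vx)_i=\big(\log p(y_i\mid\vx)-\log p'(y_i\mid\vx)\big)-\big(\overline{\log p(\cdot\mid\vx)}-\overline{\log p'(\cdot\mid\vx)}\big).
\]
Each bracket is a difference of two numbers in $[\log\tau,0]$, so each has absolute value at most $|\log\tau|$. The triangle inequality gives $|u(\vx)_i-u'(\vx)_i|\le 2|\log\tau|$.

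A slightly sharper route also works and is a useful check. Each centered logit is itself a difference of two numbers in $[\log\tau,0]$, so $|u(\vx)_i|\le|\log\tau|$, and likewise $|u'(\vx)_i|\le|\log\tau|$. This again yields the same constant $2|\log\tau|$.

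The main point to handle with care is the exceptional set. \cref{assu:tau-lower-bound} holds only for $p_\vx$-almost every $\vx$, so the pointwise bound holds only almost surely. This suffices, because both sides of the inequality are expectations under $p_\vx$. No other step should be an obstacle: once the centered-logit representation \eqref{eq:fg_log_prob} is invoked, everything is elementary.
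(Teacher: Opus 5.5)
Your proposal is correct and follows essentially the same route as the paper. The paper also uses the centered representation \eqref{eq:fg_log_prob} to bound each coordinate $|(\vu(\vx)-\vu'(\vx))_i|$ by $2|\log(\tau)|$, bounds $(\vu(\vx)-\vu'(\vx))_i^2\le 2|\log(\tau)|\,|(\vu(\vx)-\vu'(\vx))_i|$ term by term (your $\lVert \vv\rVert_2^2\le\lVert \vv\rVert_\infty\lVert \vv\rVert_1$ step), sums, and integrates over $\vx\sim p_\vx$; your remark about the $p_\vx$-null exceptional set makes explicit a point the paper leaves implicit.
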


\begin{proof}
    Note that $1\geq p(y|\vx)\geq \tau$
    and thus $0\geq \log(p(y|\vx))\geq \log(\tau)$
    which also implies $0\geq \overline{\log(p(\cdot|\vx))}\geq \log(\tau)$ 
    denoting again by 
    $\overline{\cdot}$  the average over the labels. 
    Then we can apply \eqref{eq:fg_log_prob}
    and get
    \begin{align}
    \begin{split}
      (\vu(\vx)  - \vu'(\vx))_i^2
      &=\left| \log(p(y_i|\vx))-\overline{\log(p(\cdot|\vx))}
     - \log(p'(y_i|\vx))+\overline{\log(p'(\cdot|\vx))}
      \right|\cdot
      |(\vu(\vx)  - \vu'(\vx))_i|      
   \\
   &
   \leq 2|\log(\tau)|\cdot
      |(\vu(\vx)  - \vu'(\vx))_i| .
      \end{split}
    \end{align}
    We conclude that
    \begin{align}
        \lVert \vu(\vx)  - \vu'(\vx)\rVert_2^2
        \leq 2|\log(\tau)|\cdot
      \sum_{i=1}^k|(\vu(\vx)  - \vu'(\vx))_i|
      =2|\log(\tau)|\cdot \lVert \vu(\vx)-\vu'(\vx)\rVert_1.
    \end{align}
    Taking the expectation over $p_{\vx}$ we get the claim.    
\end{proof}


\section{Additional Results Details on Experiments}
\label{sec:app-experiments}

Code for all experiments and plots can be found on github: \\ 
\url{https://github.com/bemigini/logit-distance-bounds-rep-sim}. 

\subsection{Ablation on representation dimensionality}

\begin{figure}[!h]
    \centering
    \includegraphics[width=0.95\linewidth]{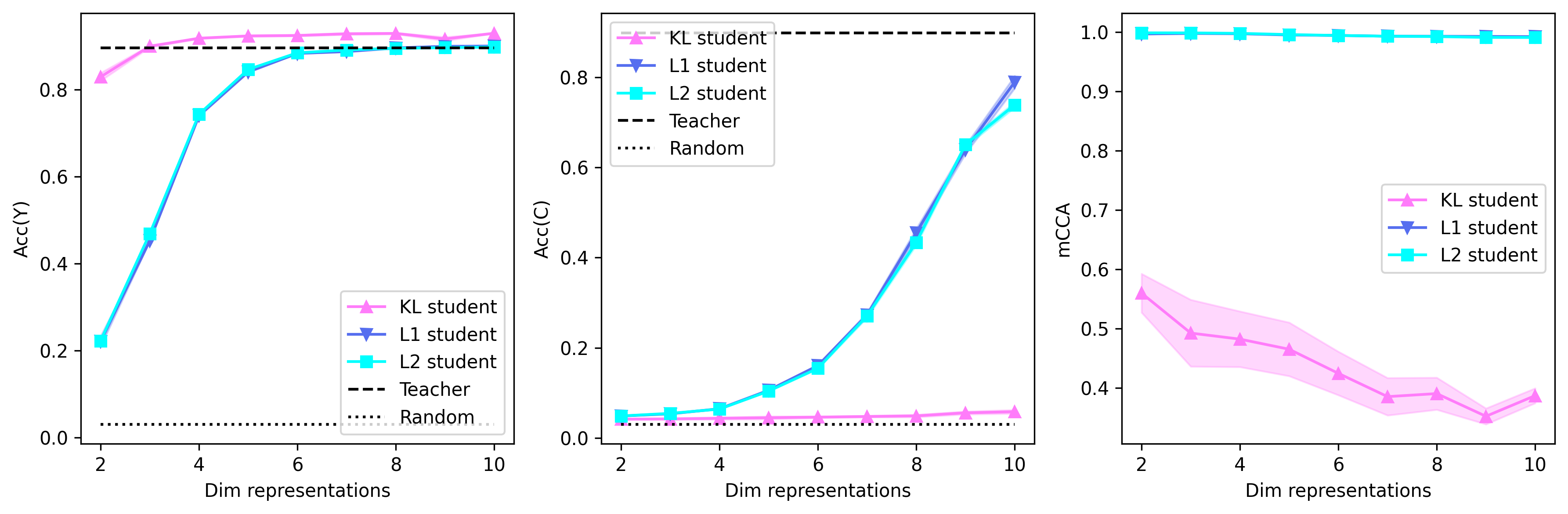}
    \caption{\textbf{Ablation with increasing representational dimensionality for students in \SUB}.
    We variate the dimension of representations of students on \SUB (the teacher model has representation dimensionality $m=10$).
    As we increase the representational dimensionality of the students (denoted as $m'$), we observe two distinct trends for KL students on one side, and L1 and L2 students on the other.
    KL students reach teacher \texttt{Acc}(Y) with $m'=3$, with an upward trend as $m'$ is increased. On the other hand, L1 and L2 students match teacher performance only with $m'\geq 6$. 
    KL students perform poorly in \texttt{Acc}(C), with almost negligible improvemens when $m'$ is increased. L1 and L2 instead considerably increase their performance as $m'$ grows, showing that they bettter preserve the linearly encoded concepts of the teacher.
    For each student,
    we measure $\texttt{mCCA}$ on a subset of representations of the teacher, considering the $m'$ most aligned subspace to student representations. L1 and L2 students always fare optimally, while KL degrades as $m'$ increases, showing bad linear alignment to teacher representations.
    Mean and standard deviations are evaluated on a single teacher for $5$ seed runs per student.
    }
    \label{fig:ablation-reprs-dim-sub}
\end{figure}

\subsection{Dataset specifics}

\paragraph{\Synthetic} consists of $7$ classes distributed over $4$ spiral rays.  The dataset is created to ensure that, in input space, there are not two classes that are always close to each other and that input data are not linearly separable. 

We generate $2d$ input data as follows: for each of the seven classes, we define four angles, each capturing the starting angle of one of the spirals. 
Then, for each of these rays with angle $\theta_0$, varying the radius $\rho \in [1,10]$, we randomly sample a point near the spirals with an angle 
\[
    \theta_i = \theta_0 + 2\pi / T \cdot \rho/10 + \epsilon
\]
where $T$ denotes the period of the spiral and $\epsilon \sim \mathrm{Unif}(-\sigma(\rho), \sigma(\rho))$ is sampled so to ensure that it never crosses other spiral rays, by regulating the interval $[-\sigma(\rho), \sigma(\rho)]$ on the value of $\rho$. The dataset is visualized in \cref{fig:synth_data_example}.

Training data amount to $14$k instances, while the validation and test dataset each comprise $7$k data points.  


\paragraph{\CIFAR.}
We use the CIFAR100 \citep{krizhevsky2009learning} loaded from \texttt{torchvision} with the original split with 500 images per class for training and 100 images per class for testing. So all in all 50,000 images for training and 10,000 images for test. We do not make a validation split for this dataset.

\paragraph{\SUB} is a regenerated version of \texttt{CUB200} \citep{wah2011caltech}, a dataset for bird classification with annotation on salient bird features. 
In \SUB, data points are annotated with a label $Y$ capturing the bird specie, and a concept $C$ indicating the attribute which was modified during generation. 

The dataset contains a total of $38.4$k images, splitted in $21.5$k for training, $5.3$k for validation, and $11.5$k for test.
It comprises $33$ annotated class labels.
Each image is also annotated with a concept reflecting a salient feature of the bird that has been modified during generation, such as the color of the breast or the shape of the beak, for a total of $33$ different concept attributes. 

We use both these annotations to train teacher models. This enforces the teacher to both linearly encode concepts and labels in the embeddings.

\subsection{Architectures and hyperparameters details}
\label{app:architectures_and_hyper}

For \Synthetic, we adopt an MLP constituted of $2$ hidden layers, each with $512$ units and ReLU activations. Teachers and students use this architecture and map to a $(m=2)$ representation space. 

Teachers are trained for $1500$ epochs, while students are trained for $250$ epochs on the training set annotated with teacher's probabilities. For all runs, we use a learning rate of $10^{-3}$ with exponential decay on each epoch set to $\gamma=0.995$, and batch size of $512$.

For \CIFAR$100$, we use ResNet\citep{he2016deep} models for the embedding part of both teacher and student, but of different sizes. For the teacher, we use a ResNet50 and for the student a ResNet18 with the same architecture as implemented by torchvision\footnote{\href{https://docs.pytorch.org/vision/main/models/generated/torchvision.models.resnet50.html}{\nolinkurl{docs.pytorch.org/vision/main/models/generated/torchvision.models.resnet50.html}}}. The final linear layer of the embedding model for both student and teacher goes to $50$ dimensions. For the unembeddings we use a single linear layer, giving us $100$ (one for each label) $50$-dimensional unembeddings. 
We initialize the teacher model with weights trained on ImageNet taken from torchvision (ResNet50\_Weights.IMAGENET1K\_V2) and train the teacher for 10 epochs on the train split of the \CIFAR dataset. Each student is then trained for 50 epochs on the training set with either $\mathcal{L}_{\mathrm{logit}}^{1}$ ($L_1$ student), $d_{\mathrm{logit}}^2$ ($L_2$ student), or KL divergence (KL student) between student and teacher distributions as a loss. For all runs on \CIFAR, we use a learning rate of $10^{-3}$ with exponential decay on each epoch set to $\gamma=0.995$, and batch size of $32$.

For \SUB, we first preprocess all images with DINOv2 \citep{oquab2024dinov2}, mapping them to $768$ dimensional embeddings. Then, we train a shallow neural network with $1024$ units, with ReLU activations, dropout set to $p=0.1$ and BatchNorm. Both teachers and students have the same architecture, and the
representations for all models are $(m=10)$-dimensional. 

Teachers and students are trained for $500$ epochs to ensure convergence and, 
for all runs, we use a learning rate of $5 \cdot 10^{-3}$ with exponential decay on each epoch set to $\gamma=0.995$, and batch size of $512$.

\subsection{Metrics Evaluation}

We use \texttt{sklearn.evaluation} for label and concept accuracy, while we implement with \texttt{torch} the evaluation of $d_\mathrm{KL}$, $d_\mathrm{logit}$, and use \texttt{sklearn.cross\_decomposition.CCA} to evaluate $m_\mathrm{CCA}$.

To evaluate $d_\mathrm{rep}$, we adopt the following procedure.
Depending on the number of labels, evaluating all possible choices of pivots and subsets of labels can be unfeasible. 
We thus consider selecting at random the pivots and the subset of labels to reduce the computational cost. 
For \Synthetic, we evaluate $d_\mathrm{rep}$ for all possible labels and pivots.
For \CIFAR we select all possible pivots and for each of them we consider $200$ subsets of labels selected at random among the
$$
\binom{100-1}{50} \approx 5\cdot 10^{28}
$$
possible combinations. 
For \SUB, we select all possible pivots and for each of them we consider $250$ subsets of labels selected at random among the 
$$
\binom{33-1}{10} \approx 6\cdot 10^7
$$
possible combinations. This number of samples is very small compared to the total number of combinations possible, but the most important thing is that we get information from all possible labels and with $200$($250$) samples of $50$($10$) labels, we can be relatively sure to draw each of the $100$($33$) labels at least once. 

\textbf{Aggregation of measures.}
Reported values on teachers are aggregating by evaluating the mean and standard deviations over $5$ runs. For each teacher, we run $5$ different seeds for both the $\mathrm{KL}$ student and the $L_1$ student, and evaluate the mean value $\mu_i$ and variance $\sigma_i^2$, for $i \in \{1, \ldots, 5\}$.
We then aggregate all mean and variances by considering the inverse-variance weighting. The mean is given by
\[
    \omega_i = \frac{1}{\sigma_i^2}, \quad \Bar{\mu} = \frac{\sum \omega_i \mu_i}{\sum_i \omega_i} \, ,
\]
whereas the standard deviation is obtained as
\[
    \Bar{\sigma} = \sqrt{
    \frac{1}{
    \sum_i \omega_i
    }
    } \, .
\]
The final results are reported as $\Bar{\mu} \pm \Bar{\sigma}$ for all metrics. 


\subsection{Extended results}
\label{app:extended_experimental_results}

We report here the full result of our experimental investigation on the three dataset we consider. Precision is set to $3$ for all tables and metrics.

\begin{table*}[h]
\caption{\textbf{Full results on \texttt{Synth} and \CIFAR100}. We notice that on \CIFAR, $L_1$ and $L_2$ students have very similar $d_{\mathrm{logit}}$, but the $L_2$ student has much smaller $d_{\mathrm{rep}}$. This might be because $L_1$ students result in models with unembeddings which lead to lower singular values for the $\vL$ matrices on this dataset.}

\scriptsize
\centering
\begin{tabular}{clccccc}
\toprule
&  & \texttt{Acc}(Y)$(\uparrow)$ & $d_\mathrm{KL} (\downarrow)$ & $d_\mathrm{logit} (\downarrow)$ & $d_\mathrm{rep} (\downarrow)$ & $m_\mathrm{CCA}$$(\uparrow)$ \\
\hline
\multirow{5}{*}{\rotatebox{90}{\Synthetic}}
& 
\cellcolor{gray!30}
teach & 
\cellcolor{gray!30}
$0.999 \pm 0.001$ & 
\cellcolor{gray!30} $-$ & 
\cellcolor{gray!30} $-$ & 
\cellcolor{gray!30} $-$ & 
\cellcolor{gray!30} $-$ \\
\cmidrule{2-7} 
&
\cellcolor{lavender2!30}
$\mathrm{KL}$ &
\cellcolor{lavender2!30}
$0.999 \pm 0.001$ &
\cellcolor{lavender2!30}
$(4.57 \pm 0.43) \cdot 10^{-2}$ &
\cellcolor{lavender2!30}
$(7.70 \pm 0.27) \cdot 10^1$ &
\cellcolor{lavender2!30}
$(6.07 \pm 0.31) \cdot 10^0$ &
\cellcolor{lavender2!30}
$0.580 \pm 0.048$ \\
&
\cellcolor{lightblue2!30}
$L_1$ &
\cellcolor{lightblue2!30}
$0.999 \pm 0.001$ &
\cellcolor{lightblue2!30}
${(5.95 \pm 1.40) \cdot 10^{-3}}$ &
\cellcolor{lightblue2!30}
$\mathbf{(2.85 \pm 0.09) \cdot 10^0}$ &
\cellcolor{lightblue2!30}
$\mathbf{(1.69 \pm 0.04) \cdot 10^{-1}}$ &
\cellcolor{lightblue2!30}
$\mathbf{0.999 \pm 0.001}$ \\
& 
\cellcolor{blue!30}
$L_2$ &
\cellcolor{blue!30}
$0.999 \pm 0.001$ &
\cellcolor{blue!30}
$\mathbf{(1.72 \pm 0.14) \cdot 10^{-3}} $ &
\cellcolor{blue!30}
$(3.24 \pm 0.08) \cdot 10^0 $ &
\cellcolor{blue!30}
$(2.04 \pm 0.05) \cdot 10^{-1}  $ &
\cellcolor{blue!30}
$\mathbf{0.999 \pm 0.001 }$
\\
\cmidrule{1-7} 
\multirow{5}{*}{\rotatebox{90}{\CIFAR}} & 
\cellcolor{gray!30}
teach &
\cellcolor{gray!30}
$0.540 \pm 0.010$ &
\cellcolor{gray!30}
$- $ &
\cellcolor{gray!30}
$- $ &
\cellcolor{gray!30}
$- $ &
\cellcolor{gray!30}
$- $ \\
\cmidrule{2-7}
& 
\cellcolor{lavender2!30}
$\mathrm{KL}$ &
\cellcolor{lavender2!30}
$0.480 \pm 0.001$ &
\cellcolor{lavender2!30}
$\mathbf{1.29 \pm 0.004} $ &
\cellcolor{lavender2!30}
$(23.5 \pm 0.02)$ &
\cellcolor{lavender2!30}
$(8.76 \pm 1.37) \cdot 10^1$ &
\cellcolor{lavender2!30}
$0.515 \pm 0.001$ \\
& 
\cellcolor{lightblue2!30}
$L_1$ &
\cellcolor{lightblue2!30}
${0.492 \pm 0.001}$ &
\cellcolor{lightblue2!30}
$1.33 \pm 0.003 $ &
\cellcolor{lightblue2!30}
$\mathbf{17.0 \pm 0.02}$ &
\cellcolor{lightblue2!30}
$({4.56 \pm 0.60}) \cdot 10^0$ &
\cellcolor{lightblue2!30}
$\mathbf{0.767 \pm 0.001}$ \\
& 
\cellcolor{blue!30}
$L_2$ &
\cellcolor{blue!30}
$\mathbf{0.493 \pm 0.001}$ &
\cellcolor{blue!30}
${1.32 \pm 0.005} $ &
\cellcolor{blue!30}
${17.1 \pm 0.02}$ &
\cellcolor{blue!30}
$\mathbf{(6.58 \pm 1.71) \cdot 10^{-1} }$ &
\cellcolor{blue!30}
${0.763 \pm 0.001}$ \\
\bottomrule
\end{tabular}
\end{table*}

\begin{table*}[h]

\caption{\textbf{Full results  on \SUB}.}
\scriptsize
\centering

\begin{tabular}{lcccccc}
\toprule
&
\texttt{Acc}(Y)$(\uparrow)$ & \texttt{Acc}(C)$(\uparrow)$ & $d_\mathrm{KL}(\downarrow)$ & $d_\mathrm{logit}(\downarrow)$ & $d_\mathrm{rep}(\downarrow)$ & $m_\mathrm{CCA}(\uparrow)$ \\
\hline
\cellcolor{gray!30}
teach &
\cellcolor{gray!30}
$0.915 \pm 0.010$ &
\cellcolor{gray!30}
$0.923 \pm 0.011 $ &
\cellcolor{gray!30}
$-$ &
\cellcolor{gray!30}
$-$ &
\cellcolor{gray!30}
$-$ &
\cellcolor{gray!30}
$-$
\\
\hline
\cellcolor{lavender2!30}
KL &
\cellcolor{lavender2!30}
$\mathbf{0.929 \pm 0.001} $ &
\cellcolor{lavender2!30}
$0.055 \pm 0.001 $ &
\cellcolor{lavender2!30}
$ {0.645 \pm 0.007} $ &
\cellcolor{lavender2!30}
$(3.73 \pm 0.05) \cdot 10^2$ &
\cellcolor{lavender2!30}
$(3.10 \pm 0.17) \cdot 10^3$ &
\cellcolor{lavender2!30}
$0.417 \pm 0.001$ 
\\
\cellcolor{lightblue2!30}
$L_1$ &
\cellcolor{lightblue2!30}
$0.922 \pm 0.001 $ &
\cellcolor{lightblue2!30}
$\mathbf{0.747 \pm 0.009}$ &
\cellcolor{lightblue2!30}
$\mathbf{0.151 \pm 0.002}$ &
\cellcolor{lightblue2!30}
${(3.40 \pm 0.01) \cdot 10^1}$ &
\cellcolor{lightblue2!30}
${(1.00 \pm 0.09) \cdot 10^1}$ &
\cellcolor{lightblue2!30}
${0.993 \pm 0.001}$
\\
\cellcolor{blue!30}
$L2$ &
\cellcolor{blue!30}
$0.919 \pm 0.001$ &
\cellcolor{blue!30} 
$0.724 \pm 0.006$ &
\cellcolor{blue!30}
$0.173 \pm 0.003$ &
\cellcolor{blue!30}
$\mathbf{(3.20 \pm 0.02) \cdot 10^1} $ &
\cellcolor{blue!30}
$\mathbf{(1.58 \pm 0.19) \cdot 10^0} $ &
\cellcolor{blue!30}
$ \mathbf{ 0.994 \pm 0.001}$
\\
\bottomrule
\end{tabular}
\end{table*}

\end{document}